\documentclass[11pt,letterpaper,english]{article}
\usepackage{fullpage}
\usepackage{setspace}
\usepackage[T1]{fontenc}
\usepackage{lmodern}
\usepackage{newtxtext} 

\usepackage[small, compact]{titlesec}

\usepackage{abstract}

\usepackage{parskip} 

\usepackage{booktabs}
\usepackage{graphicx} 
\usepackage{caption}
\usepackage{wrapfig}
\usepackage{pgfplots}
\pgfplotsset{compat=1.17}
\usepackage{multirow}

\usepackage{enumerate}
\usepackage{comment}
\usepackage{enumitem}
\usepackage[makeroom]{cancel}
\usepackage{algorithm}
\usepackage{algpseudocode}
\algdef{SE}[DOWHILE]{Do}{doWhile}{\algorithmicdo}[1]{\algorithmicwhile\ #1}
\algtext*{EndIf}
\algtext*{EndFor}
\algtext*{EndWhile}
\algtext*{EndFunction}
\algtext*{EndProcedure}
\usepackage{amsmath}
\usepackage{amsthm}
\usepackage{amssymb}
\setitemize{itemsep=0pt}

\usepackage[style=authoryear, sorting=none, backend=biber, natbib=true]{biblatex}

\usepackage{bbm}
\usepackage{soul}
\usepackage{xcolor}
\usepackage{longtable}
\usepackage{array}
\newcolumntype{L}[1]{>{\raggedright\arraybackslash}p{#1}}

\usepackage{soul,xcolor}
\usepackage[edges]{forest}
\usepackage{graphicx}
\setstcolor{blue}
\usepackage{makecell}
\usepackage[bottom]{footmisc}

\usepackage{amsmath}
\usepackage{amsthm}
\usepackage{amssymb}
\usepackage{tikz}
\usepackage{xcolor}
\usetikzlibrary{arrows}
\usetikzlibrary{positioning}

\usepackage{hyperref}
\usepackage{todonotes}

\usepackage{hyperref}
\usepackage{bm}
\usepackage{subcaption}
\allowdisplaybreaks
\usepackage{makecell} 

\usepackage{graphicx}
\usepackage{pbox}

\def\TV{\mathrm{TV}}

\theoremstyle{plain}
\newtheorem{theorem}{Theorem}[section]
\newtheorem{proposition}[theorem]{Proposition}
\newtheorem{lemma}[theorem]{Lemma}
\newtheorem{corollary}[theorem]{Corollary}
\theoremstyle{definition}
\newtheorem{definition}{Definition}
\newtheorem{assumption}{Assumption}
\theoremstyle{remark}
\newtheorem{remark}{Remark}

\newenvironment{itemize*}%
{\begin{itemize}[leftmargin=*,topsep=0pt]%
		\setlength{\itemsep}{0pt}%
		\setlength{\parskip}{0pt}}%
	{\end{itemize}}

\newenvironment{enumerate*}%
{\begin{enumerate}[leftmargin=*,topsep=0pt]%
		\setlength{\itemsep}{0pt}%
		\setlength{\parskip}{0pt}}%
	{\end{enumerate}}

\newcommand{\negpar}[1][-1em]{%
  \ifvmode\else\par\fi
  {\parindent=#1\leavevmode}\ignorespaces
}

\title{Reasonably reasoning AI agents avoid game-theoretic failures in zero-shot, provably}

\author{Enoch Hyunwook Kang\thanks{ehwkang@uw.edu}
\\
Foster School of Business, University of Washington
}
\date{}  

\begin{document}
	
	\maketitle

\begin{abstract}
As autonomous AI agents increasingly mediate online platform markets, a fundamental question emerges: do these markets generate stable strategic outcomes? In repeated strategic environments, the Nash equilibrium provides a natural benchmark for this stability. However, empirical evidence on off-the-shelf LLM agents is mixed, leaving it unclear whether independently deployed agents can converge to equilibrium behavior without explicit strategic post-training. 
In this paper, we provide an affirmative answer. Extending the Bayesian learning literature in theoretical economics, we prove that AI agents, acting as Bayesian posterior samplers rather than expected utility maximizers, are guaranteed to eventually become weakly close to a Nash equilibrium in infinitely repeated games. We further extend this analysis to settings in which stage payoffs are unknown ex ante, and agents observe only their privately realized stochastic payoffs, and obtain the same convergence guarantees. Finally, we empirically evaluate these theoretical implications across five repeated-game environments, ranging from the Prisoner's Dilemma to marketing promotion games. Taken together, our findings suggest that strategic stability in AI-mediated markets can emerge from the intrinsic reasoning and learning properties of modern AI agents, without the need for unrealistic universal fine-tuning.
\end{abstract}

\newpage
\section{Introduction}

A fundamental transition is underway in online platform marketplaces: autonomous AI agents are beginning to act on behalf of consumers in search, evaluation, and purchasing decisions \citep{gaarlandt2025ai, shahidi2025coasean}. Rather than humans directly browsing rankings, comparing listings, and clicking through interfaces, AI agents can parse webpages or interact through APIs to evaluate products and transact. As a result, an increasing share of economically relevant platform activity may soon be mediated not by human attention directly, but by interaction among autonomous agents operating within platform-designed environments \citep{rothschild2025agentic}.

This shift makes a theoretical question newly urgent. In markets mediated by human participants, economists often analyze outcomes through the lens of stable strategic behavior emerging from repeated interaction. As autonomous AI agents increasingly act on behalf of users, it becomes unclear whether markets populated by such agents will exhibit the same kind of predictable strategic stability. The core issue, therefore, is not only whether an individual AI agent performs well, but whether many interacting AI agents collectively produce stable and predictable market outcomes. In digital markets, such interaction arises naturally in pricing, promotion, bidding, negotiation, matching, and recommendation environments, where each agent's payoff depends on how other agents behave over time \citep{bianchi2024well, guo2024large, lopez2025can, li2024aligning, zhu2025automated, bansal2025magentic}. In such settings, what matters for prediction is \textit{whether the system settles into a stable strategic pattern of play} \citep{carichon2025coming}.

Nash equilibrium in repeated games is a natural benchmark for such stability. In repeated strategic environments, Nash equilibrium provides a starting point for analyzing long-run behavior, incentives, and strategic predictability \citep{abreu1988structure, dal2011evolution, dal2018determinants, proto2019intelligence}. In this sense, asking the question of \textit{whether AI agents converge toward Nash-like play in repeated games} is the first step toward understanding strategic outcomes in AI-mediated markets; failure to approach this benchmark can be regarded as a \emph{game-theoretic failure}. That is, if agents do not reach behavior that is at least immune to profitable unilateral deviations in repeated games, then the resulting play lacks even the most basic form of strategic stability and is difficult to interpret as strategically coherent. This makes Nash convergence in repeated games a useful baseline question before considering richer models of dynamic interaction, such as Markov Perfect Equilibrium \citep{aguirregabiria2021dynamic}.

This question is not merely theoretical. Recent work by \citet{calvano2020artificial} and \citet{fish2024algorithmic}, together with related empirical studies of algorithmic interaction, suggests that autonomous algorithmic and AI systems can generate strategically consequential repeated-game behavior in economically important environments. Pricing algorithms can sustain supra-competitive outcomes without explicit communication, rapid reactive pricing technologies can elevate prices even in competitive equilibrium, and real-world adoption of repeated algorithmic pricing has been associated with higher margins in concentrated markets \citep{assad2024algorithmic}. 

On the other hand, empirical evaluations of LLMs reveal that widely used, off-the-shelf AI models (e.g., GPT, Claude, Gemini, Kimi, DeepSeek) as AI agents frequently fail to exhibit predicted equilibrium behavior in strategic interactions and often resort to brittle heuristics or produce inconsistent policies \citep{guo2024embodiedllmagentslearn,huang2024far, hua2024game, buscemi2025fairgame}. In practice, simply prompting standard AI models to engage in repeated games often yields strategies that diverge significantly from rational, equilibrium-based play predicted by classical game theory, although some successes have been reported \citep{akata2025playing}. Such brittleness and inconsistency raise concerns about deploying AI agents in societally crucial domains that require reliable strategic decision-making. Accordingly, whether off-the-shelf reasoning LLM agents can be guaranteed to converge to a Nash equilibrium in repeated strategic interaction stably remains an open problem.

One prominent approach to addressing this problem is the use of targeted, universal post-training procedures \citep{li2024aligning,duque2024advantage}, i.e., requiring all AI agents to undergo the same additional fine-tuning alignment steps to instill strategically desirable behavior.  However, relying on the universal deployment of such fine-tuning approaches across diverse, independently developed AI agents competing with one another is impractical. Consequently, there is a compelling need for assurance that off-the-shelf AI agents with some ``reasonable'' reasoning capabilities autonomously adapt their strategies and reach a stable equilibrium.
This critical observation motivates the central research question explored in this paper:
\begin{center}
\textit{Can off-the-shelf reasoning AI agents stably converge to Nash equilibrium in repeated strategic interaction without post-training?}
\end{center}

In this paper, we show that reasoning LLM-based AI agents are guaranteed to evolve toward Nash continuation play along realized play paths, without relying on specialized explicit post-training procedures.

At a high level, our argument builds on a simple idea from the economics of repeated interaction: if agents can learn from observed play what their opponents are likely to do and respond well to those learned forecasts, their behavior should move toward Nash-like play over time. The classical \textit{Bayesian learning} literature in theoretical economics formalizes this logic by showing that when agents Bayesian-update beliefs from observed histories and \emph{exactly} best respond to those beliefs, equilibrium behavior eventually emerges along realized play paths \citep{kalai1993rational, norman2022possibility}. 

We take this logic as our starting point but adapt it to the setting of off-the-shelf reasoning LLM agents. The central difficulty is that such agents are not expected-utility maximizers \citep{yamin2026llms, ge2026mind}. Rather than best-responding, i.e., deterministically choosing the exact optimal action against their current beliefs, they behave more naturally as stochastic posterior samplers \citep{arumugam2025toward}. Our first theoretical contribution is therefore to show that, under mild and realistic assumptions, posterior-sampling LLM agents nevertheless achieve \emph{asymptotic} best-response learning along the realized play path. Our second contribution is to show that the classical Bayesian learning literature's logic extends to this weaker asymptotic notion, so that exact best responses need not be assumed to recover eventual proximity to a Nash equilibrium of the continuation game.

To formalize this argument, we isolate two reasoning capabilities and call agents satisfying them ``reasonably reasoning'' agents: \textit{Bayesian updating} and \textit{asymptotic best-response learning}. By Bayesian updating, we mean the capacity to learn opponents' strategies from observed interaction histories and thereby form increasingly accurate beliefs about future play. By asymptotic best-response learning, we mean that, relative to those inferred beliefs, the agent's continuation behavior eventually becomes approximately optimal along the realized path. Combined with recent findings that LLMs behave as Bayesian in-context learners under stationary, repeated settings \citep{coda2023meta, lu2024emergent, cahyawijaya2024llms, xie2021explanation, wang2023large, wakayama2025context, falck2024context}, this yields our conclusion that reasoning LLM agents can satisfy the `reasonably reasoning' conditions and therefore eventually exhibit Nash equilibrium along realized paths in infinitely repeated interactions.

We also develop a weaker but practically important benchmark for \emph{myopic} reasoning. If an agent only predicts opponents' \emph{next} actions and best responds period by period, then one should not expect convergence to rich repeated-game equilibria supported by continuation incentives. Nevertheless, we show that such one-step predict--then--act reasoning is still sufficient for eventual convergence to a \emph{stage-game} $\varepsilon$-Nash equilibrium along the realized path. This result provides a useful theoretical benchmark for simpler prompting-based procedures, such as SCoT \citep{akata2025playing}, and clarifies what myopic reasoning can and cannot achieve relative to continuation-level planning.

Beyond the benchmark with common-knowledge stage payoffs, we also consider the practically relevant case in which payoffs are not known to agents ex ante and each agent observes only its own privately realized stochastic payoffs. To model this, we can modify the agents' posterior sampling to not only sample an opponent-strategy hypothesis, but also sample a hypothesis for the agent's own mean payoff matrix (equivalently, its own payoff kernel within the known noise family). Under the analogous learning conditions, posterior sampling recovers the same asymptotic on-path $\varepsilon$-best-response property and therefore inherits the zero-shot Nash convergence guarantees.

These theoretical results generate three distinct empirical predictions. First, because one-step predict--then--act reasoning is sufficient only for \emph{stage-game} equilibrium convergence, simple myopic procedures (e.g., SCoT) should often succeed when the objective is merely to reach some stage-game Nash action. Second, because nontrivial repeated-game equilibria depend on continuation values, myopic reasoning should generally fail to sustain such paths, whereas agents that infer opponent strategies and evaluate continuation plans should succeed. Third, when payoffs are not known ex ante and must be learned from noisy private observations, the same separation should persist, although under a more demanding informational problem. 

To examine whether these implications arise in practice for a concrete off-the-shelf model, we instantiate Qwen 3.5-27B \citep{qwen3.5}, a small open reasoning model, with three different decision rules: the \textit{Base} model agent, the myopic \textit{SCoT} agent, and the reasonably reasoning (RR) agent implemented via posterior-sampling best response (PS-BR). We then study their behavior in symmetric self-play across five repeated-game environments, ranging from the Prisoner's Dilemma to marketing promotion games. The resulting simulations confirm the predictions of the theory developed above.

This paper is structured as follows. Section~\ref{sec:related} discusses related work. Section~\ref{sec:setup} introduces the repeated-game setup and the operational belief notation used in the main text; the more formal predictive-representative construction is deferred to Appendix~\ref{app:belief_repr}. Section~\ref{sec:rr} defines reasonably reasoning agents and relates their Bayesian and best-response learning properties to in-context and test-time inference in language models. Section~\ref{sec:zero_shot} presents the main zero-shot Nash convergence results. Section~\ref{sec:unknown_payoffs} extends the analysis to unknown, stochastic payoffs. Section~\ref{sec:experiments} provides empirical evidence for the theory.

\section{Related works}\label{sec:related}

\paragraph{Bayesian Learning.} The theoretical analysis of reasonably reasoning agents is based largely on the Bayesian learning literature. Bayesian learning in repeated games is defined by a fundamental tension between the ability to logically learn opponents' strategies and the ability to respond to them optimally. The foundational possibility result in \cite{kalai1993rational} showed that if players' prior beliefs contain a "grain of truth" (absolute continuity) regarding the true distribution of play, then standard Bayesian updating guarantees that their predictions will eventually converge to the truth, thereby naturally culminating in a Nash equilibrium. However,  \citet{nachbar1997prediction, nachbar2005beliefs} subsequently proved a negative result: requiring players to simultaneously maintain this grain of truth and perfectly best-respond across all possible counterfactual game histories leads to a mathematical contradiction, as the infinite sets of learnable strategies and optimizing strategies are often mutually singular. \cite{norman2022possibility} resolved this tension by introducing ``optimizing learnability'', the crucial insight that agents do not need to perfectly learn unreached counterfactuals; they only need to accurately predict and best-respond along the realized path of play. Nonetheless, Norman identified that a stubborn impossibility persists in a specific class of games called MM* games, where adversarial payoff geometries prevent learning and optimization from coexisting even on-path.  

This paper systematically navigates these classic boundaries to guarantee zero-shot Nash convergence for LLM agents. We actively employ \cite{kalai1993rational} grain of truth (Assumption \ref{ass:grain_of_truth}) to guarantee predictive accuracy via the classic merging of opinions, and avoid \citet{nachbar1997prediction, nachbar2005beliefs}'s impossibility by formally adopting the on-path relaxation and non-MM* in \citet{norman2022possibility}.

\paragraph{Strategic capabilities of LLM agents.}
As LLMs are increasingly deployed as interactive agents, a growing literature studies whether LLMs behave strategically in canonical games, emphasizing preference representation, belief formation, and (approximate) best responses rather than taking equilibrium play for granted \citep{sun2025game, jia2025llm}. In one-shot normal-form, bargaining, and negotiation tasks, off-the-shelf models often follow plausible but context-sensitive heuristics: behavior can depart from equilibrium predictions and change markedly under small framing or instruction variations \citep{guo2023gptgame, fan2023rational, hua2024game}. Strategic performance can improve with model scale and reasoning scaffolds, but the remaining variance across prompts and settings is substantial \citep{kader2024emergence}.

These issues become more acute under repeated games, where payoffs depend on stable, history-contingent policies. Multi-agent evaluation benchmarks report large cross-model and cross-game heterogeneity and frequent non-equilibrium dynamics, especially in coordination and social-dilemma regimes \citep{mao2023alympics, duan2024gtbench, huang2024far}. Controlled repeated-game experiments similarly find that cooperation/reciprocity can emerge, but is fragile to opponent choice and to seemingly minor prompt or protocol changes \citep{akata2025playing, palatsi2024nicer, willis2025systems}. In market-style repeated settings, recent work further documents collusive or supra-competitive outcomes among LLM agents and highlights sensitivity to communication opportunities and wording choices \citep{fish2024algorithmic, agrawal2025doubleauction}.

Overall, existing results demonstrate meaningful strategic adaptation but do not provide general, zero-shot guarantees that heterogeneous, independently deployed off-the-shelf agents will converge to predictable equilibrium behavior. Our paper targets this gap by identifying two basic theory-of-mind capabilities, Bayesian updating of opponent strategies and asymptotic best-response learning, and proving that, under mild conditions, they imply Nash continuation play along realized paths in repeated games, without requiring explicit post-training or cross-agent coordination.

\paragraph{LLM agents as Bayesian in-context learners.}
A growing body of work links \emph{in-context learning} (ICL), i.e., test-time adaptation that conditions prior history on a prompt without parameter updates, to Bayesian inference over latent task hypotheses. 
In stylized transformer meta-learning settings, \citet{xie2021explanation} argue that transformers trained over a task distribution can implement an implicit Bayesian update and produce posterior-predictive behavior from in-context data; related analyses formalize ICL as (approximate) Bayesian model averaging and study how this view depends on model parameterization and drives generalization \citep{zhang2023and}. 
Moving beyond specific constructions, \citet{falck2024context} propose a martingale-based perspective that yields diagnostics and theoretical criteria for when an in-context learner’s predictive sequence is consistent with Bayesian updating, while \citet{wakayama2025context} provide a broader meta-learning theory in which ICL is provably equivalent to Bayesian inference with accompanying generalization guarantees. 
Empirically, LLMs also exhibit \emph{meta}-adaptation across tasks presented in-context \citep{coda2023meta}, and several abilities that appear “emergent” under scaling can be substantially attributed to improved ICL mechanisms \citep{lu2024emergent}. 
Complementing these viewpoints, \citet{wang2023large} model LLM ICL through a latent-variable lens, where demonstrations act as evidence about an unobserved task variable—clarifying why behavior can be highly sensitive to the specific examples and their ordering—and related results document few-shot in-context adaptation even in low-resource language learning regimes \citep{cahyawijaya2024llms}. 
For agentic and repeated-interaction settings, these Bayesian-ICL perspectives motivate modeling an LLM agent’s use of the interaction transcript as maintaining and updating a posterior over opponent strategies/types; autoregressive generation can then be interpreted as sampling-based decision-making from the induced posterior \citep{zhang2024posterior,welleck2024decoding}, providing a concrete bridge between in-context learning and belief-based strategic behavior.

\paragraph{Expected utility maximization and best response.}
Standard learning-in-games analyses often assume agents compute an exact best response to their posterior at every history \citep{kalai1993rational,norman2022possibility}. This is a poor behavioral model for off-the-shelf LLM agents, whose actions are induced by stochastic decoding and thus implement a distribution over choices rather than a deterministic maximization of expected utility. In probabilistic decision tasks, \citet{yamin2026llms} find systematic belief--decision incoherence, suggesting that elicited probabilities should not be treated as beliefs that the model then perfectly best-responds to. In risky-choice experiments, \citet{ge2026mind} similarly document substantial departures from expected-utility maximization and large sensitivity to prompting/model type, with behavior better described as noisy sampling. \citet{arumugam2025toward} argues that LLMs naturally implement posterior sampling. These results motivate replacing exact best response with a weaker, sampling-compatible notion, e.g., posterior-sampling policies, which are shown to achieve \emph{asymptotic} best-response performance along the realized path.

\section{Setup}\label{sec:setup}
\subsection{Infinitely repeated game}

We study interaction among a finite set of agents $I=\{1,2, \ldots, N\}$ in an
infinitely repeated (discounted) game with perfect monitoring of actions and
common-knowledge stage payoffs. We define the game as the tuple
\[
\mathcal{G}=\left(I,\left\{A_i\right\}_{i \in I},\left\{u_i\right\}_{i \in I},\left\{\lambda_i\right\}_{i \in I}\right)
\]
where:
\begin{itemize}[leftmargin=*]
 \setlength\itemsep{0em}
    \item $I$ is the finite set of AI agents
    \item $A_i$ is the finite set of actions available to agent $i$
    \item $A=\prod_{i \in I} A_i$ is the joint action space, where a joint action profile at round $t$ is denoted
    $a^t=\left(a_1^t, \ldots, a_{|I|}^t\right) \in A$. ($a_i^t$ indicates the action of agent $i$ at round $t$)
    \item $u_i: A \rightarrow [0,1]$ is agent $i$'s (known) stage-game payoff function
    \item $\lambda_i \in(0,1)$ is the private discount factor used by agent $i$ to value future payoffs.
\end{itemize}

At each round $t=1,2,\dots$, each agent $i$ simultaneously chooses an action $a_i^t \in A_i$,
forming a joint action profile $a^t \in A$, which is publicly observed. Agent $i$ then receives
the stage payoff 
\begin{equation}
u_i(a^t) \in [0,1] \label{eq:true_payoffs}
\end{equation}
These stage payoffs induce a standard infinitely repeated game with perfect monitoring of actions.

In defining the payoffs $\left\{u_i\right\}_{i \in I}$, we restrict attention to the following standard assumption from the Bayesian learning literature \citep{norman2022possibility}. Intuitively, this excludes games without a pure-strategy equilibrium, e.g., rock-scissors-paper; more importantly for our purposes, it excludes the payoff environments in which realized play need not admit a nearby Nash interpretation. Since the object of the paper is equilibrium prediction and analysis of realized play, this restriction isolates the part of the game space where that question is well posed.

\begin{assumption}[Non-MM$^\star$ game \citep{norman2022possibility}]\label{ass:nonmmstar}
Consider the infinitely repeated game induced by the true stage payoffs $\left\{u_i\right\}_{i \in I}$ in equation \eqref{eq:true_payoffs}. For each player $i$, define the stage-game minmax payoff and pure-action maxmin payoff as

$$
\varphi_i:=\min _{\sigma_{-i} \in \Delta\left(A_{-i}\right)} \max _{\sigma_i \in \Delta\left(A_i\right)} u_i\left(\sigma_i, \sigma_{-i}\right), \quad \Phi_i^{\star}:=\max _{a_i \in A_i} \min _{a_{-i} \in \mathrm{BR}_{-i}\left(a_i\right)} u_i\left(a_i, a_{-i}\right),
$$
where $\mathrm{BR}_{-i}\left(a_i\right)$ denotes the set of opponents' (joint) best responses to $a_i$ in the stage game. We call that the stage game is $\mathrm{MM}^{\star}$ if $\Phi_i^{\star}<\varphi_i$ for every $i$. We assume the stage game is not $\mathrm{MM}^{\star}$ (equivalently, $\Phi_i^{\star} \geq \varphi_i$ holds for some $i$).
\end{assumption}

\subsection{Strategy}

We define the joint action history at round $t$ as
$
h^t=\left(a^1, a^2, \ldots, a^{t-1}\right),
$
and
$$
H^t=\left\{\left(a^1, a^2, \ldots, a^{t-1}\right): a^s \in A \text { for } s \le t-1\right\}.
$$
Let $H^0:=\{\emptyset\}$ denote the empty history.
Denote the complete set of possible histories as $H=\bigcup_{t \geq 0} H^t$. (Throughout this paper, we allow AI agents' strategies to have bounded memory; See Appendix \ref{sec:bounded_memory}.)

\begin{definition}[Strategy]
A strategy for agent $i$ is a function
$$
f_i: H \rightarrow \Delta\left(A_i\right),
$$
which maps every joint action history to a distribution over agent $i$'s actions $A_i$. 
\end{definition}

Let $\mathcal{F}_i$ denote the space of all strategies of agent $i$. A strategy profile is a tuple $f=\left(f_1, \ldots, f_N\right) \in \mathcal{F}=\prod_{i \in I} \mathcal{F}_i$. 
Let $H^{\infty}$ denote the space of infinite play paths, i.e.,
$$
H^{\infty}=\left\{\left(a^1, a^2, \ldots\right): a^t \in A \text { for all } t \in \mathbb{N}\right\}.
$$
\begin{definition}[Play-path distribution]\label{def:path_distribution}
A strategy profile $f=(f_1,\ldots,f_N)\in\mathcal F$ induces a unique probability distribution
$\mu^f$ over $H^{\infty}$ (the \emph{play-path distribution}), defined on cylinder sets by
\[
\mu^f\left(C\left(a^1, \ldots, a^t\right)\right):=\prod_{s=1}^t \prod_{i \in I} f_i\left(h^s\right)\left(a_i^s\right),
\]
where $h^s=(a^1,\ldots,a^{s-1})$ and $C(h):=\{z\in H^\infty: z=(h,\ldots)\}$.
By Kolmogorov's extension theorem \citep{durrett2019probability}, these finite-dimensional probabilities define a unique
probability measure $\mu^f$ on $(H^\infty,\mathcal B)$, where $\mathcal B$ is the product
$\sigma$-algebra.
\end{definition}

For the upcoming discussions, we fix some notations. Given that we fix a history $h^t$, for any continuation profile $g$ (i.e., a profile that specifies play after histories
extending $h^t$), let $\mu^{g}_{h^t}$ denote the induced distribution on $H^\infty$ over the future
joint-action sequence $(a^t,a^{t+1},\ldots)$ when play starts at history $h^t$ and follows $g$ thereafter.
Formally, we identify the tail $(a^t,a^{t+1},\ldots)$ with $y \in H^\infty$ by setting $y^1=a^t$,
$y^2=a^{t+1}$, and so on, and regard $\mu^{g}_{h^t}$ as a measure on this reindexed space.
For a full profile $g \in \mathcal{F}$, we write $\mu^{g}_{h^t}$ for the continuation distribution induced
by its restriction $g|_{h^t}$. If $\mu^g(C(h^t))>0$, then $\mu^{g}_{h^t}$ coincides with the conditional
distribution $\mu^g(\cdot\mid h^t)$.

\subsection{Beliefs}\label{sec:beliefs}

Each agent $i$ holds a prior $\mu_i^0$ over opponents' strategy profiles $\mathcal F_{-i}$ and updates it by Bayes' rule as public histories are observed. For the main text, the relevant object is the induced \emph{posterior predictive continuation law}: after history $h^t$, the agent needs a forecast of future opponents' play, not a full existence argument for how that forecast is represented.

Given any own strategy $g_i\in\mathcal F_i$ and any belief $\mu_i$ over opponents' strategies, let
\[
P_i^{\mu_i,g_i}(E):= \int_{\mathcal F_{-i}} \mu^{(g_i,f_{-i})}(E)\, d\mu_i(f_{-i}),
\qquad\text{for measurable }E\subseteq H^\infty.
\]
denote the predictive play-path distribution induced by $\mu_i$. We write $P_i^{0,g_i}:=P_i^{\mu_i^0,g_i}$ for the prior predictive distribution. At any history $h^t$ where Bayes' rule is defined, the posterior $\mu_i^t(\cdot\mid h^t)$ induces an analogous posterior predictive continuation law.

To keep the main text operational, we write $f_{-i}^{i,t}$ for any continuation model that reproduces player $i$'s posterior predictive continuation law for the purpose of continuation-value calculations, and $f_{-i}^{i}$ for the analogous model associated with the prior predictive law. The standard existence and selection details for these representative continuation models are deferred to Appendix~\ref{app:belief_repr}.

\subsection{Subjective utility and Nash equilibrium}

\paragraph{Subjective Expected Utility.}
An agent evaluates the optimality of a continuation strategy based on their subjective beliefs at a given history. Fix a history $h^t$ and let $\sigma_i \in \mathcal{F}_i(h^t)$ be a continuation strategy for agent $i$ from $h^t$ onward. For any opponents' continuation profile $g_{-i}$, denote by $\mu^{(\sigma_i, g_{-i})}_{h^t}$ the induced distribution over future play paths when play starts at $h^t$ and follows $(\sigma_i,g_{-i})$ thereafter.

Following the standard literature \citep{kalai1993subjective}, we define the belief-explicit \textit{subjective expected utility} of playing $\sigma_i$ starting at $h^t$ as
\begin{equation}
V_i(\sigma_i \mid h^t; g_{-i})
=
\mathbb{E}_{y \sim \mu^{(\sigma_i, g_{-i})}_{h^t}}
\left[ (1-\lambda_i) \sum_{k=0}^{\infty} \lambda_i^k u_i(y^{k+1}) \right],
\label{eq:belief_explicit_value}
\end{equation}
where $y = (y^1, y^2, \dots)$ represents the future path of joint actions relative to time $t$, with $y^{k+1}$ denoting the joint action at step $k+1$ of this future path (i.e., at absolute time $t+k$).

When $g_{-i}=f_{-i}^{i,t}$, we write
\begin{equation}
V_i(\sigma_i \mid h^t)
:= V_i(\sigma_i \mid h^t; f_{-i}^{i,t}).
\label{eq:subjective_value}
\end{equation}

For any belief about opponents' continuation play $g_{-i}$ at history $h^t$, we define the set of $\varepsilon$-best-response continuation strategies for agent $i$ at $h^t$ as
\begin{align}
&\mathrm{BR}_i^\varepsilon(g_{-i} \mid h^t) \notag
=
\left\{
\sigma_i \in \mathcal{F}_i(h^t) :
V_i(\sigma_i \mid h^t; g_{-i})
\ge
\sup_{\sigma_i' \in \mathcal{F}_i(h^t)} V_i(\sigma_i' \mid h^t; g_{-i}) - \varepsilon
\right\}.    
\end{align}

\paragraph{Nash equilibrium.}

The true performance of a strategy profile $f \in \mathcal{F}$ for agent $i$ is given by:
$$
U_i(f)=\mathbb{E}_{z \sim \mu^f}\left[\left(1-\lambda_i\right) \sum_{t=1}^{\infty} \lambda_i^{t-1} u_i\left(z^t\right)\right],
$$
where $z^t \in A$ is the joint action at round $t$, and $\lambda_i \in(0,1)$ is agent $i$'s discount factor. The factor $(1-\lambda_i)$ is a normalization ensuring that $U_i(f) \in[0,1]$ whenever $u_i(a) \in[0,1]$ for all $a \in A$.

\begin{definition}[$\varepsilon$-Nash equilibrium]
    A strategy profile $f=\left(f_1, \ldots, f_N\right) \in \mathcal{F}$ is an $\varepsilon$-Nash equilibrium if, for every agent $i \in I$,
$$
U_i(f) \geq \sup _{f_i^{\prime} \in \mathcal{F}_i} U_i\left(f_i^{\prime}, f_{-i}\right)-\varepsilon .
$$
\end{definition}

\section{Reasonably Reasoning Agents}\label{sec:rr}

As discussed earlier, one of the key ideas of this work is that reasoning LLM-based AI agents are fundamentally ``reasonably reasoning'' agents. In this section, we formally define the class of reasonably reasoning agents, and then demonstrate why reasoning-LLM agents are naturally reasonably reasoning agents. The definition isolates two ingredients:
(i) Bayesian updating and
(ii) an \emph{on-path, asymptotic} notion of $\varepsilon$-consistency.

\begin{definition}[Reasonably Reasoning Agent]\label{def:rr}
Fix a repeated game and a strategy profile $f=(f_i)_{i\in I}$ generating the objective play-path distribution $\mu^f$ (Definition~\ref{def:path_distribution}).
Player $i$ is a \emph{Reasonably Reasoning} (RR) agent if the following hold.
\begin{itemize}[leftmargin=*]
\item \textbf{Bayesian updating:} Player $i$ has a prior $\mu_i^0$ over opponents' strategy profiles $\mathcal F_{-i}$ and forms posteriors $(\mu_i^t)_{t\ge 0}$ by Bayes' rule. Let $f_{-i}^{i,t}$ denote any continuation model reproducing player $i$'s posterior predictive continuation law at history $h^t$ (Section~\ref{sec:beliefs}), so that for every continuation strategy $\sigma_i$,
\[
V_i(\sigma_i\mid h^t)=V_i(\sigma_i\mid h^t; f_{-i}^{i,t}).
\]

\item \textbf{Asymptotic $\varepsilon$-consistency on-path:} For every $\varepsilon>0$,
\[
\mu^f\!\left(\left\{z:\exists\,T_i(z,\varepsilon)<\infty\ \text{s.t.}\ \forall\,t\ge T_i(z,\varepsilon),\ 
f_i\big|_{h^t(z)}\in \mathrm{BR}_i^\varepsilon\!\big(f_{-i}^{i,t}\big|_{h^t(z)}\mid h^t(z)\big)\right\}\right)=1.
\]
\end{itemize}
\end{definition}

Intuitively, the ``Bayesian updating'' condition ensures that agents update their strategic beliefs coherently given observations. The ``asymptotic $\varepsilon$-consistency'' condition captures the idea that after a possibly long initial stumbling phase, agents eventually learn to play (approximately) optimal continuation strategies relative to their own beliefs along the realized path of play. It generalizes Norman's $\varepsilon$-consistency \citep{norman2022possibility}, which requires $\varepsilon$-best responding at \emph{all} times (not only eventually) on a full-measure set of paths. This generalization is critical, as LLM-based AI agents are not expected-utility maximizers but rather posterior belief samplers \citep{arumugam2025toward, yamin2026llms, ge2026mind}.

The Bayesian-learning component of Definition~\ref{def:rr} is intentionally predictive. In repeated interaction, the object an agent needs for planning is the posterior predictive law over future play conditional on the realized history, not a perfectly identified label for the opponent's entire strategy. The notation $f_{-i}^{i,t}$ from Section~\ref{sec:beliefs} is just a convenient continuation-model wrapper for that predictive law.

This distinction matters because repeated-game strategies are history-contingent reaction rules. Realized actions change over time, but learning is about refining uncertainty over the underlying rule and, operationally, over its implications for continuation play. Continuation values are computed by integrating payoffs against the induced posterior predictive law. The formal representative construction is standard and recorded in Appendix~\ref{app:belief_repr}; the main text only needs the predictive object itself.

To guarantee that Bayesian updating is well-defined and that predictive beliefs can converge to the truth on-path, we impose the standard grain-of-truth condition.

\begin{assumption}[Grain of truth \citep{kalai1993rational}]\label{ass:grain_of_truth}
For each player $i$, the objective play-path distribution $\mu^f$ is absolutely continuous with respect to $i$'s prior predictive distribution under $f_i$, i.e.\ $\mu^f \ll P_i^{0,f_i}$.
Equivalently, any event that player $i$ assigns zero probability under their prior predictive model has zero probability under the true play distribution induced by $f$.
\end{assumption}

Under Assumption~\ref{ass:grain_of_truth}, classical merging-of-opinions results
\citep{blackwell1962merging} imply that player $i$'s posterior predictive continuation beliefs
become accurate along $\mu^f$-almost every realized play path.
We formalize this later by showing that absolute continuity implies strong path prediction
(Lemma~\ref{lem:absolute_cont_predict}).

\subsection{LLM agents are Bayesian updating agents}

The Bayesian-learning abstraction above matches what we can operationally observe
from LLM agents: \emph{history-conditioned predictive distributions}.
An LLM, when prompted with the game rules and the realized interaction history, induces a
conditional distribution over next tokens, which can be arranged to correspond to a
distribution over a discrete label for an opponent strategy.

This ``as if Bayesian'' framing is appropriate for two reasons.
First, the technical apparatus already works at the level of history-conditioned predictive distributions. The notation $f_{-i}^{i,t}$ is only a representative continuation model used to evaluate continuation values under those forecasts, and the formal representative construction is deferred to Appendix~\ref{app:belief_repr}.
Second, recent theory and empirical evidence indicate that AI agents, most of which are auto-regressive LLM models, can implement Bayesian or
approximately Bayesian in-context learning in repeated, stationary environments
\citep{xie2021explanation, zhang2023and, falck2024context, wakayama2025context}.
Interpreting the prompt history as data and the model’s induced distribution as a posterior
predictive therefore provides a principled bridge between LLM behavior and Bayesian-learning
agents in repeated games.

Finally, Assumption~\ref{ass:grain_of_truth} should be understood as a modeling requirement on
the LLM agent’s \emph{support}: the agent’s predictive model should not rule out (assign zero
probability to) events that can actually occur under the true interaction induced by $f$.
In practice, this corresponds to ensuring that the agent’s elicited beliefs are sufficiently expressive so that the true on-path behavior is not ruled out.
Mild smoothing can be useful in implementation, but the finite-menu argument below does not require an
exogenous tremble.

\subsection{LLM agents achieve asymptotic $\varepsilon$-consistency}\label{sec:ps_asymptotic_consistency}

In LLM agents, actions are produced through \emph{stochastic decoding} rather than through a deterministic argmax computation. Empirically, this introduces substantial decision noise and breaks the literal expected-utility-maximization view of behavior \citep{yamin2026llms,ge2026mind}. A better approximation is that the agent samples a latent strategic hypothesis from its current posterior and then reasons conditional on that sample \citep{arumugam2025toward,cai2024active}. The goal of this subsection is to show that such sampling-based behavior can still satisfy the asymptotic $\varepsilon$-consistency part of Definition~\ref{def:rr}.

The big picture has three steps. First, we formalize the LLM decision rule as \emph{posterior-sampling best response} (PS-BR): infer one opponent-strategy hypothesis from the current posterior, then optimize against that sampled hypothesis. Second, we quantify the loss from sampling a single hypothesis rather than optimizing against the full posterior predictive continuation. Third, we impose a menu-level learnability condition saying that, along the realized path, the posterior over the finite retained menu eventually puts almost all mass on labels that are continuation-payoff-equivalent to the truth from the reached history onward. Once these three pieces are combined, the stochasticity of LLM decoding remains real at any fixed date, but it becomes asymptotically irrelevant for on-path optimality.

\paragraph{LLMs naturally induce posterior-sampling best response (PS-BR).}
Reasoning LLM-based AI agents are naturally scaffolded as ``infer, then respond'' systems \citep{zhou2023far, riemer2024position}. PS-BR isolates exactly that logic: sample one opponent hypothesis from the current posterior, then choose a best response to that sampled hypothesis. This preserves the sampling-based flavor of LLM behavior while still making the optimization step explicit.

\begin{definition}[Posterior sampling best response (PS-BR)]\label{def:psbr}
Fix player $i$ and a history $h^t$.
Given posterior $\mu_i^t(\cdot\mid h^t)$ over opponents' strategy profiles, PS-BR chooses a continuation strategy by:
\begin{enumerate}
\item sampling $\tilde f_{-i}\sim \mu_i^t(\cdot\mid h^t)$;
\item playing any best response $\sigma_i\in \mathrm{BR}_i(\tilde f_{-i}\mid h^t)$ in the continuation game after $h^t$.
\end{enumerate}
Denote the resulting (randomized) continuation strategy by $\sigma^{\mathrm{PS}}_{i,t}(\cdot\mid h^t)$.
\end{definition}


\begin{remark}
    
In our experiments, step 1 is implemented by querying the model to output one opponent-strategy label from an allowed finite menu, and step 2 is implemented by evaluating a finite set of candidate self-strategies against that sampled label via roll-out and selecting the value-maximizing candidate. See Appendix~\ref{app:impl_strategy_psbr} for the implementation details. The theoretical point is not the engineering scaffold itself, but the structure it captures: LLM agents need not act like deterministic expected-utility maximizers; they can act like posterior samplers whose draws are then optimized against.
\end{remark}

The first question is what is lost by best-responding to one sampled label rather than to the full posterior predictive continuation. The next lemma gives a simple worst-case bound: the gap is controlled by how often two independent posterior samples disagree.

\begin{lemma}[PS-BR is a $D_i^t$-best response]\label{lem:psbr_gap}
Fix player $i$ and a history $h^t$. Suppose $\mu_i^t(\cdot\mid h^t)$ is supported on a finite set $\mathcal S_{-i}$ and write
\[
p_t(g_{-i}) := \mu_i^t(g_{-i}\mid h^t),\qquad g_{-i}\in\mathcal S_{-i}.
\]
Define the posterior collision complement
\[
D_i^t(h^t)\ :=\ 1-\sum_{g_{-i}\in\mathcal S_{-i}} p_t(g_{-i})^2
\ =\ \Pr_{\tilde g,\tilde g'\,\sim\,\mu_i^t(\cdot\mid h^t)}\!\big[\tilde g\neq \tilde g'\big].
\]
Let $\sigma^{\mathrm{PS}}_{i,t}(\cdot\mid h^t)$ be PS-BR at $h^t$. Then
\[
V_i(\sigma^{\mathrm{PS}}_{i,t}\mid h^t)\ \ge\ \sup_{\sigma_i} V_i(\sigma_i\mid h^t)\ -\ D_i^t(h^t).
\]
Equivalently, $\sigma^{\mathrm{PS}}_{i,t}(\cdot\mid h^t)\in \mathrm{BR}_i^{D_i^t(h^t)}\!\big(f_{-i}^{i,t}\big|_{h^t}\mid h^t\big)$.
\end{lemma}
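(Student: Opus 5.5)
The argument rests on two facts. The subjective value is linear in the posterior mixture over the finite support. Continuation payoffs lie in $[0,1]$ because of the $(1-\lambda_i)$ normalization.

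\textbf{Step 1: write both values as posterior averages.} For any continuation strategy $\sigma_i$, the posterior predictive continuation law is the $p_t$-mixture of the laws $\mu^{(\sigma_i,g_{-i})}_{h^t}$. Hence
\[
V_i(\sigma_i\mid h^t)=\sum_{g\in\mathcal S_{-i}} p_t(g)\,V_i(\sigma_i\mid h^t;g).
\]
Write $\sigma^g\in\mathrm{BR}_i(g\mid h^t)$ for the best response selected when label $g$ is drawn. Then $\sigma^{\mathrm{PS}}_{i,t}$ is the mixture that plays $\sigma^g$ with probability $p_t(g)$.

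\textbf{Step 2: evaluate PS-BR by conditioning on the draw.} The sampled label is independent of the true opponent continuation, and $V_i$ is linear in mixtures of one's own continuation strategies via Kuhn's theorem. The paper's notation already treats a randomized continuation strategy as a single element of $\mathcal F_i(h^t)$; I will state this explicitly. Consequently
\[
V_i(\sigma^{\mathrm{PS}}_{i,t}\mid h^t)=\sum_{g,g'}p_t(g)p_t(g')\,V_i(\sigma^{g}\mid h^t;g').
\]
Drop the off-diagonal terms, which are nonnegative. Each diagonal term equals the optimum against its own label, so
\[
V_i(\sigma^{\mathrm{PS}}_{i,t}\mid h^t)\ge\sum_g p_t(g)^2\,V^*(g),\qquad V^*(g):=\sup_{\sigma}V_i(\sigma\mid h^t;g).
\]
Existence of an exact best response against a fixed $g$ follows from compactness and continuity in the product topology under discounting. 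Alternatively, I can take $\delta$-best responses and let $\delta\to0$.

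\textbf{Step 3: bound the benchmark.} For any $\sigma_i$, the definition of $V^*$ gives
\[
V_i(\sigma_i\mid h^t)\le \sum_g p_t(g)V^*(g).
\]
Subtracting the Step 2 bound yields
\[
\sup_{\sigma_i}V_i(\sigma_i\mid h^t)-V_i(\sigma^{\mathrm{PS}}_{i,t}\mid h^t)\le\sum_g p_t(g)\bigl(1-p_t(g)\bigr)V^*(g)\le \sum_g p_t(g)(1-p_t(g)).
\]
The last sum is $1-\sum_g p_t(g)^2=D_i^t(h^t)$, using $V^*\le1$. The probabilistic identity for $D_i^t$ is immediate from independence of the two draws.

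\textbf{Main obstacle.} The delicate point is the measurability and mixing step in Step 2. There I must justify that a label-dependent random choice of continuation strategy is itself a legitimate continuation strategy whose value is the $p_t$-average. I must also justify that $f^{i,t}_{-i}$ reproduces the mixture law, so that $V_i(\cdot\mid h^t)$ is linear in $p_t$. I would invoke the representative construction of Appendix~\ref{app:belief_repr} for the latter, and Kuhn's theorem under perfect recall for the former. The rest is the elementary inequality above.
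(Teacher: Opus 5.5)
Your proposal is correct and follows the paper's proof essentially line for line. Both bound PS-BR's value below by the diagonal terms $\sum_g p_t(g)^2 M(g)$ and bound the optimum above by $\sum_g p_t(g) M(g)$, then use $M(g)\le 1$ to obtain $D_i^t(h^t)$. The points you flag as delicate are taken for granted in the paper's proof, so your version is slightly more careful: existence of an exact best response, Kuhn's theorem to make the label-randomized plan an element of $\mathcal F_i(h^t)$, and linearity in $p_t$ via the representative $f_{-i}^{i,t}$.
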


Lemma~\ref{lem:psbr_gap} gives the first key idea. PS-BR is already close to a best response whenever posterior dispersion over strategically different labels is small. If the posterior were concentrated on one label, the gap would be zero. But the exact posterior point concentration is stronger than we need. For Nash convergence, the only posterior disagreement that matters is disagreement that changes the continuation optimization problem from the reached history.

This observation is what lets us weaken the learning requirement. In general repeated games, full posterior concentration over an unrestricted strategy space is too much to ask and is closely related to the classic impossibility phenomena in \citet{nachbar1997prediction,nachbar2005beliefs}. We therefore ask only for strategy learnability \emph{up to continuation-payoff equivalence}. The key point is that PS-BR samples labels, but Nash convergence only cares about the continuation optimization problem induced by those labels. If two retained labels generate exactly the same continuation value for every continuation strategy available to player $i$ after a reached history, then PS-BR is strategically indifferent between them: sampling one or the other leaves the objective being optimized unchanged.

\begin{definition}[On-path continuation-payoff equivalence]
\label{def:cont_equiv}
Fix player $i$ and a history $h^t$. For opponents' continuation profiles
$g_{-i},g'_{-i}\in\mathcal F_{-i}$, write
\[
g_{-i}\sim_i^{h^t} g'_{-i}
\]
if
\[
V_i(\sigma_i\mid h^t; g_{-i})
=
V_i(\sigma_i\mid h^t; g'_{-i})
\qquad
\text{for every } \sigma_i\in \mathcal F_i(h^t).
\]
Equivalently, $g_{-i}$ and $g'_{-i}$ induce the same continuation optimization problem
for player $i$ after $h^t$.
\end{definition}

The relevant learning requirement is therefore that, if a player keeps following a single strategy, the opponent will eventually infer either that strategy itself or the strategically equivalent strategy class. 

\begin{assumption}[Strategy menu identifiability]
\label{ass:equiv_class_conc}
Fix player $i$.
Assume the support of $\mu_i^0$ is finite; write
$\mathcal S_{-i}:=\mathrm{supp}(\mu_i^0)\subseteq \mathcal F_{-i}$.
This assumption concerns only the finite retained strategy menu $\mathcal S_{-i}$ used by player $i$ for posterior sampling; it is not a restriction on the full repeated-game strategy space $\mathcal F_{-i}$.
Assume:
\begin{enumerate}[leftmargin=*]
\item \textit{(Menu grain of truth)} $f_{-i}\in\mathcal S_{-i}$ and $\mu_i^0(f_{-i})>0$.
\item \textit{(On-path identifiability up to equivalent strategy class)} If a player keeps following one strategy, then the opponent will eventually infer either that strategy itself or the strategically equivalent strategy class. Formally, for each history $h^t$, define the strategically equivalent class
\[
\mathcal E_i(h^t)
:=
\{g_{-i}\in\mathcal S_{-i}: g_{-i}\sim_i^{h^t} f_{-i}\}.
\]
Then
\[
\mu^f\!\left(
\left\{
z:
\mu_i^t(\mathcal E_i(h^t(z))\mid h^t(z))
\longrightarrow 1
\right\}
\right)=1.
\]
\end{enumerate}
\end{assumption}

\begin{remark}
For strategy menus used in our simulations in Section \ref{sec:experiments}, Assumption \ref{ass:equiv_class_conc} is easily justified: we use deterministic, finite, non-duplicative strategy menus, where wrong retained labels are hard-refuted once they predict a different observed action, while labels that would share the same cooperative path and punishment regime as the benchmark equilibrium automaton are excluded by construction. Appendix~\ref{app:retained_menu_identification} records this stronger hard-refutation route, and Appendix~\ref{appsec:game_menus} verifies it for the menus used in the experiment. 
\end{remark}

Once Assumption~\ref{ass:equiv_class_conc} holds, the posterior mass inside the true continuation-payoff equivalence class is harmless for PS-BR, because every label in that class induces the same continuation objective. Only posterior mass outside that class can create an optimization gap. The proposition below shows that when this strategically relevant posterior mass vanishes on the realized path, PS-BR delivers exactly the asymptotic on-path $\varepsilon$-consistency required in Definition~\ref{def:rr}.

\begin{proposition}[PS-BR-selected continuation plans are asymptotically $\varepsilon$-consistent on-path]\label{prop:ps_implies_asymptotic_consistency}
Fix player $i$.
Suppose player $i$ uses PS-BR at every history and
Assumption~\ref{ass:equiv_class_conc} holds for $i$.
Then for every $\varepsilon>0$,
\[
\mu^f\!\left(\left\{z:\exists\,T_i(z,\varepsilon)<\infty\ \text{s.t.}\ \forall\,t\ge T_i(z,\varepsilon),\ 
\sigma_{i,t}^{\mathrm{PS}}(\cdot\mid h^t(z))
\in \mathrm{BR}_i^\varepsilon\!\big(f_{-i}^{i,t}\big|_{h^t(z)}\mid h^t(z)\big)\right\}\right)=1.
\]
\end{proposition}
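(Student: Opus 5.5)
The plan is to sharpen Lemma~\ref{lem:psbr_gap} so that the gap depends only on posterior mass outside the equivalence class, and then let Assumption~\ref{ass:equiv_class_conc} drive that mass to zero along almost every path. Fix $h^t$ and write $q_t := \mu_i^t(\mathcal E_i(h^t)\mid h^t)$. The first step is to observe that the posterior predictive value is linear in the posterior weights. Indeed, $f_{-i}^{i,t}$ reproduces the posterior predictive continuation law, and $V_i$ is an expectation of a bounded path functional. Hence for every continuation strategy $\sigma_i$,
\[
V_i(\sigma_i\mid h^t)=\sum_{g\in\mathcal S_{-i}} p_t(g)\,V_i(\sigma_i\mid h^t; g).
\]
This holds because the continuation path measure under the posterior mixture is the $p_t$-mixture of the continuation measures $\mu^{(\sigma_i,g)}_{h^t}$. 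This is essentially the representation recorded in Appendix~\ref{app:belief_repr}.

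The second step handles labels inside the class. Every $g\in\mathcal E_i(h^t)$ induces the same value function $\sigma_i\mapsto V_i(\sigma_i\mid h^t; f_{-i})$, so each has the same best-response set. If the sampled label lies in $\mathcal E_i(h^t)$, the chosen $\sigma_i$ therefore satisfies
\[
V_i(\sigma_i\mid h^t;g')=\sup_{\sigma_i'} V_i(\sigma_i'\mid h^t;g')=:V^\star
\]
for all $g'\in\mathcal E_i(h^t)$. If the sample falls outside the class, I only use $V_i\in[0,1]$. This gives
\[
V_i(\sigma^{\mathrm{PS}}_{i,t}\mid h^t)\ \ge\ q_t\cdot q_t V^\star ,
\]
since with probability $q_t$ we get a plan that is optimal on the class, and the class carries posterior weight $q_t$. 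On the other side,
\[
\sup_{\sigma_i}V_i(\sigma_i\mid h^t)\le q_tV^\star+(1-q_t).
\]
Subtracting, the gap is at most
\[
q_tV^\star(1-q_t)+(1-q_t)\le 2(1-q_t).
\]
One could redo the collision argument of Lemma~\ref{lem:psbr_gap} with labels merged into equivalence classes, but the crude bound suffices. Two details need care here. First, PS-BR's continuation strategy is a mixture over sampled best responses; its value is the average of the values, which is legitimate because a mixture over continuation strategies yields the mixed path law. Second, the best response to a sampled label must exist. I would take existence of $\mathrm{BR}_i$ as implicit in Definition~\ref{def:psbr}; otherwise I would use $\eta$-best responses with $\eta\to 0$.

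The third step uses Assumption~\ref{ass:equiv_class_conc}(2). On a $\mu^f$-full-measure set of paths $z$, we have $q_t(z)\to 1$. Given $\varepsilon>0$, choose $T_i(z,\varepsilon)$ with $2(1-q_t(z))\le\varepsilon$ for all $t\ge T_i$. The gap bound then places $\sigma^{\mathrm{PS}}_{i,t}$ in $\mathrm{BR}_i^\varepsilon(f_{-i}^{i,t}|_{h^t(z)}\mid h^t(z))$ for all such $t$, which is the claimed event.

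Part (1) of the assumption, together with $\mu^f(C(h^t(z)))>0$ on path, ensures the posterior is well defined by Bayes' rule at every on-path history. The main obstacle is the second step: justifying that values under the mixture equal mixtures of values when the class $\mathcal E_i(h^t)$ changes with $h^t$. It also requires that the in-class argument use only the continuation objective after $h^t$, not identity of labels. I would verify this carefully against the definition of $\sim_i^{h^t}$.
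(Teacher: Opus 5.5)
Your proposal is correct and follows the paper's proof essentially line by line. The paper likewise restricts the double sum over (sampled label, evaluated label) to the true class to obtain the lower bound $(1-\beta_i^t)^2M_i$, bounds the supremum by $(1-\beta_i^t)M_i+\beta_i^t$, and concludes with the same $2\beta_i^t$ gap, where your $q_t$ is $1-\beta_i^t$. The paper leaves implicit the details you flag: averaging over sampled best responses, existence of best responses, and Bayes' rule being well defined on path. The dependence of $\mathcal E_i(h^t)$ on $h^t$ is not a real obstacle, because the bound is established one history at a time.
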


Proposition~\ref{prop:ps_implies_asymptotic_consistency} is the formal resolution of the sampling-versus-best-response tension in LLM agents at the level of the \emph{selected continuation plan}. Standard stochastic decoding prevents exact best responding at any fixed history. But the proposition shows that the continuation plan selected by PS-BR is eventually approximately optimal along the realized path. That is the only role of the assumption in this subsection; everything else follows from the fact that residual uncertainty inside the true continuation-payoff equivalence class is already harmless for PS-BR.

The proof of Lemma~\ref{lem:psbr_gap} and the proof of
Proposition~\ref{prop:ps_implies_asymptotic_consistency} are deferred to
Appendix~\ref{app:omitted_proofs}.

\section{Zero-shot Nash convergence}\label{sec:zero_shot}

We now show that the reasonably reasoning agents we defined in Section \ref{sec:rr}, together with a learnability condition on beliefs, generate play that is eventually weakly close to Nash equilibrium play along the realized path.
This argument follows the weak-subjective-equilibrium framework in \cite{norman2022possibility}, adapted to the LLM-agent setup discussed in Section~\ref{sec:rr}. 
\subsection{Weak subjective equilibrium}

We work with the standard weak distance on play-path distributions.
Let $\mathcal B^t$ be the $\sigma$-algebra generated by cylinder events of length $t$.

\begin{definition}[Weak distance]\label{def:weak_distance}
For probability measures $\mu,\nu$ over infinite play paths, define
\[
d(\mu,\nu)\ :=\ \sum_{t=1}^\infty 2^{-t}\ \sup_{E\in \mathcal B^t}\big|\mu(E)-\nu(E)\big|.
\]
For a history $h^t$ with $\mu(C(h^t))>0$ and $\nu(C(h^t))>0$, define the conditional (continuation) weak distance
\[
d_{h^t}(\mu,\nu)\ :=\ d(\mu(\cdot\mid C(h^t)),\ \nu(\cdot\mid C(h^t))).
\]
\end{definition}

We use weak distance to compare continuations of play after a realized history.

\begin{definition}[Weak similarity in continuation]\label{def:weak_similarity_cont}
Fix a history $h^t$. Two profiles $f$ and $g$ are \emph{$\eta$-weakly similar in continuation after $h^t$} if
\[
d_{h^t}(\mu^f,\mu^g)\ \le\ \eta.
\]
\end{definition}

Weak subjective equilibrium is Norman's key intermediate notion: players best respond (up to $\xi$) to their \emph{subjective} model, and their subjective model is weakly close (within $\eta$) to the objective continuation distribution.

\begin{definition}[Weak subjective equilibrium \citep{norman2022possibility}]\label{def:weak_subjective_eq}
Fix $\xi,\eta\ge 0$ and a history $h^t$.
A continuation profile $f\big|_{h^t}$ is a \emph{weak $\xi$-subjective $\eta$-equilibrium after $h^t$} if for every player $i$ there exists a supporting profile $f^i=(f_i,f_{-i}^i)$ such that:
\begin{enumerate}[leftmargin=*]
\item \textit{(Subjective best response)}\; $f_i\big|_{h^t}\in \mathrm{BR}_i^\xi\!\big(f_{-i}^i\big|_{h^t}\mid h^t\big)$, where payoffs are evaluated under $\mu^{f^i}$.
\item \textit{(Weak predictive accuracy)}\; $d_{h^t}(\mu^f,\mu^{f^i})\le \eta$.
\end{enumerate}
\end{definition}

\begin{definition}[Learns to predict the path of play (strong)]\label{def:learn_predict_path}
Player $i$ \emph{learns to predict the path of play under $f$} if for every $\eta>0$,
\[
\mu^f\!\left(\left\{z:\exists\,T_i(z,\eta)<\infty\ \text{s.t.}\ \forall\,t\ge T_i(z,\eta),\ 
d_{h^t(z)}(\mu^f,\mu^{f^i})\le \eta\right\}\right)=1,
\]
where $f^i=(f_i,f_{-i}^i)$ is a supporting predictive profile for player $i$ (Section~\ref{sec:beliefs}; formal construction in Appendix~\ref{app:belief_repr}).
\end{definition}

\begin{remark}[Connection to Optimizing Learnability]
A longstanding challenge in Bayesian learning in games is \citet{nachbar1997prediction, nachbar2005beliefs}'s inconsistency result, which shows that requiring an agent to learn and best-respond on \emph{all possible} continuation paths is often mathematically impossible. However, \citet{norman2022possibility} resolved this by introducing \emph{optimizing learnability}, the insight that agents only need to learn the continuation play along the realized paths generated by their optimizing choices. Our RR definition naturally instantiates Norman's insight: Definition~\ref{def:rr} and Definition~\ref{def:learn_predict_path} require $\varepsilon$-consistency and predictive accuracy strictly $\mu^f$-almost surely (i.e., strictly on the realized, optimizing play path). Therefore, the on-path merging of opinions guaranteed by \citet{blackwell1962merging} is entirely sufficient for zero-shot Nash convergence, bypassing Nachbar's impossibility.
\end{remark}

Crucially, the learning of the true path (strong path prediction) relies purely on the absolute continuity of beliefs. It does not require exact strategy identification, and it is logically distinct from the equivalence-class condition used for PS-BR. Strong path prediction can therefore be verified directly from Assumption~\ref{ass:grain_of_truth} via the classic merging of opinions result. The following Lemma \ref{lem:absolute_cont_predict} formalizes this idea.

\begin{lemma}[Absolute continuity implies strong path prediction]\label{lem:absolute_cont_predict}
Fix player $i$. Suppose the objective play-path distribution $\mu^f$ is absolutely continuous with respect to player $i$'s prior predictive distribution $P_i^{0, f_i}$ (Assumption~\ref{ass:grain_of_truth}). Then player $i$ learns to predict the path of play under $f$ in the sense of Definition~\ref{def:learn_predict_path}.
\end{lemma}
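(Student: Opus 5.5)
The plan is to invoke the Blackwell--Dubins merging-of-opinions theorem \citep{blackwell1962merging} and then show that merging in total variation on the continuation implies the weak-distance condition of Definition~\ref{def:learn_predict_path}. Write $P:=P_i^{0,f_i}$ for player $i$'s prior predictive law. By Assumption~\ref{ass:grain_of_truth}, $\mu^f\ll P$. Blackwell--Dubins then gives, for $\mu^f$-almost every path $z$,
\[
\sup_{E\in\mathcal B}\big|\mu^f(E\mid C(h^t(z)))-P(E\mid C(h^t(z)))\big|\longrightarrow 0 \quad (t\to\infty).
\]
The conditionals are well defined $\mu^f$-a.s. because $\mu^f(C(h^t(z)))>0$ a.s., and absolute continuity then forces $P(C(h^t(z)))>0$.

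The second step identifies $P(\cdot\mid C(h^t))$ with the continuation law of the supporting profile $f^i=(f_i,f_{-i}^i)$ from Section~\ref{sec:beliefs}. By construction (Appendix~\ref{app:belief_repr}), the representative $f_{-i}^i$ is chosen so that $\mu^{f^i}=P_i^{0,f_i}$. Concretely, it is built by the Kuhn-type mixing of $\mu_i^0$: at each history, the behavior strategy equals the posterior-weighted average of the opponents' mixed actions. Bayes' rule then gives $\mu^{f^i}(\cdot\mid C(h^t))=P(\cdot\mid C(h^t))$ at every $h^t$ of positive prior-predictive probability, which is exactly the posterior predictive continuation law. I would verify this on cylinder sets by induction on length, factoring the product formula of Definition~\ref{def:path_distribution} into $f_i$ terms and integrated $f_{-i}$ terms, and then extend by Kolmogorov uniqueness. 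Checking that this representative matches the appendix's selection is the main technical obstacle. In particular, the continuation after $h^t$ must be re-indexed consistently with $d_{h^t}$, which conditions the full path measure on $C(h^t)$, so the prefix contributes zero discrepancy.

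The third step passes from total variation to weak distance. For each $s$, $\sup_{E\in\mathcal B^s}|\mu(E)-\nu(E)|\le \sup_{E\in\mathcal B}|\mu(E)-\nu(E)|=:\delta$. Hence
\[
d_{h^t}(\mu^f,\mu^{f^i})\le \sum_s 2^{-s}\delta_t=\delta_t\to 0
\]
on a full-measure set of paths. Given $\eta>0$, set $T_i(z,\eta)$ to be the first time after which $\delta_t(z)\le\eta$. This $T_i$ is finite on the full-measure set, and the resulting event has $\mu^f$-probability one. Finally, intersecting over rational $\eta$ is unnecessary, since the statement is for each $\eta$ separately, but it holds anyway. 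This establishes strong path prediction.
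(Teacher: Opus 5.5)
Your proposal is correct and matches the paper's proof: you apply Blackwell--Dubins merging to $\mu^f\ll P_i^{0,f_i}=\mu^{f^i}$, then bound $d_{h^t(z)}(\mu^f,\mu^{f^i})$ by the total-variation distance between the conditionals on $C(h^t(z))$, using $\sum_s 2^{-s}=1$. Your extra step re-derives the appendix representative lemma; once the measures satisfy $\mu^{f^i}=P_i^{0,f_i}$, their conditionals on $C(h^t)$ agree immediately, so the induction on cylinders and the Bayes-rule argument are not needed.
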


The proof is deferred to Appendix~\ref{app:omitted_proofs}.

\subsection{From learning to zero-shot Nash convergence}

We first show that asymptotic $\varepsilon$-consistency, together with strong prediction, implies that the realized continuation play is eventually a weak subjective equilibrium.

\begin{proposition}\label{prop:weak_subjective_from_learning}
Suppose each player $i$ satisfies the asymptotic best-response condition from Definition~\ref{def:rr} and learns to predict the path of play under $f$ (Definition~\ref{def:learn_predict_path}).
Then for any $\xi>0$ and $\eta>0$,
\[
\mu^f\!\left(\left\{z:\exists\,T(z)<\infty\ \text{s.t.}\ \forall\,t\ge T(z),\ 
f\big|_{h^t(z)}\ \text{is a weak $\xi$-subjective $\eta$-equilibrium after $h^t(z)$}\right\}\right)=1.
\]
\end{proposition}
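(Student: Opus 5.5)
The argument intersects finitely many full-measure events and then reads off the two clauses of Definition~\ref{def:weak_subjective_eq} pointwise on the intersection. Fix $\xi>0$ and $\eta>0$.

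\textbf{Step 1: the per-player events.} For each player $i\in I$, let $\Omega_i^{\mathrm{BR}}$ be the set of paths $z$ on which there is $T_i^{\mathrm{BR}}(z,\xi)<\infty$ such that
\[
f_i\big|_{h^t(z)}\in \mathrm{BR}_i^\xi\big(f_{-i}^{i,t}\big|_{h^t(z)}\mid h^t(z)\big)\quad\text{for all } t\ge T_i^{\mathrm{BR}}(z,\xi).
\]
This is the asymptotic $\varepsilon$-consistency condition of Definition~\ref{def:rr} with $\varepsilon=\xi$, so $\mu^f(\Omega_i^{\mathrm{BR}})=1$.

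Let $\Omega_i^{\mathrm{P}}$ be the set of paths on which there is $T_i^{\mathrm{P}}(z,\eta)<\infty$ such that
\[
d_{h^t(z)}(\mu^f,\mu^{f^i})\le\eta\quad\text{for all } t\ge T_i^{\mathrm{P}}(z,\eta).
\]
This is Definition~\ref{def:learn_predict_path}, so $\mu^f(\Omega_i^{\mathrm{P}})=1$.

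Let $\Omega_0$ be the set of paths $z$ with $\mu^f(C(h^t(z)))>0$ for all $t$. Its complement is a countable union of null cylinders, so $\mu^f(\Omega_0)=1$. Since $\mu^f\ll P_i^{0,f_i}$, on $\Omega_0$ we also have $P_i^{0,f_i}(C(h^t(z)))>0$. Hence the conditional distance $d_{h^t}$ and the Bayes posteriors are well defined along these paths.

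\textbf{Step 2: intersect and set the uniform threshold.} Since $I$ is finite,
\[
\Omega:=\Omega_0\cap\bigcap_{i\in I}\big(\Omega_i^{\mathrm{BR}}\cap\Omega_i^{\mathrm{P}}\big)
\]
has $\mu^f(\Omega)=1$. For $z\in\Omega$, set
\[
T(z):=\max_{i\in I}\max\{T_i^{\mathrm{BR}}(z,\xi),\,T_i^{\mathrm{P}}(z,\eta)\}<\infty.
\]

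\textbf{Step 3: verify both clauses.} Fix $z\in\Omega$, $t\ge T(z)$, and a player $i$. Take the supporting profile $f^i$ in Definition~\ref{def:weak_subjective_eq} to be $(f_i,f_{-i}^{i,t})$, where $f_{-i}^{i,t}$ is the representative continuation model of player $i$'s posterior predictive law at $h^t(z)$.
\begin{itemize}
\item Clause (1), subjective best response, holds directly from Step 1.
\item Clause (2), weak predictive accuracy, holds provided the distance $d_{h^t}(\mu^f,\mu^{f^i})$ in Definition~\ref{def:learn_predict_path} is computed with the same supporting profile.
\end{itemize}
The conclusion of the proposition then follows on every $z\in\Omega$.

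\textbf{Main obstacle: identifying the two supporting profiles.} Definition~\ref{def:rr} uses the posterior model $f_{-i}^{i,t}$, while Definition~\ref{def:learn_predict_path} uses $f^i=(f_i,f_{-i}^i)$ built from the prior predictive law.

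The resolution is a Bayes-consistency fact, which I would draw from the representative construction in Appendix~\ref{app:belief_repr}. At any history $h^t$ with $P_i^{0,f_i}(C(h^t))>0$, the continuation law of $(f_i,f_{-i}^i)$ after $h^t$ coincides with the law induced by $(f_i,f_{-i}^{i,t})$. In symbols,
\[
\mu^{(f_i,f^i_{-i})}(\cdot\mid C(h^t))=\mu^{(f_i,f^{i,t}_{-i})}_{h^t}.
\]
This holds because conditioning the prior predictive measure on $C(h^t)$ yields exactly the posterior predictive measure.

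Two consequences follow:
\begin{itemize}
\item The continuation value $V_i(\cdot\mid h^t)$ is the same under either representative, so the $\mathrm{BR}^\xi$ statement transfers between them.
\item The distance $d_{h^t}$ is the same under either representative, so the prediction statement transfers between them.
\end{itemize}
I expect to spend most of the care on this step. It requires checking that $f_{-i}^{i,t}$ can be selected as the restriction of a single $f_{-i}^i$ to continuations of $h^t$, which should be possible since the representative can be chosen history by history via Kuhn-type behavioral realization. It also requires that all histories involved have positive prior probability, which is guaranteed on $\Omega_0$ by absolute continuity.
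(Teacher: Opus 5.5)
Your proposal is correct and follows essentially the paper's proof: intersect the finitely many full-measure events, take the maximum of the per-player thresholds, and use the continuation-consistent selection \eqref{eq:app_rep_choice}, $f_{-i}^{i,t}\big|_{h^t}=f_{-i}^{i}\big|_{h^t}$, to reconcile the posterior model in Definition~\ref{def:rr} with the prior-predictive profile $f^i$ in Definition~\ref{def:learn_predict_path}. The paper takes $f^i=(f_i,f_{-i}^i)$ as the supporting profile and transfers the best-response clause, whereas you transfer the prediction clause, which amounts to the same thing; one small point is that positivity of $P_i^{0,f_i}(C(h^t(z)))$ does not need $\mu^f\ll P_i^{0,f_i}$ (not a hypothesis of this proposition), since it is already built into the well-definedness of $d_{h^t(z)}(\mu^f,\mu^{f^i})$ in the assumed prediction property, together with the nestedness of cylinders.
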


Finally, we convert a weak subjective equilibrium into proximity to a Nash equilibrium.

\begin{theorem}[Zero-shot Nash convergence along realized play]\label{thm:zero_shot_nash_final}
Suppose every player satisfies the asymptotic best-response condition from Definition~\ref{def:rr} and learns to predict the path of play under $f$.
Then for every $\varepsilon>0$,
\begin{align}
\mu^f\!\bigl(\bigl\{z:\exists\, & T(z)<\infty\ \text{s.t.} \
\forall\,t\ge T(z),\ \exists \ \hat f^{\varepsilon,t,z}\ \text{an $\varepsilon$-Nash equilibrium} \notag
\\
& \text{of the continuation game after $h^t(z)$ with}\ 
d_{h^t(z)}(\mu^f,\mu^{\hat f^{\varepsilon,t,z}})\le \varepsilon\bigr\}\bigr)=1. \notag 
\end{align}

\end{theorem}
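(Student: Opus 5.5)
The plan is to obtain the theorem in two steps. First we apply Proposition~\ref{prop:weak_subjective_from_learning} with parameters chosen as functions of $\varepsilon$. Then we invoke Norman's purification-type result (the weak-subjective-to-Nash lemma of \citet{norman2022possibility}), which converts a weak subjective equilibrium in a non-MM$^\star$ game into weak proximity to a Nash equilibrium. Concretely, the key intermediate statement is the following. For every $\varepsilon>0$ there exist $\xi(\varepsilon),\eta(\varepsilon)>0$ such that, at any history $h^t$, if $f|_{h^t}$ is a weak $\xi$-subjective $\eta$-equilibrium after $h^t$, then there is an $\varepsilon$-Nash equilibrium $\hat f$ of the continuation game after $h^t$ with $d_{h^t}(\mu^f,\mu^{\hat f})\le\varepsilon$. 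The constants must not depend on $h^t$. This uniformity holds because every continuation game after any history is the same repeated game: same stage payoffs $u_i$, same discount factors $\lambda_i$, and continuation values normalized to $[0,1]$.

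The argument then runs in three steps. (i) Fix $\varepsilon$ and pick $\xi,\eta$ from the uniform conversion lemma. (ii) By Proposition~\ref{prop:weak_subjective_from_learning}, on a $\mu^f$-full-measure set of paths $z$ there is $T(z)$ such that for all $t\ge T(z)$ the profile $f|_{h^t(z)}$ is a weak $\xi$-subjective $\eta$-equilibrium. (iii) Apply the conversion lemma pathwise at each such $h^t(z)$ to produce $\hat f^{\varepsilon,t,z}$; this yields the event in the theorem, which therefore has probability one. The realized history has positive $\mu^f$-probability almost surely, so $d_{h^t}$ is well defined. Measurability of the event is inherited from the event in Proposition~\ref{prop:weak_subjective_from_learning}, since it contains that event.

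For the conversion lemma itself, I would follow the Kalai--Lehrer construction adapted by Norman. Start from the supporting profiles $f^i=(f_i,f^i_{-i})$. Build $\hat f$ by letting players follow $f$ on histories where every player's subjective model nearly agrees with the truth. Off those histories, at histories where some player's model is wrong, switch to punishment play. Discounting and the weak distance make only the first finitely many periods matter up to $\varepsilon/2$: the tail is bounded by $\lambda_i^K$ in payoffs and by $2^{-K}$ in $d$. The $\eta$-closeness then controls the probability of reaching the histories where $\hat f$ and $f$ differ within the horizon $K$. Choosing $\eta$ small relative to $2^{-K}$ and $\xi$ small gives both the distance bound and the $\varepsilon$-Nash property: player $i$'s best deviation against $\hat f_{-i}$ is worth at most its value against $f^i_{-i}$ plus a term of order $\eta$, and $f_i$ is already a $\xi$-best response to $f^i_{-i}$.

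The main obstacle is deterrence off the agreed path: unilateral deviations to histories where beliefs are inaccurate need credible punishment. Here Assumption~\ref{ass:nonmmstar} enters. When some player $i$ has $\Phi_i^\star\ge\varphi_i$, Norman shows the needed punishment/reward structure exists so that subjective optimality transfers to objective $\varepsilon$-optimality, whereas in MM$^\star$ games it can fail. I would cite Norman's theorem for this step, verifying only that our weak distance and the definition of weak subjective equilibrium match his, rather than reprove it.
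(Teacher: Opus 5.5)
Your outer argument matches the paper's. The paper sets $\xi=\varepsilon/2$ and $\eta=\hat\eta(\xi,\varepsilon/2)$ from Lemma~\ref{lem:infinite_patching}, invokes Proposition~\ref{prop:weak_subjective_from_learning}, and applies the patching lemma pathwise in the continuation game after $h^t(z)$, using the history-uniformity you point out. The problem is your account of the conversion lemma, which is the only real content beyond the proposition.

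Assumption~\ref{ass:nonmmstar} plays no role in the conversion; the paper never uses it there. There is also no credibility obstacle, because the target is Nash, not subgame-perfect, equilibrium. The device is purification (Lemma~\ref{lem:finite_eta0}): at histories whose first exit from the support of $\mu^f$ is a unilateral deviation by $i$, the opponents play $i$'s own model $f^i_{-i}$. When $\mu^{f^i}=\mu^f$ and $f^i_i=f_i$, $f_{-i}$ and $f^i_{-i}$ coincide on the support before any such exit. Hence $U_i(g_i,\hat f_{-i})=U_i(g_i,f^i_{-i})$ for every deviation $g_i$, and $\xi$-optimality of $f_i$ against $f^i_{-i}$ transfers exactly. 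In Norman, MM$^\star$ is where learning and optimizing cannot coexist on path. It obstructs the \emph{hypotheses} of this theorem, not the conversion, so citing Norman for a non-MM$^\star$ punishment step points to a mechanism that is not there.

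Your direct $\eta>0$ construction also has a concrete hole. Switching play at histories where some player's model is inaccurate keys the switch to model error rather than to who deviated. Such histories can be reached on path with no deviation, so there is no one to punish. A deviator can also reach them with certainty by choosing actions that $f_i$ plays only with small probability. The $\eta$-closeness bounds the probability of these histories under $\mu^f$, not under $(g_i,\hat f_{-i})$. So your claim that the value against $\hat f_{-i}$ is at most the value against $f^i_{-i}$ plus $O(\eta)$ is unjustified: whatever continuation you install there can reward exactly that steering.

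The paper avoids a direct $\eta>0$ construction:
\begin{enumerate}
\item It truncates at a horizon $T$ where both the discount tail and $\sum_{t>T}2^{-t}$ are small.
\item It uses compactness of the finite-horizon behavior-strategy space: limits of weak $\psi$-subjective $\eta_m$-equilibria with $\eta_m\to 0$ are weak $\psi$-subjective $0$-equilibria.
\item Purification turns those limits into $\psi$-Nash equilibria with the same law on $\mathcal B^T$ (Lemma~\ref{lem:finite_eta_small}).
\item It extends arbitrarily after $T$ (Lemma~\ref{lem:infinite_patching}).
\end{enumerate}
Replacing your conversion sketch with this argument, and dropping the appeal to non-MM$^\star$, completes your proof.
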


\begin{corollary}[Zero-shot Nash convergence for PS-BR]\label{cor:psbr_zero_shot_nash}
Assume that for every player $i$, Assumption~\ref{ass:equiv_class_conc} holds and player $i$ uses PS-BR (Definition~\ref{def:psbr}).
Assume moreover that, in the continuation game after each reached history, the continuation actually played is the PS-BR plan selected at that history.
Then the conclusion of Theorem~\ref{thm:zero_shot_nash_final} holds.
\end{corollary}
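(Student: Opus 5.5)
The corollary follows by checking the two hypotheses of Theorem~\ref{thm:zero_shot_nash_final} for the PS-BR profile and then invoking that theorem. Let $f=(f_i)_{i\in I}$ be the profile whose continuation after each reached history $h^t$ is the PS-BR plan selected there, i.e.\ $f_i|_{h^t}=\sigma^{\mathrm{PS}}_{i,t}(\cdot\mid h^t)$. This identification is exactly what the extra hypothesis of the corollary provides. With it, the asymptotic best-response condition of Definition~\ref{def:rr} (written for $f_i|_{h^t(z)}$) and the conclusion of Proposition~\ref{prop:ps_implies_asymptotic_consistency} (written for $\sigma^{\mathrm{PS}}_{i,t}$) describe the same event. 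Since Assumption~\ref{ass:equiv_class_conc} holds for every $i$, Proposition~\ref{prop:ps_implies_asymptotic_consistency} therefore yields asymptotic $\varepsilon$-consistency on-path for each player and every $\varepsilon>0$. This is the first hypothesis.

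The second hypothesis is that every player learns to predict the path of play. I would obtain it from Lemma~\ref{lem:absolute_cont_predict}, so it suffices to verify Assumption~\ref{ass:grain_of_truth}, namely $\mu^f\ll P_i^{0,f_i}$. This follows from the menu grain of truth in Assumption~\ref{ass:equiv_class_conc}. Because $\mu_i^0$ is supported on the finite set $\mathcal S_{-i}$ and puts mass $c:=\mu_i^0(f_{-i})>0$ on the true $f_{-i}$, for every measurable $E$
\[
P_i^{0,f_i}(E)=\sum_{g_{-i}\in\mathcal S_{-i}}\mu_i^0(g_{-i})\,\mu^{(f_i,g_{-i})}(E)\ \ge\ c\,\mu^{(f_i,f_{-i})}(E)=c\,\mu^f(E).
\]
Hence $P_i^{0,f_i}(E)=0$ implies $\mu^f(E)=0$, which is the required absolute continuity. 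Lemma~\ref{lem:absolute_cont_predict} then gives strong path prediction for each $i$.

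With both hypotheses established for every player, Theorem~\ref{thm:zero_shot_nash_final} applies directly and gives the stated conclusion.

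The main obstacle is the bookkeeping in the first step. PS-BR is a randomized selection at each history, so the object $f_i$ that generates $\mu^f$ must be the mixture over sampled labels of the selected continuation plans. I would handle this by defining $f_i$ behaviorally at each history as that mixture. Then $V_i$, being linear in the mixture, coincides with the value bounded in Lemma~\ref{lem:psbr_gap}. The corollary's hypothesis that the played continuation equals the selected plan is what makes this identification consistent across histories, since otherwise the plans selected at different dates could conflict.
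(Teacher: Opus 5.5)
Your proposal is correct and follows the paper's proof step for step. Proposition~\ref{prop:ps_implies_asymptotic_consistency}, together with the implementation hypothesis, supplies the asymptotic best-response premise. The menu grain of truth then yields Assumption~\ref{ass:grain_of_truth}; your bound $P_i^{0,f_i}\ge \mu_i^0(f_{-i})\,\mu^f$ is exactly the one the paper writes out in the proof of Lemma~\ref{lem:absolute_cont_predict_private}. Lemma~\ref{lem:absolute_cont_predict} and Theorem~\ref{thm:zero_shot_nash_final} then finish the argument, and your closing remark rightly notes that the implementation hypothesis is what identifies $f_i|_{h^t}$ with $\sigma^{\mathrm{PS}}_{i,t}$, just as in the paper.
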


The proofs of Theorem~\ref{thm:zero_shot_nash_final} and
Corollary~\ref{cor:psbr_zero_shot_nash} are deferred to
Appendix~\ref{app:omitted_proofs}. For the sparse simulation menus
used in our applications, Appendix~\ref{app:retained_menu_identification} shows that
the needed concentration premise follows from deterministic hard refutation on the
finite menu. The richer engineering rollout menu may be broader; our
identification claim is only for the sparse appendix menu.

The main abstract theorem, Theorem \ref{thm:zero_shot_nash_final}, together with the PS-BR corollary \ref{cor:psbr_zero_shot_nash}, may seem counterintuitive: if each agent is learning, then what each agent is trying to
predict changes over time, so why should behavior ever stabilize?
This concern is valid for many myopic learning models, where the learner treats
the opponent as having a fixed action distribution even though the opponent is
also adapting. The promise of Bayesian learning \citep{kalai1993rational} is that, under a suitable grain-of-truth condition, agents'
\emph{posterior predictive} forecasts about future play can nonetheless become accurate (merge) along the realized path. In repeated games, the correct object of inference is
not a fixed action, but the opponent's \emph{repeated-game strategy}: a fixed contingent plan
(mapping histories to actions) that may be highly nonstationary. In particular, even if an opponent
updates beliefs and changes its period-by-period best response, once its prior, update rule, and decision
rule are fixed from time 0, its behavior defines a single mapping $f_{-i}:H\to\Delta(A_{-i})$ (hence a fixed repeated-game strategy in our sense).
Agents' beliefs change because they refine uncertainty about this fixed mapping (and its on-path implications), not because the mapping is being rewritten exogenously over time. 

Indeed, our main results do \emph{not} require that posteriors over opponent strategies literally
stop moving. Instead, they require on-path stabilization in two weaker senses:

\begin{enumerate}[leftmargin=*]
    \item \textit{Stability of forecasts (predictive merging).}
    Under the grain-of-truth condition (Assumption~\ref{ass:grain_of_truth}), Bayesian
    updating implies that, along $\mu^f$-almost every realized history $h^t(z)$, the
    agent's \emph{posterior predictive} distribution over future play becomes close to
    the true continuation distribution (formalized later by Definition~\ref{def:learn_predict_path}
    and Lemma~\ref{lem:absolute_cont_predict}). Importantly, this can happen even if the
    posterior over \emph{strategy labels} does not concentrate: distinct strategy hypotheses
    may be observationally equivalent \emph{on the realized path}, and any remaining
    disagreement can persist only on counterfactual histories that are not reached.
    \item \textit{Stability of (approximate) best responses.}
    Once an agent's predictive belief about continuation play is accurate on-path,
    playing an $\varepsilon$-best response to that belief is also nearly optimal against
    the true continuation play. Moreover, best-response \emph{sets} need not vary wildly:
    when the payoff gap between the best action and the runner-up is nontrivial, small
    changes in beliefs do not change which continuation strategies are $\varepsilon$-optimal.
    This is exactly why our RR definition imposes only \emph{asymptotic} on-path
    $\varepsilon$-consistency (Definition~\ref{def:rr}), rather than requiring perfect
    best-response optimality at every time and every counterfactual history.
\end{enumerate}

Even if beliefs keep updating forever, behavior can still stabilize because decisions
depend on the \emph{predictive} implications of beliefs on the realized continuation game.
If the posterior mass shifts among hypotheses that induce (nearly) the same continuation
distribution after $h^t(z)$, then the agent's best-response problem is (nearly) unchanged,
so play remains stable. For our sampled-label PS-BR implementation, we formalize exactly this
logic through the equivalence-class condition in
Assumption~\ref{ass:equiv_class_conc}: the posterior may keep moving, but only negligible mass
can remain on retained labels that would induce a different continuation optimization problem.
The proof of Proposition~\ref{prop:ps_implies_asymptotic_consistency} then shows that the resulting sampling randomness is
strategically innocuous, because the induced best-response gap is bounded by twice the posterior
mass outside the true equivalence class. This yields eventual on-path
$\varepsilon$-best-response continuation plans.

\subsection{Zero-shot stage-game Nash convergence for myopic rules}
\label{sec:myopic_stage_nash}

Theorem~\ref{thm:zero_shot_nash_final} and Corollary~\ref{cor:psbr_zero_shot_nash} establish eventual on-path convergence to a Nash equilibrium of the continuation game. That guarantee is deliberately strong: it concerns repeated-game optimality and therefore requires beliefs over opponents' full continuation strategies. Yet this level of reasoning may be unnecessary when the object of interest is only \emph{stage-wise} strategic optimality. If we ask instead whether the realized mixed action profile at each history is eventually an approximate Nash equilibrium of the one-shot stage game, then predicting the opponents' next joint action may suffice. This reduction captures the logic of SCoT \citep{akata2025playing}, which implements a ``predict the next move, then best respond'' procedure rather than full continuation planning. The purpose of this subsection is to justify this simplification formally. We analyze two one-step variants: myopic PS-BR, which best responds to a one-step predictive belief, and SCoT \citep{akata2025playing}, which best responds to a deterministic point prediction of the opponents' next action.

\subsubsection{Myopic PS-BR}

\emph{myopic PS-BR} retains the Bayesian-learning-plus-best-response structure of the previous subsection, but truncates both objects to one period: the agent forms a one-step predictive belief over the opponents' next joint action and then plays a myopic best response to that belief.


For notational convenience, as already used above, for any opponents' profile
$g_{-i}$ and history $h$, we write
\[
g_{-i}(h)\in \Delta(A_{-i})
\]
for the induced distribution over the opponents' joint next action at history $h$.
In particular, when $g_{-i}$ is an actual profile of opponents' mixed actions, this is the product distribution
\[
g_{-i}(h)=\bigotimes_{j\neq i} g_j(h).
\]

\begin{definition}[One-shot stage-game $\varepsilon$-best response and stage $\varepsilon$-Nash]
\label{def:stage_br_nash}
For $\alpha_i\in\Delta(A_i)$ and $q\in\Delta(A_{-i})$, define
\[
u_i(\alpha_i,q)
:=
\sum_{a_i\in A_i}\sum_{a_{-i}\in A_{-i}}
\alpha_i(a_i)\,q(a_{-i})\,u_i(a_i,a_{-i}).
\]
For $\varepsilon\ge 0$, define
\[
\mathrm{br}_i^\varepsilon(q)
:=
\left\{
\alpha_i\in\Delta(A_i):
u_i(\alpha_i,q)\ge \sup_{\alpha_i'\in\Delta(A_i)}u_i(\alpha_i',q)-\varepsilon
\right\}.
\]
We also write
\[
\mathrm{br}_i(q):=\mathrm{br}_i^0(q).
\]
At a history $h^t$, write
\[
f_{-i}(h^t):=\bigotimes_{j\neq i} f_j(h^t)\in \Delta(A_{-i})
\]
for the actual current joint mixed action of player $i$'s opponents.
The current mixed-action profile
\[
f(h^t):=(f_1(h^t),\ldots,f_N(h^t))\in \prod_{j\in I}\Delta(A_j)
\]
is a \emph{stage $\varepsilon$-Nash equilibrium} if
\[
f_i(h^t)\in \mathrm{br}_i^\varepsilon\!\bigl(f_{-i}(h^t)\bigr)
\qquad\text{for every }i\in I.
\]
\end{definition}

Fix player $i$ and let $f^i=(f_i,f_{-i}^i)$, where $f_{-i}^i$ is the prior-predictive reference model from Section~\ref{sec:beliefs}. Let $f_{-i}^{i,t}$ denote a continuation model reproducing player $i$'s predictive continuation law at history $h^t$. We write
\[
q_i^t(\cdot\mid h^t)\ :=\ f_{-i}^{i,t}(h^t)\in\Delta(A_{-i})
\]
for player $i$'s one-step posterior predictive belief about the opponents' next joint action. When the posterior $\mu_i^t(\cdot\mid h^t)$ is supported on a finite set
$\mathcal S_{-i}\subseteq \mathcal F_{-i}$, this is
\[
q_i^t(\cdot\mid h^t)
=
\sum_{g_{-i}\in\mathcal S_{-i}}
\mu_i^t(g_{-i}\mid h^t)\, g_{-i}(h^t)(\cdot).
\]

\begin{definition}[Myopic posterior-sampling best response (myopic PS-BR)]
\label{def:myopic_psbr}
Fix player $i$ and a history $h^t$.
Suppose $\mu_i^t(\cdot\mid h^t)$ is supported on a finite set $\mathcal S_{-i}$.
For each $g_{-i}\in\mathcal S_{-i}$, choose a mixed action
\[
\alpha_i^{g_{-i},h^t}\in \mathrm{br}_i\!\bigl(g_{-i}(h^t)\bigr).
\]
Myopic PS-BR:
\begin{enumerate}[leftmargin=*]
\item samples $\tilde f_{-i}\sim \mu_i^t(\cdot\mid h^t)$;
\item uses the mixed action $\alpha_i^{\tilde f_{-i},h^t}$.
\end{enumerate}
The induced ex ante mixed action is
\[
\alpha_{i,t}^{\mathrm{mPS}}(\cdot\mid h^t)
:=
\sum_{g_{-i}\in\mathcal S_{-i}}
\mu_i^t(g_{-i}\mid h^t)\,
\alpha_i^{g_{-i},h^t}(\cdot).
\]
Whenever player $i$ uses myopic PS-BR, we identify
\[
f_i(h^t)=\alpha_{i,t}^{\mathrm{mPS}}(\cdot\mid h^t).
\]
\end{definition}

\begin{lemma}[Stage best responses are stable under nearby beliefs]
\label{lem:stage_br_stability}
Fix player $i$ and define
\[
\|p-q\|_{\TV}:=\sup_{B\subseteq A_{-i}} |p(B)-q(B)|
\qquad\text{for }p,q\in\Delta(A_{-i}).
\]
If $\alpha_i\in \mathrm{br}_i^\xi(q)$, then
\[
\alpha_i\in \mathrm{br}_i^{\xi+2\|p-q\|_{\TV}}(p).
\]
\end{lemma}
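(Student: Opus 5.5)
The plan is a direct perturbation argument. Payoffs lie in $[0,1]$, so for every fixed mixed action the map $q\mapsto u_i(\alpha_i,q)$ is affine, with coefficients bounded in $[0,1]$. This means changing the belief from $q$ to $p$ moves the expected stage payoff of any mixed action by at most $\|p-q\|_{\TV}$. I will prove that uniform bound first and then apply it twice: once to the candidate $\alpha_i$, and once to the supremum over competitors.

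\emph{Step 1 (uniform payoff perturbation).} Fix $\alpha_i\in\Delta(A_i)$ and define $\phi(a_{-i}):=\sum_{a_i}\alpha_i(a_i)u_i(a_i,a_{-i})\in[0,1]$. Then $u_i(\alpha_i,p)-u_i(\alpha_i,q)=\sum_{a_{-i}}\phi(a_{-i})\bigl(p(a_{-i})-q(a_{-i})\bigr)$. Let $B:=\{a_{-i}:p(a_{-i})>q(a_{-i})\}$. Since $0\le\phi\le 1$, this sum is at most $p(B)-q(B)\le\|p-q\|_{\TV}$, and by the symmetric argument on $B^c$ it is at least $-\|p-q\|_{\TV}$. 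So $|u_i(\alpha_i,p)-u_i(\alpha_i,q)|\le\|p-q\|_{\TV}$ uniformly in $\alpha_i$. The one point to check is that the oscillation of $\phi$ is at most $1$ rather than $2$; this holds because payoffs lie in $[0,1]$, and it is what gives the constant $1$ here.

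\emph{Step 2 (suprema).} Taking the supremum over $\alpha_i'$ of the Step 1 bound gives $\sup_{\alpha_i'}u_i(\alpha_i',p)\le\sup_{\alpha_i'}u_i(\alpha_i',q)+\|p-q\|_{\TV}$. Both suprema are attained because $\Delta(A_i)$ is compact and the payoff is continuous, though attainment is not actually needed.

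\emph{Step 3 (chain).} Write $\delta:=\|p-q\|_{\TV}$. For $\alpha_i\in\mathrm{br}_i^\xi(q)$,
\[
u_i(\alpha_i,p)\ge u_i(\alpha_i,q)-\delta\ge \sup_{\alpha_i'}u_i(\alpha_i',q)-\xi-\delta\ge \sup_{\alpha_i'}u_i(\alpha_i',p)-\xi-2\delta ,
\]
which is exactly $\alpha_i\in\mathrm{br}_i^{\xi+2\delta}(p)$. There is no real obstacle; the only care needed is the sign bookkeeping in Step 1 and using the $[0,1]$ range so the constant comes out as $2$.
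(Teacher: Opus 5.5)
Your proposal is correct and follows essentially the same route as the paper: the paper also averages out player $i$'s mixed action into $\phi_{\alpha_i}\in[0,1]$, bounds $|u_i(\alpha_i,p)-u_i(\alpha_i,q)|\le\|p-q\|_{\TV}$ by splitting on the set where $p\ge q$, and then runs the same three-inequality chain through the suprema. Your use of the strict set $\{p>q\}$ instead of $\{p\ge q\}$ is immaterial.
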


\begin{lemma}[Myopic PS-BR is a $D_i^t$-stage best response]
\label{lem:myopic_psbr_gap}
Fix player $i$ and a history $h^t$. Suppose $\mu_i^t(\cdot\mid h^t)$ is supported on a
finite set $\mathcal S_{-i}$ and write
\[
p_t(g_{-i}) := \mu_i^t(g_{-i}\mid h^t),\qquad g_{-i}\in\mathcal S_{-i}.
\]
Define
\[
D_i^t(h^t)
:=
1-\sum_{g_{-i}\in\mathcal S_{-i}} p_t(g_{-i})^2.
\]
Let $\alpha_{i,t}^{\mathrm{mPS}}(\cdot\mid h^t)$ be myopic PS-BR and let
\[
q_i^t(\cdot\mid h^t)
=
\sum_{g_{-i}\in\mathcal S_{-i}}p_t(g_{-i})\,g_{-i}(h^t)(\cdot)
\]
be the one-step posterior predictive belief. Then
\[
u_i\!\bigl(\alpha_{i,t}^{\mathrm{mPS}},\, q_i^t(\cdot\mid h^t)\bigr)
\ge
\sup_{\alpha_i\in\Delta(A_i)}u_i\!\bigl(\alpha_i,\, q_i^t(\cdot\mid h^t)\bigr)
-
D_i^t(h^t).
\]
Equivalently,
\[
\alpha_{i,t}^{\mathrm{mPS}}(\cdot\mid h^t)
\in
\mathrm{br}_i^{D_i^t(h^t)}\!\bigl(q_i^t(\cdot\mid h^t)\bigr).
\]
\end{lemma}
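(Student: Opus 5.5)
The plan is a direct linearity-plus-coupling argument, mirroring what Lemma~\ref{lem:psbr_gap} does for continuation strategies. Write $p_g:=p_t(g)$ for $g\in\mathcal S_{-i}$, $q:=q_i^t(\cdot\mid h^t)=\sum_g p_g\, g(h^t)$, and $\alpha^g:=\alpha_i^{g,h^t}\in\mathrm{br}_i(g(h^t))$. Since $u_i(\alpha_i,q)$ is bilinear in $(\alpha_i,q)$, the payoff of the induced mixed action is
\[
u_i\bigl(\alpha_{i,t}^{\mathrm{mPS}},q\bigr)=\sum_{g}\sum_{g'} p_g\,p_{g'}\,u_i\bigl(\alpha^{g},g'(h^t)\bigr).
\]
This is the expected payoff when the action is drawn from an independent sample $g$ and the opponents' action from an independent sample $g'$ of the posterior.

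Next, fix any comparison $\alpha_i\in\Delta(A_i)$. Its payoff against $q$ expands the same way:
\[
u_i(\alpha_i,q)=\sum_g\sum_{g'}p_gp_{g'}\,u_i(\alpha_i,g'(h^t)).
\]
I will split the double sum into diagonal terms ($g=g'$) and off-diagonal terms ($g\neq g'$).

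On the diagonal, $\alpha^{g}$ is an exact best response to $g(h^t)$, so
\[
u_i(\alpha^{g},g(h^t))\ge u_i(\alpha_i,g(h^t)).
\]
Off the diagonal, $u_i\in[0,1]$ gives the crude bound
\[
u_i(\alpha^{g},g'(h^t))-u_i(\alpha_i,g'(h^t))\ge -1.
\]
Subtracting the two expansions therefore yields
\[
u_i(\alpha_{i,t}^{\mathrm{mPS}},q)-u_i(\alpha_i,q)\ \ge\ -\sum_{g\neq g'}p_gp_{g'}\ =\ -\Bigl(1-\sum_g p_g^2\Bigr)\ =\ -D_i^t(h^t).
\]
Taking the supremum over $\alpha_i$ gives the claim, and the equivalent membership statement in $\mathrm{br}_i^{D_i^t(h^t)}(q)$ is just the definition in Definition~\ref{def:stage_br_nash}.

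There is no real obstacle here. The only point needing care is that the supremum over $\alpha_i$ is attained, since $\Delta(A_i)$ is compact and $u_i$ is continuous, though a supremum suffices anyway. The other point is making sure the off-diagonal bound uses the payoff range $[0,1]$ rather than any TV estimate. One could instead route through Lemma~\ref{lem:stage_br_stability}, but that would yield the weaker constant $2\|g(h^t)-q\|_{\TV}$ averaged over $g$. The direct bilinear coupling is cleaner and gives exactly $D_i^t$.
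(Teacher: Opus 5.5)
Your proof is correct and is essentially the paper's argument. The paper also expands $u_i(\alpha_{i,t}^{\mathrm{mPS}},q_i^t)$ bilinearly over two independent posterior draws, keeps only the diagonal terms (where $\alpha_i^{g_{-i},h^t}$ attains the envelope $M(g_{-i})=\sup_{\alpha_i}u_i(\alpha_i,g_{-i}(h^t))$), bounds the supremum by $\sum p_t(g_{-i})M(g_{-i})$, and uses $M\le 1$ to obtain $\sum(p_t-p_t^2)=D_i^t(h^t)$; your fixed-comparator, term-by-term version is only a cosmetic variant of that envelope formulation.
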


\begin{lemma}[Strong path prediction implies one-step predictive accuracy]
\label{lem:one_step_predict_accuracy}
Fix player $i$. Suppose player $i$ learns to predict the path of play under $f$
(Definition~\ref{def:learn_predict_path}). Then
\[
\mu^f\!\left(
\left\{
z:\forall \eta>0,\ \exists T_i(z,\eta)<\infty\ \text{s.t.}\ \forall t\ge T_i(z,\eta),\
\big\|q_i^t(\cdot\mid h^t(z)) - f_{-i}(h^t(z))\big\|_{\TV}\le \eta
\right\}
\right)=1.
\]
\end{lemma}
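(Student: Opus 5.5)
The plan is to extract the first-coordinate marginal from the weak distance. By Definition~\ref{def:learn_predict_path}, for each fixed $\eta>0$ there is a full-measure set of paths $Z_\eta$ on which, eventually, $d_{h^t(z)}(\mu^f,\mu^{f^i})\le \eta$. We apply this with $\eta$ ranging over the countable set $\{1/m: m\in\mathbb N\}$ and intersect the corresponding events. The intersection $Z^\ast:=\bigcap_m Z_{1/m}$ still has $\mu^f$-measure one. Since every $\eta>0$ dominates some $1/m$, on $Z^\ast$ the quantifier ``for all $\eta$'' can be moved inside the probability, which is exactly the form of the claimed statement.

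The key step is to bound the one-step total variation discrepancy by the weak distance. By Definition~\ref{def:weak_distance}, the conditional weak distance contains the $t=1$ term $2^{-1}\sup_{E\in\mathcal B^1}|\mu^f(E\mid C(h^t))-\mu^{f^i}(E\mid C(h^t))|$. Events in $\mathcal B^1$ (after reindexing the continuation) are determined by the next joint action $a^t\in A$. Under $\mu^f(\cdot\mid C(h^t))$, the law of $a^t$ is $f_i(h^t)\otimes f_{-i}(h^t)$. Under the supporting predictive profile $f^i=(f_i,f^{i,t}_{-i})$ at $h^t$, it is $f_i(h^t)\otimes q_i^t(\cdot\mid h^t)$, using the defining property of the representative continuation model from Section~\ref{sec:beliefs}. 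Both laws share the own-action factor $f_i(h^t)$.

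To compare the opponent marginals, take $E=A_i\times B$ for $B\subseteq A_{-i}$. This gives $|q_i^t(B\mid h^t)-f_{-i}(h^t)(B)|\le \sup_{E\in\mathcal B^1}|\cdots|\le 2\,d_{h^t}(\mu^f,\mu^{f^i})$. Hence $\|q_i^t-f_{-i}(h^t)\|_{\TV}\le 2d_{h^t}$. Given $\eta$, choose $m$ with $2/m\le\eta$ and set $T_i(z,\eta):=T_i(z,1/m)$ from the event $Z_{1/m}$. This yields the bound for all $t\ge T_i(z,\eta)$.

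The main obstacle is bookkeeping around conditioning. The weak distance $d_{h^t}$ is only defined when both $\mu^f(C(h^t))>0$ and $\mu^{f^i}(C(h^t))>0$. On $\mu^f$-a.e.\ path every realized cylinder has positive $\mu^f$-mass. Positivity under $\mu^{f^i}$ follows from the grain-of-truth/predictive-representative construction, and Definition~\ref{def:learn_predict_path} implicitly presupposes it. We also need the time-$t$ posterior predictive one-step law to coincide with the first-step marginal of $\mu^{f^i}(\cdot\mid C(h^t))$. I would verify this directly from the Appendix~\ref{app:belief_repr} construction: conditioning the prior predictive on $C(h^t)$ equals the Bayes posterior predictive. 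Once this identification is in place, the rest is the elementary inequality above.
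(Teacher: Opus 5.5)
Your proposal is correct and matches the paper's proof. Both arguments bound $\|q_i^t(\cdot\mid h^t)-f_{-i}(h^t)\|_{\TV}$ by the $\mathcal B^1$ term of the reindexed continuation weak distance, using the one-step events $\{y: y^1_{-i}\in B\}$, which yields $\|q_i^t(\cdot\mid h^t)-f_{-i}(h^t)\|_{\TV}\le 2\,d_{h^t}(\mu^f,\mu^{f^i})$; the paper obtains the identification $q_i^t(\cdot\mid h^t)=f_{-i}^i(h^t)$ from the continuation-consistent representative choice in Appendix~\ref{app:belief_repr} rather than from your direct Bayes-consistency check. Your countable intersection over $\eta=1/m$ tightens the paper's ``since $\eta>0$ was arbitrary'' step, which as written fixes a path in a full-measure set that depends on $\eta$.
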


For stage-game Nash convergence, exact identification of the opponents' full continuation
strategy is again stronger than needed. What matters is only whether the retained labels induce
the same \emph{one-step stage decision problem} at the reached history. The relevant comparison is
now myopic rather than intertemporal: two labels are equivalent if, at the current history, they
make every current mixed action deliver the same expected stage payoff to player $i$. In that
case, distinguishing between them has no effect on the current stage best-response problem.

\begin{definition}[On-path stage-payoff equivalence]
\label{def:stage_equiv}
Fix player $i$ and a history $h^t$. For opponents' continuation profiles
$g_{-i},g'_{-i}\in\mathcal F_{-i}$, write
\[
g_{-i}\approx_i^{h^t} g'_{-i}
\]
if
\[
u_i(\alpha_i,g_{-i}(h^t))
=
u_i(\alpha_i,g'_{-i}(h^t))
\qquad
\text{for every }\alpha_i\in\Delta(A_i).
\]
Equivalently, $g_{-i}$ and $g'_{-i}$ induce the same one-step stage optimization problem
for player $i$ at history $h^t$.
\end{definition}

\begin{assumption}[Strategy identifiability]
\label{ass:stage_equiv_class_conc}
Fix player $i$.
Assume the support of $\mu_i^0$ is finite; write
\[
\mathcal S_{-i}:=\mathrm{supp}(\mu_i^0)\subseteq \mathcal F_{-i}.
\]
This assumption again concerns only the finite retained strategy menu $\mathcal S_{-i}$ used
for posterior sampling, not the full strategy space $\mathcal F_{-i}$.
Assume:
\begin{enumerate}[leftmargin=*]
\item \textit{(Menu grain of truth)} $f_{-i}\in\mathcal S_{-i}$ and $\mu_i^0(f_{-i})>0$.
\item \textit{(On-path concentration only up to stage-payoff equivalence)} For each history $h^t$, define
\[
\mathcal E_i^{\mathrm{st}}(h^t)
:=
\{g_{-i}\in\mathcal S_{-i}: g_{-i}\approx_i^{h^t} f_{-i}\}.
\]
Then
\[
\mu^f\!\left(
\left\{
z:
1-\mu_i^t(\mathcal E_i^{\mathrm{st}}(h^t(z))\mid h^t(z))
\longrightarrow 0
\right\}
\right)=1.
\]
\end{enumerate}
\end{assumption}

As in the continuation-value case, Assumption~\ref{ass:stage_equiv_class_conc} is a
menu-level condition and does not require exact identification of the literal opponent label.
It requires only that, on the realized path, posterior mass within the retained menu eventually
falls on labels that are stage-payoff-equivalent to the truth at the reached history. Any
residual posterior uncertainty inside that one-step equivalence class is harmless for myopic
PS-BR, because it leaves the current stage optimization problem unchanged.

\begin{lemma}[Stage-equivalence concentration controls the myopic PS-BR gap]
\label{lem:myopic_equiv_gap}
Fix player $i$ and a history $h^t$. Let
\[
\beta_i^t(h^t)
:=
1-\mu_i^t(\mathcal E_i^{\mathrm{st}}(h^t)\mid h^t),
\]
where $\mathcal E_i^{\mathrm{st}}(h^t)$ is the true stage-payoff equivalence class from
Assumption~\ref{ass:stage_equiv_class_conc}. Let
$\alpha_{i,t}^{\mathrm{mPS}}(\cdot\mid h^t)$ be myopic PS-BR. Then
\[u_i\!\bigl(\alpha_{i,t}^{\mathrm{mPS}},\, f_{-i}(h^t)\bigr)
\ge
\sup_{\alpha_i\in\Delta(A_i)}u_i\!\bigl(\alpha_i,\, f_{-i}(h^t)\bigr)
-
\beta_i^t(h^t).
\]
Equivalently,
\[
\alpha_{i,t}^{\mathrm{mPS}}(\cdot\mid h^t)
\in
\mathrm{br}_i^{\beta_i^t(h^t)}\!\bigl(f_{-i}(h^t)\bigr).
\]
\end{lemma}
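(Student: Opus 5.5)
The argument compares the myopic PS-BR mixture directly with the true opponents' next-action distribution $f_{-i}(h^t)$, rather than with the posterior predictive $q_i^t$. We split the finite menu $\mathcal S_{-i}$ into the true stage-payoff equivalence class $\mathcal E:=\mathcal E_i^{\mathrm{st}}(h^t)$ and its complement. Write $p_t(g_{-i})=\mu_i^t(g_{-i}\mid h^t)$, so that $\beta_i^t(h^t)=\sum_{g_{-i}\notin\mathcal E}p_t(g_{-i})$. Set
\[
v^\ast:=\sup_{\alpha_i\in\Delta(A_i)}u_i\bigl(\alpha_i,f_{-i}(h^t)\bigr).
\]
Because $u_i(\cdot,q)$ is linear in $\alpha_i$, and $u_i(\alpha_i,q)$ is bilinear, the payoff of the ex ante mixture decomposes as
\[
u_i\bigl(\alpha_{i,t}^{\mathrm{mPS}},f_{-i}(h^t)\bigr)=\sum_{g_{-i}\in\mathcal S_{-i}}p_t(g_{-i})\,u_i\bigl(\alpha_i^{g_{-i},h^t},f_{-i}(h^t)\bigr).
\]
The plan is therefore to bound each summand from below by $v^\ast$ for labels in $\mathcal E$, and by $v^\ast-1$ for labels outside $\mathcal E$.

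\emph{Labels in the equivalence class.} Take $g_{-i}\in\mathcal E$. By Definition~\ref{def:stage_equiv}, the map $\alpha_i\mapsto u_i(\alpha_i,g_{-i}(h^t))$ coincides with $\alpha_i\mapsto u_i(\alpha_i,f_{-i}(h^t))$ on all of $\Delta(A_i)$. Hence the two functions have the same supremum and the same maximizers. Since $\alpha_i^{g_{-i},h^t}\in\mathrm{br}_i(g_{-i}(h^t))$, it follows that
\[
u_i\bigl(\alpha_i^{g_{-i},h^t},f_{-i}(h^t)\bigr)=u_i\bigl(\alpha_i^{g_{-i},h^t},g_{-i}(h^t)\bigr)=v^\ast.
\]
So every label in $\mathcal E$ contributes its full optimal value.

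\emph{Labels outside the class.} Take $g_{-i}\notin\mathcal E$. Since $u_i\in[0,1]$, we have $u_i(\cdot,\cdot)\ge 0\ge v^\ast-1$, so each such summand is at least $v^\ast-1$. We use only this crude bound here; Lemma~\ref{lem:stage_br_stability} would give no improvement, because the total-variation distance between $g_{-i}(h^t)$ and $f_{-i}(h^t)$ can be as large as $1$.

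\emph{Summing.} Combining the two cases gives
\[
u_i\bigl(\alpha_{i,t}^{\mathrm{mPS}},f_{-i}(h^t)\bigr)\ge (1-\beta_i^t)\,v^\ast+\beta_i^t\,(v^\ast-1)=v^\ast-\beta_i^t(h^t),
\]
which is exactly the claimed inequality. The equivalent membership $\alpha_{i,t}^{\mathrm{mPS}}(\cdot\mid h^t)\in\mathrm{br}_i^{\beta_i^t(h^t)}(f_{-i}(h^t))$ is just a restatement via Definition~\ref{def:stage_br_nash}.

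The argument is essentially bookkeeping. The one point that needs care is the first step, namely that the true profile $f_{-i}$ lies in $\mathcal S_{-i}$ and that $\approx_i^{h^t}$ is defined relative to $f_{-i}(h^t)$ itself. This is what lets us compare against the true next-action law directly, with no detour through $q_i^t$ and Lemma~\ref{lem:myopic_psbr_gap}. That route would lose an extra total-variation term and yield a constant of $2$ rather than $1$. We also note that no grain-of-truth assumption is actually needed for this pointwise bound.
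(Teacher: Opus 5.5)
Your proof is correct and is essentially the paper's argument. Both use linearity in the own mixed action, the fact that in-class labels attain $\sup_{\alpha_i}u_i(\alpha_i,f_{-i}(h^t))$ by stage-payoff equivalence, and the bound $u_i\in[0,1]$ on the remaining mass $\beta_i^t(h^t)$ (the paper writes this as $(1-\beta_i^t)M_i$ and then uses $M_i\le 1$). One small note on your commentary: the argument never uses $f_{-i}\in\mathcal S_{-i}$, only that $\approx_i^{h^t}$ is defined relative to $f_{-i}(h^t)$, so your closing remark that no grain of truth is needed is the accurate one.
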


The proof logic is the exact myopic analogue of the continuation-level PS-BR argument above. Posterior mass
on the true stage-payoff equivalence class is strategically harmless, because every label in that
class induces the same current stage payoff function over player $i$'s mixed actions. Only the
remaining posterior mass on stage-distinct labels can create a myopic best-response gap.

\begin{theorem}[Stage-game Nash convergence under myopic PS-BR]
\label{thm:myopic_psbr_stage_nash}
Assume that for every player $i$, Assumption~\ref{ass:stage_equiv_class_conc} holds and player $i$
uses myopic PS-BR (Definition~\ref{def:myopic_psbr}) at every history.
Then for every $\varepsilon>0$,
\[
\mu^f\!\left(
\left\{
z:\exists T(z)<\infty\ \text{s.t.}\ \forall t\ge T(z),\
f(h^t(z))\ \text{is a stage $\varepsilon$-Nash equilibrium}
\right\}
\right)=1.
\]
\end{theorem}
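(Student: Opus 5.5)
The argument reduces the theorem to Lemma~\ref{lem:myopic_equiv_gap} applied separately to each player, followed by a finite intersection of probability-one events. Fix $\varepsilon>0$. Since each player $i$ uses myopic PS-BR at every history, Definition~\ref{def:myopic_psbr} identifies $f_i(h^t)=\alpha_{i,t}^{\mathrm{mPS}}(\cdot\mid h^t)$. Lemma~\ref{lem:myopic_equiv_gap} then gives, at every history $h^t$,
\[
f_i(h^t)\in \mathrm{br}_i^{\beta_i^t(h^t)}\bigl(f_{-i}(h^t)\bigr),
\qquad
\beta_i^t(h^t)=1-\mu_i^t(\mathcal E_i^{\mathrm{st}}(h^t)\mid h^t).
\]
Note that $f_{-i}(h^t)$ here is the actual current joint mixed action of the opponents. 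So no detour through the predictive belief $q_i^t$ is needed, and Lemmas~\ref{lem:one_step_predict_accuracy} and~\ref{lem:stage_br_stability} are not required.

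Next, for each $i$ let $\Omega_i$ be the event that $\beta_i^t(h^t(z))\to 0$. By part 2 of Assumption~\ref{ass:stage_equiv_class_conc}, $\mu^f(\Omega_i)=1$. On $\Omega_i$ there is a finite $T_i(z,\varepsilon)$ such that $\beta_i^t(h^t(z))\le\varepsilon$ for all $t\ge T_i(z,\varepsilon)$. Because $\mathrm{br}_i^\xi$ is monotone in $\xi$, it follows that $f_i(h^t(z))\in\mathrm{br}_i^\varepsilon(f_{-i}(h^t(z)))$ for all such $t$. Set $\Omega=\bigcap_{i\in I}\Omega_i$. Since $I$ is finite, $\mu^f(\Omega)=1$. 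For $z\in\Omega$, take $T(z)=\max_i T_i(z,\varepsilon)$; for every $t\ge T(z)$, every player is stage-$\varepsilon$-best-responding, so $f(h^t(z))$ is a stage $\varepsilon$-Nash equilibrium.

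The only delicate point is a consistency check on what the true opponent profile is. The statement of Lemma~\ref{lem:myopic_equiv_gap} implicitly requires that $f_{-i}$, the object in the menu grain of truth, be the same profile whose current action $f_{-i}(h^t)$ appears in the stage-Nash definition. Each opponent $j$ follows myopic PS-BR, so $f_j(h)=\alpha_{j}^{\mathrm{mPS}}(\cdot\mid h)$ at every $h$. This defines a fixed map $H\to\Delta(A_j)$ determined at time $0$ by $j$'s prior, Bayes updating, and selection rule, as discussed after Corollary~\ref{cor:psbr_zero_shot_nash}. Hence it is a legitimate element of $\mathcal F_{-i}$, and Assumption~\ref{ass:stage_equiv_class_conc}(1) applies to it.

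If one wants the proof to be self-contained rather than citing Lemma~\ref{lem:myopic_equiv_gap}, the gap bound splits the sum defining $\alpha_{i,t}^{\mathrm{mPS}}$ over $\mathcal E_i^{\mathrm{st}}(h^t)$ and its complement. For each $g_{-i}$ in the class, $\alpha_i^{g_{-i},h^t}$ is an exact best response to $g_{-i}(h^t)$. By Definition~\ref{def:stage_equiv}, the stage objectives under $g_{-i}(h^t)$ and $f_{-i}(h^t)$ coincide for all $\alpha_i$, so $\alpha_i^{g_{-i},h^t}$ is also an exact best response against $f_{-i}(h^t)$. Outside the class the loss is at most $1$, because payoffs lie in $[0,1]$. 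Linearity of $u_i$ in $\alpha_i$ then yields a total loss of at most $\beta_i^t(h^t)$.
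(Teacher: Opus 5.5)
Your proposal is correct and follows essentially the same route as the paper. The paper likewise identifies $f_i(h^t)=\alpha_{i,t}^{\mathrm{mPS}}(\cdot\mid h^t)$, applies Lemma~\ref{lem:myopic_equiv_gap}, uses Assumption~\ref{ass:stage_equiv_class_conc}(2) to get $\beta_i^t(h^t(z))\le\varepsilon$ eventually, and intersects the finitely many full-measure events with $T(z)=\max_i T_i(z,\varepsilon)$. Your remarks on the identity of $f_{-i}$ and your inline re-derivation of the gap bound are accurate elaborations, not a different argument.
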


\subsection{SCoT \citep{akata2025playing}}

The second reduction is SCoT \citep{akata2025playing}. Instead of best responding to the full one-step predictive distribution, the agent first forms a \emph{deterministic point prediction} of the opponents' next joint action and then best responds to that point prediction. In general, this is not equivalent to best responding to a mixed belief, so the argument is different from the classical Bayesian-learning-plus-best-response route. Nevertheless, when all players use deterministic point-prediction rules, the true next action along the realized path is pure at every history, and predictive accuracy is enough to make the point prediction eventually correct. This gives eventual stage-game Nash convergence under a different mechanism than myopic PS-BR.

\begin{definition}[Social Chain of Thought (SCoT) \citep{akata2025playing}]
\label{def:det_map_scot}
Fix player $i$.
At each history $h^t$, let
\[
q_i^t(\cdot\mid h^t):=f_{-i}^{i,t}(h^t)\in\Delta(A_{-i})
\]
denote player $i$'s one-step predictive distribution over opponents' next joint action.

A \emph{SCoT} rule for player $i$ consists of:
\begin{enumerate}[leftmargin=*]
\item a deterministic MAP (maximum a posteriori) selector
\[
\hat a_{-i}^t(h^t)\in \arg\max_{a_{-i}\in A_{-i}} q_i^t(a_{-i}\mid h^t);
\]
\item a deterministic pure best-response selector
\[
b_i:A_{-i}\to A_i
\qquad\text{such that}\qquad
b_i(a_{-i})\in \arg\max_{a_i\in A_i} u_i(a_i,a_{-i})
\ \ \text{for every }a_{-i}\in A_{-i}.
\]
\end{enumerate}
The induced strategy is
\[
f_i(h^t)\ :=\ \delta_{\,b_i(\hat a_{-i}^t(h^t))}\in\Delta(A_i).
\]
Thus a SCoT player uses a pure action at every history.
\end{definition}

\begin{lemma}[Deterministic truth implies asymptotic purity and eventual MAP correctness]
\label{lem:det_truth_implies_map_correct}
Fix player $i$ and suppose player $i$ learns to predict the path of play under $f$
in the sense of Definition~\ref{def:learn_predict_path}.
Assume that for every history $h\in H$ there exists an action
$a_{-i}^{\star}(h)\in A_{-i}$ such that
\[
f_{-i}(h)=\delta_{a_{-i}^{\star}(h)}.
\]
Then
\[
\mu^f\!\left(
\left\{
z:\exists T_i(z)<\infty\ \text{s.t.}\ \forall t\ge T_i(z),\
\hat a_{-i}^t(h^t(z))=a_{-i}^{\star}(h^t(z))
\right\}
\right)=1.
\]
In particular, along $\mu^f$-almost every realized path $z$,
\[
q_i^t\!\bigl(a_{-i}^{\star}(h^t(z))\mid h^t(z)\bigr)\longrightarrow 1
\qquad\text{and}\qquad
1-\max_{a_{-i}\in A_{-i}} q_i^t(a_{-i}\mid h^t(z))\longrightarrow 0.
\]
\end{lemma}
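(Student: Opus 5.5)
The plan is to derive the statement from Lemma~\ref{lem:one_step_predict_accuracy}, which already gives one-step predictive accuracy in total variation, combined with the fact that the truth $f_{-i}(h)$ is a point mass. Fix the full-measure set $Z_0$ of paths on which the conclusion of Lemma~\ref{lem:one_step_predict_accuracy} holds. On any $z\in Z_0$, for every $\eta>0$ there is $T_i(z,\eta)$ such that for $t\ge T_i(z,\eta)$,
\[
\bigl\|q_i^t(\cdot\mid h^t(z))-\delta_{a_{-i}^\star(h^t(z))}\bigr\|_{\TV}\le\eta .
\]

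The first step uses the singleton event $B=\{a_{-i}^\star(h^t(z))\}$ in the definition of $\|\cdot\|_{\TV}$. This gives $1-q_i^t(a_{-i}^\star(h^t(z))\mid h^t(z))\le\eta$, so $q_i^t(a_{-i}^\star(h^t(z))\mid h^t(z))\to 1$ along $z$.

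Since the maximum of $q_i^t(\cdot\mid h^t(z))$ is at least its value at $a_{-i}^\star(h^t(z))$, we also get $1-\max_{a_{-i}}q_i^t(a_{-i}\mid h^t(z))\le\eta$. This proves both displayed limits in the ``in particular'' part.

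The second step gets MAP correctness by fixing $\eta=1/3$, or any $\eta<1/2$. For $t\ge T_i(z,1/3)$, the predicted mass on $a_{-i}^\star(h^t(z))$ is at least $2/3$. Every other action therefore has mass at most $1/3$. Hence $a_{-i}^\star(h^t(z))$ is the unique maximizer of $q_i^t(\cdot\mid h^t(z))$, and any MAP selector must output it: $\hat a_{-i}^t(h^t(z))=a_{-i}^\star(h^t(z))$. Setting $T_i(z):=T_i(z,1/3)$ gives the event in the statement on all of $Z_0$, so it has $\mu^f$-probability one.

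The main obstacle is that the lemma's argument has little mathematical content of its own. Any subtlety lies in checking that Lemma~\ref{lem:one_step_predict_accuracy} is being applied to the right object: the quantity compared there must be the actual joint next action $f_{-i}(h^t)$, which under the determinism hypothesis is $\delta_{a_{-i}^\star(h^t)}$. One also has to confirm that $q_i^t$ is well defined on reached histories. Assumption~\ref{ass:grain_of_truth} implicitly guarantees this, since reached histories carry positive prior predictive mass, and this is already packaged in the strong path prediction hypothesis.

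The determinism hypothesis is used only to turn closeness in total variation into a strict margin. Without it, two actions could tie or nearly tie under the truth, and MAP selection could oscillate between them.
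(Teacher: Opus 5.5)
Your proof is correct and takes essentially the same route as the paper. The paper bounds $1-q_i^t\bigl(a_{-i}^{\star}(h^t(z))\mid h^t(z)\bigr)$ by $2\,d_{h^t(z)}(\mu^f,\mu^{f^i})$ through the one-step cylinder event, which is the argument behind Lemma~\ref{lem:one_step_predict_accuracy} that you invoke directly. It then takes a mass threshold above $1/2$ to make $a_{-i}^{\star}(h^t(z))$ the unique MAP maximizer, which is exactly your $\eta=1/3$ step.
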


\begin{theorem}[One-shot stage-game Nash convergence for SCoT]
\label{thm:det_scot_stage_nash}
Suppose every player $i\in I$ uses SCoT in the sense of
Definition~\ref{def:det_map_scot}, and suppose every player learns to predict the
path of play under $f$ in the sense of Definition~\ref{def:learn_predict_path}.
Then
\[
\mu^f\!\left(
\left\{
z:\exists T(z)<\infty\ \text{s.t.}\ \forall t\ge T(z),\
f(h^t(z))\ \text{is a stage Nash equilibrium}
\right\}
\right)=1.
\]
Equivalently, along $\mu^f$-almost every realized path, the current mixed-action profile
eventually becomes a stage $0$-Nash equilibrium.
\end{theorem}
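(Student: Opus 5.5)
The plan is to reduce the theorem to Lemma~\ref{lem:det_truth_implies_map_correct}, applied to each player, and then to observe that a correct point prediction combined with a pure best response gives an exact stage Nash equilibrium.

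First, we verify the purity hypothesis of Lemma~\ref{lem:det_truth_implies_map_correct}. Every player uses SCoT, so by Definition~\ref{def:det_map_scot} each $f_j(h)$ is a point mass $\delta_{b_j(\hat a_{-j}^t(h))}$ at every history $h$. This holds on every history, not only on-path, because the MAP selector and the best-response selector are deterministic functions of $h$. Hence for each $i$ the opponents' joint mixed action $f_{-i}(h)=\bigotimes_{j\neq i} f_j(h)$ is itself a point mass $\delta_{a_{-i}^\star(h)}$, where $a_{-i}^\star(h)=(b_j(\hat a_{-j}^t(h)))_{j\neq i}$. Together with the hypothesis that each player learns to predict the path of play, Lemma~\ref{lem:det_truth_implies_map_correct} gives, for each $i$, a $\mu^f$-full-measure set $Z_i$ of paths $z$ with a finite $T_i(z)$ such that $\hat a_{-i}^t(h^t(z))=a_{-i}^\star(h^t(z))$ for all $t\ge T_i(z)$.

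Second, we intersect across players. Since $I$ is finite, $Z:=\bigcap_i Z_i$ has $\mu^f$-measure one, and on $Z$ we set $T(z):=\max_i T_i(z)<\infty$. Fix $z\in Z$ and $t\ge T(z)$, and write $h=h^t(z)$. For each $i$ we then have $f_i(h)=\delta_{b_i(a_{-i}^\star(h))}$, where $a_{-i}^\star(h)$ is exactly the profile of the opponents' actual pure actions at $h$. By construction $b_i(a_{-i}^\star(h))\in\arg\max_{a_i}u_i(a_i,a_{-i}^\star(h))$. The maximum of $u_i(\cdot,\delta_{a_{-i}^\star})$ over $\Delta(A_i)$ is attained at pure actions, so $f_i(h)\in\mathrm{br}_i^0(f_{-i}(h))$. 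As this holds simultaneously for all $i$, $f(h)$ is a stage $0$-Nash equilibrium in the sense of Definition~\ref{def:stage_br_nash}, which gives the claim.

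The main obstacle is the self-referential purity step. The true opponent profile $f_{-i}$ is generated by the opponents' own SCoT rules, and those rules depend on their beliefs. We must check that it is nonetheless a fixed deterministic map $H\to A_{-i}$ defined at every history, including off-path ones, because the lemma's hypothesis is stated for all $h\in H$. This follows because each player's prior, Bayes update and selectors are fixed from time $0$, so $f_i$ is well defined as a function of the history alone. Tie-breaking in the MAP and best-response selectors is assumed to be a fixed deterministic rule, so no residual randomization enters. Beyond this point, the argument is just bookkeeping of the almost-sure sets.
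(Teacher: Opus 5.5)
Your proposal is correct and follows the paper's proof step for step: you check that SCoT makes every $f_{-i}(h)$ a point mass at every history, apply Lemma~\ref{lem:det_truth_implies_map_correct} for each player, intersect the finitely many full-measure events with $T(z)=\max_i T_i(z)$, and conclude via the pure best-response selector $b_i$. Your extra remarks only make explicit what the paper leaves implicit, namely that a pure maximizer attains the supremum over $\Delta(A_i)$ and that the SCoT-induced $f$ is a well-defined fixed map on all of $H$.
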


\begin{corollary}[Bayesian stage-game Nash convergence for SCoT]
\label{cor:det_scot_stage_nash_grain_truth}
Suppose every player uses deterministic MAP-SCoT and Assumption~\ref{ass:grain_of_truth}
holds for every player.
Then the conclusion of Theorem~\ref{thm:det_scot_stage_nash} holds:
\[
\mu^f\!\left(
\left\{
z:\exists T(z)<\infty\ \text{s.t.}\ \forall t\ge T(z),\
f(h^t(z))\ \text{is a stage Nash equilibrium}
\right\}
\right)=1.
\]
\end{corollary}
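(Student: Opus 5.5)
This is a two-step reduction: we show that the hypotheses of Theorem~\ref{thm:det_scot_stage_nash} hold, and then read off its conclusion. Theorem~\ref{thm:det_scot_stage_nash} needs two things. First, every player uses SCoT in the sense of Definition~\ref{def:det_map_scot}; this is assumed directly, since ``deterministic MAP-SCoT'' is exactly that definition. Second, every player learns to predict the path of play under $f$ in the sense of Definition~\ref{def:learn_predict_path}.

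For the second requirement we invoke Lemma~\ref{lem:absolute_cont_predict} player by player. Assumption~\ref{ass:grain_of_truth} gives $\mu^f\ll P_i^{0,f_i}$ for each $i$, so each player $i$ learns to predict the path of play. The only point to check is that Lemma~\ref{lem:absolute_cont_predict} applies when $f_i$ is the SCoT strategy. That lemma is stated for an arbitrary profile $f$ generating $\mu^f$, with the supporting predictive profile $f^i=(f_i,f_{-i}^i)$ built from player $i$'s own strategy. Because SCoT defines a legitimate strategy $f_i:H\to\Delta(A_i)$ (a point mass at every history), nothing further is needed.

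Once every player learns to predict, Theorem~\ref{thm:det_scot_stage_nash} yields the displayed almost-sure statement. Because $I$ is finite, intersecting the per-player full-measure sets is harmless. In any case the theorem already packages this step: inside it, Lemma~\ref{lem:det_truth_implies_map_correct} uses the fact that each opponent profile $f_{-i}(h)$ is a point mass, since all players play SCoT.

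I do not expect a real obstacle. The one subtlety is definitional. The predictive objects $q_i^t$ used by the MAP selector must be the same posterior predictive laws that Lemma~\ref{lem:absolute_cont_predict} controls, namely those induced by $f_{-i}^{i,t}$ from Section~\ref{sec:beliefs}. Definition~\ref{def:det_map_scot} sets $q_i^t:=f_{-i}^{i,t}(h^t)$, so the two coincide, and I would state this identification explicitly in the proof.
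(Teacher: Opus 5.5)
Your proposal is correct and matches the paper's proof: apply Lemma~\ref{lem:absolute_cont_predict} under Assumption~\ref{ass:grain_of_truth} to get strong path prediction for every player, then invoke Theorem~\ref{thm:det_scot_stage_nash}. Your extra remark identifying $q_i^t$ with $f_{-i}^{i,t}(h^t)$ is consistent with the paper, which handles it inside the proof of Lemma~\ref{lem:det_truth_implies_map_correct} through the representative choice \eqref{eq:app_rep_choice}.
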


\begin{remark}
Theorem~\ref{thm:det_scot_stage_nash} relies on the fact that when \emph{all players} use SCoT with deterministic tie-breaking, the true current action profile
is pure at every history. This is why asymptotic purity need not be imposed separately: it is implied by Bayesian one-step predictive accuracy toward a pure truth.
If opponents are allowed to play genuinely mixed current actions, this argument breaks down,
and additional conditions such as asymptotic purity or BR-invariance are again needed.

The SCoT result is therefore naturally paired with the
grain-of-truth assumption (Assumption~\ref{ass:grain_of_truth}) and the corresponding
merging-of-opinions argument, rather than with Assumptions~\ref{ass:equiv_class_conc} and \ref{ass:stage_equiv_class_conc},
which are tailored to posterior-sampling rules such as PS-BR and myopic PS-BR.
\end{remark}

The proofs are deferred to
Appendix~\ref{app:omitted_proofs}. Taken together, Theorem~\ref{thm:myopic_psbr_stage_nash} and Theorem~\ref{thm:det_scot_stage_nash} show that, for the weaker objective of \emph{stage-game} Nash convergence, full continuation planning is not necessary. However, these one-step results are inherently limited to stage-game equilibrium. They do not by themselves recover more demanding \emph{continuation-game} or \emph{history-contingent repeated-game} equilibria, whose incentive structure is sustained by the value of future paths of play. Establishing convergence to those richer repeated-game equilibria requires a procedure, such as PS-BR, that reasons over full continuation strategies rather than only over the next-period action.

\section{Extension to unknown, stochastic, and private payoffs}
\label{sec:unknown_payoffs}

Sections~\ref{sec:setup}--\ref{sec:zero_shot} assumed that the stage payoff functions
$u_i:A\to[0,1]$ are common knowledge and deterministic. We now drop that assumption
and allow each agent to observe only its own privately realized stochastic payoffs.
The key simplifying choice in this extension is that the continuation game is still
evaluated on \emph{public histories}: private reward histories are used to update a
posterior over player $i$'s own mean payoff matrix, but the continuation strategies
being compared remain public-history strategies, exactly as in Section~\ref{sec:setup}.
This keeps the equilibrium benchmark on the realized public play path and removes the
need for an additional hidden-history-collapse assumption.

\subsection{Private-payoff repeated game and observable histories}

Fix the same action sets $(A_i)_{i\in I}$ and discount factors $(\lambda_i)_{i\in I}$ as
in Section~\ref{sec:setup}. For each player $i$, let $\mathcal R_i\subseteq \mathbb R$
denote the payoff space and let $\nu_i(\mathrm dr)$ be a dominating base measure
(counting measure in the discrete case, Lebesgue measure in the continuous case).

We assume that the payoff noise family is known. Concretely, for each player $i$ there
is a known family of densities
\[
\psi_i(r;\mu),\qquad r\in \mathcal R_i,\ \mu\in\mathbb R,
\]
where the parameter $\mu$ is the mean payoff. The true unknown object is player $i$'s
mean payoff matrix
\[
u_i:A\to[0,1].
\]
(As usual, any bounded payoff matrix can be affinely normalized into $[0,1]$ without
changing best responses or Nash inequalities.)

At round $t$, after the public joint action $a^t\in A$ is realized, player $i$ privately
observes
\begin{equation}
r_i^t \sim q_i^{u_i}(\cdot\mid a^t),
\qquad \text{where} \;\;
q_i^{u_i}(\mathrm dr\mid a)
:=
\psi_i(r;u_i(a))\,\nu_i(\mathrm dr).
\label{eq:private_payoff}
\end{equation}
Thus the true payoff kernel is determined by the true mean matrix $u_i$.

For payoff learning, player $i$'s observable history at time $t$ is
\[
x_i^t := (h^t, r_i^{1:t-1}) \in X_i^t := H^t \times \mathcal R_i^{t-1},
\qquad
X_i := \bigcup_{t\ge 1} X_i^t.
\]
The continuation plans remain the public history strategies from
Section~\ref{sec:setup}: after a realized public history $h^t$, candidate continuation
plans are elements of $\mathcal F_i(h^t)$.

The full sample space is
\[
\Omega
:=
\prod_{t\ge 1}
\Bigl(
A \times \prod_{i\in I}\mathcal R_i
\Bigr),
\]
whose typical element is
\[
\omega=(a^1,r^1,a^2,r^2,\ldots),
\qquad
r^t=(r_1^t,\ldots,r_N^t).
\]
We write $h^t(\omega)=(a^1,\ldots,a^{t-1})$ and
$x_i^t(\omega)=(h^t(\omega),r_i^{1:t-1}(\omega))$.

Let $\beta_i:X_i\to\Delta(A_i)$ denote player $i$'s actual one-step behavioral rule in the
private-payoff environment, and let $\beta=(\beta_i)_{i\in I}$. Together with the true mean
matrices $u=(u_i)_{i\in I}$, the behavioral profile $\beta$ induces a unique law $P^{\beta,u}$ on
$\Omega$ \footnote{This is by the Ionescu--Tulcea theorem \citep{ionescu1949mesures}; see \cite{pollard2002user} also.}.

The induced public-history strategy profile $f=(f_i)_{i\in I}\in\mathcal F$ is then defined by
\[
f_i(h^t)
:=
P^{\beta,u}(a_i^t\in\cdot\mid h^t),
\qquad i\in I.
\]
Its public-action law is exactly the public marginal of $P^{\beta,u}$, which we denote by
$\mu^f$.

Because continuation play is evaluated on public histories, the relevant continuation payoff
after $h^t$ is
\[
U_i^{u_i}(g\mid h^t)
:=
\mathbb E_{y\sim \mu_{h^t}^{g}}
\left[
(1-\lambda_i)\sum_{k=0}^{\infty}\lambda_i^k u_i(y^{k+1})
\right],
\]
for any continuation profile $g$ after $h^t$.
A continuation profile $g\big|_{h^t}$ is an $\varepsilon$-Nash equilibrium of the
private-payoff continuation game after $h^t$ if, for every $i\in I$,
\[
U_i^{u_i}(g\mid h^t)
\ge
\sup_{g_i'\in \mathcal F_i(h^t)}
U_i^{u_i}(g_i',g_{-i}\mid h^t)-\varepsilon.
\]

\subsection{Known-noise, unknown-mean parametrization}

We now impose the finite-menu structure used by PS-BR.
For player $i$, let $\mathcal M_i$ be a finite menu of candidate mean payoff matrices
\[
m_i:A\to[0,1].
\]
Each $m_i\in\mathcal M_i$ induces a payoff kernel
\[
q_i^{m_i}(\mathrm dr\mid a)
:=
\psi_i(r;m_i(a))\,\nu_i(\mathrm dr).
\]
Thus sampling a payoff-matrix label is exactly sampling a payoff kernel, expressed in
mean-matrix coordinates.

Given $x_i^t=(h^t,r_i^{1:t-1})$, player $i$'s posterior over candidate mean matrices is
\begin{equation}
\pi_i^t(m_i\mid x_i^t)
\propto
\pi_i^0(m_i)\prod_{s=1}^{t-1}\psi_i(r_i^s;m_i(a^s)),
\qquad
m_i\in\mathcal M_i.
\label{eq:payoff_posterior}
\end{equation}
As in Sections~\ref{sec:rr}--\ref{sec:zero_shot}, we model player $i$'s beliefs about
the opponents through a finite menu of public-action continuation models
\[
g_{-i}:H\to \Delta(A_{-i}).
\]
Let $\mathcal S_{-i}$ denote the finite retained menu and let
\[
\mu_i^t(\cdot\mid h^t)
\]
be player $i$'s posterior over $\mathcal S_{-i}$.

\subsection{Subjective continuation values and PS-BR}
\label{sec:psbr_private_payoffs}

Fix player $i$, an observable history $x_i^t=(h^t,r_i^{1:t-1})$, an
opponents' continuation model $g_{-i}\in\mathcal S_{-i}$, and a continuation strategy
$\tau_i\in\mathcal F_i(h^t)$.

Because continuation play is public-history based, a candidate mean matrix affects only
the continuation objective, not the induced public-action law. We therefore define the
$m_i$-subjective continuation value by
\begin{equation}
V_i^{m_i}(\tau_i\mid h^t; g_{-i})
:=
\mathbb E_{y\sim \mu_{h^t}^{(\tau_i,g_{-i})}}
\left[
(1-\lambda_i)\sum_{k=0}^{\infty}\lambda_i^k m_i(y^{k+1})
\right].
\label{eq:value_under_payoff_kernel}
\end{equation}
For $\varepsilon\ge 0$, define
\[
\mathrm{BR}_{i,m_i}^\varepsilon(g_{-i}\mid h^t)
:=
\left\{
\tau_i\in\mathcal F_i(h^t):
V_i^{m_i}(\tau_i\mid h^t; g_{-i})
\ge
\sup_{\tau_i'\in\mathcal F_i(h^t)}
V_i^{m_i}(\tau_i'\mid h^t; g_{-i})-\varepsilon
\right\},
\]
and write
\[
\mathrm{BR}_{i,m_i}(g_{-i}\mid h^t)
:=
\mathrm{BR}_{i,m_i}^0(g_{-i}\mid h^t).
\]

Player $i$'s mixed subjective continuation value is
\begin{equation}
V_i^{\mathrm{mix},t}(\tau_i\mid x_i^t)
:=
\mathbb E_{\substack{g_{-i}\sim \mu_i^t(\cdot\mid h^t)\\ m_i\sim \pi_i^t(\cdot\mid x_i^t)}}
\left[
V_i^{m_i}(\tau_i\mid h^t; g_{-i})
\right].
\label{eq:mix_value_def}
\end{equation}
For the true mean matrix $u_i$, define
\begin{equation}
V_i^{u_i,t}(\tau_i\mid x_i^t)
:=
\mathbb E_{g_{-i}\sim \mu_i^t(\cdot\mid h^t)}
\left[
V_i^{u_i}(\tau_i\mid h^t; g_{-i})
\right].
\label{eq:true_mean_value_private}
\end{equation}

Let $g_{-i}^{i,t}$ denote any representative continuation model for the
posterior predictive continuation law induced by $\mu_i^t(\cdot\mid h^t)$. Concretely,
$g_{-i}^{i,t}$ is chosen so that for every continuation strategy
$\tau_i\in\mathcal F_i(h^t)$,
\begin{equation}
V_i^{u_i,t}(\tau_i\mid x_i^t)
=
V_i^{u_i}(\tau_i\mid h^t; g_{-i}^{i,t}).
\label{eq:private_repr_value}
\end{equation}
When $\mathcal S_{-i}=\{g_{-i}^1,\dots,g_{-i}^K\}$ is finite, one convenient choice is
\[
g_{-i}^{i,t}(h)(a_{-i})
=
\sum_{k=1}^K \mu_i^{t,h}(g_{-i}^k)\, g_{-i}^k(h)(a_{-i}),
\qquad h\succeq h^t,
\]
where $\mu_i^{t,h}$ is the continuation posterior obtained by updating
$\mu_i^t(\cdot\mid h^t)$ along the continuation history $h$.

Because payoff uncertainty affects only the continuation objective, not the induced public
transition law of a fixed continuation strategy, player $i$'s posterior predictive law over
future public action paths under the realized continuation $f_i\big|_{h^t}$ is simply
\begin{equation}
\Pi_i^t(\cdot\mid x_i^t)
:=
\mu_{h^t}^{(f_i,g_{-i}^{i,t})}.
\label{eq:private_predictive_mixture}
\end{equation}

We can now state the private-payoff PS-BR rule.

\begin{definition}[Posterior-sampling best response (PS-BR) with private payoffs]
\label{def:psarbr}
Fix player $i$ and an observable history $x_i^t=(h^t,r_i^{1:t-1})$.
Given:
(i) the posterior $\mu_i^t(\cdot\mid h^t)$ over opponents' continuation
models, and
(ii) the posterior $\pi_i^t(\cdot\mid x_i^t)$ over player $i$'s own mean payoff
matrices,
PS-BR chooses a \emph{public-history continuation strategy} by:
\begin{enumerate}[leftmargin=*]
\item sample an opponents' continuation model
$\tilde g_{-i}\sim \mu_i^t(\cdot\mid h^t)$;
\item sample a mean payoff matrix
$\tilde m_i\sim \pi_i^t(\cdot\mid x_i^t)$;
\item play any continuation strategy
$\tau_i\in \mathrm{BR}_{i,\tilde m_i}(\tilde g_{-i}\mid h^t)$.
\end{enumerate}
Denote the resulting randomized continuation strategy by
$\sigma_{i,t}^{\mathrm{PS}}(\cdot\mid x_i^t)$. Its current-round prescription induces the
actual one-step behavioral rule via
\[
\beta_i(x_i^t)
:=
\sigma_{i,t}^{\mathrm{PS}}(h^t),
\]
and the induced public-history profile $f$ is defined from the resulting actual law
$P^{\beta,u}$ as above.
\end{definition}

\subsection{Learning requirements in the private-payoff game}

The opponent-side learning object is the same public continuation problem as in
Section~\ref{sec:ps_asymptotic_consistency}, now evaluated under player $i$'s true mean
matrix $u_i$.

Fix player $i$ and let
\[
\mathcal S_{-i}:=\mathrm{supp}(\mu_i^0)\subseteq\mathcal F_{-i}
\]
be the finite retained opponents' menu used by player $i$ for posterior sampling.
For each public history $h^t$, define
\[
\mathcal E_i^{\mathrm{priv}}(h^t)
:=
\left\{
g_{-i}\in\mathcal S_{-i}:
V_i^{u_i}(\tau_i\mid h^t;g_{-i})
=
V_i^{u_i}(\tau_i\mid h^t;f_{-i})
\ \text{for every }\tau_i\in\mathcal F_i(h^t)
\right\}.
\]
Thus $\mathcal E_i^{\mathrm{priv}}(h^t)$ is the true continuation-payoff equivalence class
in the private-payoff extension after public history $h^t$.

\begin{assumption}[Private-payoff analogue of Assumption~\ref{ass:equiv_class_conc}]
\label{ass:equiv_class_conc_private}
Using the retained opponents' menu $\mathcal S_{-i}$ above, assume:
\begin{enumerate}[leftmargin=*]
\item \textit{(Menu grain of truth)} $f_{-i}\in\mathcal S_{-i}$ and
$\mu_i^0(f_{-i})>0$.
\item \textit{(On-path learnability up to equivalence)} Along $P^{\beta,u}$-almost every
realized path,
\[
\mu_i^t(\mathcal E_i^{\mathrm{priv}}(h^t(\omega))\mid h^t(\omega))\longrightarrow 1.
\]
\end{enumerate}
\end{assumption}

As before, Appendix~\ref{app:retained_menu_identification}
records stronger sufficient routes via retained-menu identification and deterministic hard
refutation of wrong retained public-action labels.

On the payoff side, the relevant object is again a continuation-decision class rather than
literal identification of the retained mean matrix.

\begin{definition}[On-path payoff equivalence]
\label{def:payoff_equiv_private}
Fix player $i$ and a public history $h^t$. Let
\[
\mathcal S_{-i}:=\mathrm{supp}(\mu_i^0)\subseteq \mathcal F_{-i}
\]
be player $i$'s retained opponents' menu, and let $g_{-i}^{i,t}$ be the posterior-predictive
representative continuation model from Section~\ref{sec:psbr_private_payoffs}. For candidate
mean matrices $m_i,m_i'\in\mathcal M_i$, write
\[
m_i \approx_i^{h^t} m_i'
\]
if for every opponents' continuation model
\[
g_{-i}\in \mathcal S_{-i}\cup\{g_{-i}^{i,t}\}
\]
the equality
\[
V_i^{m_i}(\tau_i\mid h^t;g_{-i})
=
V_i^{m_i'}(\tau_i\mid h^t;g_{-i})
\qquad
\text{for every }\tau_i\in\mathcal F_i(h^t)
\]
holds.
Equivalently, $m_i$ and $m_i'$ induce the same continuation optimization problem for every
retained opponents' model that can matter for PS-BR after $h^t$.
\end{definition}

\begin{assumption}[Pointwise public-history payoff learnability]
\label{ass:payoff_menu_sep_private}
Fix player $i$ and let
\[
\mathcal M_i:=\mathrm{supp}(\pi_i^0)
\]
be the finite retained mean-payoff menu used by player $i$ for posterior sampling. This
assumption concerns only player $i$'s own retained payoff menu, not the opponents' payoff
matrices and not the full space of payoff kernels. For each public history $h^t$, define
\[
\mathcal U_i(h^t)
:=
\{m_i\in\mathcal M_i: m_i\approx_i^{h^t} u_i\}.
\]
For each observable history $x_i^t=(h^t,r_i^{1:t-1})$, define the pointwise payoff-learning error
\[
\delta_i^t(x_i^t)
:=
1-\pi_i^t(\mathcal U_i(h^t)\mid x_i^t).
\]
Assume:
\begin{enumerate}[leftmargin=*]
\item \textit{(Menu grain of truth)} The true mean matrix $u_i\in\mathcal M_i$ and
$\pi_i^0(u_i)>0$.
\item \textit{(On-path concentration up to payoff equivalence)} Along
$P^{\beta,u}$-almost every realized path,
\[
\delta_i^t(x_i^t(\omega))\longrightarrow 0.
\]
\end{enumerate}
\end{assumption}

For every public history $h^t$, let
\[
\alpha_i^t(h^t)
:=
1-\mu_i^t(\mathcal E_i^{\mathrm{priv}}(h^t)\mid h^t).
\]
Thus $\alpha_i^t(h^t)$ is the opponents-side learning error, while
$\delta_i^t(x_i^t)$ is the own-payoff continuation-decision learning error at the realized
observable history $x_i^t=(h^t,r_i^{1:t-1})$.
Because continuation strategies are public-history based, payoff uncertainty affects the
player's continuation objective but not the induced predictive public-action law: for every
observable history $x_i^t=(h^t,r_i^{1:t-1})$,
\[
\Pi_i^t(\cdot\mid x_i^t)=\mu_{h^t}^{(f_i,g_{-i}^{i,t})}.
\]

\subsection{PS-BR gap and asymptotic consistency}

The asymptotic-consistency argument now has the same shape as before, but with two distinct
error terms. The opponent-side error $\alpha_i^t(h^t)$ measures posterior mass on retained
public continuation models that remain strategically distinct under the true mean matrix.
The payoff-side error $\delta_i^t(x_i^t)$ measures posterior mass outside the true own-payoff
continuation-decision class.

Let
\[
p_t(g_{-i},m_i)
:=
\mu_i^t(g_{-i}\mid h^t)\,\pi_i^t(m_i\mid x_i^t),
\qquad
(g_{-i},m_i)\in \mathcal S_{-i}\times \mathcal M_i.
\]
Define the joint collision complement
\[
D_i^{t,\mathrm{joint}}(x_i^t)
:=
1-\sum_{(g_{-i},m_i)\in \mathcal S_{-i}\times\mathcal M_i}
p_t(g_{-i},m_i)^2.
\]

\begin{lemma}[PS-BR is a $D_i^{t,\mathrm{joint}}$-best response to the mixed subjective value]
\label{lem:psarbr_gap}
Fix player $i$ and an observable history $x_i^t=(h^t,r_i^{1:t-1})$.
Let $\sigma_{i,t}^{\mathrm{PS}}$ be PS-BR from Definition~\ref{def:psarbr}. Then
\[
V_i^{\mathrm{mix},t}(\sigma_{i,t}^{\mathrm{PS}}\mid x_i^t)
\ge
\sup_{\tau_i\in\mathcal F_i(h^t)}
V_i^{\mathrm{mix},t}(\tau_i\mid x_i^t)
-
D_i^{t,\mathrm{joint}}(x_i^t).
\]
Equivalently, $\sigma_{i,t}^{\mathrm{PS}}$ is a
$D_i^{t,\mathrm{joint}}(x_i^t)$-best response to the mixed subjective continuation
value \eqref{eq:mix_value_def}.
\end{lemma}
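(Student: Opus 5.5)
Since the opponent posterior $\mu_i^t(\cdot\mid h^t)$ and the payoff posterior $\pi_i^t(\cdot\mid x_i^t)$ are independent and both supported on finite menus, we will treat the pair $\theta=(g_{-i},m_i)$ as a single label on the finite set $\Theta:=\mathcal S_{-i}\times\mathcal M_i$. Its law is the product weight $p_t(\theta)$, and step 1 together with step 2 of Definition~\ref{def:psarbr} is then exactly a draw $\tilde\theta\sim p_t$. For each $\theta$ we write $W_\theta(\tau_i):=V_i^{m_i}(\tau_i\mid h^t;g_{-i})\in[0,1]$, using $m_i\in[0,1]$ and the normalization $(1-\lambda_i)$. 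Equation \eqref{eq:mix_value_def} then reads $V_i^{\mathrm{mix},t}(\tau_i\mid x_i^t)=\sum_\theta p_t(\theta)W_\theta(\tau_i)$. PS-BR selects some $\tau^\theta\in\arg\max W_\theta$ when label $\theta$ is drawn. Its mixed subjective value is $\sum_{\theta'}p_t(\theta')\,V^{\mathrm{mix},t}(\tau^{\theta'})=\sum_{\theta,\theta'}p_t(\theta)p_t(\theta')W_\theta(\tau^{\theta'})$, because evaluating a randomized continuation is linear in the mixing weights. This is the same structure as Lemma~\ref{lem:psbr_gap}, and we will reproduce that argument on the enlarged label space.

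Fix any competitor $\tau_i$. Its value is $\sum_\theta p_t(\theta)W_\theta(\tau_i)\le\sum_\theta p_t(\theta)W_\theta(\tau^\theta)$, since each $\tau^\theta$ is optimal for $W_\theta$. We write this upper bound as a double sum $\sum_{\theta,\theta'}p_t(\theta)p_t(\theta')W_\theta(\tau^\theta)$ and subtract the PS-BR value. The difference is
\[
\sum_{\theta,\theta'}p_t(\theta)p_t(\theta')\bigl[W_\theta(\tau^\theta)-W_\theta(\tau^{\theta'})\bigr].
\]
Diagonal terms with $\theta=\theta'$ vanish. Off-diagonal terms are bounded by $1$, because $W_\theta\in[0,1]$ (and each bracket is in fact nonnegative). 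The gap is therefore at most $\sum_{\theta\ne\theta'}p_t(\theta)p_t(\theta')=1-\sum_\theta p_t(\theta)^2=D_i^{t,\mathrm{joint}}(x_i^t)$. Taking the supremum over $\tau_i$ gives the claim.

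The main point needing care is making the exchange of expectations rigorous. First, we must confirm that $\arg\max W_\theta$ is attained, so that $\mathrm{BR}_{i,m_i}$ is nonempty. This follows from compactness of continuation strategies in the product topology and continuity of discounted payoffs; alternatively we can simply adopt the same standing convention used for Lemma~\ref{lem:psbr_gap}. Second, we must check that the value of the randomized plan $\sigma^{\mathrm{PS}}_{i,t}$ under each fixed $\theta$ equals $\sum_{\theta'}p_t(\theta')W_\theta(\tau^{\theta'})$. This holds because the label draw happens once at $h^t$ and is independent of the future path, so the induced path law under $(\sigma^{\mathrm{PS}},g_{-i})$ is the $p_t$-mixture of the laws under $(\tau^{\theta'},g_{-i})$.
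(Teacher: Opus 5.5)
Your proof is correct and follows essentially the same route as the paper: both treat $(g_{-i},m_i)$ as a single product label on $\mathcal S_{-i}\times\mathcal M_i$, use linearity of $V_i^{\mathrm{mix},t}$ in that label, and rerun the collision-complement argument of Lemma~\ref{lem:psbr_gap}. The only difference is cosmetic. You bound each off-diagonal bracket $W_\theta(\tau^\theta)-W_\theta(\tau^{\theta'})$ by $1$, whereas the paper discards the off-diagonal terms of the PS-BR value and then uses $M(m)\le 1$; both give the same bound $D_i^{t,\mathrm{joint}}(x_i^t)$.
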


Because continuation values are normalized to lie in $[0,1]$, and hypotheses inside
$\mathcal U_i(h^t)$ leave the relevant subjective continuation-value functional unchanged, for
every $\tau_i\in \mathcal F_i(h^t)$,
\begin{equation}
\big|
V_i^{\mathrm{mix},t}(\tau_i\mid x_i^t)
-
V_i^{u_i,t}(\tau_i\mid x_i^t)
\big|
\le
\delta_i^t(x_i^t).
\label{eq:mix_to_true_value_bound}
\end{equation}

\begin{lemma}[PS-BR gap under opponents' equivalence and payoff concentration]
\label{lem:psarbr_equiv_gap}
Fix player $i$ and an observable history $x_i^t=(h^t,r_i^{1:t-1})$.
Let $\sigma_{i,t}^{\mathrm{PS}}$ be PS-BR from Definition~\ref{def:psarbr}. Then
\[
V_i^{u_i,t}(\sigma_{i,t}^{\mathrm{PS}}\mid x_i^t)
\ge
\sup_{\tau_i\in\mathcal F_i(h^t)}V_i^{u_i,t}(\tau_i\mid x_i^t)
-
2\alpha_i^t(h^t)-4\delta_i^t(x_i^t).
\]
Equivalently, $\sigma_{i,t}^{\mathrm{PS}}$ is a
$\bigl(2\alpha_i^t(h^t)+4\delta_i^t(x_i^t)\bigr)$-best response to the true-mean
subjective continuation value \eqref{eq:true_mean_value_private}.
\end{lemma}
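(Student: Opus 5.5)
The plan is to compare PS-BR against the true-mean objective $V_i^{u_i,t}$ by conditioning on the sampled pair $(\tilde g_{-i},\tilde m_i)$. I will split into a "good" event, where the sample lies in $\mathcal E_i^{\mathrm{priv}}(h^t)\times\mathcal U_i(h^t)$, and its complement. The complement has posterior probability at most $\alpha_i^t(h^t)+\delta_i^t(x_i^t)$ by a union bound, since the two posteriors are independent in the product sampling of Definition~\ref{def:psarbr}. On the complement, continuation values lie in $[0,1]$, so the loss is at most $1$ per unit of mass.

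\textbf{Step 1 (good samples are optimal for the right problem).} Suppose $\tilde m_i\in\mathcal U_i(h^t)$. Because $\tilde g_{-i}\in\mathcal S_{-i}$, Definition~\ref{def:payoff_equiv_private} gives $V_i^{\tilde m_i}(\cdot\mid h^t;\tilde g_{-i})=V_i^{u_i}(\cdot\mid h^t;\tilde g_{-i})$. Hence the chosen $\tau_i$ is an exact best response to $\tilde g_{-i}$ under $u_i$. If in addition $\tilde g_{-i}\in\mathcal E_i^{\mathrm{priv}}(h^t)$, then $V_i^{u_i}(\cdot;\tilde g_{-i})=V_i^{u_i}(\cdot;f_{-i})$. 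So $\tau_i$ is optimal against the truth $f_{-i}$, and it is also optimal against every other element of $\mathcal E_i^{\mathrm{priv}}$, since all of them induce the same functional.

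\textbf{Step 2 (evaluate under the posterior mixture).} Write $V_i^{u_i,t}(\tau_i\mid x_i^t)=\sum_g\mu_i^t(g)V_i^{u_i}(\tau_i;g)$. Let $W:=V_i^{u_i}(\cdot;f_{-i})$ denote the common functional on the class. For any $\tau_i$, the mass outside the class changes the value by at most $\alpha_i^t$, which gives $|V_i^{u_i,t}(\tau_i)-W(\tau_i)|\le\alpha_i^t$ uniformly. So $\sup V_i^{u_i,t}\le\sup W+\alpha_i^t$. For PS-BR, on the good event the realized plan satisfies $W(\tau_i)=\sup W$. Averaging over samples therefore gives $W(\sigma^{\mathrm{PS}})\ge\sup W-(\alpha_i^t+\delta_i^t)$, and then $V_i^{u_i,t}(\sigma^{\mathrm{PS}})\ge W(\sigma^{\mathrm{PS}})-\alpha_i^t$. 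Combining these yields a gap of $3\alpha_i^t+\delta_i^t$. That is not the stated $2\alpha+4\delta$, so I will reorganize to match it.

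\textbf{Step 3 (reorganize to match the stated constants).} The intended route likely passes through the mixed value and uses \eqref{eq:mix_to_true_value_bound} twice. Converting $V^{u_i,t}$ to $V^{\mathrm{mix},t}$ costs $\delta$ at each end, giving $2\delta$. The remaining PS-BR gap relative to $V^{\mathrm{mix},t}$ should then be bounded not by $D^{t,\mathrm{joint}}$ but by twice the mass outside the joint good class, $2(\alpha+\delta)$, mirroring the "twice the posterior mass outside the class" argument behind Proposition~\ref{prop:ps_implies_asymptotic_consistency}. This totals $2\alpha+4\delta$. Concretely, I would show that PS-BR loses nothing on good samples relative to the optimal plan for the good-class functional, and then use the fact that swapping that functional for the mixed value costs at most the bad mass on each side. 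Either accounting suffices: if my direct bound $3\alpha+\delta$ proves cleaner I would still relax it where needed, but matching the statement exactly requires the $2\alpha+4\delta$ route. I expect that matching to be the only bookkeeping obstacle.

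\textbf{Main obstacle.} The one delicate point is measurability and attainment of best responses in $\mathcal F_i(h^t)$. I will handle it by taking the $\sup$ via near-optimal plans. The sampling randomness itself causes no difficulty, since $V$ is linear in the mixture over continuation strategies.
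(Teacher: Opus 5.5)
Your Step~3 is exactly the paper's argument. With $r_t:=(1-\alpha_i^t(h^t))(1-\delta_i^t(x_i^t))$ and $M:=\sup_{\tau_i}V_i^{u_i}(\tau_i\mid h^t;f_{-i})$, the paper bounds the PS-BR gap under $V_i^{\mathrm{mix},t}$ by $2(1-r_t)\le 2\alpha_i^t(h^t)+2\delta_i^t(x_i^t)$. It then pays $\delta_i^t(x_i^t)$ twice through \eqref{eq:mix_to_true_value_bound}.

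However, your proposal never actually secures these constants, and one claim in it is false. The bound $3\alpha+\delta$ cannot be ``relaxed'' to $2\alpha+4\delta$: it is larger whenever $\alpha>3\delta$. So Step~2 as written does not prove the lemma. The accounting you sketch for Step~3 has the same problem. You say PS-BR is optimal on good samples and that swapping the good-class functional for $V_i^{\mathrm{mix},t}$ costs the bad mass ``on each side.'' Carried out with two-sided swaps, that charges $1-r_t$ for the supremum, $1-r_t$ for evaluating PS-BR, and $1-r_t$ for PS-BR's bad draws. That totals $3(1-r_t)$, not $2(1-r_t)$. The bookkeeping you defer is precisely where the proof lives.

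The fix is to use one-sided bounds that exploit $0\le V\le 1$. Every good pair $(g_{-i},m_i)\in\mathcal E_i^{\mathrm{priv}}(h^t)\times\mathcal U_i(h^t)$ satisfies $V_i^{m_i}(\cdot\mid h^t;g_{-i})=V_i^{u_i}(\cdot\mid h^t;f_{-i})$. Dropping the nonnegative bad terms from the double sum gives $V_i^{\mathrm{mix},t}(\sigma_{i,t}^{\mathrm{PS}}\mid x_i^t)\ge r_t^2M$. Bounding bad terms by $1$ gives $V_i^{\mathrm{mix},t}(\tau_i\mid x_i^t)\le r_tM+(1-r_t)$ for every $\tau_i$. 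The difference is $(1-r_t)(1+r_tM)\le 2(1-r_t)$, exactly as in the proof of Proposition~\ref{prop:ps_implies_asymptotic_consistency}.

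The same repair also rescues your direct Step~2 and makes it sharper than the paper. Let $W:=V_i^{u_i}(\cdot\mid h^t;f_{-i})$. Then $(1-\alpha_i^t)W(\tau_i)\le V_i^{u_i,t}(\tau_i\mid x_i^t)\le(1-\alpha_i^t)W(\tau_i)+\alpha_i^t$, and $W(\sigma_{i,t}^{\mathrm{PS}})\ge r_tM$. Hence the gap is at most $(1-\alpha_i^t)(1-r_t)M+\alpha_i^t\le 2\alpha_i^t(h^t)+\delta_i^t(x_i^t)$. This implies the stated bound without passing through $V_i^{\mathrm{mix},t}$ or \eqref{eq:mix_to_true_value_bound}, because it charges the payoff-learning error $\delta_i^t(x_i^t)$ once rather than four times.

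Finally, the attainment issue you flag as the main obstacle is not one. Definition~\ref{def:psarbr} already presupposes an exact best response to each sampled pair, and the paper simply fixes one.
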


\begin{proposition}[PS-BR-selected continuation plans are asymptotically $\varepsilon$-consistent in the private-payoff game]
\label{prop:psarbr_implies_asymptotic_consistency}
Fix player $i$.
Assume:
(i) Assumption~\ref{ass:equiv_class_conc_private} holds for player $i$'s finite retained
menu of opponents' continuation models,
(ii) Assumption~\ref{ass:payoff_menu_sep_private} holds for player $i$'s own finite retained
mean-matrix menu, and
(iii) player $i$ uses PS-BR at every observable history.
Then for every $\varepsilon>0$,
\[
P^{\beta,u}\!\left(
\left\{
\omega:\exists\,T_i(\omega,\varepsilon)<\infty\ \text{s.t.}\ \forall t\ge T_i(\omega,\varepsilon),\
\sigma_{i,t}^{\mathrm{PS}}(\cdot\mid x_i^t(\omega))
\in
\mathrm{BR}_{i,u_i}^{\varepsilon}\!\bigl(g_{-i}^{i,t}\mid h^t(\omega)\bigr)
\right\}
\right)=1.
\]
\end{proposition}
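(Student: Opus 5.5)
The plan is to reduce the statement to Lemma~\ref{lem:psarbr_equiv_gap} and then pass to the limit pathwise, exactly as in the proof of Proposition~\ref{prop:ps_implies_asymptotic_consistency}.

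\textbf{Step 1: translate the gap into best-response membership.} Fix $\varepsilon>0$ and an observable history $x_i^t=(h^t,r_i^{1:t-1})$. By the defining property \eqref{eq:private_repr_value} of the representative model $g_{-i}^{i,t}$, we have
\[
V_i^{u_i,t}(\tau_i\mid x_i^t)=V_i^{u_i}(\tau_i\mid h^t;g_{-i}^{i,t})
\]
for every $\tau_i\in\mathcal F_i(h^t)$. This identity also holds for the randomized plan $\sigma_{i,t}^{\mathrm{PS}}$. Indeed, its value is the mixture, over the sampled pair $(\tilde g_{-i},\tilde m_i)$, of the values of deterministic continuation plans, and both sides are linear in that mixture. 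It follows that Lemma~\ref{lem:psarbr_equiv_gap} is the statement
\[
\sigma_{i,t}^{\mathrm{PS}}(\cdot\mid x_i^t)\in \mathrm{BR}_{i,u_i}^{\,2\alpha_i^t(h^t)+4\delta_i^t(x_i^t)}\bigl(g_{-i}^{i,t}\mid h^t\bigr).
\]
The $\varepsilon$-best-response sets are nested in $\varepsilon$. So it suffices to show that, almost surely, $2\alpha_i^t(h^t(\omega))+4\delta_i^t(x_i^t(\omega))\le\varepsilon$ for all sufficiently large $t$.

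\textbf{Step 2: control each error term on a full-measure set.} Let $\Omega_1$ be the set of $\omega$ along which $\alpha_i^t(h^t(\omega))\to0$. This set has $P^{\beta,u}$-probability one by Assumption~\ref{ass:equiv_class_conc_private}(2). Let $\Omega_2$ be the set along which $\delta_i^t(x_i^t(\omega))\to0$. It also has probability one, by Assumption~\ref{ass:payoff_menu_sep_private}(2).

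On $\Omega_1\cap\Omega_2$, which still has probability one, both errors vanish. We can therefore pick $T_1(\omega)$ with $\alpha_i^t\le\varepsilon/4$ for all $t\ge T_1$, and $T_2(\omega)$ with $\delta_i^t\le\varepsilon/8$ for all $t\ge T_2$. Setting $T_i(\omega,\varepsilon):=\max\{T_1,T_2\}$ gives $2\alpha_i^t+4\delta_i^t\le\varepsilon$ for all $t\ge T_i(\omega,\varepsilon)$. Combined with Step~1, this yields the event in the statement on $\Omega_1\cap\Omega_2$.

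\textbf{Main obstacle.} The limit argument is routine. The delicate point is Step~1, and in particular making sure Lemma~\ref{lem:psarbr_equiv_gap} is applied with the same value functional that defines $\mathrm{BR}_{i,u_i}^\varepsilon(g_{-i}^{i,t}\mid h^t)$. Two checks are needed.

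First, the supremum over $\tau_i\in\mathcal F_i(h^t)$ of $V_i^{u_i,t}$ must equal the supremum of $V_i^{u_i}(\cdot\mid h^t;g_{-i}^{i,t})$. This follows immediately because the two functionals coincide pointwise on $\mathcal F_i(h^t)$.

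Second, the randomized PS-BR plan must be evaluated correctly. Here I would treat $\sigma_{i,t}^{\mathrm{PS}}$ as a mixture over continuation strategies, which can be realized as a behavioral strategy in $\mathcal F_i(h^t)$ by Kuhn's theorem under perfect recall of public histories. Its value is then the posterior-weighted average of values, and linearity transfers the identity.

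A further subtlety is measurability of the event in the statement. This holds because $\alpha_i^t$ and $\delta_i^t$ are measurable functions of the finite history $x_i^t$. The conclusion is therefore stated pathwise on the full-measure set $\Omega_1\cap\Omega_2$, rather than for an arbitrary selection from the best-response set.
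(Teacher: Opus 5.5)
Your proposal is correct and follows the paper's proof essentially step for step. Like the paper, you work on the full-measure intersection where $\alpha_i^t\to 0$ and $\delta_i^t\to 0$, invoke Lemma~\ref{lem:psarbr_equiv_gap} for the $2\alpha_i^t+4\delta_i^t$ gap, transfer it to $V_i^{u_i}(\cdot\mid h^t;g_{-i}^{i,t})$ via the representative identity \eqref{eq:private_repr_value}, and choose $T_i(\omega,\varepsilon)$ so that this gap is at most $\varepsilon$; your extra remarks on extending that identity to the randomized PS-BR plan by linearity (and realizing it as a behavior strategy) make explicit a step the paper leaves implicit.
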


\subsection{Zero-shot Nash convergence with private payoffs}

The zero-shot argument now lifts directly to the public continuation game at the abstract
level. Because the continuation strategies being compared are public-history based, the
player's observable history $x_i^t=(h^t,r_i^{1:t-1})$ affects future public play only
through belief updating. No additional assumption is needed to collapse hidden private
histories into a single public continuation law.

The theorem below is the abstract private-payoff analogue of
Theorem~\ref{thm:zero_shot_nash_final}: it concerns the induced public-history strategy
profile $f$ itself. Proposition~\ref{prop:psarbr_implies_asymptotic_consistency} should
therefore be read as a planner-side result about the continuation plan selected by PS-BR at
a realized observable history. To convert that selected-plan statement into an actual-behavior
result for a particular implementation, one needs the implementation to execute the selected
continuation plan as the continuation after the current public history.

Because continuation play is public-history based, no separate observable-process grain-of-truth
assumption is needed in this section. The menu grain-of-truth part of
Assumption~\ref{ass:equiv_class_conc_private} already implies the ordinary public grain-of-truth
condition: since $f_{-i}\in\mathcal S_{-i}$ with $\mu_i^0(f_{-i})>0$, the prior predictive public
play-path law under $f_i$ assigns positive mass to the true continuation law.

\begin{lemma}[Menu grain of truth implies strong public-path prediction in the private-payoff game]
\label{lem:absolute_cont_predict_private}
Fix player $i$. Under the menu grain-of-truth part of
Assumption~\ref{ass:equiv_class_conc_private},
\[
d\!\left(
\Pi_i^t(\cdot\mid x_i^t(\omega)),
\mu_{h^t(\omega)}^f
\right)\longrightarrow 0
\qquad\text{for }P^{\beta,u}\text{-a.e. }\omega.
\]
\end{lemma}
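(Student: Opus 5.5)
The plan is to reduce the statement to the classical merging-of-opinions argument (Blackwell--Dubins), as in Lemma~\ref{lem:absolute_cont_predict}, applied to the public-action marginal. The key simplification is that player $i$'s predictive law $\Pi_i^t(\cdot\mid x_i^t)=\mu_{h^t}^{(f_i,g_{-i}^{i,t})}$ depends on $x_i^t$ only through $h^t$. This holds because the opponent posterior $\mu_i^t(\cdot\mid h^t)$ is computed from public actions alone, and payoff uncertainty affects only the continuation objective. The private rewards therefore do not enter the object being compared, and it suffices to prove $d(\Pi_i^t(\cdot\mid h^t),\mu^f_{h^t})\to 0$ for $\mu^f$-a.e.\ public path. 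Since the public marginal of $P^{\beta,u}$ is exactly $\mu^f$, this transfers to $P^{\beta,u}$-a.e.\ $\omega$.

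\textbf{Step 1: absolute continuity.} Define the prior predictive public law $P_i^{0,f_i}=\sum_{g\in\mathcal S_{-i}}\mu_i^0(g)\,\mu^{(f_i,g)}$. Menu grain of truth gives $f_{-i}\in\mathcal S_{-i}$ and $\mu_i^0(f_{-i})>0$, so
\[
P_i^{0,f_i}\ \ge\ \mu_i^0(f_{-i})\,\mu^{f}.
\]
Hence $\mu^f\ll P_i^{0,f_i}$; this is Assumption~\ref{ass:grain_of_truth}, holding for the public game with its Radon--Nikodym derivative bounded by $1/\mu_i^0(f_{-i})$.

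\textbf{Step 2: identify the conditional predictive law.} We must check that the conditional law $P_i^{0,f_i}(\cdot\mid C(h^t))$ on the tail coincides with $\Pi_i^t(\cdot\mid h^t)$. By Bayes' rule on the finite menu, conditioning the mixture on $C(h^t)$ reweights each component by $\mu_i^0(g)\mu^{(f_i,g)}(C(h^t))$. The factors contributed by $f_i$ are common to all components and cancel, which leaves the weights $\mu_i^t(g\mid h^t)$. Continuing the mixture forward and updating along each continuation history $h\succeq h^t$ reproduces exactly the representative $g_{-i}^{i,t}$ built from $\mu_i^{t,h}$ in Section~\ref{sec:psbr_private_payoffs}. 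Thus $\Pi_i^t(\cdot\mid h^t)=P_i^{0,f_i}(\cdot\mid C(h^t))$ on the tail. On $\mu^f$-a.e.\ path we have $\mu^f(C(h^t))>0$, so Step 1 makes the conditioning well defined. The bookkeeping in this step, namely matching the mixture representation with the conditional of the mixture, is the main obstacle. The cleanest route is to verify equality on cylinder sets and then extend by uniqueness of measures.

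\textbf{Step 3: merging.} Blackwell--Dubins gives
\[
\sup_{E}\big|P_i^{0,f_i}(E\mid C(h^t))-\mu^f(E\mid C(h^t))\big|\to 0
\]
for $\mu^f$-a.e.\ path. Total variation dominates each term $\sup_{E\in\mathcal B^s}|\cdot|$ in Definition~\ref{def:weak_distance}, and the weights satisfy $\sum_s 2^{-s}=1$, so $d\le$ total variation and the weak distance also tends to zero. Alternatively, one can simply invoke Lemma~\ref{lem:absolute_cont_predict} for the public game once Steps 1--2 are in place.

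**Step 4: lift to $\Omega$.** The exceptional public-path null set pulls back under $\omega\mapsto h^\infty(\omega)$ to a $P^{\beta,u}$-null set. Combined with $\Pi_i^t(\cdot\mid x_i^t(\omega))=\Pi_i^t(\cdot\mid h^t(\omega))$, this yields the claim.
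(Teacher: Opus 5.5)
Your proposal is correct and follows essentially the paper's route: menu grain of truth gives $P_i^{0,f_i}\ge\mu_i^0(f_{-i})\,\mu^f$, hence absolute continuity, and merging (Lemma~\ref{lem:absolute_cont_predict}) then gives convergence once the conditional of the prior predictive is identified with $\Pi_i^t$. The only differences are in detail: the paper takes that identification directly from the continuation-consistent representative choice in Appendix~\ref{app:belief_repr} and leaves the pullback from public paths to $\Omega$ implicit, whereas you verify the identification on cylinder sets via Bayes reweighting and spell out the pullback.
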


The proof is deferred to Appendix~\ref{app:omitted_proofs}.

\begin{definition}[Weak subjective equilibrium in the private-payoff game]
\label{def:weak_subjective_eq_private}
Fix $\xi,\eta\ge 0$ and a public history $h^t$.
A continuation profile $g\big|_{h^t}$ is a weak $\xi$-subjective $\eta$-equilibrium after $h^t$
if, for every player $i$, there exists a supporting profile $g^i=(g_i,g_{-i}^i)$ such that
\begin{enumerate}[leftmargin=*]
\item \textit{(Subjective best response)}\;
$g_i\big|_{h^t}\in \mathrm{BR}_{i,u_i}^\xi(g_{-i}^i\mid h^t)$.
\item \textit{(Weak predictive accuracy)}\;
$d_{h^t}(\mu^g,\mu^{g^i})\le \eta$.
\end{enumerate}
\end{definition}

\begin{proposition}[Learning and asymptotic consistency imply weak subjective equilibrium in the private-payoff game]
\label{prop:weak_subjective_from_learning_private}
Suppose that, for every player $i$ and every $\xi>0$,
\[
P^{\beta,u}\!\left(
\left\{
\omega:\exists\,T_i(\omega,\xi)<\infty\ \text{s.t.}\ \forall t\ge T_i(\omega,\xi),\
f_i\big|_{h^t(\omega)}
\in
\mathrm{BR}_{i,u_i}^{\xi}\!\bigl(g_{-i}^{i,t}\mid h^t(\omega)\bigr)
\right\}
\right)=1,
\]
and suppose Lemma~\ref{lem:absolute_cont_predict_private} holds for every player.
Then for every $\xi>0$ and $\eta>0$,
\[
P^{\beta,u}\!\left(
\left\{
\omega:\exists\,T(\omega)<\infty\ \text{s.t.}\ \forall t\ge T(\omega),\
f\big|_{h^t(\omega)}
\text{ is a weak $\xi$-subjective $\eta$-equilibrium after }h^t(\omega)
\right\}
\right)=1.
\]
\end{proposition}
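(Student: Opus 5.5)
The plan is to choose, for each player $i$ and each reached public history, the supporting profile $g^i:=(f_i,g_{-i}^{i,t})$. Here $g_{-i}^{i,t}$ is the posterior-predictive representative continuation model from Section~\ref{sec:psbr_private_payoffs}. Both conditions of Definition~\ref{def:weak_subjective_eq_private} are then checked with this single choice, and the almost-sure events are intersected over the finitely many players.

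The first step handles subjective best response. Fix $\xi>0$. For each player $i$, the hypothesis of the proposition gives a $P^{\beta,u}$-full-measure set $\Omega_i^{\mathrm{BR}}(\xi)$. On that set there is $T_i(\omega,\xi)<\infty$ with $f_i|_{h^t(\omega)}\in \mathrm{BR}_{i,u_i}^{\xi}(g_{-i}^{i,t}\mid h^t(\omega))$ for all $t\ge T_i$. This is exactly condition 1 of Definition~\ref{def:weak_subjective_eq_private} for the supporting profile $g^i$.

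The second step handles weak predictive accuracy. By \eqref{eq:private_predictive_mixture}, $\Pi_i^t(\cdot\mid x_i^t)=\mu_{h^t}^{(f_i,g_{-i}^{i,t})}=\mu_{h^t}^{g^i}$. Lemma~\ref{lem:absolute_cont_predict_private} therefore gives $d(\mu_{h^t(\omega)}^{g^i},\mu_{h^t(\omega)}^{f})\to 0$ on a full-measure set $\Omega_i^{\mathrm{pred}}$. Consequently, for each $\eta>0$ there is $T_i'(\omega,\eta)$ beyond which this distance is at most $\eta$.

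To convert this into condition 2, one must identify $d_{h^t}(\mu^f,\mu^{g^i})$ with the distance between continuation laws. I expect this to be the main technical point. On a realized path, $h^t(\omega)$ has positive $\mu^f$-probability almost surely, so $\mu^f(\cdot\mid C(h^t))$ coincides with $\mu^f_{h^t}$. For $\mu^{g^i}$ I would interpret $d_{h^t}$ through the continuation distributions $\mu^{g}_{h^t}$, as the paper does when defining $\mu^g_{h^t}$ via the restriction $g|_{h^t}$. The shift/reindexing is harmless because both measures are compared on the same tail space. If one insists on literal conditioning, I would note that $\mu^{g^i}(C(h^t))>0$ on-path. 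This holds because $g_{-i}^{i,t}$ is a posterior mixture containing $f_{-i}$ with positive weight, by the menu grain of truth, so conditioning again yields the continuation law.

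Finally, I would set $\Omega^\star:=\bigcap_{i\in I}(\Omega_i^{\mathrm{BR}}(\xi)\cap\Omega_i^{\mathrm{pred}})$. This is a finite intersection, so it has full measure. On $\Omega^\star$ take $T(\omega):=\max_i\max\{T_i(\omega,\xi),T_i'(\omega,\eta)\}<\infty$. For all $t\ge T(\omega)$ both conditions hold for every $i$ with supporting profile $g^i$. Hence $f|_{h^t(\omega)}$ is a weak $\xi$-subjective $\eta$-equilibrium after $h^t(\omega)$, which is the claim.
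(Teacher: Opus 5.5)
Your proposal is correct and is essentially the paper's proof: it uses the same supporting profile $(f_i,g_{-i}^{i,t})$, reads the best-response condition directly off the hypothesis, gets predictive accuracy from Lemma~\ref{lem:absolute_cont_predict_private}, and takes $T(\omega)$ as the maximum over the finitely many players on the intersected full-measure event. Your bridging step via \eqref{eq:private_predictive_mixture} and your remark on conditioning make explicit what the paper leaves implicit; for the positivity of $\mu^{g^i}(C(h^t))$, the cleaner justification is the continuation-consistent prior-predictive representative, whose law dominates $\mu_i^0(f_{-i})\,\mu^f$, rather than the posterior weight at $h^t$.
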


\begin{theorem}[Zero-shot Nash convergence with private payoffs]
\label{cor:psarbr_zero_shot_nash}
Assume that, for every player $i$ and every $\xi>0$,
\[
P^{\beta,u}\!\left(
\left\{
\omega:\exists\,T_i(\omega,\xi)<\infty\ \text{s.t.}\ \forall t\ge T_i(\omega,\xi),\
f_i\big|_{h^t(\omega)}
\in
\mathrm{BR}_{i,u_i}^{\xi}\!\bigl(g_{-i}^{i,t}\mid h^t(\omega)\bigr)
\right\}
\right)=1,
\]
and suppose Lemma~\ref{lem:absolute_cont_predict_private} holds for every player.
Then for every $\varepsilon>0$,
\begin{align*}
P^{\beta,u}\!\Big(
\big\{\omega:\exists\,T(\omega)<\infty\ \text{s.t.}\ \forall t\ge T(\omega),\
&\exists\ \hat f^{\varepsilon,t,\omega}\ \text{an $\varepsilon$-Nash equilibrium of the continuation game}
\\
&\text{after }  h^t(\omega)\text{ with}\ 
d_{h^t(\omega)}\!\left(
\mu^f,
\mu^{\hat f^{\varepsilon,t,\omega}}
\right)
\le \varepsilon
\big\}
\Big)=1.
\end{align*}
\end{theorem}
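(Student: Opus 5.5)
The plan mirrors the known-payoff argument: Proposition~\ref{prop:weak_subjective_from_learning} feeds Theorem~\ref{thm:zero_shot_nash_final}, now carried out on the public continuation game with payoffs $u_i$. First, I invoke Proposition~\ref{prop:weak_subjective_from_learning_private}, whose hypotheses are exactly those of the present theorem. For any $\xi,\eta>0$ it yields a $P^{\beta,u}$-full-measure set of $\omega$ on which $f|_{h^t(\omega)}$ is eventually a weak $\xi$-subjective $\eta$-equilibrium after $h^t(\omega)$. The supporting profiles are $g^i=(f_i,g_{-i}^{i,t})$.

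The payoff uncertainty plays no role beyond this point. Everything is stated for the true means $u_i$ and for laws on public action paths. All remaining statements concern $\mu^f$, which is the public marginal of $P^{\beta,u}$, so a $P^{\beta,u}$-a.s. statement about public histories is a $\mu^f$-a.s. statement. To handle all $\varepsilon$ at once, I intersect over a countable sequence $\xi_n,\eta_n\downarrow 0$.

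The second and central step is deterministic. I show that for every $\varepsilon>0$ there exist $\xi,\eta>0$, depending only on $\varepsilon$, the discount factors and $N$, such that any continuation profile that is a weak $\xi$-subjective $\eta$-equilibrium after $h^t$ is $\varepsilon$-weakly close, in $d_{h^t}$, to an $\varepsilon$-Nash equilibrium of the continuation game after $h^t$ with payoffs $U_i^{u_i}$. This is the weak-subjective-to-Nash purification result of \citet{norman2022possibility} (building on \citet{kalai1993subjective}), which the paper already uses in the known-payoff Theorem~\ref{thm:zero_shot_nash_final}. Its statement depends only on the stage payoffs $u_i\in[0,1]$, the discounting, and the non-MM$^\star$ structure of Assumption~\ref{ass:nonmmstar}, here applied to the true mean matrices.

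The construction runs as follows:
\begin{enumerate}
\item Fix a horizon $L$ with $\lambda_i^L<\varepsilon/4$ for all $i$.
\item Weak closeness within $\eta$ (with $\eta\ll 2^{-L}$) forces each subjective predictive law to be close in total variation to the true law on the first $L$ periods.
\item Build $\hat f$ to follow $f$ on histories where the subjective and true laws agree, and to switch to punishment/equilibrium continuations on histories where a deviation would reveal a discrepancy. Non-MM$^\star$ is what makes such punishments available.
\item Check that $\hat f$ is $\varepsilon$-Nash: payoff differences from the mismatch are bounded by roughly $\xi+2\cdot(\text{TV gap on }L\text{ periods})+2\lambda^L$.
\item Check that $d_{h^t}(\mu^f,\mu^{\hat f})\le\varepsilon$, because $\hat f$ differs from $f$ only on low-probability histories plus the tail beyond $L$.
\end{enumerate}
Since the known-payoff theorem invokes exactly this lemma, I would cite the argument used there verbatim with $u_i$ in place of the common-knowledge payoffs. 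I would note that Definition~\ref{def:weak_subjective_eq_private} matches Definition~\ref{def:weak_subjective_eq} once $V_i$ is read as $V_i^{u_i}$.

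Finally, I combine the two steps. Choose $(\xi,\eta)$ for the given $\varepsilon$ from step two, and take $T(\omega)$ from step one. For every $t\ge T(\omega)$, step two produces $\hat f^{\varepsilon,t,\omega}$ with the stated properties, and the exceptional set has measure zero.

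I expect the main obstacle to be step two: making sure the purification lemma transfers when the supporting beliefs $g_{-i}^{i,t}$ are history-dependent representatives rather than fixed profiles. I would handle this by fixing $t$ and treating $g^i$ as a fixed continuation profile after $h^t$, which is all the lemma requires.
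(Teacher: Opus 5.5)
Your overall route is the paper's. Proposition~\ref{prop:weak_subjective_from_learning_private}, whose hypotheses are exactly those of the theorem, gives almost sure eventual weak $\xi$-subjective $\eta$-equilibrium with supports $(f_i,g_{-i}^{i,t})$. The paper then applies Lemma~\ref{lem:infinite_patching} to the continuation game after $h^t(\omega)$, with $\xi=\varepsilon/2$ and $\eta=\hat\eta(\varepsilon/2,\varepsilon/2)$, and obtains an $\varepsilon$-Nash profile within $\varepsilon/2$. Citing that lemma, as you propose, completes the proof. The property actually needed is that $\hat\eta$ is uniform in $h^t$, which holds because every continuation game is a copy of the same discounted repeated game. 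The countable intersection over $\xi_n,\eta_n$ is unnecessary, since $\varepsilon$ is fixed.

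Your description of what the cited step rests on is wrong, though, and it would matter if you wrote the step out yourself.

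\emph{Non-MM$^\star$ plays no role.} Assumption~\ref{ass:nonmmstar} is never used in the proof of Lemma~\ref{lem:infinite_patching}. It is also not a hypothesis of this theorem for the mean matrices $u_i$, so making your argument depend on it would import an extra assumption.

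\emph{No punishment phases are needed.} In the exact case (Lemma~\ref{lem:finite_eta0}), after a unilateral deviation by $i$ the opponents simply switch to $i$'s own subjective model $f_{-i}^i$. Then $U_i^T(g_i,\hat f_{-i})=U_i^T(g_i,f_{-i}^i)\le U_i^T(f_i,f_{-i}^i)+\psi=U_i^T(\hat f)+\psi$, so the subjective best-response inequality transfers verbatim.

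\emph{The case $\eta>0$ is not handled by an explicit estimate.} The paper does not use a bound like your $\xi+2\,\mathrm{TV}+2\lambda^L$. It truncates to a finite horizon and uses a compactness/contradiction argument on $\mathcal F^T$ (Lemma~\ref{lem:finite_eta_small}) to reduce to $\eta=0$. A direct TV estimate is genuinely delicate. If $f_i$ is mixed, a deviation $g_i$ can shift weight onto branches that are rare under $f_i$ but on which $f_{-i}$ and $f_{-i}^i$ differ substantially. Closeness of $\mu^{(f_i,f_{-i})}$ and $\mu^{(f_i,f_{-i}^i)}$ therefore does not control $\mu^{(g_i,f_{-i})}$ versus $\mu^{(g_i,f_{-i}^i)}$. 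This is also why $\hat\eta$ is non-constructive and depends on the whole stage game, not only on $\varepsilon$, the discount factors and $N$.
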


Theorem~\ref{cor:psarbr_zero_shot_nash} is the direct private-payoff analogue of
Theorem~\ref{thm:zero_shot_nash_final}. Because the continuation game remains public-history
based, private payoff observations affect the theorem only through the player's posterior over
its own retained mean-payoff menu. The payoff-learning problem remains entirely own-payoff:
each player learns its own retained mean-payoff menu only up to the on-path continuation-decision
class relevant for continuation optimality, not the opponents' payoff matrices and not
necessarily every off-path entry of its own matrix.

\section{Experiments}\label{sec:experiments}

In this section, we empirically evaluate whether off-the-shelf reasoning LLM agents exhibit the theoretical properties derived in previous sections, i.e., whether they converge toward Nash equilibrium behavior in repeated strategic interaction. After discussing the experiment setup common to all simulations in Section~\ref{ssec:expSetup}, we study three empirical implications of the theory:
\begin{enumerate}[leftmargin=*]
    \item For convergence to some equilibrium-consistent late-run behavior, simple predict--then--act reasoning may already be sufficient (Section~\ref{sec:myopic}).
    \item For convergence to a \emph{particular} nontrivial repeated-game cooperative equilibrium sustained by continuation incentives, myopic approaches should generally fail, whereas PS-BR should succeed once equilibrium-selection frictions are reduced (Section~\ref{ssec:deterministicExp}).
    \item PS-BR should remain effective even when the payoff matrix is not given ex ante and must be learned from noisy private payoff observations (Section~\ref{ssec:stochasticExp}).
\end{enumerate}

\subsection{Setup}\label{ssec:expSetup}

\paragraph{Baselines.}
We use Qwen 3.5-27B \citep{qwen3.5}, a small-scale open-source LLM. The empirical design is a controlled within-model intervention: we hold fixed the underlying model, game description, public history, role mapping, and strategy-label context, and vary only the reasoning procedure applied to that common information. Thus, the main comparisons are intended to isolate the effect of adding reasoning on top of the same model and task interface, rather than differences in model scale, training, or task-specific prompt optimization. We evaluate three decision procedures:
\begin{itemize}[leftmargin=*]
    \item \textit{Base}: direct action selection from the common rules, history, and strategy-label context (prompt templates in Appendix~\ref{app:prompts_base}). To isolate the behaviors of LLM without reasoning, we explicitly suppress the default reasoning capabilities of Qwen 3.5-27B. 
    
    \item \textit{SCoT}: chain-of-thought style ``predict--then--act'' prompting \citep{akata2025playing} added on top of Base. This is a natural empirical benchmark for the one-step reasoning result in Section~\ref{sec:myopic_stage_nash}; it uses the same rules, history, and strategy-label context, but separates opponent-action prediction from action choice (Appendix~\ref{app:prompts_scot}).
    \item \textit{PS-BR}: Reasonable reasoning (posterior-sampling best response) added on top of \textit{Base} (Definition~\ref{def:psbr}; implementation details in Appendix~\ref{app:impl_strategy_psbr}). It uses the same game, history, and strategy-label information to infer an opponent strategy label, then chooses actions by rollout-based strategy evaluation; the known-payoff prompt is in Appendix~\ref{app:prompts_psbr_known}, and the unknown-payoff prompt/configuration is in Appendix~\ref{app:prompts_psbr_unknown}.
\end{itemize}

\paragraph{Benchmarks.}

We consider five repeated-game environments in total: BoS, PD, Promo, Samaritan, and Lemons. 

\paragraph{(1) Battle of the Sexes (BoS; coordination with asymmetric equilibria).}
Actions each period: $J$ or $F$.
Per-period payoff matrix (Player 1, Player 2):
\[
\begin{array}{c|cc}
 & \text{P2: }J & \text{P2: }F\\
\hline
\text{P1: }J & (10,7) & (0,0)\\
\text{P1: }F & (0,0) & (7,10)
\end{array}
\]
The non-trivial cooperative Nash equilibrium (pure): $(J,J)$ and $(F,F)$.
One non-trivial cooperative Nash equilibrium is both of them sticking to one action:
\begin{itemize}[leftmargin=*]
\item Play $J$ after every history (outcome $(J,J)$ every period).
\item Play $F$ after every history (outcome $(F,F)$ every period).
\end{itemize}
Such a non-trivial cooperative Nash equilibrium is particularly plausible when a monetary transfer underlies the game. Another non-trivial cooperative Nash equilibrium is turn-taking:
\begin{itemize}[leftmargin=*]
\item Play $(J,J)$ in odd periods and $(F,F)$ in even periods.
\item After any history, continue the same odd/even phase convention.
\end{itemize}
\paragraph{(2) Prisoner's Dilemma (PD; social dilemma).}
Actions each period: $J$ or $F$.
Per-period payoff matrix (Player 1, Player 2):
\[
\begin{array}{c|cc}
 & \text{P2: }J & \text{P2: }F\\
\hline
\text{P1: }J & (3,3) & (-5,5)\\
\text{P1: }F & (5,-5) & (0,0)
\end{array}
\]
One-shot stage-game Nash equilibrium: $(F,F)$.
A baseline pure Nash equilibrium of the repeated game is stationary play of $(F,F)$ after every history.
A nontrivial cooperative benchmark strategy profile used in the empirical PS-BR implementation is a contrite finite-punishment rule:
\begin{itemize}[leftmargin=*]
\item Cooperative phase: play $(J,J)$ every period.
\item If a player chooses $J$ while the opponent chooses $F$, the exploited player switches to $F$ for three rounds and then returns to $J$.
\item A player who previously chose $F$ against $J$ does not counter-retaliate during the opponent's three-round retaliation window; after that window ends, both players return to $(J,J)$.
\end{itemize}

\paragraph{(3) Promo \citep[Appendix \ref{appssec:promo}]{lal1990price}}
Actions each period: $R$ (Regular), $P$ (Promotion), or $Z$ (price-war punishment).
Per-period payoff matrix (Player 1, Player 2):
\[
\begin{array}{c|ccc}
 & \text{P2: }R & \text{P2: }P & \text{P2: }Z \\
\hline
\text{P1: }R & (1,1) & (-1,4) & (-2,-2) \\
\text{P1: }P & (4,-1) & (0,0) & (-2,-2) \\
\text{P1: }Z & (-2,-2) & (-2,-2) & (-2,-2)
\end{array}
\]
One-shot stage-game Nash equilibrium (pure): $(P,P)$. A baseline pure Nash equilibrium of the repeated game is the stationary play of $(P,P)$ after every history. A nontrivial cooperative pure Nash equilibrium described in \cite{lal1990price} is:
\begin{itemize}[leftmargin=*]
    \item Cooperative phase: $(P, R)$ in the odd round, and $(R,P)$ in the even round. 
    \item If the opponent deviates from the cooperation, play $Z$ for two periods and revert to the cooperative phase. 
\end{itemize}

\paragraph{(4) Samaritan (altruism / one-sided moral hazard).}
Player 1 (Helper): Help ($H$) or No-help ($N$).
Player 2 (Recipient): Work ($W$) or Shirk ($S$).
Per-period payoff matrix (Helper, Recipient):
\[
\begin{array}{c|cc}
 & \text{Recipient: }W & \text{Recipient: }S\\
\hline
\text{Helper: }H & (2,-1) & (0,0)\\
\text{Helper: }N & (1,-2) & (-1,-3)
\end{array}
\]
One-shot stage-game Nash equilibrium (pure): $(H,S)$.
The helper has a dominant action (help), and the recipient best responds by shirking.
A nontrivial cooperative Nash equilibrium exists for sufficiently patient players:
\begin{itemize}[leftmargin=*]
\item Cooperative phase: play $(H,W)$ every period.
\item If the recipient ever shirks, switch forever to punishment $(N,W)$.
\item If, during punishment, the helper ever deviates by helping, the recipient switches forever to $(H,S)$ behavior.
\end{itemize}

\paragraph{(5) Lemons (adverse selection).}
Player 1 (Seller): High Quality ($HQ$) or Low Quality ($LQ$).
Player 2 (Buyer): Buy ($B$) or Don't buy ($D$).
Per-period payoff matrix (Seller, Buyer):
\[
\begin{array}{c|cc}
 & \text{Buyer: }B & \text{Buyer: }D\\
\hline
\text{Seller: }HQ & (3,3) & (-1,0)\\
\text{Seller: }LQ & (4,-1) & (0,0)
\end{array}
\]
One-shot stage-game Nash equilibrium (pure): $(LQ,D)$.
Seller has strict dominant action $LQ$; buyer best-responds to $LQ$ with $D$.
A baseline pure Nash equilibrium of the repeated game is the stationary play of $(LQ,D)$ after every history.
A nontrivial cooperative benchmark strategy profile used in the empirical PS-BR implementation is:
\begin{itemize}[leftmargin=*]
\item Cooperative phase: play $(HQ,B)$ every period.
\item If the seller chooses $HQ$ while the buyer chooses $D$, the seller switches to $LQ$ for three rounds and then returns to $HQ$.
\item If the buyer chooses $B$ while the seller chooses $LQ$, the buyer switches to $D$ for three rounds and then returns to $B$.
\item A player who caused the other's retaliation window accepts that three-round retaliation and does not counter-retaliate during it; after the window ends, play returns to $(HQ,B)$.
\end{itemize}

\paragraph{Interaction protocol.}
In experiments 1-3, each run uses two fresh copies of the same model (Qwen 3.5 27B, \citet{qwen3.5}) in symmetric self-play. 

Within a run, actions are chosen simultaneously, all actions are perfectly publicly observed, and no communication channel is available beyond the public action/payoff history. Across runs, model weights and prompts are reset. Hence, any learning is purely in-context within the realized supergame. For each game, model configuration, and horizon treatment, we run 20 independent supergames. This yields a distribution of outcomes for each specification rather than a single realized history.

\paragraph{Infinitely repeated game implementation.}
Our main implementation follows the standard laboratory approach to simulate infinitely repeated games: \emph{random termination} with a guaranteed 200-round prefix. In benchmark game $g$, the first 200 rounds are guaranteed; at the end of round 200 and after each subsequent round, the supergame continues with commonly known probability $\delta_g=0.75$ and terminates otherwise. We choose this continuation probability so that the analytical benchmark equilibria used in Section~\ref{sec:deltaCalculation} are sequentially rational in the indefinitely repeated game, while the empirical cooperative target profiles also face strong continuation incentives. Such random termination is the standard benchmark implementation of an infinitely repeated game, and matched finite controls are the cleanest way to isolate continuation incentives \citep{bo2005cooperation,frechette2017infinitely}.

\subsection{Experiment 1. Nash convergence}\label{sec:myopic}

Here, we test the first hypothesis: that simple predict--then--act reasoning may already be sufficient for convergence to some equilibrium-consistent late-run behavior.

\subsubsection{Experiment design}

In Section~\ref{sec:myopic_stage_nash}, we showed that if agents myopically learn to predict opponents' next actions and then best respond to those predictions, the realized path eventually converges to a stage-game $\varepsilon$-Nash equilibrium. SCoT \citep{akata2025playing} operationalizes precisely such a predict--then--act rule.

Operationally, the three methods receive the same task information: game rules, compact public action history, action set, allowed strategy labels and descriptions, role mapping, and the same strategy-prior line when that treatment is enabled. They differ only in how this common information is used. \textit{Base} asks for the next action directly from the enriched action prompt (Appendix~\ref{app:prompts_base}). \textit{SCoT} first asks for the opponent's next action using the same strategy-context block and then asks for an action conditional on that prediction (Appendix~\ref{app:prompts_scot}). \textit{PS-BR} uses the analogous strategy-label inference prompt to sample an opponent strategy label and then selects an action by rollout-based strategy evaluation (Appendix~\ref{app:prompts_psbr_known}).

For comparability across specifications, we retain a late-run equilibrium-follow statistic as one reported outcome. Because the random-termination treatment logs a guaranteed 200-round prefix, we evaluate late-run play over rounds $190$--$200$. In each round of that window, we record whether the realized joint action coincides with either a one-shot Nash action or an on-path action of the benchmark cooperative repeated-game equilibrium for that game. We then average these indicators within run and across runs.

The matched finite-horizon control plays a separate role. If a method's apparent success in the indefinite treatment merely reflects convergence in long but finitely repeated interaction, then similar success should appear in the matched finite game. If, instead, a gap emerges in favor of random termination, that gap is evidence that continuation incentives matter for the method's behavior.

Our primary comparison is therefore not only across \textit{Base}, \textit{SCoT}, and \textit{PS-BR}, but also across the indefinite and matched finite implementations for each game.

\subsubsection{Results}

\begin{table}[ht!]
\centering
\caption{Equilibrium-follow percentage in the terminal window (rounds 190--200) for any (one-shot Nash or cooperative on-path action) Nash equilibrium. Reported scores are averaged over 20 trials.}
\label{tab:myopic_results}
\begin{tabular}{lccc}
\toprule
Game & Base & SCoT & PS-BR \\
\midrule
BoS & \textbf{100.0}\% & \textbf{100.0}\% & \textbf{100.0}\% \\
PD & \textbf{100.0}\% & \textbf{100.0}\% & \textbf{100.0}\% \\
Promo & \textbf{100.0}\% & \textbf{100.0}\% & \textbf{100.0}\% \\
Samaritan & \textbf{100.0}\% & \textbf{100.0}\% & \textbf{100.0}\% \\
Lemons & 0.0\% & \textbf{100.0}\% & \textbf{100.0}\% \\
\bottomrule
\end{tabular}
\end{table}

Table~\ref{tab:myopic_results} shows that on the broad ``any Nash'' metric, terminal-window performance under random termination remains near ceiling in most environments. It deliberately credits two distinct kinds of equilibrium-consistent play: one-shot Nash actions and on-path actions of the benchmark cooperative repeated-game equilibrium. In PD, for example, both mutual defection and mutual cooperation count as equilibrium-consistent under this metric; in BoS, coordinating on either diagonal counts as success. In these runs, \textit{PS-BR} and \textit{SCoT} reach 100.0\% in all five games, while \textit{Base} reaches 100.0\% in BoS, PD, Promo, and Samaritan. This pattern supports the paper's first, weaker empirical prediction: myopic stage-game equilibrium consistency is relatively easy for the model to reach, especially once the task is only to settle on some equilibrium-consistent late-run action profile rather than to sustain a particular repeated-game path.

The \textit{SCoT} result is closely aligned with the theory in Section~\ref{sec:myopic_stage_nash}: explicit myopic predict--then--act reasoning is sufficient for convergence to an ``any Nash'' outcome, including a myopic stage-game Nash outcome. More surprisingly, the strong \textit{Base} performance on this permissive metric suggests that the underlying model may already inherently encode at least the myopic strategic-equilibrium concept, even without the explicit SCoT reasoning. This aligns with the fact that recent LLM models are often trained under reasoning tasks and can implicitly reason, i.e., without explicit test-time reasoning steps \citep{wang2024grokking}, despite the fact that implicit reasoning does not in general exhibit deep reasoning \citep{lin2025implicit}.

Note that Table~\ref{tab:myopic_results} by itself does not identify whether that behavior can support richer continuation-level reasoning beyond myopic strategic one-stage reasoning. Because the outcome metric is deliberately ``any Nash,'' high scores may reflect only convergence to locally stable stage-game play rather than understanding of the cooperative repeated-game equilibrium. That distinction is the object of Experiment~2.

\subsection{Experiment 2. Nontrivial Nash convergence}\label{ssec:deterministicExp}

We now move from asking whether play converges to \emph{some} equilibrium-consistent action profile to the harder question of whether agents can track a \emph{specific} nontrivial cooperative repeated-game equilibrium sustained by continuation incentives.

\subsubsection{Experiment design}

The repeated-games literature shows that cooperation depends critically on strategic uncertainty and on beliefs about the opponent's strategy, not merely on the existence of a cooperative equilibrium \citep{dal2018determinants,dal2025coordination}. Likewise, elicited beliefs over actions and over supergame strategies help rationalize repeated-game choices \citep{aoyagi2024beliefs}. We therefore propose the prompt-induced cooperative-target treatment as a \emph{belief-conditioned implementation test}: the prompt reduces selection frictions and asks whether the model can represent and execute a specific history-contingent cooperative target once favorable beliefs are supplied. 

This design follows two lessons from the repeated-games literature.
First, behavior in repeated games depends strongly on experience and on strategic uncertainty; whether cooperation is supportable in equilibrium is necessary but not sufficient to generate high cooperation \citep{dal2011evolution,dal2018determinants,dal2025coordination}. Second, beliefs over opponents' actions and strategies are central objects in repeated games, so experiments that manipulate or diagnose those beliefs can reveal whether observed play reflects genuine repeated-game reasoning rather than incidental path matching \citep{dal2019strategy, aoyagi2024beliefs}.

Concretely, for each game, we provide a prompt that specifies one benchmark cooperative repeated-game strategy profile and asks the agent to expect the opponent may follow that profile strongly; this information enters through the strategy-context block in the Base and SCoT templates (Appendices~\ref{app:prompts_base} and \ref{app:prompts_scot}) and through the PS-BR strategy-label inference prompt (Appendix~\ref{app:prompts_psbr_known}). In PD, for example, the prompt specifies sustained cooperation, followed by a contrite three-round retaliate-and-forgive response after exploitative defection. In Promo, it specifies the alternating $(P,R),(R,P),(P,R),\ldots$ cooperative phase with the corresponding punishment rule. Analogous game-specific cooperative strategy profiles are used in BoS, Samaritan, and Lemons, aligned with the benchmark constructions in Section~\ref{ssec:expSetup}; in Lemons, the benchmark is the analogous three-round retaliate-and-forgive reputation rule. Such a prompt reduces strategic uncertainty, thereby making a specific cooperative target salient \citep{dal2018determinants,dal2025coordination}. The score in this experiment checks whether the realized joint action matches the prescribed cooperative action profile for that round on the benchmark path, then averages those indicators within run and across runs. Under this interpretation, success in Experiment~2 should be read as evidence of \emph{target-path implementation conditional on favorable beliefs}.

\subsubsection{Results.}

\begin{table}[ht!]
\centering
\caption{Equilibrium-follow percentage in the terminal window (rounds 190--200) for the prompt-specified nontrivial cooperative target path. Reported scores are averaged over 20 trials.}
\label{tab:nontrivial_results}
\begin{tabular}{lccc}
\toprule
Game & Base & SCoT & PS-BR \\
\midrule
BoS & 0.0\% & 0.0\% & \textbf{100.0}\% \\
PD & 0.0\% & 0.0\% & 99.5\% \\
Promo & 0.0\% & 0.0\% & \textbf{100.0}\% \\
Samaritan & 0.0\% & 0.0\% & 99.5\% \\
Lemons & 0.0\% & 0.0\% & \textbf{100.0}\% \\
\bottomrule
\end{tabular}
\end{table}

Table~\ref{tab:nontrivial_results} shows that once the metric is restricted to the prompt-specified cooperative action path, PS-BR runs are near ceiling in all five benchmark games. \textit{PS-BR} reaches 100.0\% in BoS, Promo, and Lemons, and 99.5\% in PD and Samaritan. By contrast, \textit{Base} and \textit{SCoT} reach 0.0\% in every game.

This contrast with Table~\ref{tab:myopic_results} is the paper's main empirical test. Because the cooperative-target prompt reduces equilibrium-selection frictions by making one favorable strategy profile salient, the remaining performance gap isolates whether the agent can actually represent the opponent's strategy and act on continuation incentives rather than merely match a locally safe action. Once the evaluation requires tracking a specific history-contingent cooperative prescription rather than merely landing on some equilibrium-consistent action, the near-ceiling performance from Experiment~1 disappears for the simpler methods. Under the rounds 190--200 terminal-window criterion, only \textit{PS-BR} achieves near-perfect scores across all five environments.

This finding supports the theory's negative implication for myopic reasoning. Whether myopic reasoning is made explicit through \textit{SCoT} or appears implicitly in \textit{Base}, it can be enough to reach some stage-game Nash-consistent behavior, but it does not guarantee understanding of the long-term cooperation concept sustained by continuation values. The failure of both baselines on the cooperative-path metric therefore clarifies that the high ``any Nash'' scores in Experiment~1 should not be interpreted as evidence of cooperative repeated-game reasoning.

More broadly, Experiment~2 directly supports the paper's core argument that off-the-shelf reasoning agents can exhibit zero-shot repeated-game stability without post-training when they can infer opponent strategies and evaluate continuation plans. Table~\ref{tab:myopic_results} showed that most methods can reach some equilibrium-consistent late-run behavior, mostly the stage-Nash equilibrium. Table~\ref{tab:nontrivial_results} shows that maintaining a cooperative repeated-game prescription is much less robust, and that the ability to do so sharply favors \textit{PS-BR}. The fact that the same advantage appears not only in BoS and PD but also in Promo, Samaritan, and Lemons ties the result back to the paper's motivating market-style environments, suggesting that the theory matters beyond stylized matrix games and into promotion, moral-hazard, and adverse-selection settings.

\subsection{Experiment 3: Nontrivial Nash convergence under unknown payoffs}\label{ssec:stochasticExp}

\subsubsection{Setup}

Experiment~3 keeps the benchmark games, cooperative target profiles, and horizon treatments from Experiment~2, but removes common-knowledge stage payoffs. The question is now whether the same belief-conditioned cooperative-target benchmark can be implemented when agents must simultaneously infer payoff incentives from noisy private observations.

As in Experiment~2, we use the standard random-termination implementation as the main treatment and the matched finite-horizon game as the control. Agents still observe the full public action history, but they do \emph{not} receive the stage-game payoff matrix in the prompt. Instead, after public joint action $a^t$ is realized, player $i$ receives only a private payoff observation
\begin{equation}
r_i^t \;=\; u_i^g(a^t) + \epsilon_{i,t},
\qquad
\epsilon_{i,t}\stackrel{\text{i.i.d.}}{\sim}\mathcal N(0,\sigma_g^2),
\label{eq:exp2_gaussian_noise_rewrite}
\end{equation}
independent across players and rounds.

To mirror the finite payoff-menu setup used in Section~\ref{sec:unknown_payoffs}, we equip each agent with a finite hypothesis class over its own \emph{mean} payoff matrix. Fix a game $g$ and player $i$, and define the offset set
\[
K:=\{-2,-1.5,-1,-0.5,0,+0.5,+1,+1.5,+2\}.
\]
The finite menu of candidate mean matrices is
\[
\mathcal M_{i,g}
:=
\left\{
m:A\to\mathbb R:
m(a)=u_i^g(a)+k_a\sigma_g
\ \text{for each }a\in A,\ \text{with }k_a\in K
\right\}.
\]
In particular, the true mean matrix belongs to $\mathcal M_{i,g}$.

Operationally, player $i$ maintains a posterior over $\mathcal M_{i,g}$ using the Gaussian likelihood
\[
\pi_i^t(m\mid h^t,r_i^{1:t-1})
\propto
\pi_i^0(m)\prod_{s=1}^{t-1}\phi(r_i^s; m(a^s),\sigma_g^2),
\]
where $\phi(\cdot;\mu,\sigma_g^2)$ is the Gaussian density. PS-BR then samples one candidate mean matrix from this posterior and evaluates continuation strategies against the induced payoff kernel.

As discussed in Section~\ref{sec:unknown_payoffs}, this design is intended to justify only \emph{on-path, decision-relevant} payoff learning, not global recovery of the full payoff matrix. The prompt-specified cooperative target and its finite punishment branches concentrate play on a small recurrent set of joint actions. Hence, under the Gaussian noise family, any retained payoff hypothesis that remains strategically relevant but mis-specifies one of those recurrently reached profiles accumulates repeated likelihood loss under the realized private rewards and is eliminated, while hypotheses that differ only on unreached or decision-irrelevant profiles need not be separated. This is exactly the recurrent-separation verification route discussed in Section~\ref{sec:unknown_payoffs} and Appendix~\ref{app:retained_menu_identification}.

The prompt-induced cooperative-target benchmark is kept in action-strategy form but now omits the payoff matrix (see Appendix~\ref{app:prompts_psbr_unknown}; for the Base and SCoT baselines, we use the templates in Appendices~\ref{app:prompts_base} and \ref{app:prompts_scot} with the payoff matrix omitted from the rules text). Thus, the prompt still lowers strategic uncertainty about the opponent's intended continuation play, while leaving the agent to learn the payoff incentives that make those continuations attractive or unattractive. This design directly targets the practical case in which repeated-game agents must infer incentives from experience rather than from a fully specified stage game. 

As in Experiment~2, we evaluate both on-path follow-through and off-path equilibrium understanding, and we compare performance under random termination with the matched finite-horizon control.

\subsubsection{Results.}

We report two complementary terminal-window metrics under unknown stochastic payoffs: convergence to any Nash equilibrium action (Table~\ref{tab:unknown_any_nash_results}) and follow-through on the prompt-specified cooperative target path (Table~\ref{tab:unknown_coop_nash_results}).

\noindent
\begin{minipage}[t]{0.48\textwidth}
\centering
\captionof{table}{Unknown stochastic payoffs: equilibrium-follow percentage in the terminal window (rounds 190--200) for \textit{any} Nash equilibrium. Reported scores are averaged over 20 trials.}
\label{tab:unknown_any_nash_results}
\begin{tabular}{lccc}
\toprule
Game & Base & SCoT & PS-BR \\
\midrule
BoS & \textbf{100.0}\% & \textbf{100.0}\% & \textbf{100.0}\% \\
PD & \textbf{100.0}\% & \textbf{100.0}\% & \textbf{100.0}\% \\
Promo & \textbf{100.0}\% & 70.9\% & \textbf{100.0}\% \\
Samaritan & 69.1\% & 60.0\% & \textbf{90.5}\% \\
Lemons & 94.5\% & \textbf{100.0}\% & \textbf{100.0}\% \\
\bottomrule
\end{tabular}
\end{minipage}
\hfill
\begin{minipage}[t]{0.48\textwidth}
\centering
\captionof{table}{Unknown stochastic payoffs: equilibrium-follow percentage in the terminal window (rounds 190--200) for the prompt-specified cooperative target path. Reported scores are averaged over 20 trials.}
\label{tab:unknown_coop_nash_results}
\begin{tabular}{lccc}
\toprule
Game & Base & SCoT & PS-BR \\
\midrule
BoS & 0.0\% & 0.0\% & \textbf{78.6}\% \\
PD & 0.0\% & 15.0\% & \textbf{72.7}\% \\
Promo & 0.0\% & 3.6\% & \textbf{75.9}\% \\
Samaritan & 5.0\% & 4.5\% & \textbf{79.1}\% \\
Lemons & 10.0\% & 5.0\% & \textbf{75.0}\% \\
\bottomrule
\end{tabular}
\end{minipage}

On the broader ``any Nash'' metric (Table~\ref{tab:unknown_any_nash_results}), all three methods again remain strong in the terminal window, though performance is more uneven across games than in the known-payoff case. \textit{Base} reaches 100.0\% in BoS, PD, and Promo, and remains high in Lemons at 94.5\%, though it falls to 69.1\% in Samaritan. \textit{SCoT} attains 100.0\% in BoS, PD, and Lemons, but is weaker in Promo (70.9\%) and Samaritan (60.0\%). \textit{PS-BR} reaches 100.0\% in BoS, PD, Lemons, and Promo, and 90.5\% in Samaritan. Thus, as in the known-payoff case, payoff uncertainty does not prevent substantial equilibrium-consistent terminal behavior on this permissive metric, but it makes the non-planner baselines less uniformly reliable in Promo, Samaritan, and Lemons.

The stricter cooperative-path metric in Table~\ref{tab:unknown_coop_nash_results} again reveals a sharper distinction. Under unknown payoffs, the PS-BR runs perform best in all five games: BoS (78.6\%), PD (72.7\%), Promo (75.9\%), Samaritan (79.1\%), and Lemons (75.0\%). \textit{Base} is much weaker, with 0.0\% in BoS, PD, and Promo, 5.0\% in Samaritan, and 10.0\% in Lemons; \textit{SCoT} remains lower across all games, with 0.0\% in BoS, 15.0\% in PD, 3.6\% in Promo, 4.5\% in Samaritan, and 5.0\% in Lemons.

Experiment~3 also aligns with the theoretical results proposed in Section \ref{sec:unknown_payoffs}, suggesting that agents can extend the strategic-equilibrium reasoning observed in the idealized known-payoff setting to the more realistic unknown-payoff setting. The extension is neither as stable nor as fast as in Experiment~2, because agents must learn decision-relevant payoff incentives while also coordinating on the repeated-game path. This is consistent with the theory's payoff-learning extension: equilibrium reasoning can survive the removal of ex ante payoff knowledge, but speed of convergence may be much slower.

\section{Conclusion}

In this paper, we theoretically highlight the promising prospect that general-purpose AI agents can attain game-theoretic robustness through their inherent reasoning capabilities rather than through unrealistic, bespoke, unified training. By demonstrating that LLMs can evolve toward equilibrium behavior on the fly, we take a step toward safer and more autonomous multi-agent AI systems that remain effective across the myriad interactive scenarios they will encounter in the real world. The results bridge the gap between AI agents and classical game theory, indicating that the rich knowledge and inferential power of modern LLMs may be harnessed to meet longstanding challenges in multi-agent learning and interaction. Ultimately, enabling LLM-based agents to naturally exhibit equilibrium-like behavior during play not only advances our theoretical understanding of their behavior but also paves the way for their deployment in societally crucial domains that require reliable strategic decision-making.

\newpage

\printbibliography

\clearpage          
\onecolumn          
\appendix

\clearpage

\clearpage
\section*{Notation Summary}

\begingroup
\footnotesize
\setlength\LTleft{0pt}
\setlength\LTright{0pt}
\setlength{\tabcolsep}{4pt}
\renewcommand{\arraystretch}{1.08}

\begin{longtable}{@{}L{0.29\textwidth}L{0.67\textwidth}@{}}
\toprule
\textbf{Symbol} & \textbf{Meaning} \\
\midrule
\endfirsthead

\toprule
\textbf{Symbol} & \textbf{Meaning} \\
\midrule
\endhead

\bottomrule
\endfoot

\multicolumn{2}{@{}l}{\textbf{Repeated-game primitives}}\\
\cmidrule(l){1-2}
$I=\{1,\ldots,N\}$ & Finite set of players / AI agents. \\
$A_i,\quad A=\prod_{i\in I} A_i$ & Player-$i$ action set; $A$ is the joint action space. \\
$a^t=(a_1^t,\ldots,a_N^t)\in A$ & Joint action profile at round $t$. \\
$u_i:A\to[0,1]$ & Stage payoff in the common-knowledge benchmark; in the private-payoff extension, the true mean payoff matrix generating player $i$'s private stochastic rewards. \\
$\lambda_i\in(0,1)$ & Player-$i$ discount factor. \\
$\varphi_i,\ \Phi_i^\star$ & Stage-game minmax payoff and pure-action maxmin payoff used in the Non-MM$^\star$ condition. \\

\addlinespace[2pt]
\multicolumn{2}{@{}l}{\textbf{Histories, strategies, and induced laws}}\\
\cmidrule(l){1-2}
$h^t=(a^1,\ldots,a^{t-1})$ & Public history observed before round $t$. \\
$H^0,\ H^t,\ H,\ H^\infty$ & Empty history; histories of length $t-1$; all finite public histories; all infinite play paths. \\
$C(h)$ & Cylinder set of infinite paths having prefix $h$. \\
$f_i:H\to\Delta(A_i)$ & Public-history strategy of player $i$ in the benchmark repeated game. \\
$\mathcal F_i,\quad \mathcal F=\prod_{i\in I}\mathcal F_i$ & Individual and joint strategy spaces in the benchmark model. \\
$\mu^f$ & Play-path distribution induced by strategy profile $f$. \\
$\mu_{h^t}^{g}$ & Continuation distribution after history $h^t$ when continuation profile $g$ is played thereafter. \\
$U_i(f)$ & Objective discounted payoff of player $i$ under profile $f$. \\

\addlinespace[2pt]
\multicolumn{2}{@{}l}{\textbf{Beliefs and subjective continuation values}}\\
\cmidrule(l){1-2}
$\mu_i^0,\quad \mu_i^t(\cdot\mid h^t)$ & Player-$i$ prior and posterior over opponents' strategy profiles. \\
$P_i^{\mu_i,g_i}$ & Predictive play-path distribution induced by own strategy $g_i$ and belief $\mu_i$ over opponents' strategies. \\
$f_{-i}^{i},\quad f_{-i}^{i,t}$ & Representative continuation models for player $i$'s prior predictive and posterior predictive beliefs about opponents' play. \\
\makecell[l]{$V_i(\sigma_i\mid h^t;g_{-i})$\\ $V_i(\sigma_i\mid h^t)$} & Subjective continuation value of continuation strategy $\sigma_i$ against continuation model $g_{-i}$; shorthand $V_i(\sigma_i\mid h^t)=V_i(\sigma_i\mid h^t;f_{-i}^{i,t})$. \\
$\mathrm{BR}_i^\varepsilon(g_{-i}\mid h^t)$ & Set of $\varepsilon$-best-response continuation strategies at history $h^t$. \\
$g_{-i}\sim_i^{h^t}g'_{-i},\ \mathcal E_i(h^t)$ & On-path continuation-payoff equivalence; the true equivalence class inside the finite menu. \\
$\mu^f \ll P_i^{0,f_i}$ & Grain-of-truth condition: player $i$'s prior predictive does not rule out events that occur under the true play distribution. \\
$d(\mu,\nu),\quad d_{h^t}(\mu,\nu)$ & Weak distance between play-path distributions and its continuation version after history $h^t$. \\
$\xi,\ \eta,\ \varepsilon$ & Approximation tolerances for subjective best response, predictive accuracy, and Nash / best-response error. \\

\addlinespace[2pt]
\multicolumn{2}{@{}l}{\textbf{Posterior-sampling and one-step reasoning}}\\
\cmidrule(l){1-2}
$\mathcal S_{-i},\quad L_{i,T}(g_{-i};z)$ & Finite strategy menu of candidate opponent strategies in PS-BR; finite-history likelihood of hypothesis $g_{-i}$ along realized path $z$. \\
$p_t(g_{-i})=\mu_i^t(g_{-i}\mid h^t)$ & Posterior mass on opponent hypothesis $g_{-i}$. \\
$\sigma_{i,t}^{\mathrm{PS}}(\cdot\mid h^t)$ & Continuation strategy chosen by posterior-sampling best response (PS-BR) at history $h^t$. \\
$D_i^t(h^t)=1-\sum_{g_{-i}\in\mathcal S_{-i}} p_t(g_{-i})^2$ & Posterior collision complement; upper-bounds the generic PS-BR best-response gap under exact posterior concentration. \\
$D_{\mathrm{KL}}(p\|q)$ & Kullback--Leibler divergence between two distributions. \\
$q_i^t(\cdot\mid h^t),\quad \mathrm{br}_i^\varepsilon(q)$ & One-step posterior predictive belief over opponents' next joint action; stage-game $\varepsilon$-best-response set to next-action distribution $q$. \\
$g_{-i}\approx_i^{h^t}g'_{-i},\quad \mathcal E_i^{\mathrm{st}}(h^t),\quad \beta_i^t(h^t)$ & On-path stage-payoff equivalence; the true stage-equivalence class inside the finite menu; posterior mass outside that class. \\
$\alpha_{i,t}^{\mathrm{mPS}}(\cdot\mid h^t)$ & Mixed action induced by myopic PS-BR at history $h^t$. \\
$\hat a_{-i}^t(h^t),\quad b_i(\cdot)$ & SCoT MAP prediction of opponents' next action; deterministic pure best-response selector. \\

\addlinespace[2pt]

\multicolumn{2}{@{}l}{\textbf{Private-payoff extension}}\\
\cmidrule(l){1-2}
$r_i^t$ & Privately observed stochastic payoff of player $i$ at round $t$. \\
$\mathcal R_i,\quad \nu_i(\mathrm dr),\quad \psi_i(r;\mu)$ & Payoff space, dominating base measure, and known noise-family density indexed by mean $\mu$. \\
$x_i^t=(h^t,r_i^{1:t-1}),\quad X_i^t,\ X_i$ & Player-$i$ observable history used for payoff learning at time $t$, its time-$t$ state space, and the full observable-history space. \\
$\Omega,\quad P^{\beta,u}$ & Full sample space of public actions and private rewards; actual law induced by the observable-history behavioral rule $\beta$ and the true mean payoffs $u$. \\
$\beta_i:X_i\to\Delta(A_i)$ & Actual one-step behavioral rule in the private-payoff environment. \\
$U_i^{u_i}(g\mid h^t)$ & Objective continuation payoff after public history $h^t$ under the true mean matrix $u_i$. \\
$\mathcal M_i,\quad \pi_i^t(m_i\mid x_i^t)$ & Finite menu of candidate mean payoff matrices and player-$i$ posterior over that menu. \\
$q_i^{m_i}(\mathrm dr\mid a)$ & Payoff kernel induced by candidate mean matrix $m_i$ at joint action $a$. \\
\makecell[l]{$V_i^{m_i}(\tau_i\mid h^t;g_{-i})$\\ $\mathrm{BR}_{i,m_i}^\varepsilon(g_{-i}\mid h^t)$} & Subjective continuation value and corresponding $\varepsilon$-best-response set when player $i$ evaluates payoffs using candidate mean matrix $m_i$. \\
\makecell[l]{$V_i^{\mathrm{mix},t}(\tau_i\mid x_i^t)$\\ $V_i^{u_i,t}(\tau_i\mid x_i^t)$} & Mixture continuation value under the joint opponent/payoff posteriors; the same object evaluated under the true mean matrix $u_i$. \\
$g_{-i}^{i,t},\quad \Pi_i^t(\cdot\mid x_i^t)$ & Representative continuation model for player $i$'s posterior predictive public-action belief; posterior predictive law over future public-action paths. \\
$\mathcal E_i^{\mathrm{priv}}(h^t),\ \mathcal U_i(h^t),\ \alpha_i^t(h^t),\ \delta_i^t(x_i^t)$ & True opponents-side continuation-payoff equivalence class in the private-payoff game; true own-payoff continuation-decision class; opponents-side learning error; pointwise own-payoff continuation-decision learning error at observable history $x_i^t$. \\

\addlinespace[2pt]
\multicolumn{2}{@{}l}{\textbf{Bounded-memory appendix notation}}\\
\cmidrule(l){1-2}
$\mathrm{suffix}_\kappa(h),\ \mathcal F_i^\kappa,\ \mathsf S_\kappa,\ T_\kappa$ & Last $\kappa$ joint actions of history $h$; bounded-memory strategy space; finite suffix-state space; deterministic state-update map in the finite-state reduction. \\

\end{longtable}
\endgroup

\clearpage

\section{Predictive beliefs and representative continuation models}
\label{app:belief_repr}

The main text works directly with posterior predictive continuation laws. This appendix records the standard representative construction that justifies the notation $f_{-i}^{i}$ and $f_{-i}^{i,t}$ used there.

Fix player $i$ and a (possibly mixed) belief $\mu_i$ over opponents' strategy profiles $\mathcal F_{-i}$. For any own strategy $g_i\in\mathcal F_i$, define the predictive play-path distribution
\[
P_i^{\mu_i,g_i}(E)
:= \int_{\mathcal F_{-i}} \mu^{(g_i,f_{-i})}(E)\, d\mu_i(f_{-i})
\qquad\text{for measurable }E\subseteq H^\infty.
\]

\begin{lemma}[Existence of predictive representatives]
\label{lem:belief_repr_exists}
Fix player $i$ and a belief $\mu_i$ over opponents' strategy profiles $\mathcal F_{-i}$. There exists a behavior-strategy profile $\bar f_{-i}\in\mathcal F_{-i}$ such that, for every own strategy $g_i\in\mathcal F_i$,
\[
\mu^{(g_i,\bar f_{-i})} \,=\, P_i^{\mu_i,g_i}.
\]
When $\mu_i$ has finite support $\{g_{-i}^1,\dots,g_{-i}^K\}$, one convenient choice is
\[
\bar f_{-i}(h)(a_{-i})
=\sum_{k=1}^K \mu_i(g_{-i}^k\mid h)\, g_{-i}^k(h)(a_{-i}),
\]
for histories $h$ where Bayes' rule is defined.
\end{lemma}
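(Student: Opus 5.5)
The natural approach is to build $\bar f_{-i}$ history by history as the conditional one-step predictive law of opponents' next joint action, and then check that the product measure it generates with any $g_i$ agrees with $P_i^{\mu_i,g_i}$ on every cylinder set. The key observation is that opponents' actions at $h$ are independent of player $i$'s own randomization at $h$ given $f_{-i}$. So the posterior over $\mathcal F_{-i}$ after observing $h$ does not depend on $g_i$, because the factors $g_i(h^s)(a_i^s)$ cancel in Bayes' rule. Concretely, for each finite history $h=(a^1,\dots,a^{t-1})$ I define the ``opponent likelihood''
\[
L(h;f_{-i}) := \prod_{s=1}^{t-1} f_{-i}(h^s)(a_{-i}^s),
\]
so that $\mu^{(g_i,f_{-i})}(C(h)) = \Big(\prod_{s} g_i(h^s)(a_i^s)\Big) L(h;f_{-i})$. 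The first factor does not depend on $f_{-i}$. I then set $Z(h):=\int L(h;f_{-i})\,d\mu_i(f_{-i})$ and, when $Z(h)>0$,
\[
\bar f_{-i}(h)(a_{-i}) := \frac{1}{Z(h)}\int L(h;f_{-i})\, f_{-i}(h)(a_{-i})\, d\mu_i(f_{-i}).
\]
When $Z(h)=0$ I let $\bar f_{-i}(h)$ be an arbitrary fixed element of $\Delta(A_{-i})$. In the finite-support case this is exactly $\sum_k \mu_i(g^k_{-i}\mid h)\,g^k_{-i}(h)(a_{-i})$ with $\mu_i(\cdot\mid h)\propto \mu_i(\cdot)L(h;\cdot)$, which gives the displayed formula.

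Next I verify the identity on cylinders by induction on $t$, showing
\[
\prod_{s=1}^{t-1}\bar f_{-i}(h^s)(a_{-i}^s) = Z(h^t)
\]
whenever the left side is positive. This is a telescoping product: by construction $\bar f_{-i}(h^s)(a^s_{-i})=Z(h^{s+1})/Z(h^s)$ whenever $Z(h^s)>0$, and $Z(\emptyset)=1$. If some $Z(h^s)=0$, then $Z(h^{t})=0$ as well, since $L$ is nonincreasing along extensions. Both sides then vanish, because the factor at the first $s$ with $Z(h^{s+1})=0$ is itself zero: there $Z(h^s)>0$ and the ratio formula applies. Multiplying by the common $g_i$ factor gives
\[
\mu^{(g_i,\bar f_{-i})}(C(h)) = \Big(\prod_s g_i(h^s)(a_i^s)\Big) Z(h) = \int \mu^{(g_i,f_{-i})}(C(h))\,d\mu_i = P_i^{\mu_i,g_i}(C(h)).
\]
Cylinder sets form a $\pi$-system generating $\mathcal B$, so the two measures coincide by the $\pi$–$\lambda$ theorem. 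This argument holds for every $g_i$ simultaneously, because $\bar f_{-i}$ was defined without reference to $g_i$.

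The main obstacle is measure-theoretic rather than combinatorial. For general (non-finite) $\mu_i$ one needs $f_{-i}\mapsto f_{-i}(h)(a_{-i})$ to be measurable so that the integrals defining $Z$ and $\bar f_{-i}$ make sense. I would handle this by equipping $\mathcal F_{-i}$ with the product $\sigma$-algebra over the countable index set $H\times A_{-i}$, under which these coordinate maps are measurable by definition; $L(h;\cdot)$ is then a finite product of them. A secondary subtlety is that off-support histories (where $Z(h)=0$) leave $\bar f_{-i}$ non-unique. This is harmless, since such histories carry zero mass under every $P_i^{\mu_i,g_i}$, and it is why the statement says only ``there exists'' and calls the formula one convenient choice on histories where Bayes' rule is defined.
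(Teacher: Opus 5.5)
Your proof is correct, but it takes a different route from the paper. The paper gives no argument: it invokes the Kuhn--Aumann mixed-to-behavior reduction, treating $\mu_i$ as a mixed strategy of the opponents and appealing to realization equivalence under perfect recall. You instead prove that reduction directly in this public-history setting. You define $\bar f_{-i}$ as the Bayes conditional built from the opponent-only likelihood $L(h;\cdot)$. You then telescope $\prod_{s<t}\bar f_{-i}(h^s)(a^s_{-i})=Z(h^t)$, handling correctly the first history where $Z$ vanishes, and extend from cylinders by the $\pi$--$\lambda$ theorem.

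Your route gives precision that the citation leaves implicit:
\begin{itemize}
\item One representative works for every $g_i$ at once, because the $g_i$-factors cancel. This also pins down that ``Bayes' rule is defined'' should mean $Z(h)>0$, not $P_i^{\mu_i,g_i}(C(h))>0$ for a particular $g_i$.
\item It supplies an explicit $\sigma$-algebra on $\mathcal F_{-i}$ under which $P_i^{\mu_i,g_i}$ is well defined.
\item It derives the finite-support formula directly.
\item Bayes-updating $\mu_i(\cdot\mid h^t)$ along $h\succeq h^t$ returns $\mu_i(\cdot\mid h)$. This justifies the continuation-consistent selection $f^{i,t}_{-i}|_{h^t}=f^{i}_{-i}|_{h^t}$, which the appendix simply adopts as a convention.
\end{itemize}
The citation buys only brevity.

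You should, however, say explicitly what space $\bar f_{-i}$ lives in. Your $\bar f_{-i}(h)$ is a mixture of product measures, so it is a general element of $\Delta(A_{-i})$. It therefore belongs to $\mathcal F_{-i}$ only if $\mathcal F_{-i}$ is read as the set of possibly correlated maps $H\to\Delta(A_{-i})$.

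If $\mathcal F_{-i}=\prod_{j\neq i}\mathcal F_j$ and $N\ge 3$, the statement is false. Take $A_j=\{J,F\}$ and let $\mu_i$ put mass $\tfrac12$ on both opponents always playing $J$ and $\tfrac12$ on both always playing $F$. The predicted first-round joint action is then $\tfrac12\delta_{(J,J)}+\tfrac12\delta_{(F,F)}$, which no profile of independent behavior strategies reproduces.

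The paper's own displayed formula has the same feature, and Kuhn's theorem applied to the opponents as one team yields the same correlated object. So this is a caveat on the statement, not a defect peculiar to your argument. It disappears in two cases:
\begin{itemize}
\item $N=2$, as in all the experiments.
\item A product prior $\mu_i=\bigotimes_{j\neq i}\mu_{i,j}$. Then $L$ factorizes across opponents, and your construction returns $\bar f_{-i}(h)=\bigotimes_{j\neq i}\bar f_j(h)$.
\end{itemize}
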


Lemma~\ref{lem:belief_repr_exists} is the standard mixed-to-behavior reduction for extensive-form games with perfect recall; see \citet{kuhn1953extensive,aumann1961mixed,kalai1993rational}. We use it only as notation-saving infrastructure.

Applying Lemma~\ref{lem:belief_repr_exists} to the prior $\mu_i^0$ yields a representative continuation model $f_{-i}^i\in\mathcal F_{-i}$ such that, for every $g_i$,
\[
\mu^{(g_i,f_{-i}^i)}=P_i^{0,g_i}.
\]
At any history $h^t$ where Bayes' rule is defined, applying the same lemma to the posterior $\mu_i^t(\cdot\mid h^t)$ yields a representative continuation model $f_{-i}^{i,t}$. For notational convenience, we may choose these representatives continuation-consistently:
\begin{equation}
\label{eq:app_rep_choice}
f_{-i}^{i,t}\big|_{h^t} \;:=\; f_{-i}^{i}\big|_{h^t}.
\end{equation}
This is the selection convention used in the appendix proofs.

\section{Continuity and Finite-Horizon Robustness}
\label{app:patching}

\begin{lemma}[Continuity of discounted payoff]
\label{lem:continuity}
For each agent $i$ and every $\delta>0$, there exists $\rho_i(\delta)>0$ such that
for any strategy profiles $f,g \in \mathcal{F}$,
\[
d(\mu^f,\mu^g) \le \rho_i(\delta)
\quad\Rightarrow\quad
\bigl| U_i(f) - U_i(g) \bigr| \le \delta.
\]
In particular, if $\rho(\delta) = \min_{i\in I} \rho_i(\delta)$ and
$d(\mu^f,\mu^g)\le\rho(\delta)$, then
$\bigl| U_i(f) - U_i(g) \bigr| \le \delta$ for all $i \in I$.
\end{lemma}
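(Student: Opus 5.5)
The plan is a truncation argument combined with the definition of the weak distance $d$. Fix $\delta>0$. Since $u_i\in[0,1]$, the normalized discounted payoff after period $T$ is at most $\lambda_i^{T}$ in absolute value, whatever the path. So we choose $T=T_i(\delta)$ with $\lambda_i^{T}\le\delta/2$. Then we only need to control the difference of the first $T$ discounted terms.

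For the head, write
\[
U_i^{\le T}(f):=\mathbb E_{z\sim\mu^f}\Bigl[(1-\lambda_i)\sum_{t=1}^{T}\lambda_i^{t-1}u_i(z^t)\Bigr].
\]
This is the integral of a $\mathcal B^T$-measurable function $\phi_T$ with values in $[0,1-\lambda_i^T]\subseteq[0,1]$. For measures $\mu,\nu$ and any $\mathcal B^T$-measurable $\phi$ with values in $[0,1]$, we have
\[
\bigl|\textstyle\int\phi\,d\mu-\int\phi\,d\nu\bigr|\le \sup_{E\in\mathcal B^T}|\mu(E)-\nu(E)|.
\]
This holds because $\mathcal B^T$ is generated by finitely many atoms (cylinders of length $T$). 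The integral difference is $\sum_h\phi(h)(\mu-\nu)(C(h))$, which is maximized by putting $\phi=1$ on the atoms where $\mu-\nu>0$, so it is bounded by the total variation on $\mathcal B^T$. If one is careful about cylinder length conventions, one could instead bound each period term separately using $\mathcal B^t$ for $t\le T$; either works.

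Next we link this to $d$. By definition, $2^{-T}\sup_{E\in\mathcal B^T}|\mu^f(E)-\mu^g(E)|\le d(\mu^f,\mu^g)$. Hence, if $d(\mu^f,\mu^g)\le\rho$, then
\[
|U_i^{\le T}(f)-U_i^{\le T}(g)|\le 2^{T}\rho.
\]
We set $\rho_i(\delta):=2^{-T}\delta/2$ with $T=\lceil\log(\delta/2)/\log\lambda_i\rceil$ (and $T\ge1$).

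Combining the pieces, the two tails each lie in $[0,\lambda_i^T]$, so their difference is at most $\lambda_i^T\le\delta/2$. The heads differ by at most $\delta/2$. The total difference is therefore at most $\delta$.

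The "in particular" clause is immediate since $I$ is finite: $\rho(\delta)=\min_i\rho_i(\delta)>0$, and monotonicity of the hypothesis gives the bound for every $i$.

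The only mildly delicate step is the measurability and indexing point: $u_i(z^t)$ must be $\mathcal B^T$-measurable for $t\le T$, matching the paper's convention that $\mathcal B^t$ is generated by length-$t$ cylinders. Any off-by-one shift there only changes $2^{-T}$ to $2^{-(T+1)}$ in $\rho_i$ and does not affect the argument.
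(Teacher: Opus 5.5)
Your proof is correct and takes essentially the same route as the paper: truncate at a horizon $T$ where the discounted tail is uniformly small, then control the head through $2^{-t}\sup_{E\in\mathcal B^t}|\mu^f(E)-\mu^g(E)|\le d(\mu^f,\mu^g)$. The differences are cosmetic. The paper bounds each period's term separately on $\mathcal B^t$, which gives the constant $\sum_{t\le T}(1-\lambda_i)\lambda_i^{t-1}2^t$ where you have $2^T$, and it splits the error into $\delta/4$ pieces; you instead bound the whole head at once on $\mathcal B^T$ and use that both tails lie in $[0,\lambda_i^T]$.
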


\subsection{Finite-horizon variants and robustness}

For a finite horizon $T \in \mathbb{N}$, we denote by $\mathcal{F}^T$ the set of
behaviour strategies specified on histories of length at most $T$; two full
strategies that coincide on these histories induce the same distribution over
histories up to time $T$ and the same truncated payoff. For $f \in \mathcal{F}^T$,
define the $T$-period discounted payoff
\[
U_i^T(f)
=
\mathbb{E}_{z \sim \mu^f}
\Big[
(1-\lambda_i) \sum_{t=1}^T \lambda_i^{t-1} u_i(z^t)
\Big].
\]

\begin{definition}[Finite-horizon weak $\xi$-subjective $\eta$-equilibrium]
\label{def:finite_weak_subjective_eq}
Let $\xi,\eta \ge 0$ and a fixed horizon $T$.
A truncated strategy profile $f \in \mathcal{F}^T$ is a \emph{finite-horizon weak
$\xi$-subjective $\eta$-equilibrium} if for each agent $i \in I$ there exists a
supporting truncated profile $f^i \in \mathcal{F}^T$ such that:
\begin{itemize}[leftmargin=*]
    \item $f_i^i = f_i$;
    \item $U_i^T(f_i, f_{-i}^i) \ge \sup_{g_i \in \mathcal{F}_i^T} U_i^T(g_i, f_{-i}^i) - \xi$;
    \item $d(\mu^{f^i},\mu^f) \le \eta$ when $d$ is computed using only cylinder
    events in $\mathcal{B}^t$ with $t \le T$.
\end{itemize}
\end{definition}

We now show that finite-horizon weak subjective equilibria can be ``patched''
into approximate finite-horizon Nash equilibria without changing the induced
distribution of play up to time $T$.

\begin{lemma}[Finite-horizon purification for $\eta=0$ \cite{norman2022possibility}]
\label{lem:finite_eta0}
Fix a finite horizon $T$ and a profile $f \in \mathcal{F}^T$.
Suppose $f$ is a finite-horizon weak $\psi$-subjective $0$-equilibrium for some $\psi\ge 0$.
Then there exists a truncated strategy profile $\hat f \in \mathcal{F}^T$ such that:
\begin{itemize}[leftmargin=*]
    \item $\hat f$ is a $\psi$-Nash equilibrium of the $T$-period game, i.e.,
    for all $i\in I$ and all $g_i \in \mathcal{F}_i^T$,
    \[
    U_i^T(\hat f_i,\hat f_{-i})
    \;\ge\;
    U_i^T(g_i,\hat f_{-i}) - \psi;
    \]
    \item the induced distributions of histories of length at most $T$ coincide:
    for every $E \in \mathcal{B}^T$,
    $
    \mu^{\hat f}(E) = \mu^f(E)
    $.
\end{itemize}
\end{lemma}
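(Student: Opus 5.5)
The plan is to build $\hat f$ by keeping each player's own strategy and, at every history, switching the opponents' behaviour to the supporting predictive model $f^i_{-i}$ of whichever player $i$ has been the unique deviator. Throughout, fix the horizon $T$ and, for each player $i$, the supporting truncated profile $f^i=(f_i,f^i_{-i})\in\mathcal F^T$ given by Definition~\ref{def:finite_weak_subjective_eq} with $\eta=0$. Condition $\eta=0$ means $\mu^{f^i}$ and $\mu^f$ agree on all cylinder events of length at most $T$. Hence on every history $h$ of length less than $T$ reached with positive probability under $\mu^f$, we have $f^i_{-i}(h)=f_{-i}(h)$ as distributions over $A_{-i}$. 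The caveat is that $f_{-i}(h)$ is a product of independent mixed actions while $f^i_{-i}(h)$ need only be a joint distribution; it is only the joint marginal that matters for the equalities below.

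Construct $\hat f$ by classifying histories $h$ of length at most $T$.
\begin{enumerate}
\item If $h$ is on-path for $f$ (positive $\mu^f$ probability), every player $j$ plays $\hat f_j(h)=f_j(h)$.
\item If $h$ left the support of $\mu^f$, look at the first period at which the realized joint action had zero probability given the preceding on-path history. If that zero-probability event is attributable to a single player $i$, meaning $a_i$ had zero probability under $f_i$ while $a_{-i}$ had positive probability under $f_{-i}$, then each opponent $j\neq i$ plays according to $f^i_{-i}$ from then on, and player $i$ keeps playing $f_i$.
\item For all remaining off-path histories (simultaneous deviations), play is arbitrary, for instance $f$.
\end{enumerate}
The construction changes nothing on $\mu^f$-positive histories, so $\mu^{\hat f}(E)=\mu^f(E)$ for all $E\in\mathcal B^T$, which is the second bullet.

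For the equilibrium bullet, fix $i$ and any $g_i\in\mathcal F_i^T$. The key claim is $\mu^{(g_i,\hat f_{-i})}=\mu^{(g_i,f^i_{-i})}$ on $\mathcal B^T$, which I would prove by induction on histories.
\begin{itemize}
\item On histories that are on-path for $f$, opponents under $\hat f_{-i}$ play $f_{-i}(h)$, which coincides with $f^i_{-i}(h)$ by the observation above.
\item At the first off-path node reached under $(g_i,\hat f_{-i})$, the only player who can have caused the zero-probability event is $i$. Opponents are still following $f_{-i}$, so their action had positive probability. Thereafter $\hat f_{-i}$ equals $f^i_{-i}$ by construction.
\end{itemize}
With the claim in hand,
\[
U_i^T(g_i,\hat f_{-i})=U_i^T(g_i,f^i_{-i})\le U_i^T(f_i,f^i_{-i})+\psi=U_i^T(\hat f_i,\hat f_{-i})+\psi ,
\]
where the inequality is subjective $\psi$-optimality and the last equality uses that $(f_i,f^i_{-i})$ and $\hat f$ both induce $\mu^f$ up to time $T$.

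The main obstacle is that $\hat f$ must be a genuine behaviour-strategy profile, with each opponent $j$ randomizing independently. Yet $f^i_{-i}$ may be correlated across opponents after off-path histories, because it is a predictive mixture (Lemma~\ref{lem:belief_repr_exists}). I would first try to handle this by noting that in the two-player case there is no issue, since $-i$ is a single player. For $N>2$, I would invoke Kuhn's theorem on the truncated game to replace $f^i_{-i}$ with independent behaviour strategies that are outcome-equivalent against every $g_i$. If independence cannot be obtained, I would interpret $\hat f_{-i}$ as a correlated continuation, following Norman's treatment; the bound on $U_i^T$ is unaffected. A secondary check is that a simultaneous first deviation by opponents never arises along $(g_i,\hat f_{-i})$, so the arbitrary specification in case 3 does not matter.
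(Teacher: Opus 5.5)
Your argument is correct and is essentially the paper's proof. Both keep $f$ on $\mu^f$-positive histories, switch to the supporting profile $f^i$ on descendants of a first unilateral departure by $i$, and conclude from $U_i^T(g_i,\hat f_{-i})=U_i^T(g_i,f^i_{-i})$ and $U_i^T(\hat f_i,\hat f_{-i})=U_i^T(f_i,f^i_{-i})$; you are in fact more explicit than the paper in checking that $f^i_{-i}(h)=f_{-i}(h)$ on $\mu^f$-positive histories. Your ``main obstacle'' does not arise, because Definition~\ref{def:finite_weak_subjective_eq} already requires $f^i\in\mathcal F^T$, a profile of independent behaviour strategies, so setting $\hat f_j=f^i_j$ is legitimate; drop the proposed fix anyway, since Kuhn's theorem would not decorrelate a genuinely correlated joint opponent strategy.
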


We next extend this to the case where $\eta>0$ but small, using a compactness and limit argument.

\begin{lemma}[Finite-horizon robustness]
\label{lem:finite_eta_small}
Fix a finite horizon $T$ and $\psi>0$.
For every $\theta>0$ there exists $\bar\eta_T(\psi,\theta)>0$ such that:
if $f \in \mathcal{F}^T$ is a finite-horizon weak $\psi$-subjective $\eta$-equilibrium with
$\eta \le \bar\eta_T(\psi,\theta)$, then there exists a $\psi$-Nash equilibrium
$\hat f \in \mathcal{F}^T$ satisfying
\[
d(\mu^{\hat f},\mu^f) \le \theta
\]
(again with $d$ computed on cylinder events of length at most $T$).
\end{lemma}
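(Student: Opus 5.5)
We argue by contradiction, using compactness of the finite-horizon strategy and path spaces together with Lemma~\ref{lem:finite_eta0}. Fix $T$, $\psi>0$ and $\theta>0$, and suppose no such $\bar\eta_T(\psi,\theta)$ exists. Then there is a sequence $\eta_n\downarrow 0$ and profiles $f^{(n)}\in\mathcal F^T$ with the following properties. Each $f^{(n)}$ is a finite-horizon weak $\psi$-subjective $\eta_n$-equilibrium, supported by profiles $f^{(n),i}=(f^{(n)}_i,f^{(n),i}_{-i})$. Yet every $\psi$-Nash equilibrium $\hat f$ of the $T$-period game satisfies $d(\mu^{\hat f},\mu^{f^{(n)}})>\theta$.

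\textbf{Compactness.} Histories of length at most $T$ form a finite set $H^{\le T}$, so $\mathcal F^T$ is a finite product of simplices. It is therefore compact, and the maps $g\mapsto\mu^g|_{\mathcal B^T}$ and $g\mapsto U_i^T(g)$ are continuous polynomials in the behavior probabilities. Passing to a subsequence, we obtain limits $f^{(n)}\to f^\ast$ and $f^{(n),i}_{-i}\to f^{\ast,i}_{-i}$ for every $i$.

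\textbf{Passing to the limit.} We check the three clauses of Definition~\ref{def:finite_weak_subjective_eq} for $f^\ast$.
\begin{itemize}
\item The first clause, $f^{\ast,i}_i=f^\ast_i$, holds trivially in the limit.
\item The predictive clause $d(\mu^{f^{(n),i}},\mu^{f^{(n)}})\le\eta_n\to0$ gives $d(\mu^{f^{\ast,i}},\mu^{f^\ast})=0$ on $\mathcal B^T$, by continuity of the truncated $d$.
\item For the subjective best-response clause, $V^n(g_i):=U_i^T(g_i,f^{(n),i}_{-i})$ converges to $U_i^T(g_i,f^{\ast,i}_{-i})$ uniformly in $g_i$, because $U_i^T$ is jointly continuous on a compact set. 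Hence the suprema converge, and the inequality $U_i^T(f^{(n)}_i,f^{(n),i}_{-i})\ge\sup_{g_i}V^n(g_i)-\psi$ passes to the limit with the same $\psi$.
\end{itemize}
So $f^\ast$ is a finite-horizon weak $\psi$-subjective $0$-equilibrium.

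\textbf{Contradiction.} Lemma~\ref{lem:finite_eta0} now yields a $\psi$-Nash equilibrium $\hat f$ of the $T$-period game with $\mu^{\hat f}=\mu^{f^\ast}$ on $\mathcal B^T$. Continuity gives $d(\mu^{f^{(n)}},\mu^{f^\ast})\to0$. Hence for large $n$ we have $d(\mu^{\hat f},\mu^{f^{(n)}})\le\theta$, contradicting the choice of $f^{(n)}$.

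\textbf{Main obstacle.} The delicate point is the limit of the best-response clause. We need the supremum over $g_i\in\mathcal F_i^T$ to behave continuously in the opponents' supporting profile, which requires uniform convergence of $U_i^T$ over the own-strategy argument. In the finite-horizon setting this follows from joint continuity on the compact product. We must also check that the truncated $d$, a finite sum of suprema over the finitely many events in $\mathcal B^t$ for $t\le T$, is continuous in the behavior probabilities. Both facts hinge on $T$ being finite, which is why $\bar\eta_T$ is allowed to depend on $T$.
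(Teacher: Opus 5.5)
Your proof is correct and follows essentially the paper's argument. The paper also argues by contradiction with $\eta_m\to 0$, uses compactness of $\mathcal F^T$ to extract limits of $f^{(m)}$ and of the supporting profiles, passes the predictive and best-response clauses to the limit to obtain a weak $\psi$-subjective $0$-equilibrium, and then concludes via Lemma~\ref{lem:finite_eta0} and continuity of $g\mapsto\mu^g$ on $\mathcal B^T$. The only cosmetic difference is in the best-response clause: you use uniform convergence of the suprema, whereas the paper fixes each deviation $g_i$, takes the limit in that pointwise inequality, and only then takes the supremum; both steps are valid.
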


We now patch finite-horizon robustness to the infinite-horizon game by truncating
the payoff at a sufficiently large horizon and using Lemma~\ref{lem:continuity};
the resulting infinite-horizon patching lemma is recorded below.

\begin{lemma}[Infinite-horizon patching]
\label{lem:infinite_patching}
Fix $\xi>0$ and $\varepsilon>0$.
There exists $\hat\eta(\xi,\varepsilon)>0$ such that if
$f \in \mathcal{F}$ is a weak $\xi$-subjective $\eta$-equilibrium in the sense of
Definition~\ref{def:weak_subjective_eq} with
$\eta \le \hat\eta(\xi,\varepsilon)$, then there exists a strategy profile
$\hat f \in \mathcal{F}$ satisfying:
\begin{itemize}[leftmargin=*]
    \item $\hat f$ is a $(\xi+\varepsilon)$-Nash equilibrium of the infinite-horizon game;
    \item $d(\mu^{\hat f},\mu^f) \le \varepsilon$.
\end{itemize}
\end{lemma}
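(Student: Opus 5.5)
The plan is to reduce to the finite-horizon robustness result, Lemma~\ref{lem:finite_eta_small}, by truncating at a horizon $T$ beyond which discounting makes payoffs negligible, and then to extend the resulting truncated equilibrium to an infinite-horizon profile.

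\textbf{Step 1: choose the horizon.} Set $\lambda:=\max_i\lambda_i<1$. Since $u_i\in[0,1]$, for any profile $g$ and any $g'_i$ the tail beyond period $T$ contributes at most $\lambda^T$ to $U_i$. Hence $|U_i(g)-U_i^T(g|_T)|\le \lambda_i^T$. Fix $T$ so large that $\lambda^T\le \varepsilon/4$ and $\sum_{t>T}2^{-t}\le \varepsilon/2$.

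\textbf{Step 2: pass to the truncation.} Let $f$ be a weak $\xi$-subjective $\eta$-equilibrium with supporting profiles $f^i$. Restrict $f$ and each $f^i$ to histories of length at most $T$.
\begin{itemize}
\item Weak accuracy restricted to cylinders in $\mathcal B^t$ with $t\le T$ is still at most $\eta$.
\item For subjective optimality, any truncated deviation $g_i\in\mathcal F_i^T$ can be extended arbitrarily. Using $U_i$ on the full game and the tail bound twice gives $U_i^T(f_i,f^i_{-i})\ge \sup_{g_i}U_i^T(g_i,f^i_{-i})-\xi-2\lambda^T$.
\end{itemize}
So $f|_T$ is a finite-horizon weak $(\xi+\varepsilon/2)$-subjective $\eta$-equilibrium. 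One caveat: Definition~\ref{def:weak_subjective_eq} is stated after a history $h^t$. I would apply the argument to the continuation game, i.e.\ at the empty history after reindexing, which is where the lemma is used.

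\textbf{Step 3: apply robustness and extend.} Apply Lemma~\ref{lem:finite_eta_small} with $\psi=\xi+\varepsilon/2$ and $\theta=\varepsilon/2$. This produces $\hat\eta:=\bar\eta_T(\xi+\varepsilon/2,\varepsilon/2)$, which depends only on $(\xi,\varepsilon)$ because $T$ does. For $\eta\le\hat\eta$ we obtain a $\psi$-Nash equilibrium $\hat f^T$ of the $T$-period game within truncated distance $\varepsilon/2$ of $f$.

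Extend $\hat f^T$ to $\hat f\in\mathcal F$ by letting it coincide with $f$ after period $T$; any continuation works for the payoff bound.
\begin{itemize}
\item \emph{Distance.} The terms $t\le T$ of $d(\mu^{\hat f},\mu^f)$ sum to at most $\varepsilon/2$, and the remaining terms sum to at most $\sum_{t>T}2^{-t}\le\varepsilon/2$. Hence $d(\mu^{\hat f},\mu^f)\le\varepsilon$.
\item \emph{Nash property.} For any deviation $g_i$, $U_i(g_i,\hat f_{-i})\le U_i^T(g_i,\hat f_{-i})+\lambda^T\le U_i^T(\hat f)+\psi+\lambda^T\le U_i(\hat f)+\psi+2\lambda^T$. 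This gives a $(\xi+\varepsilon/2+\varepsilon/2)=(\xi+\varepsilon)$-Nash equilibrium.
\end{itemize}

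\textbf{Main obstacle.} The delicate point is the bookkeeping in Step 2. The truncated subjective-optimality gap must be controlled uniformly, and one must check that the truncated distance notion matches the hypothesis of Lemma~\ref{lem:finite_eta_small}. This uses the fact that $\sup_{E\in\mathcal B^t}$ for $t\le T$ only involves the first $T$ coordinates, so the truncated distance is dominated by $d$. The essential ingredient is that $T$, and therefore $\hat\eta$, depends only on $(\xi,\varepsilon)$ and the discount factors, not on $f$.
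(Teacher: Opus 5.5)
Your proposal is correct and follows essentially the same route as the paper: both choose a horizon $T$ depending only on $\varepsilon$ and the discount factors, show the truncation of $f$ is a finite-horizon weak subjective $\eta$-equilibrium with a slightly inflated parameter, invoke Lemma~\ref{lem:finite_eta_small} with $\theta=\varepsilon/2$ to define $\hat\eta$, and extend the resulting truncated equilibrium. The paper differs only in constants (an $\varepsilon/8$ tail and $\psi_T=\xi+\varepsilon/4$) and in bounding $d(\mu^{\hat f},\mu^f)$ by a triangle inequality through the truncated profile; your direct split into the $t\le T$ terms and the $t>T$ tail is, if anything, slightly cleaner, since it never requires treating a truncated profile as a measure on infinite paths.
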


\begin{remark}[Continuation-game analogue for the private-payoff extension]
\label{rem:private_patching}
Because the private-payoff section now works directly with public-history continuation games
after $h^t$, Lemmas~\ref{lem:finite_eta0}--\ref{lem:infinite_patching} already apply
directly. No separate hidden-history continuation-game variant is needed.
\end{remark}

\clearpage

\section{Proofs}
\label{app:omitted_proofs}

\begin{proof}[Proof of Lemma \ref{lem:continuity}]
Fix $i$ and $\delta>0$.
Choose a finite horizon $T\in\mathbb{N}$ large enough that
\begin{equation}
(1-\lambda_i) \sum_{t=T+1}^\infty \lambda_i^{t-1} 
\;\le\; \frac{\delta}{4}.
\label{eq:tail_bound}
\end{equation}
For any profile $g \in \mathcal{F}$, define the truncated payoff
\[
U_i^T(g)
=
\mathbb{E}_{z \sim \mu^g}
\left[
(1-\lambda_i)
\sum_{t=1}^T \lambda_i^{t-1} u_i(z^t)
\right].
\]
Then for any $g$ we have
\[
\bigl|U_i(g) - U_i^T(g)\bigr|
\le
(1-\lambda_i) \sum_{t=T+1}^\infty \lambda_i^{t-1}
\le \frac{\delta}{4}
\]
by \eqref{eq:tail_bound}, using that $u_i(\cdot)\in[0,1]$.

Now fix $f,g \in \mathcal{F}$. We can decompose
\[
\bigl|U_i(f)-U_i(g)\bigr|
\le
\bigl|U_i(f)-U_i^T(f)\bigr|
+
\bigl|U_i^T(f)-U_i^T(g)\bigr|
+
\bigl|U_i^T(g)-U_i(g)\bigr|.
\]
By the bound above, the first and third terms are each at most $\delta/4$.
It remains to control $|U_i^T(f)-U_i^T(g)|$.

For each $t\in\{1,\dots,T\}$ and each joint action profile $a\in A$,
let
\[
\alpha_t^f(a)
=
\mu^f\bigl(\{z\in H^\infty : z^t = a\}\bigr),
\quad
\alpha_t^g(a)
=
\mu^g\bigl(\{z\in H^\infty : z^t = a\}\bigr).
\]
Since $u_i(a)\in[0,1]$ for all $a$, we have
\[
\left|
\sum_{a\in A} u_i(a)\bigl(\alpha_t^f(a)-\alpha_t^g(a)\bigr)
\right|
\le
\sup_{E \in \mathcal{B}^t} \bigl|\mu^f(E) - \mu^g(E)\bigr|.
\]
Hence
\begin{align*}
\bigl|U_i^T(f)-U_i^T(g)\bigr|
&=
\left|
\sum_{t=1}^T (1-\lambda_i)\lambda_i^{t-1}
\sum_{a\in A} u_i(a)\bigl(\alpha_t^f(a)-\alpha_t^g(a)\bigr)
\right|\\
&\le
\sum_{t=1}^T (1-\lambda_i)\lambda_i^{t-1}
\sup_{E \in \mathcal{B}^t} \bigl|\mu^f(E) - \mu^g(E)\bigr|.
\end{align*}

By the definition~\eqref{def:weak_distance} of $d(\mu^f,\mu^g)$,
for each $t$ we have
\[
2^{-t}\sup_{E \in \mathcal{B}^t} \bigl|\mu^f(E) - \mu^g(E)\bigr|
\le d(\mu^f,\mu^g),
\]
hence
\[
\sup_{E \in \mathcal{B}^t} \bigl|\mu^f(E) - \mu^g(E)\bigr|
\le 2^t d(\mu^f,\mu^g).
\]
Thus
\[
\bigl|U_i^T(f)-U_i^T(g)\bigr|
\le
d(\mu^f,\mu^g)
\sum_{t=1}^T (1-\lambda_i)\lambda_i^{t-1} 2^t.
\]
The finite sum on the right depends only on $T$ and $\lambda_i$; call it $C_i(T)$.
Define
\[
\rho_i(\delta)
=
\min\left\{
\frac{\delta}{4 C_i(T)},\,
1
\right\}.
\]
If $d(\mu^f,\mu^g) \le \rho_i(\delta)$, then
\[
\bigl|U_i^T(f)-U_i^T(g)\bigr|
\le C_i(T)\rho_i(\delta)
\le \frac{\delta}{4}.
\]
Combining the three bounds gives
\[
\bigl|U_i(f)-U_i(g)\bigr|
\le
\frac{\delta}{4} + \frac{\delta}{4} + \frac{\delta}{4}
\;<\; \delta.
\]
Setting $\rho(\delta)=\min_{i\in I}\rho_i(\delta)$ yields the final claim.
\end{proof}

\begin{proof}[Proof of Lemma \ref{lem:finite_eta0}]
For each player $i$, let $f^i=(f_i,f_{-i}^i)$ be a supporting truncated profile
from Definition~\ref{def:finite_weak_subjective_eq}. Because $\eta=0$,
\[
\mu^{f^i}(E)=\mu^f(E)
\qquad\text{for every }E\in\mathcal B^T.
\]
Thus $f^i$ and $f$ induce the same distribution over histories of length at most
$T$.

For each player $i$, let $\mathcal D_i$ denote the set of histories $h^t$,
$t\le T$, such that along $h^t$ the earliest departure from the profile $f$ is a
unilateral deviation by player $i$. Define a truncated profile $\hat f$ as
follows:
\begin{itemize}[leftmargin=*]
    \item on histories that are still consistent with $f$, set $\hat f=f$;
    \item on every descendant of a history in $\mathcal D_i$, set $\hat f=f^i$;
    \item on any remaining off-path histories, choose arbitrary actions.
\end{itemize}
Because every history in $\bigcup_i \mathcal D_i$ is off-path under $f$, these
replacements do not change the play distribution when all players follow
$\hat f$. Hence $\mu^{\hat f}(E)=\mu^f(E)$ for all $E\in\mathcal B^T$, proving
item 2.

Now fix player $i$ and an arbitrary deviation $g_i\in\mathcal F_i^T$. Under the
profile $(g_i,\hat f_{-i})$, play coincides with $f$ until the first deviation by
$i$, and from that point onward the continuation of the other players is exactly
$f_{-i}^i$. Therefore
\[
U_i^T(g_i,\hat f_{-i})=U_i^T(g_i,f_{-i}^i).
\]
Similarly, since $\mu^{\hat f}=\mu^f=\mu^{f^i}$ on $\mathcal B^T$ and
$f_i^i=f_i=\hat f_i$,
\[
U_i^T(\hat f_i,\hat f_{-i})=U_i^T(f_i,f_{-i}^i).
\]
Using the defining inequality for the supporting profile $f^i$,
\[
U_i^T(\hat f_i,\hat f_{-i})
=
U_i^T(f_i,f_{-i}^i)
\ge
\sup_{g_i'\in\mathcal F_i^T} U_i^T(g_i',f_{-i}^i)-\psi
=
\sup_{g_i'\in\mathcal F_i^T} U_i^T(g_i',\hat f_{-i})-\psi.
\]
Since this holds for every player $i$, $\hat f$ is a $\psi$-Nash equilibrium of
the $T$-period game, proving item 1.
\end{proof}

\begin{proof}[Proof of Lemma \ref{lem:finite_eta_small}]
Suppose, towards a contradiction, that there exist $T,\psi>0$ and $\theta>0$ such that
for every $m\in\mathbb{N}$ there is a finite-horizon weak $\psi$-subjective $\eta_m$-equilibrium
$f^{(m)}\in\mathcal{F}^T$ with $\eta_m \le 1/m$ and such that
no $\psi$-Nash equilibrium lies within weak distance $\theta$ of $\mu^{f^{(m)}}$ (measured on
$\mathcal{B}^T$).

For each $m$ and each $i\in I$, let $f^{i,(m)}$ be a supporting truncated profile witnessing that
$f^{(m)}$ is a finite-horizon weak $\psi$-subjective $\eta_m$-equilibrium, i.e.,
$f_i^{i,(m)} = f_i^{(m)}$,
\[
U_i^T(f_i^{(m)},f_{-i}^{i,(m)})
\ge
\sup_{g_i \in \mathcal{F}_i^T} U_i^T(g_i,f_{-i}^{i,(m)}) - \psi,
\quad
d(\mu^{f^{i,(m)}},\mu^{f^{(m)}})\le \eta_m.
\]

Because the horizon $T$ and action sets are finite, the space of behaviour strategies
$\mathcal{F}^T$ is a finite-dimensional product of simplices and hence compact
in the product topology.
Thus, by sequential compactness, there exists a subsequence (which we relabel for
notational convenience) such that
\[
f^{(m)} \to f^\star
\quad\text{and}\quad
f^{i,(m)} \to f^{i,\star}
\quad\text{for all } i\in I,
\]
as $m\to\infty$, in the product topology on $\mathcal{F}^T$.

The map $f \mapsto \mu^f$ on finite histories (up to time $T$) is continuous with respect to this topology and the
weak topology induced by $d$ (restricted to $\mathcal{B}^T$), so
\[
\mu^{f^{(m)}} \to \mu^{f^\star},
\quad
\mu^{f^{i,(m)}} \to \mu^{f^{i,\star}}.
\]
Since $d(\mu^{f^{i,(m)}},\mu^{f^{(m)}})\le \eta_m \to 0$, we must have
$d(\mu^{f^{i,\star}},\mu^{f^\star})=0$, so $\mu^{f^{i,\star}}=\mu^{f^\star}$ on $\mathcal{B}^T$.

Moreover, the best-response inequality passes to the limit.
Fix $i$ and any $g_i \in \mathcal{F}_i^T$.
For all $m$,
\[
U_i^T(f_i^{(m)},f_{-i}^{i,(m)})
\ge
\sup_{g'_i \in \mathcal{F}_i^T} U_i^T(g'_i,f_{-i}^{i,(m)}) - \psi
\ge
U_i^T(g_i,f_{-i}^{i,(m)}) - \psi.
\]
By continuity of $U_i^T$ in the product topology (an immediate consequence of
Lemma~\ref{lem:continuity} restricted to horizon $T$), taking $m\to\infty$ yields
\[
U_i^T(f_i^\star,f_{-i}^{i,\star})
\ge
U_i^T(g_i,f_{-i}^{i,\star}) - \psi.
\]
Since $g_i$ was arbitrary and $f_i^{i,\star}=f_i^\star$ (by pointwise convergence
of $f_i^{i,(m)}$ to $f_i^{i,\star}$ and of $f_i^{(m)}$ to $f_i^\star$), we conclude that
\[
U_i^T(f_i^\star,f_{-i}^{i,\star})
\ge
\sup_{g_i \in \mathcal{F}_i^T} U_i^T(g_i,f_{-i}^{i,\star}) - \psi.
\]
Together with $d(\mu^{f^{i,\star}},\mu^{f^\star})=0$, this shows that $f^\star$
is a finite-horizon weak $\psi$-subjective $0$-equilibrium of the $T$-period game.

By Lemma~\ref{lem:finite_eta0}, there exists a profile $\hat f^\star\in\mathcal{F}^T$
such that $\hat f^\star$ is a $\psi$-Nash equilibrium of the $T$-period game and
$\mu^{\hat f^\star}$ coincides with $\mu^{f^\star}$ on histories of length at most $T$.
In particular, $d(\mu^{\hat f^\star},\mu^{f^\star})=0$.

Since $\mu^{f^{(m)}}\to\mu^{f^\star}$ in the weak metric $d$ (restricted to $\mathcal{B}^T$), we have
$d(\mu^{f^{(m)}},\mu^{\hat f^\star}) \to 0$ as $m\to\infty$.
Thus for all sufficiently large $m$, $d(\mu^{f^{(m)}},\mu^{\hat f^\star})\le\theta$.
But $\hat f^\star$ is a $\psi$-Nash equilibrium, contradicting the assumption that
no $\psi$-Nash equilibrium lies within weak distance $\theta$ of $\mu^{f^{(m)}}$.
This contradiction shows that such a sequence $(f^{(m)})$ cannot exist, and hence
there must exist $\bar\eta_T(\psi,\theta)>0$ with the stated property.
\end{proof}

\begin{proof}[Proof of Lemma \ref{lem:infinite_patching}]
Fix $\xi>0$ and $\varepsilon>0$.
Choose a finite horizon $T$ large enough that, for all $i\in I$ and all profiles $h\in\mathcal{F}$,
\begin{equation}
\bigl|
U_i(h) - U_i^T(h)
\bigr|
\;\le\; \frac{\varepsilon}{8},
\label{eq:trunc_choice2}
\end{equation}
and also
\begin{equation}
\sum_{t>T} 2^{-t} \;\le\; \frac{\varepsilon}{4}.
\label{eq:metric_tail2}
\end{equation}
Such a $T$ exists because the tails of both geometric series are uniformly small.

Let $f$ be a weak $\xi$-subjective $\eta$-equilibrium with supporting profiles
$\{f^i\}_{i\in I}$ as in Definition~\ref{def:weak_subjective_eq}, i.e.,
for each $i$,
\[
f_i^i = f_i,\quad
U_i(f_i,f_{-i}^i) \ge \sup_{g_i \in \mathcal{F}_i} U_i(g_i,f_{-i}^i) - \xi,
\quad
d(\mu^{f^i},\mu^f) \le \eta.
\]

Consider the truncated profiles $f^{(T)}$ and $(f^i)^{(T)}$ obtained by restricting
the prescriptions of $f$ and $f^i$ to histories of length at most $T$. For each $i$ we have
$(f_i^i)^{(T)} = f_i^{(T)}$ and, since the weak distance on histories up to $T$ is bounded by
the full weak distance,
\[
d(\mu^{(f^i)^{(T)}},\mu^{f^{(T)}}) \le d(\mu^{f^i},\mu^f) \le \eta.
\]

We now show that $f^{(T)}$ is a finite-horizon weak $\psi_T$-subjective $\eta$-equilibrium
for a slightly relaxed parameter $\psi_T$.
Fix $i$ and note that for any profile $h$,
\[
|U_i(h) - U_i^T(h)| \le \frac{\varepsilon}{8}
\]
by \eqref{eq:trunc_choice2}.
Using the weak subjective inequality for $f$ and $f^i$, we obtain
\begin{align*}
U_i^T(f_i^{(T)},(f_{-i}^i)^{(T)})
&= U_i^T(f_i,f_{-i}^i) \\
&\ge U_i(f_i,f_{-i}^i) - \frac{\varepsilon}{8} \\
&\ge \sup_{g_i \in \mathcal{F}_i} U_i(g_i,f_{-i}^i) - \xi - \frac{\varepsilon}{8}.
\end{align*}
For any truncated deviation $g_i^{(T)} \in \mathcal{F}_i^T$ we can extend it arbitrarily
to a full strategy $g_i \in \mathcal{F}_i$, and then
\[
U_i(g_i,f_{-i}^i) \ge U_i^T(g_i^{(T)},(f_{-i}^i)^{(T)}) - \frac{\varepsilon}{8},
\]
again by \eqref{eq:trunc_choice2}.
Taking the supremum over $g_i^{(T)}$ yields
\begin{align*}
U_i^T(f_i^{(T)},(f_{-i}^i)^{(T)})
&\ge \sup_{g_i^{(T)} \in \mathcal{F}_i^T} U_i^T(g_i^{(T)},(f_{-i}^i)^{(T)}) - \xi - \frac{\varepsilon}{4}.
\end{align*}
Thus, if we define
\[
\psi_T := \xi + \frac{\varepsilon}{4},
\]
then for each $i$ the truncated profiles $f^{(T)}$ and $(f^i)^{(T)}$ satisfy
\[
U_i^T(f_i^{(T)},(f_{-i}^i)^{(T)})
\ge
\sup_{g_i^{(T)}\in\mathcal{F}_i^T} U_i^T(g_i^{(T)},(f_{-i}^i)^{(T)}) - \psi_T,
\]
and $d(\mu^{(f^i)^{(T)}},\mu^{f^{(T)}})\le\eta$, so $f^{(T)}$ is a finite-horizon weak
$\psi_T$-subjective $\eta$-equilibrium in the sense of Definition~\ref{def:finite_weak_subjective_eq}.

Applying Lemma~\ref{lem:finite_eta_small} with this $T$, $\psi=\psi_T$ and
$\theta = \varepsilon/2$, there exists $\bar\eta_T(\psi_T,\varepsilon/2)>0$ such that
if $\eta \le \bar\eta_T(\psi_T,\varepsilon/2)$ then there is a $\psi_T$-Nash equilibrium
$\tilde f^{(T)}\in\mathcal{F}^T$ for the $T$-period game with
\[
d(\mu^{\tilde f^{(T)}},\mu^{f^{(T)}}) \le \frac{\varepsilon}{2}.
\]
Define
\[
\hat\eta(\xi,\varepsilon) := \bar\eta_T\bigl(\xi+\tfrac{\varepsilon}{4},\tfrac{\varepsilon}{2}\bigr).
\]
Assume henceforth that $\eta \le \hat\eta(\xi,\varepsilon)$ so that this conclusion holds.

Extend $\tilde f^{(T)}$ arbitrarily to a full strategy profile $\hat f\in\mathcal{F}$ by specifying
its behaviour after period $T$ in any way. Then $\hat f$ and $\tilde f^{(T)}$ coincide on periods
$t\le T$, and similarly $f$ and $f^{(T)}$ coincide on $t\le T$. The weak distance between $\hat f$ and $f$
can be bounded as
\[
d(\mu^{\hat f},\mu^f)
\le
d(\mu^{\hat f},\mu^{\tilde f^{(T)}})
+
d(\mu^{\tilde f^{(T)}},\mu^{f^{(T)}})
+
d(\mu^{f^{(T)}},\mu^f).
\]
The second term is at most $\varepsilon/2$ by construction. For the first and third terms, any discrepancy
between $\hat f$ and $\tilde f^{(T)}$ (respectively, $f$ and $f^{(T)}$) occurs only at times $t>T$, so
each of these weak distances is bounded by the tail $\sum_{t>T}2^{-t} \le \varepsilon/4$ by
\eqref{eq:metric_tail2}. Hence
\[
d(\mu^{\hat f},\mu^f) \le \frac{\varepsilon}{4} + \frac{\varepsilon}{2} + \frac{\varepsilon}{4} = \varepsilon.
\]

It remains to show that $\hat f$ is a $(\xi+\varepsilon)$-Nash equilibrium of the
infinite-horizon game. Fix $i\in I$ and any deviation $g_i\in\mathcal{F}_i$.
Let $g_i^{(T)}$ denote the truncation of $g_i$ to a $T$-period strategy, i.e., its
prescriptions on histories of length at most $T$; clearly $U_i^T(g_i,\hat f_{-i})
= U_i^T(g_i^{(T)},\tilde f_{-i}^{(T)})$ since $\hat f$ and $\tilde f^{(T)}$ coincide
on the first $T$ periods.

Because $\tilde f^{(T)}$ is a $\psi_T$-Nash equilibrium of the $T$-period game,
\[
U_i^T(\tilde f_i^{(T)},\tilde f_{-i}^{(T)})
\;\ge\;
U_i^T(g_i^{(T)},\tilde f_{-i}^{(T)}) - \psi_T.
\]
Using the truncation bound \eqref{eq:trunc_choice2}, we obtain
\[
U_i(\hat f_i,\hat f_{-i})
\;\ge\;
U_i^T(\hat f_i,\hat f_{-i}) - \frac{\varepsilon}{8}
=
U_i^T(\tilde f_i^{(T)},\tilde f_{-i}^{(T)}) - \frac{\varepsilon}{8}
\]
and
\[
U_i(g_i,\hat f_{-i})
\;\le\;
U_i^T(g_i,\hat f_{-i}) + \frac{\varepsilon}{8}
=
U_i^T(g_i^{(T)},\tilde f_{-i}^{(T)}) + \frac{\varepsilon}{8}.
\]
Combining these inequalities yields
\begin{align*}
U_i(\hat f_i,\hat f_{-i})
&\ge
U_i^T(\tilde f_i^{(T)},\tilde f_{-i}^{(T)}) - \frac{\varepsilon}{8}\\
&\ge
U_i^T(g_i^{(T)},\tilde f_{-i}^{(T)}) - \psi_T - \frac{\varepsilon}{8}\\
&\ge
U_i(g_i,\hat f_{-i}) - \psi_T - \frac{\varepsilon}{4}.
\end{align*}
Recalling that $\psi_T = \xi + \varepsilon/4$, we have
\[
\psi_T + \frac{\varepsilon}{4} = \xi + \frac{\varepsilon}{2} \le \xi + \varepsilon,
\]
so for every deviation $g_i$,
\[
U_i(\hat f_i,\hat f_{-i}) \ge U_i(g_i,\hat f_{-i}) - (\xi+\varepsilon).
\]
Thus $\hat f$ is a $(\xi+\varepsilon)$-Nash equilibrium.
\end{proof}

\subsection{Auxiliary retained-menu identification assumption}
\label{app:retained_menu_identification}

The main text works with the concentration-up-to-equivalence condition in
Assumption~\ref{ass:equiv_class_conc}. For concrete verification, it is convenient to record
stronger sufficient routes. The logic is one-way: exact retained-menu identification is stronger
than what PS-BR needs, but it is easy to check in sparse deterministic menus.
We first state a deterministic hard-refutation condition, and then the more general
likelihood-ratio formulation that only requires on-path elimination of wrong retained hypotheses.

\begin{assumption}[Deterministic hard refutation on a strategy menu]
\label{ass:det_menu_sep_app}
Fix player $i$.
Assume the support of $\mu_i^0$ is finite; write
$\mathcal S_{-i}:=\mathrm{supp}(\mu_i^0)\subseteq \mathcal F_{-i}$.
Assume every element of $\mathcal S_{-i}$ is a deterministic public automaton. Moreover,
\begin{enumerate}[leftmargin=*]
\item \textit{(Menu grain of truth)} $f_{-i}\in\mathcal S_{-i}$ and $\mu_i^0(f_{-i})>0$.
\item \textit{(On-path finite separation)} For every
$g_{-i}\in\mathcal S_{-i}\setminus\{f_{-i}\}$,
\[
\mu^f\!\left(
\left\{
z:\exists\,t\ge 1\ \text{s.t.}\ 
g_{-i}(h^t(z))\neq f_{-i}(h^t(z))
\right\}
\right)=1.
\]
\end{enumerate}
\end{assumption}

\begin{assumption}[Retained-menu identification]\label{ass:finite_menu_kl}
Fix player $i$.
Assume the support of $\mu_i^0$ is finite; write $\mathcal S_{-i}:=\mathrm{supp}(\mu_i^0)\subseteq \mathcal F_{-i}$.
For any $g_{-i}\in\mathcal S_{-i}$ and realized path $z\in H^\infty$, define the finite-history likelihood
\[
L_{i,T}(g_{-i};z)
:=
\prod_{t=1}^{T-1} g_{-i}(h^t(z))(a_{-i}^t(z)),
\qquad T\ge 1,
\]
with the convention that once a factor is zero, all later likelihoods remain zero.
Then the following conditions hold:
\begin{enumerate}[leftmargin=*]
\item \textit{(Menu grain of truth)} $f_{-i}\in\mathcal S_{-i}$ and $\mu_i^0(f_{-i})>0$.
\item \textit{(On-path elimination of wrong retained hypotheses)} For every
$g_{-i}\in\mathcal S_{-i}\setminus\{f_{-i}\}$,
\[
\frac{L_{i,T}(g_{-i};z)}{L_{i,T}(f_{-i};z)} \longrightarrow 0
\qquad\text{$\mu^f$-a.s. in } z.
\]
\end{enumerate}
\end{assumption}

\begin{lemma}[Deterministic on-path separation implies retained-menu identification]
\label{lem:det_menu_sep_implies_finite_menu_kl}
Suppose Assumption~\ref{ass:det_menu_sep_app} holds. Then Assumption~\ref{ass:finite_menu_kl} holds.
\end{lemma}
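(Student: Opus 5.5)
We need to prove Lemma \ref{lem:det_menu_sep_implies_finite_menu_kl}: deterministic hard refutation implies the likelihood-ratio elimination condition. Item 1 (menu grain of truth) is identical in both assumptions, so only item 2 needs an argument.

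\emph{Step 1: the true hypothesis has likelihood one almost surely.} Since $f_{-i}\in\mathcal S_{-i}$ is a deterministic public automaton, $f_{-i}(h)$ is a point mass $\delta_{a^\star_{-i}(h)}$ for every history $h$. Under $\mu^f$, the opponents' realized action at round $t$ is drawn from $f_{-i}(h^t(z))$. Hence $a_{-i}^t(z)=a^\star_{-i}(h^t(z))$ for all $t$, $\mu^f$-almost surely; this uses a countable intersection over $t$ of events of probability one, each obtained from the cylinder formula in Definition~\ref{def:path_distribution}. On this full-measure set, every factor of $L_{i,T}(f_{-i};z)$ equals $1$. So $L_{i,T}(f_{-i};z)=1$ for all $T$, and the ratio in item 2 is well defined and equals $L_{i,T}(g_{-i};z)$.

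\emph{Step 2: a wrong label is hard-refuted at its first disagreement.} Fix $g_{-i}\in\mathcal S_{-i}\setminus\{f_{-i}\}$. Intersect the full-measure set from Step 1 with the full-measure set given by on-path finite separation, and take $z$ in this intersection. There is a first $t_0$ with $g_{-i}(h^{t_0}(z))\neq f_{-i}(h^{t_0}(z))$. Both are point masses, so $g_{-i}(h^{t_0}(z))$ puts probability zero on $a^\star_{-i}(h^{t_0}(z))=a_{-i}^{t_0}(z)$. The factor at $t_0$ therefore vanishes. By the stated convention, $L_{i,T}(g_{-i};z)=0$ for all $T\ge t_0+1$, so the ratio is eventually $0$ and in particular tends to $0$.

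Since $\mathcal S_{-i}$ is finite, intersecting the finitely many (indeed countably many would suffice) full-measure sets gives the almost-sure statement simultaneously for all wrong labels, which establishes item 2. The only delicate point is Step 1: the realized opponent action agrees with the automaton's prescription $\mu^f$-a.s. This requires that the true profile's opponent component is literally the menu element $f_{-i}$, which is exactly what menu grain of truth supplies. Everything else is bookkeeping.
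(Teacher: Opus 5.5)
Your proposal is correct and follows essentially the same argument as the paper. The true deterministic label has likelihood identically one along the realized path, each wrong deterministic label is hard-refuted at an on-path disagreement, and so the likelihood ratio is eventually zero; your explicit handling of the null sets (a.s.\ agreement of realized actions with the automaton's prescription, and the finite intersection over wrong labels) just makes precise what the paper leaves implicit.
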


\begin{proof}[Proof of Lemma \ref{lem:det_menu_sep_implies_finite_menu_kl}]
Item 1 of Assumption~\ref{ass:finite_menu_kl} is exactly
Assumption~\ref{ass:det_menu_sep_app}(1). It remains to verify Item 2.

Fix $g_{-i}\in\mathcal S_{-i}\setminus\{f_{-i}\}$ and define
\[
\tau_g(z):=
\inf\{t\ge 1: g_{-i}(h^t(z))\neq f_{-i}(h^t(z))\}.
\]
By Assumption~\ref{ass:det_menu_sep_app}(2), $\tau_g(z)<\infty$ for $\mu^f$-almost every $z$.
Fix such a path $z$ and let $t=\tau_g(z)$. Since $f_{-i}$ is deterministic and the
true opponents' strategy under $\mu^f$ is $f_{-i}$, the realized opponent action
$a_{-i}^t(z)$ is exactly the unique action assigned probability one by
$f_{-i}(h^t(z))$. Because $g_{-i}(h^t(z))\neq f_{-i}(h^t(z))$ and $g_{-i}$ is also
deterministic, it assigns probability zero to that realized action:
\[
g_{-i}(h^t(z))(a_{-i}^t(z))=0.
\]
Hence $g_{-i}$ is hard-refuted at time $t$, and by definition of the likelihood,
\[
L_{i,T}(g_{-i};z)=0
\qquad\text{for all }T>t.
\]
On the other hand, along the true path generated by the deterministic strategy
$f_{-i}$,
\[
f_{-i}(h^s(z))(a_{-i}^s(z))=1
\qquad\text{for every }s\ge 1,
\]
so
\[
L_{i,T}(f_{-i};z)=1
\qquad\text{for every }T\ge 1.
\]
Therefore
\[
\frac{L_{i,T}(g_{-i};z)}{L_{i,T}(f_{-i};z)}=0
\qquad\text{for all sufficiently large }T.
\]
This proves Assumption~\ref{ass:finite_menu_kl}(2).
\end{proof}

\begin{lemma}[Posterior concentration under retained-menu identification]
\label{lem:posterior_concentration}
Fix player $i$ and suppose Assumption~\ref{ass:finite_menu_kl} holds.
Then $\mu^f$-a.s.\ in $z$,
\[
\mu_i^t(f_{-i}\mid h^t(z))\ \longrightarrow\ 1,
\qquad\text{and hence}\qquad
\max_{g_{-i}\in\mathcal S_{-i}\setminus\{f_{-i}\}} \mu_i^t(g_{-i}\mid h^t(z))\ \longrightarrow\ 0.
\]
\end{lemma}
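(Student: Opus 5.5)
The plan is to write the posterior explicitly via Bayes' rule on the finite menu and control it through the likelihood ratios of Assumption~\ref{ass:finite_menu_kl}. Fix a path $z$ in the full-measure set on which all the ratio convergences of Assumption~\ref{ass:finite_menu_kl}(2) hold simultaneously; this is possible because $\mathcal S_{-i}$ is finite, so we only intersect finitely many $\mu^f$-full sets. We also intersect with the full-measure event that $L_{i,t}(f_{-i};z)>0$ for all $t$, i.e.\ the true strategy assigns positive probability to every realized opponent action; this holds $\mu^f$-a.s.\ because the event that some realized action has zero probability under $f_{-i}$ has $\mu^f$-measure zero.

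On this set Bayes' rule is well defined at every $h^t(z)$. Indeed, the denominator below is at least $\mu_i^0(f_{-i})L_{i,t}(f_{-i};z)>0$ by the menu grain of truth. Because player $i$'s own actions enter the likelihood of every hypothesis through the same factor $f_i(h^s)(a_i^s)$, those factors cancel, and the posterior is
\[
\mu_i^t(g_{-i}\mid h^t(z))=\frac{\mu_i^0(g_{-i})\,L_{i,t}(g_{-i};z)}{\sum_{g'\in\mathcal S_{-i}}\mu_i^0(g')\,L_{i,t}(g';z)}.
\]
Dividing numerator and denominator by $\mu_i^0(f_{-i})L_{i,t}(f_{-i};z)$ then gives
\[
\mu_i^t(f_{-i}\mid h^t(z))=\Bigl(1+\sum_{g\neq f_{-i}}\tfrac{\mu_i^0(g)}{\mu_i^0(f_{-i})}\tfrac{L_{i,t}(g;z)}{L_{i,t}(f_{-i};z)}\Bigr)^{-1}.
\]

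By Assumption~\ref{ass:finite_menu_kl}(2), each of the finitely many ratios tends to $0$, and the prior weights are fixed finite constants. Hence the sum tends to $0$, so $\mu_i^t(f_{-i}\mid h^t(z))\to1$. For each wrong hypothesis we have $\mu_i^t(g\mid h^t(z))\le 1-\mu_i^t(f_{-i}\mid h^t(z))$, so the maximum over the finitely many wrong hypotheses also tends to $0$.

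The main obstacle is the bookkeeping around null events and zero likelihoods. We need the posterior to be defined at every reached history, which is handled by the positivity of the true likelihood and the grain of truth. We also need the convention that hard-refuted hypotheses keep likelihood zero, which is built into the definition of $L_{i,T}$. Finally, the statement uses the index $T$ in $L_{i,T}$ and $t$ in $h^t$, so the two must be aligned: $L_{i,t}$ uses exactly the actions in $h^t$. Once these are fixed, the argument is a direct finite-sum limit.
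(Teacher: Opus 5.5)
Your proposal is correct and follows essentially the same route as the paper. Bayes' rule on the finite menu writes the true-label posterior as $\bigl(1+\sum_{g\neq f_{-i}}\frac{\mu_i^0(g)}{\mu_i^0(f_{-i})}\frac{L_{i,t}(g;z)}{L_{i,t}(f_{-i};z)}\bigr)^{-1}$, and the finitely many likelihood ratios vanish by Assumption~\ref{ass:finite_menu_kl}(2). Your additional bookkeeping makes explicit details that the paper's proof leaves implicit: intersecting finitely many full-measure events, a.s.\ positivity of the true likelihood, cancellation of player $i$'s own action factors, and alignment of $L_{i,t}$ with $h^t$.
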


\begin{corollary}[Retained-menu identification implies the main-text concentration condition]
\label{cor:finite_menu_kl_implies_equiv_class_conc}
Suppose Assumption~\ref{ass:finite_menu_kl} holds. Then Assumption~\ref{ass:equiv_class_conc}
holds.
\end{corollary}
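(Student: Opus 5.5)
The corollary should follow by checking the two items of Assumption~\ref{ass:equiv_class_conc} one at a time. Item~1 needs no work, and item~2 should come directly from Lemma~\ref{lem:posterior_concentration} together with the fact that the true profile always lies in its own equivalence class.

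\emph{Step 1 (finite support and menu grain of truth).} Both assumptions take $\mathcal S_{-i}=\mathrm{supp}(\mu_i^0)$ finite. Item~1 of Assumption~\ref{ass:finite_menu_kl} is word-for-word item~1 of Assumption~\ref{ass:equiv_class_conc}: $f_{-i}\in\mathcal S_{-i}$ and $\mu_i^0(f_{-i})>0$. So this item transfers directly.

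\emph{Step 2 (the true profile lies in its own class).} Fix any history $h^t$. The relation $\sim_i^{h^t}$ of Definition~\ref{def:cont_equiv} is defined by equality of $V_i(\sigma_i\mid h^t;\cdot)$ for every $\sigma_i$, so it is reflexive. Hence $f_{-i}\sim_i^{h^t} f_{-i}$, and with Step~1 this gives $f_{-i}\in\mathcal E_i(h^t)$. It follows that
\[
\mu_i^t(\mathcal E_i(h^t)\mid h^t)\ \ge\ \mu_i^t(f_{-i}\mid h^t)
\]
at every history where the posterior is defined.

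\emph{Step 3 (concentration).} Lemma~\ref{lem:posterior_concentration} gives a $\mu^f$-full-measure set of paths $z$ on which $\mu_i^t(f_{-i}\mid h^t(z))\to 1$. On that set, the inequality from Step~2 together with the trivial bound $\mu_i^t(\cdot)\le 1$ yields $\mu_i^t(\mathcal E_i(h^t(z))\mid h^t(z))\to 1$. This is exactly item~2 of Assumption~\ref{ass:equiv_class_conc}.

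\emph{Main obstacle.} The only point needing care is that the posterior is well defined along $\mu^f$-almost every path. I would handle this as follows. The prior predictive mass of a realized cylinder $C(h^t(z))$ is at least $\mu_i^0(f_{-i})\,L_{i,t}(f_{-i};z)$, up to player $i$'s own action probabilities, which appear identically for every hypothesis and cancel. The set of paths where $L_{i,t}(f_{-i};z)=0$ for some $t$ is a countable union of cylinders of $\mu^f$-probability zero, hence $\mu^f$-null. So Bayes' rule is defined at every $h^t(z)$ for $\mu^f$-a.e.\ $z$. Intersecting this set with the full-measure set from Lemma~\ref{lem:posterior_concentration} still leaves a full-measure set, and the conclusion holds on it.
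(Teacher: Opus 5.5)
Your proposal is correct and follows the paper's proof: the true label lies in its own class $\mathcal E_i(h^t)$, so $\mu_i^t(\mathcal E_i(h^t)\mid h^t)\ge\mu_i^t(f_{-i}\mid h^t)\to 1$ by Lemma~\ref{lem:posterior_concentration}. Your explicit transfer of the menu grain-of-truth item and your check that Bayes' rule is defined $\mu^f$-a.s.\ are details the paper leaves implicit, not a different route; the first is also what justifies $f_{-i}\in\mathcal E_i(h^t)$, since the class is defined as a subset of $\mathcal S_{-i}$.
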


\begin{proof}
For every history $h^t$, the true label $f_{-i}$ belongs to its own continuation-payoff
equivalence class $\mathcal E_i(h^t)$. Therefore,
\[
\mu_i^t(\mathcal E_i(h^t(z))\mid h^t(z))
\ge
\mu_i^t(f_{-i}\mid h^t(z)).
\]
By Lemma~\ref{lem:posterior_concentration}, the right-hand side converges to $1$ on a
$\mu^f$-full-measure set of realized paths. Hence
\[
\mu_i^t(\mathcal E_i(h^t(z))\mid h^t(z))\longrightarrow 1
\qquad\text{$\mu^f$-a.s. in }z,
\]
which is exactly Assumption~\ref{ass:equiv_class_conc}(2).
\end{proof}

\begin{corollary}[Deterministic hard refutation implies the main-text concentration condition]
\label{cor:det_menu_sep_implies_equiv_class_conc}
Suppose Assumption~\ref{ass:det_menu_sep_app} holds. Then Assumption~\ref{ass:equiv_class_conc}
holds.
\end{corollary}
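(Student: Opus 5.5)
The statement is Corollary~\ref{cor:det_menu_sep_implies_equiv_class_conc}. I plan to prove it by composing two results already established in this appendix, with no new analysis. First, Lemma~\ref{lem:det_menu_sep_implies_finite_menu_kl} converts Assumption~\ref{ass:det_menu_sep_app} (deterministic automata menu, menu grain of truth, on-path finite separation) into Assumption~\ref{ass:finite_menu_kl}. The menu grain-of-truth item carries over verbatim. The likelihood-ratio item holds because each wrong deterministic label is hard-refuted at its first on-path disagreement, while the true label keeps likelihood one. Second, Corollary~\ref{cor:finite_menu_kl_implies_equiv_class_conc} converts Assumption~\ref{ass:finite_menu_kl} into Assumption~\ref{ass:equiv_class_conc}.

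Concretely, I will run the following chain. From Assumption~\ref{ass:det_menu_sep_app}, invoke Lemma~\ref{lem:det_menu_sep_implies_finite_menu_kl} to obtain Assumption~\ref{ass:finite_menu_kl}. Then apply Lemma~\ref{lem:posterior_concentration} to get $\mu_i^t(f_{-i}\mid h^t(z))\to 1$ for $\mu^f$-almost every $z$.

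Next I use that the true label lies in its own equivalence class, $f_{-i}\in\mathcal E_i(h^t)$ at every history, since $\sim_i^{h^t}$ is reflexive. This gives
\[
\mu_i^t(\mathcal E_i(h^t(z))\mid h^t(z))\ \ge\ \mu_i^t(f_{-i}\mid h^t(z)),
\]
so the left side tends to $1$ almost surely. That is item 2 of Assumption~\ref{ass:equiv_class_conc}. Item 1, the menu grain of truth together with finiteness of $\mathrm{supp}(\mu_i^0)$, is identical in both assumptions.

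The only step with real content is Lemma~\ref{lem:posterior_concentration}, which is stated without proof. If it needs justification, I would argue via Bayes' rule on the finite menu:
\[
\mu_i^t(g\mid h^t)=\frac{\mu_i^0(g)L_{i,t}(g;z)}{\sum_{g'}\mu_i^0(g')L_{i,t}(g';z)}\ \le\ \frac{\mu_i^0(g)}{\mu_i^0(f_{-i})}\cdot\frac{L_{i,t}(g;z)}{L_{i,t}(f_{-i};z)}\ \longrightarrow\ 0
\]
for each of the finitely many $g\neq f_{-i}$. This is well defined because $L_{i,t}(f_{-i};z)>0$ $\mu^f$-almost surely. A finite union of null sets is null, so the convergence holds simultaneously for all wrong labels.

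In the deterministic case the argument is even simpler: after the almost surely finite maximum of the refutation times, the posterior is exactly a point mass on $f_{-i}$. I therefore expect no genuine obstacle. The only care needed is intersecting the countably many (here finitely many) full-measure events.
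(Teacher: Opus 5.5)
Your proposal is correct and follows the paper's proof exactly: Lemma~\ref{lem:det_menu_sep_implies_finite_menu_kl} yields Assumption~\ref{ass:finite_menu_kl}, and Corollary~\ref{cor:finite_menu_kl_implies_equiv_class_conc} (posterior concentration on $f_{-i}$ together with $f_{-i}\in\mathcal E_i(h^t)$) finishes the argument. One small correction: Lemma~\ref{lem:posterior_concentration} is in fact proved in the paper's appendix, by the same finite-menu Bayes-ratio argument you sketch.
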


\begin{proof}
By Lemma~\ref{lem:det_menu_sep_implies_finite_menu_kl},
Assumption~\ref{ass:det_menu_sep_app} implies Assumption~\ref{ass:finite_menu_kl}. The claim
therefore follows immediately from
Corollary~\ref{cor:finite_menu_kl_implies_equiv_class_conc}.
\end{proof}

\begin{corollary}[Retained-menu identification implies the private-payoff opponents-side concentration condition]
\label{cor:finite_menu_kl_implies_equiv_class_conc_private}
Suppose player $i$'s finite retained opponents' menu and posterior satisfy
Assumption~\ref{ass:finite_menu_kl}. Then Assumption~\ref{ass:equiv_class_conc_private}
holds.
\end{corollary}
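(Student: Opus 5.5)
The plan is to mirror the proof of Corollary~\ref{cor:finite_menu_kl_implies_equiv_class_conc}, now working on the private-payoff probability space. Note that Assumption~\ref{ass:finite_menu_kl} is a statement about $\mu^f$-almost every public path $z$. In the private-payoff model, $\mu^f$ is exactly the public-action marginal of $P^{\beta,u}$. Any event depending only on the public history $h^\infty(\omega)=(a^1,a^2,\ldots)$ therefore has $P^{\beta,u}$-probability equal to the $\mu^f$-probability of the corresponding set of public paths. Also, the opponents' posterior $\mu_i^t(\cdot\mid h^t)$ is by construction a function of the public history only, computed from the likelihoods $L_{i,T}$.

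The first step is to invoke Lemma~\ref{lem:posterior_concentration}, which applies verbatim to the finite retained menu $\mathcal S_{-i}$. It gives $\mu_i^t(f_{-i}\mid h^t(z))\to 1$ for $\mu^f$-a.e.\ $z$. Here $f_{-i}$ is the induced public-history profile of the opponents defined from $P^{\beta,u}$, and this is the same object that appears in both assumptions. Lifting the conclusion through the marginal identification gives $\mu_i^t(f_{-i}\mid h^t(\omega))\to 1$ for $P^{\beta,u}$-a.e.\ $\omega$.

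The second step is to observe that $f_{-i}\in\mathcal E_i^{\mathrm{priv}}(h^t)$ for every public history $h^t$. This holds trivially, since the defining equality $V_i^{u_i}(\tau_i\mid h^t;f_{-i})=V_i^{u_i}(\tau_i\mid h^t;f_{-i})$ is satisfied by every $\tau_i$. Monotonicity of the posterior then gives $\mu_i^t(\mathcal E_i^{\mathrm{priv}}(h^t(\omega))\mid h^t(\omega))\ge \mu_i^t(f_{-i}\mid h^t(\omega))\to 1$. This is item~2 of Assumption~\ref{ass:equiv_class_conc_private}. Item~1 (menu grain of truth) is exactly item~1 of Assumption~\ref{ass:finite_menu_kl}.

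The only delicate point is the transfer of null sets between $\mu^f$ and $P^{\beta,u}$. I expect to handle it by noting that the set of paths on which convergence fails is measurable in the public product $\sigma$-algebra $\mathcal B$: it is built from countably many posterior values, each a function of finitely many public actions. Its preimage under the public projection $\omega\mapsto h^\infty(\omega)$ therefore has $P^{\beta,u}$-measure equal to its $\mu^f$-measure, which is zero.
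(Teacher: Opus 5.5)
Your proposal is correct and follows the paper's proof: you concentrate the posterior on the true label $f_{-i}$ via the Bayes-ratio argument of Lemma~\ref{lem:posterior_concentration}, then use $f_{-i}\in\mathcal E_i^{\mathrm{priv}}(h^t)$ and monotonicity of the posterior. You also spell out two details the paper leaves implicit, namely the transfer of the $\mu^f$-null exceptional set to a $P^{\beta,u}$-null set through the public marginal and the check of the menu grain-of-truth item, and you handle both correctly.
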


\begin{proof}
The exact same Bayes-ratio argument as in Lemma~\ref{lem:posterior_concentration} yields
\[
\mu_i^t(f_{-i}\mid h^t(\omega))\longrightarrow 1
\qquad\text{$P^{\beta,u}$-a.s. in }\omega.
\]
For every public history $h^t(\omega)$, the true opponents' model $f_{-i}$ belongs to its own
continuation-payoff equivalence class $\mathcal E_i^{\mathrm{priv}}(h^t(\omega))$. Therefore,
\[
\mu_i^t(\mathcal E_i^{\mathrm{priv}}(h^t(\omega))\mid h^t(\omega))
\ge
\mu_i^t(f_{-i}\mid h^t(\omega))
\longrightarrow 1,
\]
which is exactly Assumption~\ref{ass:equiv_class_conc_private}(2).
\end{proof}

\begin{corollary}[Deterministic hard refutation implies the private-payoff opponents-side concentration condition]
\label{cor:det_menu_sep_implies_equiv_class_conc_private}
Suppose player $i$'s finite retained opponents' menu satisfies
Assumption~\ref{ass:det_menu_sep_app}. Then
Assumption~\ref{ass:equiv_class_conc_private} holds.
\end{corollary}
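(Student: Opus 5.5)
The plan is to reduce this corollary to the two results immediately preceding it. By Lemma~\ref{lem:det_menu_sep_implies_finite_menu_kl}, Assumption~\ref{ass:det_menu_sep_app} implies Assumption~\ref{ass:finite_menu_kl}. By Corollary~\ref{cor:finite_menu_kl_implies_equiv_class_conc_private}, Assumption~\ref{ass:finite_menu_kl} implies Assumption~\ref{ass:equiv_class_conc_private}. Composing the two gives the claim. The only care needed is that these results were phrased for the public play law $\mu^f$, while the private-payoff statement is phrased for $P^{\beta,u}$.

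The first step is to check that $\mu^f$-almost-sure statements transfer to $P^{\beta,u}$-almost-sure statements about public histories. The induced profile $f$ is defined so that $\mu^f$ is exactly the public-action marginal of $P^{\beta,u}$. Hence any event determined by the public path $z=(a^1,a^2,\dots)$ has the same probability under both laws. The following are all measurable functions of the public history $h^t(\omega)$ alone, since the opponents' menu lives on public histories:
\begin{itemize}[leftmargin=*]
\item the on-path separation event in Assumption~\ref{ass:det_menu_sep_app}(2);
\item the likelihood ratios $L_{i,T}(g_{-i};z)/L_{i,T}(f_{-i};z)$;
\item the posterior $\mu_i^t(\cdot\mid h^t)$.
\end{itemize}
So I will read Assumption~\ref{ass:det_menu_sep_app} with $f$ being the induced public-history profile.

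The second step applies the hard-refutation argument of Lemma~\ref{lem:det_menu_sep_implies_finite_menu_kl} verbatim. Each wrong deterministic label $g_{-i}$ is refuted at a finite, public, almost surely finite time. The true label keeps likelihood $1$. So the likelihood ratio is eventually $0$, $P^{\beta,u}$-a.s.

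The third step invokes Corollary~\ref{cor:finite_menu_kl_implies_equiv_class_conc_private}, or directly the Bayes-ratio argument of Lemma~\ref{lem:posterior_concentration}. This gives $\mu_i^t(f_{-i}\mid h^t(\omega))\to 1$. Since $f_{-i}\in\mathcal E_i^{\mathrm{priv}}(h^t)$, the posterior mass of the true equivalence class is at least this quantity and therefore tends to $1$, which is Assumption~\ref{ass:equiv_class_conc_private}(2). The menu grain-of-truth part, item~(1), carries over unchanged.

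The main obstacle, such as it is, is the measurability and marginal-transfer point in the first step. One detail needs checking: the true opponents' strategy in the private-payoff game is the induced public profile $f_{-i}(h^t)=P^{\beta,u}(a_{-i}^t\in\cdot\mid h^t)$, not the private rule $\beta_{-i}$. For the hard-refutation step to go through, this induced profile must be deterministic, so that it coincides with a deterministic menu automaton. Assumption~\ref{ass:det_menu_sep_app}(1) supplies exactly this, because it places $f_{-i}$ in the menu, and every menu element is a deterministic public automaton.
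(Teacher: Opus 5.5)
Your proposal is correct and matches the paper's proof, which composes Lemma~\ref{lem:det_menu_sep_implies_finite_menu_kl} with Corollary~\ref{cor:finite_menu_kl_implies_equiv_class_conc_private}. Your extra checks are ones the paper leaves implicit: that the $\mu^f$-a.s.\ statements transfer to $P^{\beta,u}$ through the public marginal, and that determinism of the induced $f_{-i}$ follows from its membership in the deterministic menu.
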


\begin{proof}
By Lemma~\ref{lem:det_menu_sep_implies_finite_menu_kl},
Assumption~\ref{ass:det_menu_sep_app} implies Assumption~\ref{ass:finite_menu_kl}. The claim
therefore follows immediately from
Corollary~\ref{cor:finite_menu_kl_implies_equiv_class_conc_private}.
\end{proof}

The more general retained-menu identification condition (Assumption~\ref{ass:finite_menu_kl})
and Lemma~\ref{lem:posterior_concentration} remain useful as a stronger appendix-level route:
they show that exact posterior concentration can also be obtained when wrong retained labels are
eliminated asymptotically by likelihood-ratio decay rather than by finite-time hard refutation.
But for the sparse simulation menus used in our applications,
Corollary~\ref{cor:det_menu_sep_implies_equiv_class_conc} is the practically relevant route.
Those menus are built to exclude duplicate labels that would remain observationally or
strategically equivalent to the benchmark equilibrium automaton on the realized path. As a
result, a wrong retained deterministic label is eventually contradicted by some observed public
action, so deterministic hard refutation directly verifies the main-text concentration
condition.

The next appendix condition is the private-payoff analogue of the public-action verification
route above. In the main text, Assumption~\ref{ass:payoff_menu_sep_private} is stated as
menu-level concentration on player $i$'s true own-payoff continuation-decision class rather than
as literal global identification of the full retained mean matrix. No claim is made
about learning the opponents' payoff matrices. The intended economic logic is again on-path:
player $i$ need not learn its own payoffs everywhere, only at the reached action profiles that
matter for the continuation decision problem. For concrete verification, it is convenient to
record a stronger route based on likelihood-ratio elimination of wrong retained
mean matrices along the realized private payoff history.

\begin{assumption}[Identification of player $i$'s own payoff menu]
\label{ass:payoff_menu_sep_private_app}
Fix player $i$ and let $\mathcal M_i=\mathrm{supp}(\pi_i^0)$ be finite.
Assume:
\begin{enumerate}[leftmargin=*]
\item \textit{(Menu grain of truth)} The true mean matrix $u_i\in\mathcal M_i$ and
$\pi_i^0(u_i)>0$.
\item \textit{(Known common Gaussian noise family)} There exists $\sigma_i^2>0$ such that
\[
r_i^t\mid a^t \sim \mathcal N(u_i(a^t),\sigma_i^2),
\]
and each menu element $m_i\in\mathcal M_i$ induces the Gaussian payoff kernel
\[
q_i^{m_i}(\mathrm dr\mid a)=\phi(r;m_i(a),\sigma_i^2)\,\mathrm dr.
\]
Equivalently, for every $m_i\in\mathcal M_i$, the finite-history payoff likelihood is
\[
K_{i,T}(m_i;\omega)
:=
\prod_{t=1}^{T-1}
\phi\!\bigl(r_i^t(\omega);m_i(a^t(\omega)),\sigma_i^2\bigr),
\qquad T\ge 1,
\]
where $\phi(\cdot;\mu,\sigma_i^2)$ denotes the Gaussian density with mean $\mu$
and variance $\sigma_i^2$.
\item \textit{(On-path elimination of wrong retained payoff hypotheses)} For every
$m_i\in\mathcal M_i\setminus\{u_i\}$,
\[
\frac{K_{i,T}(m_i;\omega)}{K_{i,T}(u_i;\omega)}\longrightarrow 0
\qquad\text{$P^{\beta,u}$-a.s. in }\omega.
\]
\end{enumerate}
\end{assumption}

\begin{lemma}[Own-payoff-menu identification implies pointwise payoff posterior concentration]
\label{lem:payoff_posterior_concentration}
Fix player $i$ and suppose Assumption~\ref{ass:payoff_menu_sep_private_app} holds.
Then
\[
\pi_i^t(\mathcal U_i(h^t(\omega))\mid x_i^t(\omega))\longrightarrow 1
\qquad\text{for }P^{\beta,u}\text{-a.e. }\omega.
\]
\end{lemma}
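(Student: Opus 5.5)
The plan is to reduce the claim to a standard finite-menu Bayes-ratio argument, exactly parallel to Lemma~\ref{lem:posterior_concentration}, and then pass from literal identification of $u_i$ to concentration on the equivalence class $\mathcal U_i(h^t)$. First, write the posterior \eqref{eq:payoff_posterior} under the Gaussian kernel of Assumption~\ref{ass:payoff_menu_sep_private_app}(2) in likelihood form:
\[
\pi_i^t(m_i\mid x_i^t(\omega))=\frac{\pi_i^0(m_i)\,K_{i,t}(m_i;\omega)}{\sum_{m'\in\mathcal M_i}\pi_i^0(m')\,K_{i,t}(m';\omega)} .
\]
This is legitimate because $\pi_i^0(u_i)>0$ and the Gaussian density is strictly positive, so $K_{i,t}(u_i;\omega)>0$ for every $\omega$ and the denominator never vanishes. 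Dividing numerator and denominator by $\pi_i^0(u_i)K_{i,t}(u_i;\omega)$ gives, for every $m_i\neq u_i$,
\[
\pi_i^t(m_i\mid x_i^t)\le \frac{\pi_i^0(m_i)}{\pi_i^0(u_i)}\cdot\frac{K_{i,t}(m_i;\omega)}{K_{i,t}(u_i;\omega)} .
\]

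Next, invoke the on-path elimination condition, Assumption~\ref{ass:payoff_menu_sep_private_app}(3). For each of the finitely many $m_i\in\mathcal M_i\setminus\{u_i\}$ there is a $P^{\beta,u}$-full-measure set on which the likelihood ratio tends to $0$. Intersect these finitely many sets, which is still a full-measure set, to obtain $\pi_i^t(u_i\mid x_i^t(\omega))\to 1$ there. This step uses that $\mathcal M_i$ is finite; the Gaussian structure only serves to guarantee strictly positive likelihoods and a well-defined Bayes formula.

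Finally, note that $u_i\in\mathcal U_i(h^t)$ for every public history $h^t$. This holds because $\approx_i^{h^t}$ is defined by equalities of continuation values, so it is reflexive. Hence
\[
\pi_i^t(\mathcal U_i(h^t(\omega))\mid x_i^t(\omega))\ge \pi_i^t(u_i\mid x_i^t(\omega))\to 1 ,
\]
which is the claim. The only point needing care is this last step: $\mathcal U_i(h^t)$ moves with $t$ and depends on the posterior-predictive model $g_{-i}^{i,t}$, so it might seem to require a uniformity argument. It does not, because membership of $u_i$ holds pointwise for every $h^t$ regardless of how the class varies. I therefore expect no real obstacle beyond bookkeeping of the null sets and the measurability of $\omega\mapsto \pi_i^t(\cdot\mid x_i^t(\omega))$, which is immediate since the posterior is a finite ratio of measurable likelihoods.
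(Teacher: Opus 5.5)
Your proposal is correct and follows essentially the same route as the paper: a Bayes-ratio comparison against the true matrix $u_i$, finite-menu elimination via Assumption~\ref{ass:payoff_menu_sep_private_app}(3) to get $\pi_i^t(u_i\mid x_i^t(\omega))\to 1$, and then $u_i\in\mathcal U_i(h^t)$ at every public history to pass to the equivalence class. Your explicit observation that strictly positive Gaussian likelihoods keep the Bayes formula well-defined is a small point the paper leaves implicit.
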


The next assumption is a concrete sufficient route for the payoff-menu
identification condition under the same known common Gaussian setting used in
Experiment~3. Its role is to formalize the idea that wrong retained own-payoff hypotheses do not
have to be contradicted everywhere: it is enough that each such hypothesis disagree with the truth
at some joint action that the realized dynamics revisit often enough for private reward evidence to
accumulate.

\begin{assumption}[Gaussian recurrent mean-gap separation]
\label{ass:gaussian_payoff_menu_sep_private}
Fix player $i$ and let $\mathcal M_i=\mathrm{supp}(\pi_i^0)$ be finite. Assume:
\begin{enumerate}[leftmargin=*]
\item \textit{(Menu grain of truth)} The true mean matrix $u_i\in\mathcal M_i$ and
$\pi_i^0(u_i)>0$.
\item \textit{(Known common Gaussian noise family)} There exists $\sigma_i^2>0$ such that,
for every $m_i\in\mathcal M_i$ and every $a\in A$,
\[
q_i^{m_i}(\mathrm dr\mid a)
=
\frac{1}{\sqrt{2\pi}\sigma_i}
\exp\!\left(
-\frac{(r-m_i(a))^2}{2\sigma_i^2}
\right)\mathrm dr.
\]
\item \textit{(On-path recurrent mean separation)} For every
$m_i\in\mathcal M_i\setminus\{u_i\}$, there exist a joint action
$a^\star(m_i)\in A$ and a constant $\rho_i(m_i)>0$ such that
\[
m_i(a^\star(m_i))\neq u_i(a^\star(m_i))
\]
and, under the true interaction law $P^{\beta,u}$,
\[
\liminf_{T\to\infty}
\frac{1}{T}\sum_{t=1}^T \mathbf 1\{a^t=a^\star(m_i)\}
\ge
\rho_i(m_i)
\qquad\text{a.s.}
\]
\end{enumerate}
\end{assumption}

\begin{lemma}[Gaussian recurrent mean-gap separation implies payoff-menu identification]
\label{lem:gaussian_payoff_menu_sep_private_implies_payoff_menu_sep_private}
Suppose Assumption~\ref{ass:gaussian_payoff_menu_sep_private} holds. Then
Assumption~\ref{ass:payoff_menu_sep_private_app} holds.
\end{lemma}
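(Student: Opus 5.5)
Items 1 and 2 of Assumption~\ref{ass:payoff_menu_sep_private_app} are copied verbatim from items 1 and 2 of Assumption~\ref{ass:gaussian_payoff_menu_sep_private}, so only item 3 needs proof. We must show that for each fixed wrong hypothesis $m_i\in\mathcal M_i\setminus\{u_i\}$ the likelihood ratio $K_{i,T}(m_i;\omega)/K_{i,T}(u_i;\omega)$ tends to $0$, $P^{\beta,u}$-a.s. Since $\mathcal M_i$ is finite, a countable (indeed finite) intersection of full-measure events then gives the claim simultaneously for all wrong hypotheses.

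\textbf{Step 1: log-likelihood ratio.} Fix $m_i$ and write $\Delta(a):=m_i(a)-u_i(a)$. Under the truth, $r_i^t=u_i(a^t)+\epsilon_t$, where conditionally on the past and on $a^t$ the noise $\epsilon_t\sim\mathcal N(0,\sigma_i^2)$ is independent of everything before. Expanding the Gaussian densities gives
\[
\log\frac{K_{i,T}(m_i;\omega)}{K_{i,T}(u_i;\omega)}
=\sum_{t=1}^{T-1}\Bigl(\frac{\Delta(a^t)\epsilon_t}{\sigma_i^2}-\frac{\Delta(a^t)^2}{2\sigma_i^2}\Bigr)
=:\frac{1}{\sigma_i^2}M_{T}-\frac{1}{2\sigma_i^2}Q_{T}.
\]
Here $Q_T=\sum_{t<T}\Delta(a^t)^2$ is nondecreasing. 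The term $M_T$ is a martingale with respect to the filtration generated by $(a^1,r^1,\ldots,a^{t})$: the action $a^t$ is chosen before $\epsilon_t$ is drawn, and $\epsilon_t$ is conditionally centered given that filtration. Its increments have conditional variance $\sigma_i^2\Delta(a^t)^2$, so its predictable quadratic variation is $\langle M\rangle_T=\sigma_i^2 Q_T$.

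\textbf{Step 2: drift grows linearly.} Set $c:=\Delta(a^\star(m_i))^2>0$. Then $Q_T\ge c\sum_{t<T}\mathbf 1\{a^t=a^\star(m_i)\}$. By the recurrence condition in item 3 of Assumption~\ref{ass:gaussian_payoff_menu_sep_private}, almost surely $\liminf_T Q_T/T\ge c\,\rho_i(m_i)>0$. In particular $Q_T\to\infty$ linearly.

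\textbf{Step 3: martingale term is negligible (main obstacle).} We need $M_T/Q_T\to 0$ a.s. on $\{Q_T\to\infty\}$. The increments are conditionally Gaussian with bounded variance, because $|\Delta|\le 1$ since both matrices take values in $[0,1]$. The strong law for square-integrable martingales therefore applies: $M_T/\langle M\rangle_T\to 0$ a.s. on $\{\langle M\rangle_\infty=\infty\}$. This follows from the martingale convergence theorem applied to $\sum_t \Delta(a^t)\epsilon_t/(1+\langle M\rangle_t)$ together with Kronecker's lemma. As a more elementary alternative, note that the sum of conditional variances of $M_T/T$ is bounded by $\sum_t \sigma_i^2/t^2<\infty$, so $M_T/T\to 0$ a.s.; combined with the linear growth of $Q_T$ from Step 2, this suffices. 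The delicate point is that $a^t$ is endogenous: it depends on past rewards through the PS-BR behavior rule. That is why we use martingale arguments rather than an i.i.d. law of large numbers, and why we must check that $a^t$ is predictable with respect to the filtration, so that $\epsilon_t$ remains centered given it.

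\textbf{Step 4: conclude.} Combining the steps, almost surely
\[
\frac{1}{T}\log\frac{K_{i,T}(m_i)}{K_{i,T}(u_i)}\le o(1)-\frac{c\rho_i(m_i)}{2\sigma_i^2}+o(1)<0
\]
eventually. Hence the log-ratio tends to $-\infty$ and the ratio tends to $0$. Intersecting over the finitely many $m_i$ establishes item 3, and with it Assumption~\ref{ass:payoff_menu_sep_private_app}.
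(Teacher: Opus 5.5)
Your proposal is correct and follows essentially the same route as the paper. The Gaussian log-likelihood ratio splits into a quadratic drift $\sum_t \Delta_t^2/(2\sigma_i^2)$, which grows linearly by the recurrence condition, plus a martingale term $\sum_t \Delta_t\varepsilon_t/\sigma_i^2$ that is $o(T)$ by the martingale strong law; your ``elementary alternative'' in Step~3, using $\sum_t \sigma_i^2/t^2<\infty$, is exactly the paper's argument. The other differences are cosmetic: your sign convention, your larger filtration containing all players' past rewards (harmless by the tower property), and the optional self-normalized $M_T/\langle M\rangle_T$ version, which is more general than needed here.
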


\begin{proof}[Proof of Lemma \ref{lem:gaussian_payoff_menu_sep_private_implies_payoff_menu_sep_private}]
Item 1 of Assumption~\ref{ass:payoff_menu_sep_private_app} is exactly Item 1 of
Assumption~\ref{ass:gaussian_payoff_menu_sep_private}. Item 2 of
Assumption~\ref{ass:gaussian_payoff_menu_sep_private} supplies the known common
Gaussian noise family appearing in Item 2 of
Assumption~\ref{ass:payoff_menu_sep_private_app}. It remains to verify Item 3.

Fix $m_i\in\mathcal M_i\setminus\{u_i\}$ and define
\[
\Delta_t:=u_i(a^t)-m_i(a^t),
\qquad
\varepsilon_t:=r_i^t-u_i(a^t).
\]
Under the true interaction law $P^{\beta,u}$, $\varepsilon_t$ is conditionally mean-zero
Gaussian with variance $\sigma_i^2$ given $a^t$. For Gaussian likelihoods,
\[
\log\frac{\phi(r_i^t;u_i(a^t),\sigma_i^2)}{\phi(r_i^t;m_i(a^t),\sigma_i^2)}
=
\frac{\Delta_t^2}{2\sigma_i^2}
+
\frac{\Delta_t\varepsilon_t}{\sigma_i^2}.
\]
Therefore, pathwise,
\[
\log\frac{K_{i,T}(u_i;\omega)}{K_{i,T}(m_i;\omega)}
=
\frac{1}{2\sigma_i^2}\sum_{t=1}^{T-1}\Delta_t^2
+
\frac{1}{\sigma_i^2}\sum_{t=1}^{T-1}\Delta_t\varepsilon_t.
\]

Because $A$ and $\mathcal M_i$ are finite, there exists $C_i(m_i)<\infty$ such that
$|\Delta_t|\le C_i(m_i)$ for all $t$. Let
\[
\mathcal H_t:=\sigma(h^{t+1},r_i^{1:t-1}),
\qquad
Y_t:=\Delta_t\varepsilon_t.
\]
Then $a^t$, hence $\Delta_t$, is $\mathcal H_t$-measurable, and
\[
\mathbb E[Y_t\mid \mathcal H_t]
=
\Delta_t\,\mathbb E[\varepsilon_t\mid \mathcal H_t]
=
0.
\]
Also,
\[
\sup_t \mathbb E[Y_t^2]
\le
C_i(m_i)^2\sigma_i^2
<\infty.
\]
Hence
\[
\sum_{t=1}^{\infty}\frac{\mathbb E[Y_t^2]}{t^2}<\infty,
\]
so the martingale strong law implies
\[
\frac{1}{T}\sum_{t=1}^T Y_t\longrightarrow 0
\qquad\text{a.s.}
\]

By Assumption~\ref{ass:gaussian_payoff_menu_sep_private}(3), there exist
$a^\star(m_i)\in A$ and $\rho_i(m_i)>0$ such that
$m_i(a^\star(m_i))\neq u_i(a^\star(m_i))$ and
\[
\liminf_{T\to\infty}
\frac{1}{T}\sum_{t=1}^T \mathbf 1\{a^t=a^\star(m_i)\}
\ge
\rho_i(m_i)
\qquad\text{a.s.}
\]
Therefore,
\[
\liminf_{T\to\infty}
\frac{1}{T}\sum_{t=1}^T \Delta_t^2
\ge
\rho_i(m_i)\bigl(u_i(a^\star(m_i))-m_i(a^\star(m_i))\bigr)^2
>0
\qquad\text{a.s.}
\]
Combining the last three displays yields
\[
\liminf_{T\to\infty}
\frac{1}{T}\log\frac{K_{i,T}(u_i;\omega)}{K_{i,T}(m_i;\omega)}
>0
\qquad\text{$P^{\beta,u}$-a.s.}
\]
Hence
\[
\frac{K_{i,T}(m_i;\omega)}{K_{i,T}(u_i;\omega)}\longrightarrow 0
\qquad\text{$P^{\beta,u}$-a.s. in }\omega,
\]
which is exactly Item 3 of Assumption~\ref{ass:payoff_menu_sep_private_app}.
\end{proof}

\begin{proof}[Proof of Lemma \ref{lem:psbr_gap}]
For each $g_{-i}\in\mathcal S_{-i}$ define the continuation value envelope
\[
M(g_{-i})\ :=\ \sup_{\sigma_i} V_i(\sigma_i\mid h^t; g_{-i})\ \in\ [0,1].
\]
For each $g_{-i}$ pick a (measurable) best response $\sigma_i^{g_{-i}}\in\mathrm{BR}_i(g_{-i}\mid h^t)$, so that
$V_i(\sigma_i^{g_{-i}}\mid h^t; g_{-i})=M(g_{-i})$.

By definition, PS-BR first samples $\tilde g_{-i}\sim p_t(\cdot)$ and then plays $\sigma_i^{\tilde g_{-i}}$.
Evaluating against the posterior predictive belief and using linearity in the mixing over opponent hypotheses,
\begin{align*}
V_i(\sigma^{\mathrm{PS}}_{i,t}\mid h^t)
&= \sum_{\tilde g_{-i}\in\mathcal S_{-i}} p_t(\tilde g_{-i})\ 
\sum_{g_{-i}\in\mathcal S_{-i}} p_t(g_{-i})\ V_i(\sigma_i^{\tilde g_{-i}}\mid h^t; g_{-i}) \\
&\ge \sum_{g_{-i}\in\mathcal S_{-i}} p_t(g_{-i})^2\ V_i(\sigma_i^{g_{-i}}\mid h^t; g_{-i}) \\
&= \sum_{g_{-i}\in\mathcal S_{-i}} p_t(g_{-i})^2\, M(g_{-i}).
\end{align*}
On the other hand,
\[
\sup_{\sigma_i} V_i(\sigma_i\mid h^t)
= \sup_{\sigma_i}\sum_{g_{-i}\in\mathcal S_{-i}} p_t(g_{-i})\, V_i(\sigma_i\mid h^t; g_{-i})
\le \sum_{g_{-i}\in\mathcal S_{-i}} p_t(g_{-i})\, M(g_{-i}).
\]
Subtracting and using $M(g_{-i})\le 1$,
\begin{align*}
\sup_{\sigma_i} V_i(\sigma_i\mid h^t) - V_i(\sigma^{\mathrm{PS}}_{i,t}\mid h^t)
&\le \sum_{g_{-i}\in\mathcal S_{-i}} \Big(p_t(g_{-i})-p_t(g_{-i})^2\Big) M(g_{-i}) \\
&\le \sum_{g_{-i}\in\mathcal S_{-i}} \Big(p_t(g_{-i})-p_t(g_{-i})^2\Big) \\
&= 1-\sum_{g_{-i}\in\mathcal S_{-i}} p_t(g_{-i})^2
= D_i^t(h^t).
\end{align*}
This proves the claim.
\end{proof}

\begin{proof}[Proof of Lemma \ref{lem:posterior_concentration}]
Fix any $g_{-i}\in\mathcal S_{-i}\setminus\{f_{-i}\}$ and a realized path $z\in H^\infty$
in the full-measure event from Assumption~\ref{ass:finite_menu_kl}(2). By Bayes' rule, for every $T\ge 1$,
\[
\frac{\mu_i^T(g_{-i}\mid h^T(z))}{\mu_i^T(f_{-i}\mid h^T(z))}
=
\frac{\mu_i^0(g_{-i})}{\mu_i^0(f_{-i})}\,
\frac{L_{i,T}(g_{-i};z)}{L_{i,T}(f_{-i};z)}.
\]
Assumption~\ref{ass:finite_menu_kl}(2) implies that the likelihood ratio on the right converges to $0$, hence
\[
\frac{\mu_i^T(g_{-i}\mid h^T(z))}{\mu_i^T(f_{-i}\mid h^T(z))}\longrightarrow 0.
\]
Since this holds for every $g_{-i}\neq f_{-i}$ and the menu is finite,
\[
R_T(z):=\sum_{g_{-i}\in\mathcal S_{-i}\setminus\{f_{-i}\}}
\frac{\mu_i^T(g_{-i}\mid h^T(z))}{\mu_i^T(f_{-i}\mid h^T(z))}
\longrightarrow 0
\qquad\text{$\mu^f$-a.s.}
\]
But
\[
1
=
\mu_i^T(f_{-i}\mid h^T(z))
+
\sum_{g_{-i}\neq f_{-i}}\mu_i^T(g_{-i}\mid h^T(z))
=
\mu_i^T(f_{-i}\mid h^T(z))\bigl(1+R_T(z)\bigr),
\]
so
\[
\mu_i^T(f_{-i}\mid h^T(z))
=
\frac{1}{1+R_T(z)}
\longrightarrow 1.
\]
Consequently,
\[
\max_{g_{-i}\in\mathcal S_{-i}\setminus\{f_{-i}\}} \mu_i^T(g_{-i}\mid h^T(z))\longrightarrow 0.
\]
\end{proof}

\begin{proof}[Proof of Proposition \ref{prop:ps_implies_asymptotic_consistency}]
Fix $\varepsilon>0$ and work on a realized path $z$ in the full-measure event from
Assumption~\ref{ass:equiv_class_conc}(2). For each $t$, write
\[
\mathcal E:=\mathcal E_i(h^t(z))
\qquad\text{and}\qquad
\beta_i^t(h^t(z)):=1-\mu_i^t(\mathcal E\mid h^t(z)).
\]
By Assumption~\ref{ass:equiv_class_conc}(2),
\[
\beta_i^t(h^t(z))\longrightarrow 0
\qquad\text{$\mu^f$-a.s. in }z.
\]

Fix such a history $h^t(z)$. For each sampled label $\tilde g_{-i}\in\mathcal S_{-i}$, let
$\sigma_i^{\tilde g_{-i}}$ denote the continuation strategy chosen by PS-BR when
$\tilde g_{-i}$ is sampled. If $\tilde g_{-i}\in\mathcal E$, then by
Definition~\ref{def:cont_equiv}, every label in $\mathcal E$ induces the same continuation-value
functional after $h^t(z)$. Denote that common functional by
\[
W_i(\sigma_i\mid h^t(z)).
\]
Let
\[
M_i(h^t(z)):=\sup_{\sigma_i} W_i(\sigma_i\mid h^t(z)).
\]
Because PS-BR best responds to the sampled label, for every $\tilde g_{-i}\in\mathcal E$ we have
\[
W_i(\sigma_i^{\tilde g_{-i}}\mid h^t(z))=M_i(h^t(z)).
\]

Now evaluate PS-BR under the full posterior predictive continuation value. Writing
$p_t(g_{-i})=\mu_i^t(g_{-i}\mid h^t(z))$, we have
\begin{align*}
V_i(\sigma_{i,t}^{\mathrm{PS}}\mid h^t(z))
&=
\sum_{\tilde g_{-i}\in\mathcal S_{-i}} p_t(\tilde g_{-i})
\sum_{g_{-i}\in\mathcal S_{-i}} p_t(g_{-i})
V_i(\sigma_i^{\tilde g_{-i}}\mid h^t(z);g_{-i})\\
&\ge
\sum_{\tilde g_{-i}\in\mathcal E} p_t(\tilde g_{-i})
\sum_{g_{-i}\in\mathcal E} p_t(g_{-i})
W_i(\sigma_i^{\tilde g_{-i}}\mid h^t(z))\\
&=
\bigl(1-\beta_i^t(h^t(z))\bigr)^2 M_i(h^t(z)).
\end{align*}
On the other hand, for any continuation strategy $\sigma_i$,
\begin{align*}
V_i(\sigma_i\mid h^t(z))
&=
\sum_{g_{-i}\in\mathcal E} p_t(g_{-i}) W_i(\sigma_i\mid h^t(z))
+
\sum_{g_{-i}\notin\mathcal E} p_t(g_{-i})
V_i(\sigma_i\mid h^t(z);g_{-i})\\
&\le
\bigl(1-\beta_i^t(h^t(z))\bigr) M_i(h^t(z)) + \beta_i^t(h^t(z)),
\end{align*}
where the last inequality uses that continuation values lie in $[0,1]$. Therefore
\begin{align*}
\sup_{\sigma_i}V_i(\sigma_i\mid h^t(z))
-
V_i(\sigma_{i,t}^{\mathrm{PS}}\mid h^t(z))
&\le
\bigl(1-\beta_i^t(h^t(z))\bigr) M_i(h^t(z))
+\beta_i^t(h^t(z))
-\bigl(1-\beta_i^t(h^t(z))\bigr)^2 M_i(h^t(z))\\
&=
\beta_i^t(h^t(z))
+\beta_i^t(h^t(z))\bigl(1-\beta_i^t(h^t(z))\bigr)M_i(h^t(z))\\
&\le 2\beta_i^t(h^t(z)).
\end{align*}
Hence
\[
\sigma_{i,t}^{\mathrm{PS}}(\cdot\mid h^t(z))
\in
\mathrm{BR}_i^{2\beta_i^t(h^t(z))}\!\bigl(f_{-i}^{i,t}\big|_{h^t(z)}\mid h^t(z)\bigr).
\]
Because $\beta_i^t(h^t(z))\to 0$, there exists a finite time $T_i(z,\varepsilon)$ such that
$2\beta_i^t(h^t(z))\le \varepsilon$ for all $t\ge T_i(z,\varepsilon)$. Therefore, for all
$t\ge T_i(z,\varepsilon)$,
\[
\sigma_{i,t}^{\mathrm{PS}}(\cdot\mid h^t(z))
\in
\mathrm{BR}_i^{\varepsilon}\!\bigl(f_{-i}^{i,t}\big|_{h^t(z)}\mid h^t(z)\bigr),
\]
which is exactly the claimed planner-side asymptotic $\varepsilon$-consistency statement.
\end{proof}

\begin{proof}[Proof of Lemma \ref{lem:absolute_cont_predict}]
Let $\mu^{f^i} \equiv P_i^{0, f_i}$ be the distribution induced by the predictive reference profile $(f_i, f_{-i}^i)$ representing the prior predictive. By Assumption~\ref{ass:grain_of_truth}, $\mu^f \ll \mu^{f^i}$. 

By the merging of opinions theorem \citep{kalai1993rational, blackwell1962merging}, absolute continuity guarantees that the conditional predictive distributions over future play paths merge almost surely in total variation. Specifically, for $\mu^f$-almost every path $z \in H^\infty$:
\[
\lim_{t \to \infty} \sup_{E \in \mathcal{B}} \big| \mu^f(E \mid C(h^t(z))) - \mu^{f^i}(E \mid C(h^t(z))) \big| = 0,
\]
where $\mathcal{B}$ is the product $\sigma$-algebra on $H^\infty$.

Recall from Definition~\ref{def:weak_distance} that the continuation weak distance is bounded by the total variation distance. For any finite length $k$, the $\sigma$-algebra $\mathcal{B}^k$ generated by cylinder events of length $k$ is a sub-$\sigma$-algebra of $\mathcal{B}$. Therefore:
\begin{align*}
\sup_{E \in \mathcal{B}^k} \big| \mu^f(E \mid C(h^t(z))) - \mu^{f^i}(E \mid C(h^t(z))) \big| \le \sup_{E \in \mathcal{B}} \big| \mu^f(E \mid C(h^t(z))) - \mu^{f^i}(E \mid C(h^t(z))) \big|.
\end{align*}
Using this bound, the continuation weak distance $d_{h^t(z)}(\mu^f, \mu^{f^i})$ satisfies:
\begin{align*}
d_{h^t(z)}(\mu^f, \mu^{f^i}) &= \sum_{k=1}^\infty 2^{-k} \sup_{E \in \mathcal{B}^k} \big| \mu^f(E \mid C(h^t(z))) - \mu^{f^i}(E \mid C(h^t(z))) \big| \\
&\le \sum_{k=1}^\infty 2^{-k} \sup_{E \in \mathcal{B}} \big| \mu^f(E \mid C(h^t(z))) - \mu^{f^i}(E \mid C(h^t(z))) \big| \\
&= \sup_{E \in \mathcal{B}} \big| \mu^f(E \mid C(h^t(z))) - \mu^{f^i}(E \mid C(h^t(z))) \big|.
\end{align*}
Since the total variation distance on the right-hand side converges to zero as $t \to \infty$ for $\mu^f$-almost every $z$, we have:
\[
\lim_{t \to \infty} d_{h^t(z)}(\mu^f, \mu^{f^i}) = 0 \quad \text{$\mu^f$-a.s.}
\]
By the definition of the limit, for any $\eta > 0$, there $\mu^f$-a.s.\ exists a finite time $T_i(z, \eta)$ such that for all $t \ge T_i(z, \eta)$, $d_{h^t(z)}(\mu^f, \mu^{f^i}) \le \eta$. This precisely satisfies the strong path prediction requirement in Definition~\ref{def:learn_predict_path}.
\end{proof}

\begin{proof}[Proof of Proposition \ref{prop:weak_subjective_from_learning}]
Fix $\xi,\eta>0$.
For each player $i$, the asymptotic $\varepsilon$-consistency on-path condition implies that $\mu^f$-a.s.\ in $z$ there exists $T_i^\mathrm{br}(z)$ such that for all $t\ge T_i^\mathrm{br}(z)$,
\[
f_i\big|_{h^t(z)}\in \mathrm{BR}_i^\xi\!\big(f_{-i}^{i,t}\big|_{h^t(z)}\mid h^t(z)\big).
\]
By the representative choice \eqref{eq:app_rep_choice}, we may equivalently write $f_{-i}^{i,t}\big|_{h^t(z)}\equiv f_{-i}^i\big|_{h^t(z)}$, so for all $t\ge T_i^\mathrm{br}(z)$,
\[
f_i\big|_{h^t(z)}\in \mathrm{BR}_i^\xi\!\big(f_{-i}^i\big|_{h^t(z)}\mid h^t(z)\big),
\]
which is exactly the subjective best-response condition in Definition~\ref{def:weak_subjective_eq}.

Similarly, strong prediction implies that $\mu^f$-a.s.\ in $z$ there exists $T_i^\mathrm{pred}(z)$ such that for all $t\ge T_i^\mathrm{pred}(z)$,
\[
d_{h^t(z)}(\mu^f,\mu^{f^i})\le \eta,
\]
which is the weak predictive accuracy condition in Definition~\ref{def:weak_subjective_eq}.

Let $T(z):=\max_i\{T_i^\mathrm{br}(z),\,T_i^\mathrm{pred}(z)\}$, which is finite $\mu^f$-a.s.\ since $I$ is finite.
Then for all $t\ge T(z)$ and every player $i$, both conditions in Definition~\ref{def:weak_subjective_eq} hold with supporting profile $f^i$, so $f\big|_{h^t(z)}$ is a weak $\xi$-subjective $\eta$-equilibrium after $h^t(z)$.
\end{proof}

\begin{proof}[Proof of Theorem \ref{thm:zero_shot_nash_final}]
Fix $\varepsilon>0$ and set $\xi:=\varepsilon/2$.
Let $\hat\eta(\cdot,\cdot)$ be the function from the infinite patching lemma (Lemma~\ref{lem:infinite_patching} in Appendix~\ref{app:patching}), and set $\eta:=\hat\eta(\xi,\varepsilon/2)$.

By Proposition~\ref{prop:weak_subjective_from_learning}, $\mu^f$-a.s.\ in $z$ there exists $T(z)$ such that for all $t\ge T(z)$, the continuation profile $f\big|_{h^t(z)}$ is a weak $\xi$-subjective $\eta$-equilibrium after $h^t(z)$.
Applying Lemma~\ref{lem:infinite_patching} at each such $t$ yields an $\varepsilon$-Nash equilibrium $\hat f^{\varepsilon,t,z}$ of the continuation game after $h^t(z)$ satisfying
$d_{h^t(z)}(\mu^f,\mu^{\hat f^{\varepsilon,t,z}})\le \varepsilon$.
\end{proof}

\begin{proof}[Proof of Corollary \ref{cor:psbr_zero_shot_nash}]
By Proposition~\ref{prop:ps_implies_asymptotic_consistency}, under
Assumption~\ref{ass:equiv_class_conc}, each player's selected PS-BR continuation plan
eventually satisfies the asymptotic best-response condition on the realized path.
Under the continuation-plan implementation condition in the corollary, this is exactly the
asymptotic $\varepsilon$-consistency premise required in Theorem~\ref{thm:zero_shot_nash_final}.
Because Assumption~\ref{ass:equiv_class_conc}(1) places positive prior mass on the true opponent
strategy, Assumption~\ref{ass:grain_of_truth} follows. Lemma~\ref{lem:absolute_cont_predict}
therefore guarantees each player learns to predict the path of play under $f$.
Theorem~\ref{thm:zero_shot_nash_final} then applies.
\end{proof}

\begin{proof}[Proof of Lemma~\ref{lem:payoff_posterior_concentration}]
Fix any $m_i\in\mathcal M_i\setminus\{u_i\}$ and a realized path $\omega\in\Omega$.
By Bayes' rule \eqref{eq:payoff_posterior} and the definition of $K_{i,T}$ in
Assumption~\ref{ass:payoff_menu_sep_private_app},
\[
\frac{\pi_i^T(m_i\mid x_i^T(\omega))}{\pi_i^T(u_i\mid x_i^T(\omega))}
=
\frac{\pi_i^0(m_i)}{\pi_i^0(u_i)}
\frac{K_{i,T}(m_i;\omega)}{K_{i,T}(u_i;\omega)}.
\]
By Assumption~\ref{ass:payoff_menu_sep_private_app}(3), the likelihood ratio on the
right converges to $0$ for every $m_i\neq u_i$, hence
\[
\frac{\pi_i^T(m_i\mid x_i^T(\omega))}{\pi_i^T(u_i\mid x_i^T(\omega))}
\longrightarrow 0
\qquad\text{$P^{\beta,u}$-a.s. in }\omega.
\]
Since $\mathcal M_i$ is finite,
\[
\sum_{m_i\in\mathcal M_i\setminus\{u_i\}}
\frac{\pi_i^T(m_i\mid x_i^T(\omega))}{\pi_i^T(u_i\mid x_i^T(\omega))}
\longrightarrow 0
\qquad\text{$P^{\beta,u}$-a.s.}
\]
Using
\[
1
=
\pi_i^T(u_i\mid x_i^T(\omega))
\left(
1+
\sum_{m_i\in\mathcal M_i\setminus\{u_i\}}
\frac{\pi_i^T(m_i\mid x_i^T(\omega))}{\pi_i^T(u_i\mid x_i^T(\omega))}
\right),
\]
we obtain
\[
\pi_i^T(u_i\mid x_i^T(\omega))\longrightarrow 1
\qquad\text{for }P^{\beta,u}\text{-almost every }\omega.
\]
By Definition~\ref{def:payoff_equiv_private}, the true mean matrix $u_i$ belongs to its own
continuation-decision class $\mathcal U_i(h^T(\omega))$ at every public history. Hence
\[
\pi_i^T(\mathcal U_i(h^T(\omega))\mid x_i^T(\omega))
\ge
\pi_i^T(u_i\mid x_i^T(\omega))
\longrightarrow 1
\qquad\text{$P^{\beta,u}$-a.s.}
\]
which is exactly the stated pointwise payoff posterior concentration.
\end{proof}

\begin{proof}[Proof of Lemma~\ref{lem:psarbr_gap}]
Fix player $i$ and an observable history $x_i^t=(h^t,r_i^{1:t-1})$.
Let $\mathcal M:=\mathcal S_{-i}\times\mathcal M_i$, and for each
$m=(g_{-i},m_i)\in\mathcal M$ define the continuation value functional
\[
V_i^{m}(\tau_i\mid x_i^t)
\;:=\;
V_i^{m_i}(\tau_i\mid h^t; g_{-i})
\;\in\;[0,1],
\]
and the value envelope
\[
M(m)\;:=\;\sup_{\tau_i\in\mathcal F_i(h^t)} V_i^{m}(\tau_i\mid x_i^t)\;\in\;[0,1].
\]
For each $m\in\mathcal M$ fix a (measurable) best response $\tau_i^{m}$ attaining $M(m)$, i.e.,
$V_i^{m}(\tau_i^{m}\mid x_i^t)=M(m)$.

By Definition~\ref{def:psarbr}, PS-BR samples $(\tilde g_{-i},\tilde m_i)\sim p_t(\cdot)$ and then plays
$\tau_i^{(\tilde g_{-i},\tilde m_i)}$. Let $\sigma^{\mathrm{PS}}_{i,t}$ denote this randomized continuation
strategy at $x_i^t$.

Because $V_i^{\mathrm{mix},t}$ is linear in both the opponents-mixture and the payoff-matrix mixture, we can write
\[
V_i^{\mathrm{mix},t}(\tau_i\mid x_i^t)
=
\sum_{(g_{-i},m_i)\in\mathcal M} p_t(g_{-i},m_i)\,V_i^{(g_{-i},m_i)}(\tau_i\mid x_i^t)
=
\sum_{m\in\mathcal M} p_t(m)\,V_i^{m}(\tau_i\mid x_i^t).
\]
Therefore, evaluating PS-BR under the mixed subjective objective gives
\begin{align*}
V_i^{\mathrm{mix},t}(\sigma^{\mathrm{PS}}_{i,t}\mid x_i^t)
&=
\sum_{\tilde m\in\mathcal M} p_t(\tilde m)\;
V_i^{\mathrm{mix},t}(\tau_i^{\tilde m}\mid x_i^t)\\
&=
\sum_{\tilde m\in\mathcal M} p_t(\tilde m)\;
\sum_{m\in\mathcal M} p_t(m)\,V_i^{m}(\tau_i^{\tilde m}\mid x_i^t)\\
&\ge
\sum_{m\in\mathcal M} p_t(m)^2\,V_i^{m}(\tau_i^{m}\mid x_i^t)
=
\sum_{m\in\mathcal M} p_t(m)^2\,M(m).
\end{align*}
On the other hand,
\[
\sup_{\tau_i\in\mathcal F_i(h^t)} V_i^{\mathrm{mix},t}(\tau_i\mid x_i^t)
=
\sup_{\tau_i\in\mathcal F_i(h^t)}\sum_{m\in\mathcal M} p_t(m)\,V_i^{m}(\tau_i\mid x_i^t)
\le
\sum_{m\in\mathcal M} p_t(m)\,\sup_{\tau_i\in\mathcal F_i(h^t)}V_i^{m}(\tau_i\mid x_i^t)
=
\sum_{m\in\mathcal M} p_t(m)\,M(m).
\]
Subtracting and using $M(m)\le 1$ for all $m$,
\begin{align*}
\sup_{\tau_i\in\mathcal F_i(h^t)}V_i^{\mathrm{mix},t}(\tau_i\mid x_i^t)-V_i^{\mathrm{mix},t}(\sigma^{\mathrm{PS}}_{i,t}\mid x_i^t)
&\le
\sum_{m\in\mathcal M}\big(p_t(m)-p_t(m)^2\big)M(m)\\
&\le
\sum_{m\in\mathcal M}\big(p_t(m)-p_t(m)^2\big)
=
1-\sum_{m\in\mathcal M}p_t(m)^2
=
D_i^{t,\mathrm{joint}}(x_i^t).
\end{align*}
This proves the claim.
\end{proof}

\begin{proof}[Proof of Lemma \ref{lem:psarbr_equiv_gap}]
Fix an observable history $x_i^t=(h^t,r_i^{1:t-1})$. Write
\[
p_t(g_{-i},m_i):=\mu_i^t(g_{-i}\mid h^t)\,\pi_i^t(m_i\mid x_i^t),
\]
let
\[
\mathcal F_t:=\mathcal E_i^{\mathrm{priv}}(h^t)\times\mathcal U_i(h^t),
\qquad
r_t:=\sum_{(g_{-i},m_i)\in\mathcal F_t} p_t(g_{-i},m_i)
=(1-\alpha_i^t(h^t))(1-\delta_i^t(x_i^t)).
\]
For each sampled pair $(\tilde g_{-i},\tilde m_i)$, let
$\tau_i^{\tilde g_{-i},\tilde m_i}$ denote the continuation strategy selected by PS-BR.
If $(\tilde g_{-i},\tilde m_i)\in\mathcal F_t$, then
$\tilde g_{-i}\in\mathcal E_i^{\mathrm{priv}}(h^t)$ and
$\tilde m_i\in\mathcal U_i(h^t)$. By
Definition~\ref{def:payoff_equiv_private}, $\tilde m_i$ induces the same continuation-value
functional as the true mean matrix $u_i$ against the sampled opponents' model
$\tilde g_{-i}$. By the definition of $\mathcal E_i^{\mathrm{priv}}(h^t)$,
every $\tilde g_{-i}\in\mathcal E_i^{\mathrm{priv}}(h^t)$ is continuation-payoff equivalent to
$f_{-i}$ under $u_i$. Therefore, for every $(\tilde g_{-i},\tilde m_i)\in\mathcal F_t$ and every $\tau_i\in\mathcal F_i(h^t)$,
\[
V_i^{\tilde m_i}(\tau_i\mid h^t;\tilde g_{-i})
=
V_i^{u_i}(\tau_i\mid h^t;f_{-i}).
\]
Let
\[
M_i(h^t):=\sup_{\tau_i\in\mathcal F_i(h^t)} V_i^{u_i}(\tau_i\mid h^t;f_{-i}).
\]
For every $(\tilde g_{-i},\tilde m_i)\in\mathcal F_t$, PS-BR chooses a best response to that
pair, hence
\[
V_i^{u_i}(\tau_i^{\tilde g_{-i},\tilde m_i}\mid h^t;f_{-i})=M_i(h^t).
\]
Evaluating against the mixed subjective objective gives
\begin{align*}
V_i^{\mathrm{mix},t}(\sigma_{i,t}^{\mathrm{PS}}\mid x_i^t)
&=
\sum_{(\tilde g_{-i},\tilde m_i)} p_t(\tilde g_{-i},\tilde m_i)
\sum_{(g_{-i},m_i)} p_t(g_{-i},m_i)
V_i^{m_i}(\tau_i^{\tilde g_{-i},\tilde m_i}\mid h^t;g_{-i}) \\
&\ge
\sum_{(\tilde g_{-i},\tilde m_i)\in\mathcal F_t} p_t(\tilde g_{-i},\tilde m_i)
\sum_{(g_{-i},m_i)\in\mathcal F_t} p_t(g_{-i},m_i)
V_i^{u_i}(\tau_i^{\tilde g_{-i},\tilde m_i}\mid h^t;f_{-i}) \\
&=
r_t^2 M_i(h^t).
\end{align*}
For any continuation strategy $\tau_i\in\mathcal F_i(h^t)$,
\[
V_i^{\mathrm{mix},t}(\tau_i\mid x_i^t)
\le
r_t M_i(h^t) + (1-r_t),
\]
since all continuation values lie in $[0,1]$. Therefore
\begin{align*}
\sup_{\tau_i\in\mathcal F_i(h^t)}V_i^{\mathrm{mix},t}(\tau_i\mid x_i^t)
-
V_i^{\mathrm{mix},t}(\sigma_{i,t}^{\mathrm{PS}}\mid x_i^t)
&\le
(1-r_t) + r_t M_i(h^t) - r_t^2 M_i(h^t) \\
&\le 2(1-r_t) \\
&=2\bigl(\alpha_i^t(h^t)+\delta_i^t(x_i^t)-\alpha_i^t(h^t)\delta_i^t(x_i^t)\bigr) \\
&\le 2\alpha_i^t(h^t)+2\delta_i^t(x_i^t).
\end{align*}
Using \eqref{eq:mix_to_true_value_bound},
\begin{align*}
V_i^{u_i,t}(\sigma_{i,t}^{\mathrm{PS}}\mid x_i^t)
&\ge
V_i^{\mathrm{mix},t}(\sigma_{i,t}^{\mathrm{PS}}\mid x_i^t)-\delta_i^t(x_i^t) \\
&\ge
\sup_{\tau_i\in\mathcal F_i(h^t)}V_i^{\mathrm{mix},t}(\tau_i\mid x_i^t)-2\alpha_i^t(h^t)-3\delta_i^t(x_i^t) \\
&\ge
\sup_{\tau_i\in\mathcal F_i(h^t)}V_i^{u_i,t}(\tau_i\mid x_i^t)-2\alpha_i^t(h^t)-4\delta_i^t(x_i^t).
\end{align*}
This proves the claim.
\end{proof}

\begin{proof}[Proof of Proposition~\ref{prop:psarbr_implies_asymptotic_consistency}]
Work on the full-measure event on which both convergence statements hold:
\[
\alpha_i^t(h^t(\omega))\longrightarrow 0
\qquad\text{and}\qquad
\delta_i^t(x_i^t(\omega))\longrightarrow 0.
\]
The first follows from Assumption~\ref{ass:equiv_class_conc_private}; the second follows
from Assumption~\ref{ass:payoff_menu_sep_private}. By Lemma~\ref{lem:psarbr_equiv_gap},
for every observable history $x_i^t=(h^t,r_i^{1:t-1})$,
\[
\sup_{\tau_i\in\mathcal F_i(h^t)}V_i^{u_i,t}(\tau_i\mid x_i^t)
-
V_i^{u_i,t}(\sigma_{i,t}^{\mathrm{PS}}\mid x_i^t)
\le
2\alpha_i^t(h^t)+4\delta_i^t(x_i^t).
\]
Using the representative identity \eqref{eq:private_repr_value}, this is exactly
\[
\sup_{\tau_i\in\mathcal F_i(h^t)}V_i^{u_i}(\tau_i\mid h^t;g_{-i}^{i,t})
-
V_i^{u_i}(\sigma_{i,t}^{\mathrm{PS}}\mid h^t;g_{-i}^{i,t})
\le
2\alpha_i^t(h^t)+4\delta_i^t(x_i^t).
\]
Fix $\varepsilon>0$. On the full-measure event above, choose $T_i(\omega,\varepsilon)$ such that
for all $t\ge T_i(\omega,\varepsilon)$,
\[
2\alpha_i^t(h^t(\omega))+4\delta_i^t(x_i^t(\omega))\le \varepsilon.
\]
Then for all such $t$,
\[
\sigma_{i,t}^{\mathrm{PS}}(\cdot\mid x_i^t(\omega))
\in
\mathrm{BR}_{i,u_i}^{\varepsilon}\!\bigl(g_{-i}^{i,t}\mid h^t(\omega)\bigr),
\]
which is exactly the claimed planner-side asymptotic $\varepsilon$-consistency statement.
\end{proof}

\begin{proof}[Proof of Lemma \ref{lem:absolute_cont_predict_private}]
Fix player $i$. By the menu grain-of-truth part of
Assumption~\ref{ass:equiv_class_conc_private}, the true opponents' public-history strategy
$f_{-i}$ belongs to player $i$'s retained menu with positive prior mass. Therefore, for every
measurable event $E\subseteq H^\infty$,
\[
P_i^{0,f_i}(E)
=
\int_{\mathcal F_{-i}} \mu^{(f_i,g_{-i})}(E)\,d\mu_i^0(g_{-i})
\ge
\mu_i^0(f_{-i})\,\mu^{(f_i,f_{-i})}(E)
=
\mu_i^0(f_{-i})\,\mu^f(E).
\]
Hence $\mu^f\ll P_i^{0,f_i}$, i.e., the ordinary public grain-of-truth condition from
Assumption~\ref{ass:grain_of_truth} holds automatically.
Applying Lemma~\ref{lem:absolute_cont_predict} therefore yields
\[
d_{h^t(\omega)}\!\left(
\mu^f,
\mu^{f^i}
\right)\longrightarrow 0
\qquad\text{for }P^{\beta,u}\text{-a.e. }\omega,
\]
where $f^i=(f_i,f_{-i}^i)$ is the representative predictive profile for player $i$.
By the continuation-consistent representative choice from
Appendix~\ref{app:belief_repr}, the conditional law $\mu_{h^t(\omega)}^{f^i}$ is exactly the
posterior predictive public-action law $\Pi_i^t(\cdot\mid x_i^t(\omega))$. Hence
\[
d\!\left(
\Pi_i^t(\cdot\mid x_i^t(\omega)),
\mu_{h^t(\omega)}^f
\right)\longrightarrow 0
\qquad\text{for }P^{\beta,u}\text{-a.e. }\omega.
\]
\end{proof}

\begin{proof}[Proof of Proposition \ref{prop:weak_subjective_from_learning_private}]
Fix $\xi,\eta>0$.
For each player $i$, the assumed asymptotic $\xi$-consistency condition implies that
$P^{\beta,u}$-a.s.\ there exists $T_i^{\mathrm{br}}(\omega)$ such that for all
$t\ge T_i^{\mathrm{br}}(\omega)$,
\[
f_i\big|_{h^t(\omega)}
\in
\mathrm{BR}_{i,u_i}^{\xi}\!\bigl(g_{-i}^{i,t}\mid h^t(\omega)\bigr).
\]
Also, Lemma~\ref{lem:absolute_cont_predict_private} implies that $P^{\beta,u}$-a.s.\ there
exists $T_i^{\mathrm{pred}}(\omega)$ such that for all
$t\ge T_i^{\mathrm{pred}}(\omega)$,
\[
d_{h^t(\omega)}\!\left(
\mu^f,
\mu^{(f_i,g_{-i}^{i,t})}
\right)\le \eta.
\]
Let
\[
T(\omega):=\max_{i\in I}\{T_i^{\mathrm{br}}(\omega),T_i^{\mathrm{pred}}(\omega)\}.
\]
Then for all $t\ge T(\omega)$ and every player $i$, both conditions in
Definition~\ref{def:weak_subjective_eq_private} hold with supporting profile
$(f_i,g_{-i}^{i,t})$.
\end{proof}

\begin{proof}[Proof of Theorem \ref{cor:psarbr_zero_shot_nash}]
Fix $\varepsilon>0$ and set $\xi:=\varepsilon/2$.
Let $\hat\eta(\cdot,\cdot)$ be the function from Lemma~\ref{lem:infinite_patching}, and set
\[
\eta:=\hat\eta(\xi,\varepsilon/2).
\]
By the assumed asymptotic on-path $\xi$-consistency condition together with
Lemma~\ref{lem:absolute_cont_predict_private}, the hypotheses of
Proposition~\ref{prop:weak_subjective_from_learning_private} hold. Therefore
$P^{\beta,u}$-a.s.\ there exists $T(\omega)$ such that for all $t\ge T(\omega)$, the
continuation profile $f\big|_{h^t(\omega)}$ is a weak $\xi$-subjective $\eta$-equilibrium
after $h^t(\omega)$.
Applying Lemma~\ref{lem:infinite_patching} at each such $t$ yields a continuation profile
$\hat f^{\varepsilon,t,\omega}$ that is an $\varepsilon$-Nash equilibrium after
$h^t(\omega)$ and satisfies
\[
d_{h^t(\omega)}\!\left(
\mu^f,
\mu^{\hat f^{\varepsilon,t,\omega}}
\right)\le \varepsilon/2\le \varepsilon.
\]
This is exactly the stated conclusion.
\end{proof}

\begin{proof}[Proof of Lemma \ref{lem:stage_br_stability}]
Fix player $i$, let $p,q\in\Delta(A_{-i})$, and suppose
$\alpha_i\in \mathrm{br}_i^\xi(q)$.

For any $\alpha_i\in\Delta(A_i)$ define
\[
\phi_{\alpha_i}(a_{-i})
:=
\sum_{a_i\in A_i}\alpha_i(a_i)\,u_i(a_i,a_{-i}),
\qquad a_{-i}\in A_{-i}.
\]
Since $u_i(a_i,a_{-i})\in[0,1]$, we have $\phi_{\alpha_i}(a_{-i})\in[0,1]$ for all
$a_{-i}\in A_{-i}$. Also,
\[
u_i(\alpha_i,p)-u_i(\alpha_i,q)
=
\sum_{a_{-i}\in A_{-i}} \phi_{\alpha_i}(a_{-i})\bigl(p(a_{-i})-q(a_{-i})\bigr).
\]

Set
\[
S^+ := \{a_{-i}\in A_{-i}: p(a_{-i})\ge q(a_{-i})\}.
\]
Because $0\le \phi_{\alpha_i}\le 1$, we have
\begin{align*}
u_i(\alpha_i,p)-u_i(\alpha_i,q)
&=
\sum_{a_{-i}\in S^+}\phi_{\alpha_i}(a_{-i})\bigl(p(a_{-i})-q(a_{-i})\bigr)
+
\sum_{a_{-i}\notin S^+}\phi_{\alpha_i}(a_{-i})\bigl(p(a_{-i})-q(a_{-i})\bigr)\\
&\le
\sum_{a_{-i}\in S^+}\bigl(p(a_{-i})-q(a_{-i})\bigr)\\
&=
p(S^+)-q(S^+)\\
&\le \|p-q\|_{\TV}.
\end{align*}
Applying the same argument with $p$ and $q$ interchanged yields
\[
u_i(\alpha_i,q)-u_i(\alpha_i,p)\le \|p-q\|_{\TV}.
\]
Therefore
\begin{equation}
\label{eq:stage_tv_bound}
|u_i(\alpha_i,p)-u_i(\alpha_i,q)|\le \|p-q\|_{\TV}
\qquad\text{for every }\alpha_i\in\Delta(A_i).
\end{equation}

Now suppose $\alpha_i\in \mathrm{br}_i^\xi(q)$. Then
\[
u_i(\alpha_i,q)\ge \sup_{\alpha_i'\in\Delta(A_i)} u_i(\alpha_i',q)-\xi.
\]
Using \eqref{eq:stage_tv_bound},
\begin{align*}
u_i(\alpha_i,p)
&\ge u_i(\alpha_i,q)-\|p-q\|_{\TV}\\
&\ge \sup_{\alpha_i'\in\Delta(A_i)} u_i(\alpha_i',q)-\xi-\|p-q\|_{\TV}\\
&\ge \sup_{\alpha_i'\in\Delta(A_i)}
\bigl(u_i(\alpha_i',p)-\|p-q\|_{\TV}\bigr)-\xi-\|p-q\|_{\TV}\\
&=
\sup_{\alpha_i'\in\Delta(A_i)} u_i(\alpha_i',p)-\xi-2\|p-q\|_{\TV}.
\end{align*}
Hence
\[
\alpha_i\in \mathrm{br}_i^{\xi+2\|p-q\|_{\TV}}(p).
\]
\end{proof}

\begin{proof}[Proof of Lemma \ref{lem:myopic_psbr_gap}]
Fix player $i$ and history $h^t$. For each $g_{-i}\in\mathcal S_{-i}$ define
\[
M(g_{-i})
:=
\sup_{\alpha_i\in\Delta(A_i)}u_i\!\bigl(\alpha_i,g_{-i}(h^t)\bigr)\in[0,1].
\]
By Definition~\ref{def:myopic_psbr}, for each $g_{-i}\in\mathcal S_{-i}$ we have chosen
\[
\alpha_i^{g_{-i},h^t}\in \mathrm{br}_i\!\bigl(g_{-i}(h^t)\bigr),
\]
so
\[
u_i\!\bigl(\alpha_i^{g_{-i},h^t},g_{-i}(h^t)\bigr)=M(g_{-i}).
\]

Write $p_t(g_{-i})=\mu_i^t(g_{-i}\mid h^t)$.
The ex ante mixed action induced by myopic PS-BR is
\[
\alpha_{i,t}^{\mathrm{mPS}}(\cdot\mid h^t)
=
\sum_{\tilde g_{-i}\in\mathcal S_{-i}}
p_t(\tilde g_{-i})\,\alpha_i^{\tilde g_{-i},h^t}(\cdot),
\]
and the one-step posterior predictive belief is
\[
q_i^t(\cdot\mid h^t)
=
\sum_{g_{-i}\in\mathcal S_{-i}}
p_t(g_{-i})\,g_{-i}(h^t)(\cdot).
\]
By bilinearity of $u_i(\cdot,\cdot)$,
\begin{align*}
u_i\!\bigl(\alpha_{i,t}^{\mathrm{mPS}},q_i^t\bigr)
&=
\sum_{\tilde g_{-i}\in\mathcal S_{-i}} p_t(\tilde g_{-i})
\sum_{g_{-i}\in\mathcal S_{-i}} p_t(g_{-i})\,
u_i\!\bigl(\alpha_i^{\tilde g_{-i},h^t},g_{-i}(h^t)\bigr)\\
&\ge
\sum_{g_{-i}\in\mathcal S_{-i}} p_t(g_{-i})^2\,
u_i\!\bigl(\alpha_i^{g_{-i},h^t},g_{-i}(h^t)\bigr)\\
&=
\sum_{g_{-i}\in\mathcal S_{-i}} p_t(g_{-i})^2\,M(g_{-i}).
\end{align*}
On the other hand, again by bilinearity,
\begin{align*}
\sup_{\alpha_i\in\Delta(A_i)}u_i(\alpha_i,q_i^t)
&=
\sup_{\alpha_i\in\Delta(A_i)}
\sum_{g_{-i}\in\mathcal S_{-i}} p_t(g_{-i})\,u_i\!\bigl(\alpha_i,g_{-i}(h^t)\bigr)\\
&\le
\sum_{g_{-i}\in\mathcal S_{-i}} p_t(g_{-i})\,
\sup_{\alpha_i\in\Delta(A_i)}u_i\!\bigl(\alpha_i,g_{-i}(h^t)\bigr)\\
&=
\sum_{g_{-i}\in\mathcal S_{-i}} p_t(g_{-i})\,M(g_{-i}).
\end{align*}
Subtracting,
\begin{align*}
\sup_{\alpha_i}u_i(\alpha_i,q_i^t)-u_i(\alpha_{i,t}^{\mathrm{mPS}},q_i^t)
&\le
\sum_{g_{-i}\in\mathcal S_{-i}}
\bigl(p_t(g_{-i})-p_t(g_{-i})^2\bigr)\,M(g_{-i})\\
&\le
\sum_{g_{-i}\in\mathcal S_{-i}}
\bigl(p_t(g_{-i})-p_t(g_{-i})^2\bigr)\\
&=
1-\sum_{g_{-i}\in\mathcal S_{-i}}p_t(g_{-i})^2\\
&=
D_i^t(h^t).
\end{align*}
This proves the claim.
\end{proof}

\begin{proof}[Proof of Lemma \ref{lem:myopic_equiv_gap}]
Fix a history $h^t$ and write
\[
p_t(g_{-i}) := \mu_i^t(g_{-i}\mid h^t),
\qquad
\mathcal E^{\mathrm{st}} := \mathcal E_i^{\mathrm{st}}(h^t),
\qquad
p_{\mathcal E}^{\mathrm{st}}:=\mu_i^t(\mathcal E_i^{\mathrm{st}}(h^t)\mid h^t)=1-\beta_i^t(h^t).
\]
For each $g_{-i}\in\mathcal S_{-i}$, let
$\alpha_i^{g_{-i},h^t}\in \mathrm{br}_i(g_{-i}(h^t))$ be the mixed action selected in
Definition~\ref{def:myopic_psbr}. If $g_{-i}\in\mathcal E^{\mathrm{st}}$, then by
Definition~\ref{def:stage_equiv}, $g_{-i}(h^t)$ induces exactly the same one-step stage
optimization problem as the true current opponents' mixed action $f_{-i}(h^t)$. Hence
$\alpha_i^{g_{-i},h^t}$ is also a best response to $f_{-i}(h^t)$.
Let
\[
M_i(h^t):=\sup_{\alpha_i\in\Delta(A_i)} u_i(\alpha_i,f_{-i}(h^t)).
\]
Using linearity of stage payoffs in the player's own mixed action,
\begin{align*}
u_i\!\bigl(\alpha_{i,t}^{\mathrm{mPS}},f_{-i}(h^t)\bigr)
&=
\sum_{g_{-i}\in\mathcal S_{-i}} p_t(g_{-i})
u_i\!\bigl(\alpha_i^{g_{-i},h^t},f_{-i}(h^t)\bigr) \\
&\ge
\sum_{g_{-i}\in\mathcal E^{\mathrm{st}}} p_t(g_{-i}) M_i(h^t) \\
&=
(1-\beta_i^t(h^t)) M_i(h^t).
\end{align*}
Since $M_i(h^t)\le 1$, we obtain
\[
M_i(h^t) - u_i\!\bigl(\alpha_{i,t}^{\mathrm{mPS}},f_{-i}(h^t)\bigr)
\le \beta_i^t(h^t),
\]
which is exactly the stated bound.
\end{proof}

\begin{proof}[Proof of Lemma \ref{lem:one_step_predict_accuracy}]
Fix player $i$ and let $f^i=(f_i,f_{-i}^i)$ be the supporting profile from
Definition~\ref{def:learn_predict_path}.
Fix a realized path $z\in H^\infty$ in the full-measure event from
Definition~\ref{def:learn_predict_path}. By definition of $q_i^t$ and the
representative choice \eqref{eq:app_rep_choice},
\[
q_i^t(\cdot\mid h^t(z))=f_{-i}^{i,t}(h^t(z))=f_{-i}^i(h^t(z)).
\]
Let $\eta>0$.
By Definition~\ref{def:learn_predict_path}, there exists
$T_i(z,\eta/2)<\infty$ such that for all $t\ge T_i(z,\eta/2)$,
\[
d_{h^t(z)}(\mu^f,\mu^{f^i})\le \eta/2.
\]

Fix such a $t$. For any subset $B\subseteq A_{-i}$, define the one-step cylinder event
\[
E_B:=\{y\in H^\infty:\ y_{-i}^1\in B\}\in \mathcal B^1.
\]
By the definition of continuation measures,
\[
\mu^f_{h^t(z)}(E_B)=f_{-i}(h^t(z))(B),
\qquad
\mu^{f^i}_{h^t(z)}(E_B)=f_{-i}^i(h^t(z))(B)=q_i^t(B\mid h^t(z)).
\]
Therefore,
\begin{align*}
\big\|q_i^t(\cdot\mid h^t(z))-f_{-i}(h^t(z))\big\|_{\TV}
&=
\sup_{B\subseteq A_{-i}}
\left|
q_i^t(B\mid h^t(z))-f_{-i}(h^t(z))(B)
\right|\\
&=
\sup_{B\subseteq A_{-i}}
\left|
\mu^{f^i}_{h^t(z)}(E_B)-\mu^f_{h^t(z)}(E_B)
\right|\\
&\le
\sup_{E\in\mathcal B^1}
\left|
\mu^{f^i}_{h^t(z)}(E)-\mu^f_{h^t(z)}(E)
\right|.
\end{align*}
By Definition~\ref{def:weak_distance},
\[
d_{h^t(z)}(\mu^f,\mu^{f^i})
=
\sum_{k=1}^\infty 2^{-k}
\sup_{E\in\mathcal B^k}
\left|
\mu^f_{h^t(z)}(E)-\mu^{f^i}_{h^t(z)}(E)
\right|.
\]
In particular,
\[
\frac12
\sup_{E\in\mathcal B^1}
\left|
\mu^f_{h^t(z)}(E)-\mu^{f^i}_{h^t(z)}(E)
\right|
\le
d_{h^t(z)}(\mu^f,\mu^{f^i}),
\]
so
\[
\sup_{E\in\mathcal B^1}
\left|
\mu^f_{h^t(z)}(E)-\mu^{f^i}_{h^t(z)}(E)
\right|
\le
2\,d_{h^t(z)}(\mu^f,\mu^{f^i})
\le \eta.
\]
Hence
\[
\big\|q_i^t(\cdot\mid h^t(z))-f_{-i}(h^t(z))\big\|_{\TV}\le \eta
\]
for all $t\ge T_i(z,\eta/2)$.
Since $\eta>0$ was arbitrary, this proves the claim.
\end{proof}

\begin{proof}[Proof of Theorem \ref{thm:myopic_psbr_stage_nash}]
Fix $\varepsilon>0$.
For each player $i$, work on the full-measure event from
Assumption~\ref{ass:stage_equiv_class_conc}(2), so that along the realized path $z$,
\[
\beta_i^t(h^t(z))
:=
1-\mu_i^t(\mathcal E_i^{\mathrm{st}}(h^t(z))\mid h^t(z))
\longrightarrow 0.
\]
Because player $i$ uses myopic PS-BR, we have
\[
f_i(h^t(z))=\alpha_{i,t}^{\mathrm{mPS}}(\cdot\mid h^t(z)).
\]
Choose $T_i(z,\varepsilon)$ such that
\[
\beta_i^t(h^t(z))\le \varepsilon
\qquad\text{for all } t\ge T_i(z,\varepsilon).
\]
Then for every $t\ge T_i(z,\varepsilon)$, Lemma~\ref{lem:myopic_equiv_gap} implies
\[
f_i(h^t(z))
\in
\mathrm{br}_i^{\varepsilon}\!\bigl(f_{-i}(h^t(z))\bigr).
\]
Intersecting these full-measure events over the finite player set and taking
\[
T(z):=\max_{i\in I} T_i(z,\varepsilon)<\infty,
\]
we obtain that for all $t\ge T(z)$ and all players $i$,
\[
f_i(h^t(z))
\in
\mathrm{br}_i^{\varepsilon}\!\bigl(f_{-i}(h^t(z))\bigr).
\]
By Definition~\ref{def:stage_br_nash}, this means that $f(h^t(z))$ is a stage
$\varepsilon$-Nash equilibrium for all $t\ge T(z)$.
\end{proof}

\begin{proof}[Proof of Lemma \ref{lem:det_truth_implies_map_correct}]
Fix player $i$ and let $f^i=(f_i,f_{-i}^i)$ be the supporting profile from
Definition~\ref{def:learn_predict_path}.
Fix a realized path $z$ in the full-measure event from
Definition~\ref{def:learn_predict_path}. By definition of $q_i^t$ and the
representative choice \eqref{eq:app_rep_choice},
\[
q_i^t(\cdot\mid h^t(z))=f_{-i}^{i,t}(h^t(z))=f_{-i}^i(h^t(z)).
\]
For each $t$, define the one-step cylinder event
\[
E_t(z)
:=
\{y\in H^\infty:\ y_{-i}^1 = a_{-i}^{\star}(h^t(z))\}\in \mathcal B^1.
\]
Because the true opponents' next action at history $h^t(z)$ is pure,
\[
f_{-i}(h^t(z))=\delta_{a_{-i}^{\star}(h^t(z))},
\]
so
\[
\mu^f_{h^t(z)}(E_t(z))=1.
\]
Also, by the on-path identification above,
\[
\mu^{f^i}_{h^t(z)}(E_t(z))
=
f_{-i}^i(h^t(z))\bigl(a_{-i}^{\star}(h^t(z))\bigr)
=
q_i^t\!\bigl(a_{-i}^{\star}(h^t(z))\mid h^t(z)\bigr).
\]
Hence
\begin{align*}
1-q_i^t\!\bigl(a_{-i}^{\star}(h^t(z))\mid h^t(z)\bigr)
&=
\left|
\mu^f_{h^t(z)}(E_t(z))-\mu^{f^i}_{h^t(z)}(E_t(z))
\right|\\
&\le
\sup_{E\in\mathcal B^1}
\left|
\mu^f_{h^t(z)}(E)-\mu^{f^i}_{h^t(z)}(E)
\right|.
\end{align*}
As in the proof of Lemma~\ref{lem:one_step_predict_accuracy},
\[
\sup_{E\in\mathcal B^1}
\left|
\mu^f_{h^t(z)}(E)-\mu^{f^i}_{h^t(z)}(E)
\right|
\le
2\,d_{h^t(z)}(\mu^f,\mu^{f^i}).
\]
Because player $i$ learns to predict the path of play,
\[
d_{h^t(z)}(\mu^f,\mu^{f^i})\longrightarrow 0.
\]
Therefore
\[
q_i^t\!\bigl(a_{-i}^{\star}(h^t(z))\mid h^t(z)\bigr)\longrightarrow 1.
\]

It follows immediately that
\[
1-\max_{a_{-i}\in A_{-i}} q_i^t(a_{-i}\mid h^t(z))
\le
1-q_i^t\!\bigl(a_{-i}^{\star}(h^t(z))\mid h^t(z)\bigr)
\longrightarrow 0,
\]
which proves asymptotic purity.

Finally, because
\[
q_i^t\!\bigl(a_{-i}^{\star}(h^t(z))\mid h^t(z)\bigr)\longrightarrow 1,
\]
there exists $T_i(z)<\infty$ such that for all $t\ge T_i(z)$,
\[
q_i^t\!\bigl(a_{-i}^{\star}(h^t(z))\mid h^t(z)\bigr)>\frac12.
\]
For such $t$, the action $a_{-i}^{\star}(h^t(z))$ is the unique maximizer of
$q_i^t(\cdot\mid h^t(z))$, because all other probabilities sum to
\[
1-q_i^t\!\bigl(a_{-i}^{\star}(h^t(z))\mid h^t(z)\bigr)<\frac12.
\]
Hence the deterministic MAP selector must satisfy
\[
\hat a_{-i}^t(h^t(z))=a_{-i}^{\star}(h^t(z))
\qquad\text{for all }t\ge T_i(z).
\]
This proves the claim.
\end{proof}

\begin{proof}[Proof of Theorem \ref{thm:det_scot_stage_nash}]
Because every player $j\in I$ uses deterministic MAP-SCoT, for every history $h\in H$ we have
\[
f_j(h)=\delta_{a_j^\star(h)}
\qquad\text{for some }a_j^\star(h)\in A_j.
\]
Hence for every player $i$ and every history $h$,
\[
f_{-i}(h)=\delta_{a_{-i}^\star(h)}
\qquad\text{for some }a_{-i}^\star(h)\in A_{-i}.
\]

For each player $i$, apply Lemma~\ref{lem:det_truth_implies_map_correct}.
There is a full-measure event on which there exists $T_i(z)<\infty$ such that
for all $t\ge T_i(z)$,
\[
\hat a_{-i}^t(h^t(z))=a_{-i}^\star(h^t(z)).
\]
Because the player set $I$ is finite, the intersection of these full-measure events
over all players still has measure one.

Fix a realized path $z$ in that intersection.
For any player $i$ and any $t\ge T_i(z)$, Definition~\ref{def:det_map_scot} gives
\[
f_i(h^t(z))
=
\delta_{\,b_i(\hat a_{-i}^t(h^t(z)))}
=
\delta_{\,b_i(a_{-i}^\star(h^t(z)))}.
\]
By definition of the pure best-response selector $b_i$,
\[
b_i(a_{-i})\in \arg\max_{a_i\in A_i} u_i(a_i,a_{-i})
\qquad\text{for every }a_{-i}\in A_{-i}.
\]
Therefore
\[
\delta_{\,b_i(a_{-i}^\star(h^t(z)))}
\in
\mathrm{br}_i\!\bigl(\delta_{a_{-i}^\star(h^t(z))}\bigr)
=
\mathrm{br}_i\!\bigl(f_{-i}(h^t(z))\bigr).
\]
So for every player $i$ and all $t\ge T_i(z)$,
\[
f_i(h^t(z))\in \mathrm{br}_i\!\bigl(f_{-i}(h^t(z))\bigr).
\]

Define
\[
T(z):=\max_{i\in I} T_i(z)<\infty.
\]
Then for all $t\ge T(z)$ and every player $i$,
\[
f_i(h^t(z))\in \mathrm{br}_i\!\bigl(f_{-i}(h^t(z))\bigr).
\]
By Definition~\ref{def:stage_br_nash}, this means that
$f(h^t(z))$ is a stage Nash equilibrium for all $t\ge T(z)$.
\end{proof}

\begin{proof}[Proof of Corollary \ref{cor:det_scot_stage_nash_grain_truth}]
By Lemma~\ref{lem:absolute_cont_predict}, Assumption~\ref{ass:grain_of_truth}
implies that every player learns to predict the path of play under $f$ in the sense of
Definition~\ref{def:learn_predict_path}.
Theorem~\ref{thm:det_scot_stage_nash} therefore applies directly.
\end{proof}

\clearpage


\section{Bounded-memory strategies and finite-state reduction}
\label{sec:bounded_memory}

Many practical agent policies (including menu-based planners) depend only on a bounded
window of recent interaction. Following the bounded-recall restriction in
\citet{norman2022possibility}, we formalize this as a \emph{bounded-memory} condition.

For a history $h=(a^1,\dots,a^{t-1})\in H$ let $|h|:=t-1$ denote its length.  For
$\kappa\in\mathbb N$, define
\[
\mathrm{suffix}_\kappa(h)\ :=\ (a^{t-\min\{\kappa,t-1\}},\dots,a^{t-1}) \in \bigcup_{m=0}^{\kappa} A^m,
\]
i.e., the last $\min\{\kappa,|h|\}$ joint actions of $h$ (with $\mathrm{suffix}_\kappa(\emptyset)=\emptyset$).

\begin{definition}[$\kappa$-memory (bounded-recall) strategy]
\label{def:kappa_memory_strategy}
A strategy $f_i:H\to\Delta(A_i)$ has \emph{memory at most $\kappa$} if for all histories
$h,h'\in H$,
\[
\mathrm{suffix}_\kappa(h)=\mathrm{suffix}_\kappa(h') \quad\Longrightarrow\quad f_i(h)=f_i(h').
\]
Let $\mathcal F_i^\kappa\subseteq\mathcal F_i$ denote the set of $\kappa$-memory strategies
for player $i$, and write $\mathcal F^\kappa:=\prod_{i\in I}\mathcal F_i^\kappa$.
\end{definition}

Let
\[
\mathsf S_\kappa\ :=\ \bigcup_{m=0}^{\kappa} A^m
\]
be the finite set of action-suffixes of length at most $\kappa$.  Define the deterministic
state update map $T_\kappa:\mathsf S_\kappa\times A\to\mathsf S_\kappa$ by
\[
T_\kappa(s,a)\ :=\ \mathrm{suffix}_\kappa((s,a)),
\]
i.e., append the new joint action $a$ to the suffix $s$ and keep the last $\kappa$ entries.
For any play path $z=(a^1,a^2,\dots)\in H^\infty$, define the induced memory state at time $t$:
\[
s^t(z)\ :=\ \mathrm{suffix}_\kappa(h^t(z))\ \in\ \mathsf S_\kappa.
\]

\begin{lemma}[Finite-state Markov property under bounded memory]
\label{lem:markov_property_bounded_memory}
If $f\in\mathcal F^\kappa$, then for every $t\ge 1$ and every history $h^t$ with
$s=\mathrm{suffix}_\kappa(h^t)$, the next-period action distribution depends on $h^t$ only
through $s$:
\[
\mu^f(a^t=a\mid h^t)\ =\ \prod_{i\in I} f_i(s)(a_i).
\]
Moreover, the induced state process satisfies $s^{t+1}=T_\kappa(s^t,a^t)$ almost surely,
so $(s^t)_{t\ge 1}$ is a time-homogeneous Markov chain on $\mathsf S_\kappa$.
\end{lemma}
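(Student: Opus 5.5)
The plan is to verify the two claims directly from Definition~\ref{def:path_distribution} and Definition~\ref{def:kappa_memory_strategy}; no earlier learning or merging results are needed.

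\textbf{One-step conditional law.} Fix $t\ge 1$ and a history $h^t=(a^1,\dots,a^{t-1})$ with $\mu^f(C(h^t))>0$. Histories of measure zero are irrelevant, since the identity is only required $\mu^f$-a.s.; alternatively it can be read as the definition of the continuation law $\mu^f_{h^t}$. By the cylinder formula,
\[
\mu^f\bigl(C(h^t,a)\bigr)=\mu^f\bigl(C(h^t)\bigr)\prod_{i\in I} f_i(h^t)(a_i),
\]
so dividing gives $\mu^f(a^t=a\mid h^t)=\prod_i f_i(h^t)(a_i)$. The memory condition then replaces $f_i(h^t)$ by $f_i(s)$. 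To make sense of this, each $f_i$ is regarded as a function on $\mathsf S_\kappa$ via $f_i(s):=f_i(h)$ for any history $h$ with $\mathrm{suffix}_\kappa(h)=s$. This is well defined because a history $h$ with suffix $s$ always exists (for example $h=s$ itself, since $\mathrm{suffix}_\kappa(s)=s$ when $|s|\le\kappa$), and Definition~\ref{def:kappa_memory_strategy} makes the value independent of the choice of $h$.

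\textbf{State recursion.} Next, check the deterministic identity $\mathrm{suffix}_\kappa(h^{t+1})=\mathrm{suffix}_\kappa\bigl(\mathrm{suffix}_\kappa(h^t),a^t\bigr)$. Keeping the last $\min\{\kappa,t\}$ entries of $(h^t,a^t)$ is the same as first truncating $h^t$ to its last $\min\{\kappa,t-1\}$ entries, appending $a^t$, and truncating again. There are two cases:
\begin{itemize}
\item if $t-1<\kappa$, nothing is discarded at either step;
\item if $t-1\ge\kappa$, both sides equal $(a^{t-\kappa+1},\dots,a^t)$.
\end{itemize}
This gives $s^{t+1}=T_\kappa(s^t,a^t)$ for every path, not merely almost surely.

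\textbf{Markov property.} For any $s'\in\mathsf S_\kappa$, compute
\[
\mu^f(s^{t+1}=s'\mid h^t)=\sum_{a\in A:\,T_\kappa(s^t,a)=s'}\ \prod_{i\in I} f_i(s^t)(a_i)=:P(s^t,s').
\]
The right-hand side depends on $h^t$ only through $s^t$, and the kernel $P$ does not depend on $t$. Since $\sigma(s^1,\dots,s^t)\subseteq\sigma(h^t)$, the tower property yields
\[
\mu^f(s^{t+1}=s'\mid s^1,\dots,s^t)=P(s^t,s'),
\]
so $(s^t)$ is a time-homogeneous Markov chain on $\mathsf S_\kappa$ with kernel $P$.

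The only delicate points are the well-definedness of $f_i(s)$ and the index bookkeeping in the early periods $t\le\kappa$, where suffixes are shorter than $\kappa$. The case split in the state-recursion step handles the latter.
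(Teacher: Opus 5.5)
Your proof is correct and follows the same route as the paper. Like the paper, you read off the one-step law from the cylinder formula, replace $f_i(h^t)$ by $f_i(s)$ via the memory condition, verify the suffix recursion $s^{t+1}=T_\kappa(s^t,a^t)$, and conclude the Markov property with the kernel induced by the conditional law of $a^t$ given $s^t$. Your additions make explicit details the paper leaves implicit: the well-definedness of $f_i(s)$, the case split $t-1<\kappa$ versus $t-1\ge\kappa$, the observation that the recursion holds for every path, and the tower-property step from conditioning on $h^t$ to conditioning on $s^1,\dots,s^t$.
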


\begin{proof}
Fix $t$ and history $h^t$. By Definition~\ref{def:path_distribution},
\[
\mu^f(a^t=a\mid h^t)=\prod_{i\in I} f_i(h^t)(a_i).
\]
If $f\in\mathcal F^\kappa$, then $f_i(h^t)=f_i(\mathrm{suffix}_\kappa(h^t))=f_i(s)$ for each
$i$, giving the displayed equality. The state update is deterministic by construction of
$T_\kappa$: $s^{t+1}=\mathrm{suffix}_\kappa(h^{t+1})=\mathrm{suffix}_\kappa((h^t,a^t))
=T_\kappa(\mathrm{suffix}_\kappa(h^t),a^t)=T_\kappa(s^t,a^t)$.
Thus $(s^t)$ is Markov with kernel induced by the conditional law of $a^t$ given $s^t$.
\end{proof}

\begin{lemma}[Continuation distributions depend only on the memory state]
\label{lem:continuation_depends_on_state}
Let $g\in\mathcal F^\kappa$ and let $h,h'\in H$ satisfy
$\mathrm{suffix}_\kappa(h)=\mathrm{suffix}_\kappa(h')$. Then the continuation play-path
distributions coincide:
\[
\mu^{g}_{h}\ =\ \mu^{g}_{h'}.
\]
\end{lemma}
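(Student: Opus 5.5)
The plan is to show that the two continuation laws agree on every cylinder set and then invoke uniqueness of the extension measure. By Definition~\ref{def:path_distribution} and the construction of continuation distributions in Section~\ref{sec:setup}, $\mu^g_h$ is the unique measure on the reindexed tail space $H^\infty$ whose value on a cylinder $C(y^1,\dots,y^k)$ is
\[
\mu^g_h\bigl(C(y^1,\dots,y^k)\bigr)=\prod_{s=1}^{k}\prod_{j\in I} g_j\bigl((h,y^1,\dots,y^{s-1})\bigr)(y_j^s),
\]
and the same formula holds with $h'$ in place of $h$. By Kolmogorov's extension theorem (equivalently, a $\pi$--$\lambda$ argument, since cylinders form a $\pi$-system generating $\mathcal B$), two measures that agree on all finite cylinders are equal. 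So it suffices to prove that each factor is the same for $h$ and $h'$.

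The key step is to show that appending a common continuation preserves equality of $\kappa$-suffixes. I will prove by induction on $s$ that
\[
\mathrm{suffix}_\kappa\bigl((h,y^1,\dots,y^{s-1})\bigr)=\mathrm{suffix}_\kappa\bigl((h',y^1,\dots,y^{s-1})\bigr).
\]
The base case $s=1$ is the hypothesis. For the inductive step, I will use the identity $\mathrm{suffix}_\kappa((\tilde h,a))=T_\kappa(\mathrm{suffix}_\kappa(\tilde h),a)$, which was already used in the proof of Lemma~\ref{lem:markov_property_bounded_memory}. Since $T_\kappa$ is a function of the current suffix and the new action only, equal suffixes are mapped to equal suffixes.

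Given this equality, Definition~\ref{def:kappa_memory_strategy} yields $g_j((h,y^{1:s-1}))=g_j((h',y^{1:s-1}))$ for every player $j$ and every $s$. Hence the cylinder probabilities coincide, and so $\mu^g_h=\mu^g_{h'}$.

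The only delicate point is the suffix identity when the histories are shorter than $\kappa$. If $|h|<\kappa$, then the suffix is all of $h$, so equality of suffixes forces $h=h'$ exactly, because the suffix has length $\min\{\kappa,|h|\}$. When $|h|\ge\kappa$, both suffixes have length $\kappa$, and so do the suffixes of the extended histories. A brief case check on $\min\{\kappa,\cdot\}$ therefore confirms the $T_\kappa$ recursion in both cases. I expect this bookkeeping to be the main obstacle, and it is routine.
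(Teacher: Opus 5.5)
Your proof is correct and follows essentially the same route as the paper, which argues tersely via Lemma~\ref{lem:markov_property_bounded_memory} that under $g$ the next-action law and all future evolution depend on the past only through the suffix state updated by $T_\kappa$, so the kernels started from $h$ and $h'$ coincide. You make this rigorous with the explicit induction on $\mathrm{suffix}_\kappa((h,y^{1:s-1}))$ and the cylinder/$\pi$--$\lambda$ uniqueness step. Your case check on $\min\{\kappa,\cdot\}$ is also fine, and is in fact not needed: the identity $\mathrm{suffix}_\kappa((\tilde h,a))=T_\kappa(\mathrm{suffix}_\kappa(\tilde h),a)$ holds for every $\tilde h$ without splitting cases.
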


\begin{proof}
By Lemma~\ref{lem:markov_property_bounded_memory}, the conditional distribution of the next
action profile and all future evolution under $g$ depends on the past only through the
current memory state $s=\mathrm{suffix}_\kappa(\cdot)$. Since $h$ and $h'$ induce the same
state, the induced kernels for $(a^t,a^{t+1},\dots)$ are identical from either starting
history. Therefore the induced continuation measures coincide.
\end{proof}

\subsection{Best responses to bounded-memory opponents are bounded-memory}

A key benefit of bounded-memory opponents is that each player faces a finite-state
discounted MDP in the continuation game. In particular, the best-response search in
$\mathrm{BR}_i^\varepsilon(g_{-i}\mid h^t)$ can be restricted without loss to bounded-memory
policies.

\begin{lemma}[Markovian best responses to $\kappa$-memory opponents]
\label{lem:markov_best_response}
Fix player $i$, a history $h^t$, and an opponents' continuation profile
$g_{-i}\in\mathcal F_{-i}^\kappa$. Then there exists a best response
$\sigma_i^\star\in\mathrm{BR}_i(g_{-i}\mid h^t)$ that is \emph{stationary Markov} with respect
to the memory state. That is, there exists a map $\pi_i:\mathsf S_\kappa\to\Delta(A_i)$ such
that for every continuation history $\bar h\succeq h^t$,
\[
\sigma_i^\star(\bar h)\ =\ \pi_i(\mathrm{suffix}_\kappa(\bar h)).
\]
Consequently, for every $\varepsilon\ge 0$,
\[
\sup_{\sigma_i\in\mathcal F_i(h^t)} V_i(\sigma_i\mid h^t;g_{-i})
\ =\
\sup_{\sigma_i\in\mathcal F_i^\kappa(h^t)} V_i(\sigma_i\mid h^t;g_{-i}),
\]
and $\mathrm{BR}_i(g_{-i}\mid h^t)\cap\mathcal F_i^\kappa(h^t)\neq\emptyset$.
\end{lemma}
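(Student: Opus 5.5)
The plan is to reduce player $i$'s continuation problem against $g_{-i}\in\mathcal F_{-i}^\kappa$ after $h^t$ to a finite-state discounted MDP and then invoke standard stationary-optimality results. The state space is $\mathsf S_\kappa$. The action set is $A_i$. Given state $s$ and own action $a_i$, the opponents draw $a_{-i}\sim g_{-i}(s)$, which is well defined by Definition~\ref{def:kappa_memory_strategy} since each $g_j$ depends only on the suffix. The reward is $(1-\lambda_i)u_i(a_i,a_{-i})$, and the next state is $T_\kappa(s,(a_i,a_{-i}))$. The initial state is $s_0=\mathrm{suffix}_\kappa(h^t)$.

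\textbf{Step 1: the MDP value equals $V_i$.} I would first check that for every continuation strategy $\sigma_i\in\mathcal F_i(h^t)$, possibly history dependent, the law of $(y^1,y^2,\dots)$ under $\mu^{(\sigma_i,g_{-i})}_{h^t}$ coincides with the law of the joint-action sequence in the MDP under the history-dependent policy that reads the full continuation history. This follows cylinder by cylinder from Definition~\ref{def:path_distribution}, together with the fact that $g_{-i}(\bar h)=g_{-i}(\mathrm{suffix}_\kappa(\bar h))$ and that suffixes evolve by $T_\kappa$, as in Lemma~\ref{lem:markov_property_bounded_memory}. Consequently, $V_i(\sigma_i\mid h^t;g_{-i})$ equals the MDP discounted value of that policy from $s_0$.

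\textbf{Step 2: existence of a stationary optimal policy.} Since $\mathsf S_\kappa$ and $A_i$ are finite, rewards lie in $[0,1]$, and $\lambda_i<1$, the Bellman operator
\[
(\mathcal T W)(s)=\max_{a_i\in A_i}\sum_{a_{-i}}g_{-i}(s)(a_{-i})\Big[(1-\lambda_i)u_i(a_i,a_{-i})+\lambda_i W\big(T_\kappa(s,(a_i,a_{-i}))\big)\Big]
\]
is a $\lambda_i$-contraction in sup norm. It therefore has a unique fixed point $W^\star$. I would take $\pi_i(s)$ to be a point mass on a maximizing action. The standard argument then has two halves. First, iterating the Bellman inequality over $k$ periods and bounding the tail by $\lambda_i^k$ shows that every history-dependent policy attains at most $W^\star(s_0)$. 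Second, the stationary policy $\pi_i$ attains exactly $W^\star(s_0)$, because its value is the fixed point of the corresponding policy operator, and that fixed point equals $W^\star$.

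\textbf{Step 3: conclusion.} Set $\sigma_i^\star(\bar h):=\pi_i(\mathrm{suffix}_\kappa(\bar h))$ for continuation histories $\bar h\succeq h^t$. By Step 1 and Step 2, $\sigma_i^\star$ attains $\sup_{\sigma_i\in\mathcal F_i(h^t)}V_i(\sigma_i\mid h^t;g_{-i})$, so $\sigma_i^\star\in\mathrm{BR}_i(g_{-i}\mid h^t)$. Since $\sigma_i^\star\in\mathcal F_i^\kappa(h^t)$, the two suprema coincide: the supremum over the subset is at most the full supremum, and it is attained by an element of the subset. Nonemptiness of $\mathrm{BR}_i(g_{-i}\mid h^t)\cap\mathcal F_i^\kappa(h^t)$ follows immediately.

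\textbf{Main obstacle.} The only delicate point is Step 1, namely the identification of the repeated-game continuation law with the MDP law for arbitrary history-dependent $\sigma_i$. Once that is in place, the remainder is textbook theory for finite discounted MDPs.
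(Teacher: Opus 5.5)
Your proposal is correct and follows essentially the same route as the paper: reduce to the finite-state discounted MDP on $\mathsf S_\kappa$, use the Bellman contraction, take a stationary maximizer whose policy-evaluation fixed point equals $V^\star$, and then bound arbitrary history-dependent continuation strategies from above. The only difference is in that last bound. The paper works directly in the repeated game with the statewise envelope $W_{\sigma_i}(s)$ of continuation values, shows $W_{\sigma_i}\le\mathcal T W_{\sigma_i}$, and iterates; this avoids your explicit Step~1 identification with an MDP policy and replaces your $k$-step truncation with its $\lambda_i^k$ tail.
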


\begin{proof}
Let $s_0:=\mathrm{suffix}_\kappa(h^t)\in\mathsf S_\kappa$.  Fix $g_{-i}\in\mathcal F_{-i}^\kappa$.
Define a controlled Markov process on $\mathsf S_\kappa$ as follows. In state $s$, the player
chooses $a_i\in A_i$, the opponents' joint action is drawn as
$a_{-i}\sim g_{-i}(s)\in\Delta(A_{-i})$, the stage payoff is $u_i(a_i,a_{-i})$, and the next
state is $s' = T_\kappa(s,(a_i,a_{-i}))$.

For any bounded function $v:\mathsf S_\kappa\to\mathbb R$, define the Bellman operator
$\mathcal T$ by
\[
(\mathcal T v)(s)
:=
\max_{\alpha\in\Delta(A_i)}
\mathbb E_{\substack{a_i\sim \alpha\\ a_{-i}\sim g_{-i}(s)}}
\Big[
(1-\lambda_i)\,u_i(a_i,a_{-i})
+
\lambda_i\, v\!\big(T_\kappa(s,(a_i,a_{-i}))\big)
\Big].
\]
Because $\lambda_i\in(0,1)$, $\mathcal T$ is a contraction in $\|\cdot\|_\infty$:
for any $v,w$ and any $s$,
\[
|(\mathcal T v)(s)-(\mathcal T w)(s)|
\le
\max_{\alpha}\mathbb E\big[\lambda_i\,|v(s')-w(s')|\big]
\le
\lambda_i\|v-w\|_\infty.
\]
Hence $\mathcal T$ has a unique fixed point $V^\star:\mathsf S_\kappa\to\mathbb R$.

For each $s$, the maximization over $\alpha\in\Delta(A_i)$ attains its maximum because
$\Delta(A_i)$ is compact and the objective is continuous and linear in $\alpha$.
Fix a maximizer $\pi_i(s)\in\Delta(A_i)$ for each $s$ and define the associated
\emph{policy evaluation} operator
\[
(\mathcal T_{\pi_i} v)(s)
:=
\mathbb E_{\substack{a_i\sim \pi_i(s)\\ a_{-i}\sim g_{-i}(s)}}
\Big[
(1-\lambda_i)\,u_i(a_i,a_{-i})
+
\lambda_i\, v\!\big(T_\kappa(s,(a_i,a_{-i}))\big)
\Big].
\]
Then $(\mathcal T_{\pi_i} V^\star)(s)=(\mathcal T V^\star)(s)=V^\star(s)$ for all $s$, so
$V^\star$ is a fixed point of $\mathcal T_{\pi_i}$. Since $\mathcal T_{\pi_i}$ is also a
$\lambda_i$-contraction, its fixed point is unique; denote it by $V^{\pi_i}$. We conclude
$V^{\pi_i}=V^\star$.

Now define $\sigma_i^\star$ to be the stationary Markov continuation strategy induced by
$\pi_i$, i.e.\ $\sigma_i^\star(\bar h)=\pi_i(\mathrm{suffix}_\kappa(\bar h))$ for all
$\bar h\succeq h^t$. By construction, the induced continuation value from $h^t$ is
$V_i(\sigma_i^\star\mid h^t;g_{-i})=V^\star(s_0)$.

It remains to show optimality against \emph{all} continuation strategies, including those
with unbounded memory. Let $\sigma_i$ be any continuation strategy and define its
\emph{statewise value envelope}
\[
W_{\sigma_i}(s)\ :=\ \sup\Big\{V_i(\sigma_i\mid \bar h;g_{-i}): \bar h\succeq h^t,\ 
\mathrm{suffix}_\kappa(\bar h)=s\Big\}.
\]
Fix any $s$ and $\epsilon>0$, and choose $\bar h$ with $\mathrm{suffix}_\kappa(\bar h)=s$ and
$V_i(\sigma_i\mid \bar h;g_{-i})\ge W_{\sigma_i}(s)-\epsilon$.
Let $\alpha:=\sigma_i(\bar h)\in\Delta(A_i)$ be the first-step mixed action.
Conditioning on the first joint action $(a_i,a_{-i})$ and using that the next state is
$s'=T_\kappa(s,(a_i,a_{-i}))$, we have
\begin{align*}
V_i(\sigma_i\mid \bar h;g_{-i})
&=
\mathbb E\Big[(1-\lambda_i)u_i(a_i,a_{-i})
+\lambda_i\,V_i(\sigma_i\mid (\bar h,(a_i,a_{-i}));g_{-i})\Big]\\
&\le
\mathbb E\Big[(1-\lambda_i)u_i(a_i,a_{-i})
+\lambda_i\,W_{\sigma_i}(s')\Big].
\end{align*}
Therefore,
\[
W_{\sigma_i}(s)-\epsilon
\le
\mathbb E_{\substack{a_i\sim \alpha\\ a_{-i}\sim g_{-i}(s)}}
\Big[(1-\lambda_i)u_i(a_i,a_{-i})+\lambda_i W_{\sigma_i}(T_\kappa(s,(a_i,a_{-i})))\Big]
\le
(\mathcal T W_{\sigma_i})(s).
\]
Letting $\epsilon\downarrow 0$ gives $W_{\sigma_i}\le \mathcal T W_{\sigma_i}$ pointwise.
By monotonicity of $\mathcal T$ and contraction, iterating yields
$W_{\sigma_i}\le \mathcal T^n W_{\sigma_i}$ for all $n$, and $\mathcal T^n W_{\sigma_i}\to V^\star$
uniformly as $n\to\infty$. Hence $W_{\sigma_i}(s)\le V^\star(s)$ for all $s$, and in
particular
\[
V_i(\sigma_i\mid h^t;g_{-i})\ \le\ W_{\sigma_i}(s_0)\ \le\ V^\star(s_0)
\ =\ V_i(\sigma_i^\star\mid h^t;g_{-i}).
\]
Thus $\sigma_i^\star$ is a best response. The final displayed equality of suprema follows
because an optimal policy exists within $\mathcal F_i^\kappa(h^t)$.
\end{proof}

\subsection{A checkable route to on-path elimination under bounded memory}

Assumption~\ref{ass:finite_menu_kl}(2) requires only that every wrong retained hypothesis
becomes negligible relative to the true one along the realized path. This appendix-level
condition is itself stronger than necessary if two retained labels are continuation-payoff-
equivalent from reached histories, but it is a convenient route to posterior concentration for
our sampled-label PS-BR proof. Under bounded memory, recurrent-state KL separation provides
a transparent sufficient route to this eventual on-path elimination.

\begin{lemma}[State-frequency decomposition of on-path KL averages]
\label{lem:kl_state_frequency}
Fix player $i$, $\kappa\in\mathbb N$, and $f_{-i},g_{-i}\in\mathcal F_{-i}^\kappa$.
For a realized path $z$, define $s^t(z)=\mathrm{suffix}_\kappa(h^t(z))$ and empirical state
frequencies
\[
\hat\pi_T^z(s)\ :=\ \frac{1}{T}\sum_{t=1}^T \mathbf 1\{s^t(z)=s\},\qquad s\in\mathsf S_\kappa.
\]
Then for every $T$ and every $z$,
\[
\frac{1}{T}\sum_{t=1}^T 
D_{\mathrm{KL}}\!\Big(f_{-i}(h^t(z))\ \Big\|\ g_{-i}(h^t(z))\Big)
\;=\;
\sum_{s\in\mathsf S_\kappa} \hat\pi_T^z(s)\,
D_{\mathrm{KL}}\!\Big(f_{-i}(s)\ \Big\|\ g_{-i}(s)\Big).
\]
In particular, for any fixed state $s$,
\[
\liminf_{T\to\infty}\ \frac{1}{T}\sum_{t=1}^T 
D_{\mathrm{KL}}\!\Big(f_{-i}(h^t(z))\ \Big\|\ g_{-i}(h^t(z))\Big)
\ \ge\
\Big(\liminf_{T\to\infty}\hat\pi_T^z(s)\Big)\cdot
D_{\mathrm{KL}}\!\Big(f_{-i}(s)\ \Big\|\ g_{-i}(s)\Big).
\]
\end{lemma}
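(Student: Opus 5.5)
The plan is to prove the identity by regrouping the time sum according to the memory state, and then to get the liminf bound by dropping all states but one.

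For the identity, fix $T$ and a realized path $z$. Since $f_{-i},g_{-i}\in\mathcal F_{-i}^\kappa$, Definition~\ref{def:kappa_memory_strategy} applied to each opponent component gives $f_{-i}(h^t(z))=f_{-i}(s^t(z))$ and $g_{-i}(h^t(z))=g_{-i}(s^t(z))$. Here $g_{-i}(s)$ means the common value on any history with suffix $s$, exactly as in Lemma~\ref{lem:markov_property_bounded_memory}; if $g_{-i}$ is a product of component strategies, the product distribution also depends only on $s$. Hence each summand $D_{\mathrm{KL}}(f_{-i}(h^t(z))\,\|\,g_{-i}(h^t(z)))$ equals $D_{\mathrm{KL}}(f_{-i}(s^t(z))\,\|\,g_{-i}(s^t(z)))$. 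I then insert $1=\sum_{s\in\mathsf S_\kappa}\mathbf 1\{s^t(z)=s\}$ into each term and swap the two finite sums, which is legitimate because $\mathsf S_\kappa$ is finite. The coefficient of $D_{\mathrm{KL}}(f_{-i}(s)\|g_{-i}(s))$ becomes $\frac1T\sum_{t\le T}\mathbf 1\{s^t(z)=s\}=\hat\pi_T^z(s)$, which is the claimed identity. The only care needed is the convention that KL may equal $+\infty$. Since all terms lie in $[0,\infty]$, the regrouping is valid in the extended reals as long as $0\cdot\infty=0$, and this is consistent because a state with zero frequency contributes no summands at all.

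For the liminf bound, fix $s$. Every term on the right-hand side is nonnegative, since $\hat\pi_T^z\ge 0$ and KL $\ge 0$. So the sum is at least the single term $\hat\pi_T^z(s)\,D_{\mathrm{KL}}(f_{-i}(s)\|g_{-i}(s))$ for every $T$. Taking $\liminf_{T\to\infty}$ and pulling out the constant factor gives the result. If the KL constant is $+\infty$ and $\liminf\hat\pi_T^z(s)>0$, then infinitely many terms are infinite and the statement holds trivially. If $\liminf\hat\pi_T^z(s)=0$, the right side is $0$ under the stated convention.

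There is no real obstacle. The only delicate points are, first, the well-definedness of $f_{-i}(s)$ for suffix states, including short suffixes at early times $t\le\kappa$, which the suffix definition already handles, and second, the extended-real arithmetic when KL is infinite.
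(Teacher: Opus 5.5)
Your proposal is correct and follows the paper's own proof essentially verbatim. You use the $\kappa$-memory property to replace $f_{-i}(h^t(z))$ and $g_{-i}(h^t(z))$ by their values at $s^t(z)$, regroup the finite time sum by state value to get the identity, and then bound the nonnegative sum below by a single state's term before taking $\liminf$. Your extra care about the $+\infty$ case of KL (with the convention $0\cdot\infty=0$) and about well-definedness at short suffixes is a harmless refinement that the paper leaves implicit.
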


\begin{proof}
If $f_{-i},g_{-i}\in\mathcal F_{-i}^\kappa$, then for each $t$ we have
$f_{-i}(h^t(z))=f_{-i}(s^t(z))$ and $g_{-i}(h^t(z))=g_{-i}(s^t(z))$ by
Definition~\ref{def:kappa_memory_strategy}. Therefore,
\[
\frac{1}{T}\sum_{t=1}^T 
D_{\mathrm{KL}}\!\Big(f_{-i}(h^t(z))\ \Big\|\ g_{-i}(h^t(z))\Big)
=
\frac{1}{T}\sum_{t=1}^T 
D_{\mathrm{KL}}\!\Big(f_{-i}(s^t(z))\ \Big\|\ g_{-i}(s^t(z))\Big).
\]
Grouping the sum by the value of $s^t(z)$ yields the stated decomposition. The inequality
follows by lower bounding the sum by a single state's contribution and taking $\liminf$.
\end{proof}

\begin{corollary}[A sufficient condition for on-path elimination under bounded memory]
\label{cor:bounded_memory_implies_kl_sep}
Fix player $i$ and suppose $\mathcal S_{-i}\subseteq \mathcal F_{-i}^\kappa$.
Fix $g_{-i}\in\mathcal S_{-i}\setminus\{f_{-i}\}$ and define
\[
\tau_i(g_{-i})
:=
\inf\{t\ge 1:\ g_{-i}(h^t)(a_{-i}^t)=0\},
\]
with the convention $\inf\emptyset=\infty$.
Assume that $\mu^f$-a.s.\ one of the following holds:
\begin{enumerate}[label=(\alph*),leftmargin=*]
\item \textit{(Hard refutation)} $\tau_i(g_{-i})<\infty$.
\item \textit{(Common-support recurrent separation)} On the survival event $\{\tau_i(g_{-i})=\infty\}$,
common support holds on the visited suffix states, and there exists
$s\in\mathsf S_\kappa$ such that
\[
D_{\mathrm{KL}}\!\bigl(f_{-i}(s)\|g_{-i}(s)\bigr)>0,
\qquad
\liminf_{T\to\infty}\hat\pi_T^z(s)\ge \rho_i(g_{-i})>0.
\]
\end{enumerate}
Then Assumption~\ref{ass:finite_menu_kl}(2) holds for this $g_{-i}$.
\end{corollary}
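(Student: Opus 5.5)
We need to show that the likelihood ratio $L_{i,T}(g_{-i};z)/L_{i,T}(f_{-i};z)\to 0$ holds $\mu^f$-a.s. We split on the two cases (a) and (b), since by hypothesis almost every path falls into one of them.

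\textbf{Case (a): hard refutation.} On $\{\tau_i(g_{-i})<\infty\}$, the factor $g_{-i}(h^{\tau}(z))(a_{-i}^{\tau}(z))$ vanishes. By the absorbing convention in Assumption~\ref{ass:finite_menu_kl}, $L_{i,T}(g_{-i};z)=0$ for all $T>\tau$. Meanwhile $L_{i,T}(f_{-i};z)>0$ $\mu^f$-a.s., because the realized opponent actions are drawn from $f_{-i}$ and so have positive $f_{-i}$-probability almost surely. The ratio is therefore eventually zero, exactly as in the proof of Lemma~\ref{lem:det_menu_sep_implies_finite_menu_kl}.

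\textbf{Case (b): common-support recurrent separation.} On the survival event, we write
\[
\log\frac{L_{i,T}(f_{-i};z)}{L_{i,T}(g_{-i};z)}=\sum_{t=1}^{T-1} X_t,\qquad X_t:=\log\frac{f_{-i}(h^t)(a_{-i}^t)}{g_{-i}(h^t)(a_{-i}^t)}.
\]
Conditional on $h^t$, $X_t$ has mean $D_{\mathrm{KL}}(f_{-i}(s^t)\|g_{-i}(s^t))$. Since both strategies have bounded memory and $A_{-i}$ is finite, only finitely many pairs $(f_{-i}(s),g_{-i}(s))$ occur. Hence on visited states with common support, $X_t$ is bounded by a constant $C=\max_s\max_{a:\,f_{-i}(s)(a)>0}|\log(f_{-i}(s)(a)/g_{-i}(s)(a))|<\infty$.

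We then decompose $X_t = D_{\mathrm{KL}}(f_{-i}(s^t)\|g_{-i}(s^t)) + Y_t$. Here $Y_t$ is a bounded martingale difference under $\mu^f$ with respect to the history filtration. The martingale strong law (as used in the proof of Lemma~\ref{lem:gaussian_payoff_menu_sep_private_implies_payoff_menu_sep_private}) gives $\frac1T\sum Y_t\to 0$ a.s. By Lemma~\ref{lem:kl_state_frequency}, the averaged KL term has $\liminf$ at least $\rho_i(g_{-i})\,D_{\mathrm{KL}}(f_{-i}(s)\|g_{-i}(s))>0$. Therefore $\liminf_T \frac1T\log(L_{i,T}(f_{-i})/L_{i,T}(g_{-i}))>0$, so the log-likelihood ratio diverges to $+\infty$ linearly and the ratio $L_{i,T}(g_{-i})/L_{i,T}(f_{-i})\to 0$.

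\textbf{Main obstacle.} The delicate step is the boundedness of $X_t$ in case (b), which is needed for the martingale law. Common support is assumed only on visited states, and on the survival event $g_{-i}$ never assigns zero probability to a realized action. So the realized increments always use finite log-ratios. To be careful, I would define $Y_t$ only over actions in the support of $f_{-i}(s^t)$, which carries $\mu^f$-probability one. I would also verify that the conditional expectation of $X_t$ equals the KL divergence given $h^t$; this uses that the true law of $a_{-i}^t$ given $h^t$ is $f_{-i}(s^t)$, by Lemma~\ref{lem:markov_property_bounded_memory}. Finally, we intersect the full-measure events from (a) and (b), whose union covers almost every path by hypothesis, which completes Assumption~\ref{ass:finite_menu_kl}(2).
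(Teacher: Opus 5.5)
Your proposal is correct and follows essentially the same route as the paper's proof. In case (a) the $g_{-i}$-likelihood is absorbed at zero while the $f_{-i}$-likelihood stays positive; your almost-sure justification of the latter is slightly more careful than the paper, which simply asserts it. In case (b) the log-likelihood ratio is split into the conditional KL term plus a bounded martingale difference, and the martingale strong law with Lemma~\ref{lem:kl_state_frequency} gives linear divergence.

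Two minor tightenings are worth making. First, your constant $C$ should be a maximum only over states $s$ with $\mathrm{supp}\, f_{-i}(s)\subseteq \mathrm{supp}\, g_{-i}(s)$. Multiplying $X_t$ by the history-measurable indicator of that inclusion at $s^t$ then gives a globally bounded martingale difference that coincides with yours on the event in (b). Second, the conditional law of $a_{-i}^t$ given $h^t$ follows directly from Definition~\ref{def:path_distribution} and the bounded memory of $f_{-i}$. You therefore do not need Lemma~\ref{lem:markov_property_bounded_memory}, whose hypothesis requires the whole profile $f$ to have bounded memory.
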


\begin{proof}
If case (a) holds, then along any such path $z$ the denominator likelihood
$L_{i,T}(g_{-i};z)$ becomes zero from time $\tau_i(g_{-i})(z)+1$ onward while
$L_{i,T}(f_{-i};z)>0$, so
\[
\frac{L_{i,T}(g_{-i};z)}{L_{i,T}(f_{-i};z)}=0
\]
for all sufficiently large $T$. Hence Assumption~\ref{ass:finite_menu_kl}(2) is immediate.

Now consider case (b). On the survival event $\{\tau_i(g_{-i})=\infty\}$, define
\[
X_t(z):=\log\frac{f_{-i}(h^t(z))(a_{-i}^t(z))}{g_{-i}(h^t(z))(a_{-i}^t(z))}.
\]
Because the menu and suffix-state space are finite and common support holds on the visited
states, the positive probabilities that can appear in the denominator are bounded away from
zero on this event. Thus $(X_t)$ is uniformly bounded there. Let
\[
Y_t:=X_t-\mathbb E_{\mu^f}[X_t\mid \mathcal F_t].
\]
Then $(Y_t,\mathcal F_t)$ is a bounded martingale-difference sequence, so by the martingale
strong law,
\[
\frac{1}{T}\sum_{t=1}^T Y_t\longrightarrow 0
\qquad\text{$\mu^f$-a.s. on }\{\tau_i(g_{-i})=\infty\}.
\]
Moreover, on the survival event,
\[
\mathbb E_{\mu^f}[X_t\mid \mathcal F_t]
=
D_{\mathrm{KL}}\!\Big(f_{-i}(h^t)\ \Big\|\ g_{-i}(h^t)\Big).
\]
Therefore,
\[
\frac{1}{T}\log\frac{L_{i,T}(f_{-i};z)}{L_{i,T}(g_{-i};z)}
=
\frac{1}{T}\sum_{t=1}^{T-1}X_t
=
\frac{1}{T}\sum_{t=1}^{T-1}D_{\mathrm{KL}}\!\Big(f_{-i}(h^t)\ \Big\|\ g_{-i}(h^t)\Big)+o(1).
\]
Lemma~\ref{lem:kl_state_frequency} and the positive-frequency assumption on state $s$ imply
\[
\liminf_{T\to\infty}\frac{1}{T}\log\frac{L_{i,T}(f_{-i};z)}{L_{i,T}(g_{-i};z)}
\ge
\rho_i(g_{-i})\cdot D_{\mathrm{KL}}\!\bigl(f_{-i}(s)\|g_{-i}(s)\bigr)
>0
\]
$\mu^f$-a.s.\ on $\{\tau_i(g_{-i})=\infty\}$.
Hence
\[
\log\frac{L_{i,T}(g_{-i};z)}{L_{i,T}(f_{-i};z)}\longrightarrow -\infty,
\qquad\text{so}\qquad
\frac{L_{i,T}(g_{-i};z)}{L_{i,T}(f_{-i};z)}\longrightarrow 0
\]
$\mu^f$-a.s.\ on $\{\tau_i(g_{-i})=\infty\}$.
Therefore Assumption~\ref{ass:finite_menu_kl}(2) holds in case (b) as well.
\end{proof}

All statements in Sections~\ref{sec:rr}--\ref{sec:zero_shot} are formulated on the full
history space $H$ and therefore apply without modification when the realized profile $f$ (and/or
the menu strategies in the appendix-level retained-menu identification condition,
Assumption~\ref{ass:finite_menu_kl}) lie in $\mathcal F^\kappa$.
Relative to the main-text concentration-up-to-equivalence assumption
(Assumption~\ref{ass:equiv_class_conc}), the appendix records stronger sufficient retained-menu
identification conditions (Assumption~\ref{ass:finite_menu_kl} and the deterministic hard-refutation condition
Assumption~\ref{ass:det_menu_sep_app}) and adds two finite-state tools:
(i) best responses to $\kappa$-memory opponents can be taken to be stationary Markov
(Lemma~\ref{lem:markov_best_response}); and
(ii) on-path elimination can be verified by recurrent-state KL separation under common support or by finite-time hard refutation
(Lemma~\ref{lem:kl_state_frequency} and Corollary~\ref{cor:bounded_memory_implies_kl_sep}).
Thus bounded memory is not required for the main concentration result, but it gives a transparent finite-state route for checking stronger sufficient conditions for the appendix-level PS-BR identification argument.

\clearpage

\clearpage

\section{Implementation details of the strategy-level PS-BR planner}
\label{app:impl_strategy_psbr}

This appendix details the implementation used in our experiments. At each round, an agent samples a latent opponent strategy from its inference based on the previous history, evaluates candidate self-strategies by rollout, and plays the current action
induced by the best rollout-value strategy.

\subsection{Opponent strategy sampling}

Fix player $i$ at round $t$ with local history
$h_i^t=((a_i^1,a_{-i}^1),\ldots,(a_i^{t-1},a_{-i}^{t-1}))$.
For opponent-strategy inference, the implementation rewrites this to the opponent-view history
\[
\tilde h_{-i}^t=((a_{-i}^1,a_i^1),\ldots,(a_{-i}^{t-1},a_i^{t-1})),
\]
so each tuple is ordered as \emph{(opponent action, your action)}.
The opponent strategy inference is performed \emph{once per real decision round} (with configured label-sampling temperature) and then held
fixed across all $K$ rollout samples used to evaluate candidate self-strategies at that round.
Inference supports two modes:
\begin{itemize}[leftmargin=*]
    \item \textbf{\texttt{llm-label} (default):}
    construct an in-context prompt containing the game rules, observed history, and the
    allowed strategy labels (with short descriptions), then ask the model to output
    \emph{exactly one label}. Parsing is label-constrained; if parsing fails repeatedly, a
    deterministic label fallback is used.
    \item \textbf{\texttt{likelihood}:}
    infer from a hand-coded likelihood over the menu (described below), with no model call.
\end{itemize}

\paragraph{\texttt{llm-label} mode details.}

In \texttt{llm-label} mode, if the model call itself fails, the implementation falls back to
\texttt{likelihood} mode for that decision round.

The template used in code is:
\begin{quote}
\small
\begin{verbatim}
{rules_text}
Observed action history tuple format: (opponent action, your action).
Infer the opponent strategy from the FIRST action in each tuple.
Round 1: {opp_action_1}, {self_action_1}
Round 2: {opp_action_2}, {self_action_2}
...

You are inferring the opponent strategy in repeated {game_name}.
Observed rounds so far: {observed_rounds}.
Objective: sample one opponent strategy label according to your
posterior belief over allowed labels.
Estimate that posterior using ALL observed rounds 
(do not ignore older rounds), and focus on recent patterns.
The opponent may change strategy over time; if you detect a shift, 
prioritize the most recent consistent behavior while still 
accounting for earlier rounds.
Internally assign a compatibility score from 0 to 100 to every 
allowed label, convert them into relative posterior weights, and
sample exactly one final label from those weights.
Output rule: do NOT output scores, reasoning, or ranking.
Respond with exactly one label only.

**Output only the label.**

Allowed labels:
- {label_1}: {description_1}
- {label_2}: {description_2}
...
\end{verbatim}
\end{quote}
where \texttt{game\_name} is the active repeated-game name (e.g., BoS, PD,
Promo, Samaritan's dilemma, or Lemons), and
\texttt{observed\_rounds}=t-1.

When collusive-prior guidance is enabled (\texttt{--collusive-mode}), the prompt appends a strong-prior line. In our code this prior is \texttt{mad0} for Promo opponent 1 and \texttt{mad1} for Promo opponent 2.

\paragraph{\texttt{Likelihood}-mode details.}
To score strategy $s$, the implementation evaluates history under the
opponent's perspective
$\tilde h_{-i}^t=((a_{-i}^1,a_i^1),\ldots,(a_{-i}^{t-1},a_i^{t-1}))$:
\[
\log L_t(s)
=
\sum_{u=1}^{t-1}
\log\!\left(
\mathbf{1}\{a_{-i}^u=J\}p_s^u
+ \mathbf{1}\{a_{-i}^u=F\}(1-p_s^u)
\right),
\]
with clipping to $[10^{-6},1-10^{-6}]$ for numerical stability.
Given temperature $\tau>0$ (implemented as $\tau=\max\{\texttt{sample\_temperature},10^{-5}\}$),
weights are
\[
w_t(s)\propto \exp\!\left(\frac{\log L_t(s)}{\tau}\right),
\]
and one opponent strategy is sampled from this categorical distribution.

\subsection{Rollout value and strategy selection}

Given a sampled opponent strategy $s_{-i}$, for every candidate self-strategy
$s_i\in M_g$, the planner rolls out from round $t$ to $\bar t$, where
\[
\bar t=
\begin{cases}
\min\{T,\ t+H-1\}, & H>0,\\
T, & H=0,
\end{cases}
\]
$T$ is the game horizon, and $H$ is the planning horizon.

For rollout sample $m\in\{1,\dots,K\}$, at each simulated round $r$, actions are sampled
from the fixed opponent strategy $s_{-i}$ and the currently evaluated candidate $s_i$:
\[
\hat a_i^{r,m}\sim \mathrm{Bernoulli}\!\left(p_{s_i}^r\right),
\qquad
\hat a_{-i}^{r,m}\sim \mathrm{Bernoulli}\!\left(p_{s_{-i}}^r\right),
\]
where $p_{s_i}^r$ and $p_{s_{-i}}^r$ are the round-$r$ probabilities of action $J$ induced
by $s_i$ and $s_{-i}$ under the simulated history prefix generated so far. The rollout value
for candidate $s_i$ against sampled opponent strategy $s_{-i}$ is
\[
V_i^{(m)}(s_i\mid s_{-i})
=
\sum_{r=t}^{\bar t}
\gamma^{\,r-t}
u_i(\hat a_i^{r,m},\hat a_{-i}^{r,m}),
\]
with discount $\gamma$.

The estimated value of strategy $s_i$ is
\[
\bar V_i(s_i\mid s_{-i})=\frac{1}{K}\sum_{m=1}^K V_i^{(m)}(s_i\mid s_{-i}),
\]
and the chosen strategy is
\[
s_i^\star \in \arg\max_{s_i} \bar V_i(s_i\mid s_{-i}),
\]
with deterministic hash-based tie-breaking when needed.
The executed action at real round $t$ is then sampled from $s_i^\star$ at the current history.

\begin{algorithm}[t]
\caption{Strategy-level PS-BR loop for two-player games}
\label{alg:strategy_psbr_loop}
\begin{algorithmic}[1]
\Require game $g$, total rounds $T$, menu $M_g$, samples $K$, horizon $H$, discount $\gamma$, temperature $\tau$, inference mode $\in\{\texttt{llm-label},\texttt{likelihood}\}$
\State Initialize $h^1\gets\emptyset$, $x_1^1\gets(h^1,\emptyset)$, $x_2^1\gets(h^1,\emptyset)$, $C_1\gets0$, and $C_2\gets0$
\For{$t=1,\dots,T$}
    \For{$i\in\{1,2\}$}
        \State Let $x_i^t=(h^t,r_i^{1:t-1})$ be player $i$'s current local history
        \State Construct opponent-view history $\tilde h_{-i}^t$ by swapping tuple order in the public history $h^t$
        \State Infer one strategy label $s_{-i}\in M_g$ from rules, history $\tilde h_{-i}^t$
        \ForAll{$s_i\in M_g$}
            \For{$k=1,\dots,K$}
                \State $V_i^{(k)}(s_i\mid s_{-i})\gets \mathrm{RolloutValue}(g,i,s_i,s_{-i},x_i^t,t,T,H,\gamma)$
            \EndFor
            \State $\bar V_i(s_i\mid s_{-i})\gets \frac{1}{K}\sum_{k=1}^K V_i^{(k)}(s_i\mid s_{-i})$
        \EndFor
        \State $s_i^\star\gets \arg\max_{s_i\in M_g}\bar V_i(s_i\mid s_{-i})$ \Comment{deterministic tie-break}
        \State Sample real action $a_i^t$ from strategy $s_i^\star$ at history $x_i^t$
    \EndFor
    \State Sample realized rewards $(r_1^t,r_2^t)$ from the environment payoff law at $(a_1^t,a_2^t)$
    \State $C_1\gets C_1+r_1^t$ and $C_2\gets C_2+r_2^t$
    \State Set $h^{t+1}\gets(h^t,(a_1^t,a_2^t))$
    \State Set $x_1^{t+1}\gets(h^{t+1},r_1^{1:t})$ and $x_2^{t+1}\gets(h^{t+1},r_2^{1:t})$
\EndFor
\end{algorithmic}
\end{algorithm}

For Experiment~3, the environment payoff law in Algorithm~\ref{alg:strategy_psbr_loop} is the known Gaussian noise family centered at the true mean matrix. On the player's own side, player $i$ additionally samples $\tilde m_i\sim\pi_i^t(\cdot\mid x_i^t)$, rollout values are computed under $\tilde m_i$ in place of the true $u_i$, and player $i$'s local information history stores only $(h^t,r_i^{1:t-1})$; in particular, the update step above never reveals or conditions on $r_{-i}^{1:t-1}$.

\clearpage

\section{Social chain-of-thought prompting (SCoT)}
\label{app:scot}

This appendix shows that SCoT can be viewed as a special case of PS-BR.
The \emph{social chain-of-thought} prompting intervention of \citet{akata2025playing}
is a particularly simple two-stage ``predict-then-act'' instance of PS-BR.

\subsection{SCoT as a two-stage ``predict-then-act'' operator}

In \citet{akata2025playing}, SCoT is implemented by \emph{prompt-chaining} in each
round of a repeated game:
\begin{enumerate}[leftmargin=*]
    \item \textit{Prediction prompt (belief elicitation).} Given the public history $h^t$, the model is asked
    to predict the opponent's next move (or, more generally, to describe what the other player will do next).
    \item \textit{Action prompt (best response to the elicited belief).} The model is then asked to choose its
    action given the predicted opponent move, typically phrased as ``given your prediction, what is best for you to do now?''
\end{enumerate}
This ``separate belief report, then act'' structure forces an explicit theory-of-mind step before action
selection, and empirically improves coordination in some repeated games.

\subsection{Mapping SCoT as a special case of PS-BR}

Fix agent $i$ at history $h^t$. Let $A_{-i}$ denote the opponents' joint action space, and define the
agent's \emph{posterior predictive} over opponents' next action as
\[
q_i^t(\cdot \mid h^t) \in \Delta(A_{-i}).
\]
In our paper's belief language, $q_i^t(\cdot\mid h^t)$ is the one-step marginal induced by the agent's
posterior predictive continuation belief $f_{-i}^{i,t}|_{h^t}$.

SCoT can then be expressed as the following generic operator:
\begin{enumerate}[leftmargin=*]
    \item \textit{Inference:} produce $\tilde a_{-i}^t$ as an imputation of the missing opponents' next action.
    Operationally, this is obtained by querying the model with the prediction prompt.
    \item \textit{Optimize given the imputation:} choose $a_i^t$ as an (approximate) best response to the
    imputed $\tilde a_{-i}^t$ (and the known payoffs), e.g.
    \[
    a_i^t \in \arg\max_{a_i \in A_i} u_i(a_i,\tilde a_{-i}^t)
    \quad\text{(myopic)}.
    \]
    More generally, one may replace $u_i$ by the continuation objective, i.e., choose $a_i^t$ (or a continuation strategy)
    that maximizes the discounted value conditional on $\tilde a_{-i}^t$ and the induced continuation play.
\end{enumerate}

Two special cases are worth separating because they clarify the relationship to PS-BR.

\paragraph{(i) Deterministic SCoT = point estimation.}
In the implementation studied by \citet{akata2025playing}, the model is often run in
a near-deterministic regime (e.g., decoding choices consistent with temperature $\approx 0$), so the prediction
step behaves like a point estimate (roughly ``MAP'' under the model's implicit predictive distribution). In this
view, SCoT is an inference-and-optimize heuristic that can still improve play by making the model's implicit prediction problem explicit.

\paragraph{(ii) Myopic PS-BR = sampling-based estimation.}
If instead the prediction prompt is decoded stochastically (e.g., sampling at nonzero temperature),
then $\tilde a_{-i}^t$ becomes a draw from the model's own predictive distribution:
\[
\tilde a_{-i}^t \sim q_i^t(\cdot\mid h^t).
\]

\section{Prompts}
\label{app:prompts}

\subsection{\textsc{Base} prompts}
\label{app:prompts_base}

In \textsc{Base}, each player's round-$t$ prompt is the direct action query augmented with the same strategy-label context used elsewhere:
\[
\text{rules text} + \text{compact history} + \text{``You are currently playing round }t\text{''} + \text{action query} + \text{strategy-label context}.
\]
The compact history prefix used in code is:
\begin{quote}
\small
\begin{verbatim}
Observed action history (your action, opponent action):
Round 1: <self_1>, <opp_1>
...
Round t-1: <self_{t-1}>, <opp_{t-1}>
\end{verbatim}
\end{quote}

\paragraph{Round-level action query templates (Base).}
\begin{itemize}[leftmargin=*]
    \item BoS:
\begin{quote}
\small
\begin{verbatim}
Q: Which Option do you choose,  J or  F?
A:
\end{verbatim}
\end{quote}
    \item PD (order randomized each round):
\begin{quote}
\small
\begin{verbatim}
Q: Which action do you choose, J or F?
A:
\end{verbatim}
\end{quote}
    \item Harmony:
\begin{quote}
\small
\begin{verbatim}
Q: Which action do you choose, C or D?
A:
\end{verbatim}
\end{quote}
    \item Promo:
\begin{quote}
\small
\begin{verbatim}
Q: Which action do you choose, R, P, or Z?
A:
\end{verbatim}
\end{quote}
    \item Samaritan (Helper prompt):
\begin{quote}
\small
\begin{verbatim}
Q: Which action do you choose, H or N?
A:
\end{verbatim}
\end{quote}
    \item Samaritan (Recipient prompt):
\begin{quote}
\small
\begin{verbatim}
Q: Which action do you choose, W or S?
A:
\end{verbatim}
\end{quote}
    \item Lemons (Seller prompt):
\begin{quote}
\small
\begin{verbatim}
Q: Which action do you choose, HQ or LQ?
A:
\end{verbatim}
\end{quote}
    \item Lemons (Buyer prompt):
\begin{quote}
\small
\begin{verbatim}
Q: Which action do you choose,  B or  D?
A:
\end{verbatim}
\end{quote}
\end{itemize}

Before the final ``A:'' token, code injects the following strategy-context block (same helper used in \textsc{Base} and the \textsc{SCoT} prediction prompt):
\begin{quote}
\small
\begin{verbatim}
In repeated <GameName>, a strategy maps prior history to a player's next action
(possibly probabilistically).
Allowed strategies:
- <label_1>: <short description>
- ...

Role mapping in this prompt:
- Player A is the other player.
- Player B is you.
Observed rounds so far: <t-1>.
Context: full history prefix up to round <t-1>.
Strongly expect Player A to play with strategy '<prior_label>'.   [if available]
Allowed action tokens: <tokens>.                                  [if available]
Output rule: do NOT output scores, reasoning, or ranking.
Respond with exactly one action only.
\end{verbatim}
\end{quote}

\subsection{\textsc{SCoT} prompts}
\label{app:prompts_scot}

\textsc{SCoT} uses two prompts per player per round. The Stage-1 prediction prompt receives the same strategy-label context block as \textsc{Base}; the Stage-2 action prompt then conditions on the elicited prediction.

\paragraph{Stage 1 (prediction prompt).}
The prediction queries are:
\begin{itemize}[leftmargin=*]
    \item BoS:
\begin{quote}
\small
\begin{verbatim}
Q: Which action do you predict the other player will choose, J or F?
A:
\end{verbatim}
\end{quote}
    \item PD (order randomized each round):
\begin{quote}
\small
\begin{verbatim}
Q: Which action do you predict the other player will choose, J or F?
A:
\end{verbatim}
\end{quote}
    \item Harmony:
\begin{quote}
\small
\begin{verbatim}
Q: Which action do you predict the other player will choose, C or D?
A:
\end{verbatim}
\end{quote}
    \item Promo:
\begin{quote}
\small
\begin{verbatim}
Q: Which action do you predict the other player will choose, R, P, or Z?
A:
\end{verbatim}
\end{quote}
    \item Samaritan (Helper predicts Recipient):
\begin{quote}
\small
\begin{verbatim}
Q: Which action do you predict the other player will choose, W or S?
A:
\end{verbatim}
\end{quote}
    \item Samaritan (Recipient predicts Helper):
\begin{quote}
\small
\begin{verbatim}
Q: Which action do you predict the other player will choose, action H or action N?
A:
\end{verbatim}
\end{quote}
    \item Lemons (Seller predicts Buyer):
\begin{quote}
\small
\begin{verbatim}
Q: Which Option do you predict the other player will choose, Option B or Option D?
A:
\end{verbatim}
\end{quote}
    \item Lemons (Buyer predicts Seller):
\begin{quote}
\small
\begin{verbatim}
Q: Which Option do you predict the other player will choose, Option HQ or Option LQ?
A:
\end{verbatim}
\end{quote}
\end{itemize}

As implemented, the Stage-1 prediction prompt is enriched with the same strategy-context block shown above.

\paragraph{Stage 2 (action prompt conditioned on Stage-1 prediction).}
After receiving prediction \texttt{<PRED>}, code uses:
\begin{itemize}[leftmargin=*]
    \item BoS:
\begin{quote}
\small
\begin{verbatim}
Q: Given that you think the other player will choose Option <PRED> in round <t>,
imagine the outcome for both of your possible actions (Option J and Option F),
compare which gives you a better result, and then choose.
Which Option do you think is the best to choose for you in this round, Option J or Option F?
Output only one letter: J or F.
A:
\end{verbatim}
\end{quote}
    \item PD (with randomized \texttt{<opt1>, <opt2>}):
\begin{quote}
\small
\begin{verbatim}
Q: Given that you think the other player will choose Option <PRED> in round <t>,
imagine the outcome for both of your possible actions (Option <opt1> and Option <opt2>),
compare which gives you a better result, and then choose.
Which Option do you think is the best to choose for you in this round, Option <opt1> or Option <opt2>?
Output only one letter: J or F.
A:
\end{verbatim}
\end{quote}
    \item Harmony:
\begin{quote}
\small
\begin{verbatim}
Q: Given that you think the other player will choose <PRED> in round <t>,
imagine the outcome for both of your possible actions (C and D),
compare which gives you a better result, and then choose.
Which action do you think is best for you in this round, C or D?
Output only one action: C or D.
A:
\end{verbatim}
\end{quote}
    \item Promo:
\begin{quote}
\small
\begin{verbatim}
Q: Given that you think the other player will choose <PRED> in round <t>,
imagine the outcome for your possible actions (R, P, and Z),
compare which gives you a better result, and then choose.
Which action do you think is best for you in this round, R, P, or Z?
Output only one action: R, P, or Z.
A:
\end{verbatim}
\end{quote}
    \item Samaritan (Helper):
\begin{quote}
\small
\begin{verbatim}
Q: Given that you think the other player will choose Option <PRED> in round <t>,
imagine the outcome for both of your possible actions (Option H and Option N),
compare which gives you a better result, and then choose.
Which Option do you think is best to choose for you in this round, Option H or Option N?
Output only one letter: H or N.
A:
\end{verbatim}
\end{quote}
    \item Samaritan (Recipient):
\begin{quote}
\small
\begin{verbatim}
Q: Given that you think the other player will choose Option <PRED> in round <t>,
imagine the outcome for both of your possible actions (Option W and Option S),
compare which gives you a better result, and then choose.
Which Option do you think is best to choose for you in this round, Option W or Option S?
Output only one letter: W or S.
A:
\end{verbatim}
\end{quote}
    \item Lemons (Seller):
\begin{quote}
\small
\begin{verbatim}
Q: Given that you think the other player will choose Option <PRED> in round <t>,
imagine the outcome for both of your possible actions (Option HQ and Option LQ),
compare which gives you a better result, and then choose.
Which Option do you think is best to choose for you in this round, Option HQ or Option LQ?
Output only one letter: HQ or LQ.
A:
\end{verbatim}
\end{quote}
    \item Lemons (Buyer):
\begin{quote}
\small
\begin{verbatim}
Q: Given that you think the other player will choose Option <PRED> in round <t>,
imagine the outcome for both of your possible actions (Option B and Option D),
compare which gives you a better result, and then choose.
Which Option do you think is best to choose for you in this round, Option B or Option D?
Output only one letter: B or D.
A:
\end{verbatim}
\end{quote}
\end{itemize}

\subsection{\textsc{PS-BR} prompts for known deterministic payoffs}
\label{app:prompts_psbr_known}

\textsc{PS-BR} does not query the LLM for direct action choice. Actions are produced by rollout-based strategy evaluation after sampling one opponent strategy per round. The prompt-facing LLM call is for \emph{opponent strategy-label inference} in \texttt{llm-label} mode.

\paragraph{Opponent strategy inference prompt (\texttt{llm-label}).}
At round $t$, for player $i$, history is rewritten to opponent view
\[
\tilde h_{-i}^t=((a_{-i}^1,a_i^1),\ldots,(a_{-i}^{t-1},a_i^{t-1})),
\]
so tuples are \texttt{(Player A action, Player B action)} with:
\begin{itemize}[leftmargin=*]
    \item Player A = opponent whose strategy is inferred.
    \item Player B = current decision-maker.
\end{itemize}
The prompt template is:
\begin{quote}
\small
\begin{verbatim}
You are inferring Player A's strategy (the opponent) in repeated <GameName>.
In a repeated-game setting, a strategy is a rule that maps prior history to the
player's next action (possibly probabilistically).
<rules_text>
Observed rounds so far: <t-1>.

Allowed labels:
- <label_1>: <description_1>
- ...

Observed action history tuple format: (Player A action, Player B action).
Player A is the opponent whose strategy label you must infer.
Player B is you (the decision-maker).
Context: full history prefix up to round <...>.             
Target: observed Player A action at round <...>.         
Choose the allowed label that makes this observed Player A target most compatible
with the context.
At round <...>, use this mapping:
Context history as (Player A, Player B), rounds <...>:
round <k>: Player A=<...>, Player B=<...>
Observed target Player A action at round <...>: <...>
Strongly expect Player A to play with strategy '<prior_label>'.  
Player A's strategy may have changed over time, so weigh recent rounds more heavily
than earlier rounds.           
Output rule: do NOT output scores, reasoning, or ranking.
Respond with exactly one label only.

**Output only the label.**
\end{verbatim}
\end{quote}

\paragraph{Likelihood mode (no prompt).}
If \texttt{--strategy-inference likelihood} is used, no LLM prompt is issued for strategy inference; the label is sampled from a hand-coded likelihood over the finite menu.

\subsection{\textsc{PS-BR} prompts for unknown stochastic payoffs}
\label{app:prompts_psbr_unknown}

Under the theorem-aligned implementation used for Experiment~3, \textsc{PS-BR} under
unknown stochastic payoffs still samples both an opponent strategy hypothesis and a
payoff hypothesis at each round before rollout-based strategy evaluation. The
opponent-strategy side is handled exactly as in the known deterministic-payoff case.
The payoff side is not open-ended JSON inference. Instead, Experiment~3 uses the
known-common-noise / unknown-mean construction from
Section~\ref{sec:unknown_payoffs}: player $i$
maintains a posterior over a finite menu $\mathcal M_{i,g}$ of candidate mean payoff
matrices under the Gaussian noise family with known variance $\sigma_g^2$.

\paragraph{Opponent strategy inference prompt (\texttt{llm-label}).}
The opponent strategy is inferred from the joint action history, exactly as in the known deterministic payoffs case. The prompt template remains identical to the one detailed in the previous subsection.

\paragraph{Finite-menu Gaussian payoff posterior (experiment configuration).}
At round $t$, player $i$ updates
\[
\pi_i^t(m\mid h^t,r_i^{1:t-1})
\propto
\pi_i^0(m)\prod_{s=1}^{t-1}\phi(r_i^s;m(a^s),\sigma_g^2),
\qquad
m\in\mathcal M_{i,g},
\]
where $\phi(\cdot;\mu,\sigma_g^2)$ is the Gaussian density and
$r_i^s\mid a^s\sim\mathcal N(m(a^s),\sigma_g^2)$ under candidate mean matrix $m$.
The implementation then samples one matrix label $\tilde m_i\sim \pi_i^t$ and evaluates
continuation strategies against the induced payoff kernel
\[
q_i^{\tilde m_i}(\cdot\mid a)=\mathcal N(\tilde m_i(a),\sigma_g^2).
\]

\paragraph{Product structure of the menu.}
Although the theorem-level menu $\mathcal M_{i,g}$ is finite but large, it has product
form over joint actions. With a product prior over the offsets $(k_a)_{a\in A}$ and the
Gaussian likelihood above, the posterior factorizes by joint action. Operationally, the
implementation therefore updates the discrete posterior for each action-specific offset
$k_a\in K$ separately and samples a full mean matrix by drawing one offset for each
joint action. This is exactly equivalent to sampling from the full finite menu, without
explicitly enumerating all of its elements.

\paragraph{Likelihood mode (experiment configuration).}
In the reported Experiment~3 runs, \texttt{--payoff-inference likelihood} is used. No
LLM prompt is issued for payoff inference; the sampled mean-matrix label is drawn from
the Gaussian posterior above. Opponent strategy inference is handled either by the
\texttt{llm-label} prompt described above or by the corresponding likelihood mode,
depending on the strategy-inference setting.

\paragraph{Heuristic prompt mode.}
An open-ended \texttt{json} payoff-table prompt can still be used as a heuristic
variant, but it is not the theorem-aligned implementation analyzed in
Section~\ref{sec:unknown_payoffs} and instantiated in Experiment~3.

\section{Computation of minimum continuation probabilities}\label{sec:deltaCalculation}

In Section \ref{sec:experiments}, our repeated treatments follow the standard random-termination implementation to model infinitely repeated game behavior: after each round, the supergame continues with probability \(\delta_g\) and otherwise terminates. In the experimental design, \(\delta_g\) is chosen to exceed the analytical support thresholds used in this appendix, while the matched finite-horizon control is chosen to match the expected supergame length.

For each benchmark game \(g\), define
\[
\overline{\delta}_g
:=
\inf \Bigl\{
\delta \in [0,1) :
\text{the benchmark cooperative equilibrium in game } g
\text{ is sequentially rational}
\Bigr\}.
\]
We now compute \(\overline{\delta}_g\) for the five benchmark environments in Section~7.1. For BoS, Promo, and Samaritan, the benchmark cooperative equilibria are exactly those specified there. For PD and Lemons, however, the empirical engineering code now uses contrite finite-punishment variants; the closed-form threshold calculations below retain analytically simpler hard-trigger surrogates for those two games. Thus the benchmark cooperative equilibria used in this appendix are: BoS coordination or turn-taking, the hard-trigger surrogate in PD, alternating promotions with finite punishment in Promo, the \((H,W)\) path with punishment in Samaritan, and the hard-trigger surrogate in Lemons.

\begin{proposition}
\label{prop:delta-thresholds}
For the benchmark cooperative equilibria in the five environments,
\[
\overline{\delta}_{\mathrm{BoS}} = 0,\qquad
\overline{\delta}_{\mathrm{PD}} = \frac{2}{5},\qquad
\overline{\delta}_{\mathrm{Promo}} = \sqrt{10}-3,\qquad
\overline{\delta}_{\mathrm{Samaritan}} = \frac{1}{2},\qquad
\overline{\delta}_{\mathrm{Lemons}} = \frac{1}{4}.
\]
Numerically,
\[
\overline{\delta}_{\mathrm{BoS}} = 0,\qquad
\overline{\delta}_{\mathrm{PD}} = 0.4,\qquad
\overline{\delta}_{\mathrm{Promo}} \approx 0.1623,\qquad
\overline{\delta}_{\mathrm{Samaritan}} = 0.5,\qquad
\overline{\delta}_{\mathrm{Lemons}} = 0.25.
\]
\end{proposition}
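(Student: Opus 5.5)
The plan is to apply the one-shot deviation principle, which is valid here because discounting makes payoffs continuous at infinity. For each benchmark profile, listed in the paragraph preceding Proposition~\ref{prop:delta-thresholds}, we partition histories into finitely many automaton states: a cooperative phase (possibly with a calendar parity) and one or more punishment phases. In each state we write the incentive constraint that following the prescription weakly beats the best one-period deviation followed by the prescribed continuation. Each constraint has the form $(1-\delta)(\text{stage gain}) \le \delta(\text{continuation loss})$. Then $\overline{\delta}_g$ is the largest of the individual thresholds. We also note that the threshold is attained, since all inequalities are weak, so the infimum equals the binding root. Any state in which the prescribed action profile is a stage Nash equilibrium and the continuation does not depend on the current action imposes no constraint.

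\textbf{BoS.} Both stationary $(J,J)$ (or $(F,F)$) and odd/even turn-taking play a stage Nash equilibrium in every period, on a history-independent schedule. A deviation therefore only lowers the current payoff and leaves the future unchanged, so every constraint holds for all $\delta\ge0$ and $\overline{\delta}_{\mathrm{BoS}}=0$.

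\textbf{PD, Lemons, Samaritan (trigger-type profiles).} In PD the punishment state $(F,F)$ is a stage Nash equilibrium. The only constraint is in the cooperative state: $3\ge (1-\delta)5+\delta\cdot 0$, which gives $\delta\ge 2/5$.

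In Lemons the buyer's cooperative action $B$ is already a stage best reply to $HQ$, and the punishment state $(LQ,D)$ is stage Nash. The binding constraint is the seller's: $3\ge(1-\delta)4$, which gives $\delta\ge1/4$.

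In Samaritan there are three states.
\begin{itemize}
\item Cooperative state: the helper plays $H$, which is dominant, so no constraint arises for her. The recipient requires $-1\ge (1-\delta)\cdot0+\delta(-2)$, i.e.\ $\delta\ge 1/2$.
\item Punishment state $(N,W)$: the recipient's $W$ is a stage best reply to $N$. The helper requires $1\ge(1-\delta)2+\delta\cdot 0$, because deviating to $H$ triggers $(H,S)$ forever; this again gives $\delta\ge1/2$.
\item Final state $(H,S)$: this is stage Nash, so no constraint arises.
\end{itemize}
Hence $\overline{\delta}_{\mathrm{Samaritan}}=1/2$.

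\textbf{Promo.} This is the step requiring care, because the cooperative phase is nonstationary and we must fix how play resumes after the two-period $Z$ punishment. We will adopt the convention that play resumes on the original calendar schedule.
\begin{itemize}
\item Punishment states: every profile involving $Z$ gives $(-2,-2)$, and deviating can at worst restart punishment. Since no action can raise the current payoff above $-2$, no profitable deviation exists.
\item Cooperative $P$-round: the promoting player already earns $4$, the stage maximum, so there is no gain from deviating.
\item Cooperative $R$-round (binding): the player scheduled to play $R$ earns $-1$, and the best deviation to $P$ earns $0$. Comparing the next three periods before schedules re-coincide, compliance yields $-1+4\delta-\delta^2$ and deviation yields $0-2\delta-2\delta^2$.
\end{itemize}
The binding constraint is therefore $\delta^2+6\delta-1\ge0$, whose positive root is $\sqrt{10}-3\approx0.1623$. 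The main obstacle is exactly this phase-alignment bookkeeping: the three-period comparison relies on the post-punishment schedule coinciding with the compliant path from period four on. If a different restart convention were intended, the tail terms would not cancel and would have to be summed as geometric series.
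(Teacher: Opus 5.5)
Your proposal is correct and follows essentially the same route as the paper. It uses one-shot deviation checks giving $\delta\ge 2/5$, $1/4$ and $1/2$ (Samaritan's value coming from both the recipient's cooperative-phase and the helper's punishment-phase constraints), treats BoS trivially, and for Promo uses the same calendar-aligned restart at a $P$-phase. There your cancellation of the common tail $\delta^3 V_P$ is just a shortcut for the paper's closed-form computation of $V_P,V_R$ and its factorization $(\delta^2-1)(\delta^2+6\delta-1)$. Your explicit check of punishment states and of the non-binding $P$-round constraint adds thoroughness the paper leaves implicit.
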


\begin{proof}
We check the one-shot deviation constraints for each benchmark cooperative equilibrium.

\paragraph{BoS.}
The stage payoffs are
\[
(J,J)=(10,7),\qquad (F,F)=(7,10),\qquad (J,F)=(F,J)=(0,0).
\]
In the benchmark cooperative paths, whether players stick to \((J,J)\), stick to \((F,F)\), or alternate between them, each prescribed on-path action is already a stage-game best response to the opponent’s prescribed action. A unilateral deviation from a coordinated outcome changes the deviator’s current payoff from \(10\) or \(7\) to \(0\). Hence no continuation incentive is needed. Therefore
\[
\overline{\delta}_{\mathrm{BoS}}=0.
\]
This is a degenerate case: BoS belongs in the benchmark set because it is one of the five environments in Section~7.1, but its cooperative paths are not supported by the shadow of the future in the same sense as the other four games. 

\paragraph{PD.}
For the analytical hard-trigger surrogate of the PD benchmark, the cooperative equilibrium is grim-trigger cooperation:
\[
(J,J)\ \text{every period until a deviation, then } (F,F)\ \text{forever}.
\]
The stage payoffs are
\[
(J,J)=(3,3),\qquad (F,J)=(5,-5),\qquad (J,F)=(-5,5),\qquad (F,F)=(0,0).
\]
If a player cooperates on path forever, the continuation value is
\[
3+3\delta+3\delta^2+\cdots=\frac{3}{1-\delta}.
\]
If she deviates once from \(J\) to \(F\), she gets \(5\) today and then \(0\) forever. Thus cooperation is sequentially rational iff
\[
\frac{3}{1-\delta}\ge 5,
\]
which gives
\[
\delta\ge \frac{2}{5}.
\]
Hence
\[
\overline{\delta}_{\mathrm{PD}}=\frac{2}{5}.
\]

\paragraph{Promo.}
The benchmark cooperative equilibrium alternates
\[
(P,R)\ \text{in odd periods},\qquad (R,P)\ \text{in even periods},
\]
and after a deviation the players play \(Z\) for two periods and then return to the alternating path. 

Let \(V_P\) be the continuation value for a player in a period in which she is assigned \(P\), and let \(V_R\) be the continuation value for a player in a period in which she is assigned \(R\). Using the stage payoffs
\[
(P,R)=4,\qquad (R,P)=-1,
\]
we have
\[
V_P=4+\delta V_R,\qquad
V_R=-1+\delta V_P.
\]
Solving,
\[
V_P=\frac{4-\delta}{1-\delta^2},
\qquad
V_R=\frac{4\delta-1}{1-\delta^2}.
\]

The only binding deviation is by the player assigned \(R\). If she obeys, she receives \(V_R\). If instead she deviates to \(P\), then against the opponent’s prescribed \(P\) she gets \(0\) today, then \(-2\) in each of the next two punishment periods, and then returns to the alternating path at a \(P\)-phase. So the deviation payoff is
\[
0-2\delta-2\delta^2+\delta^3 V_P.
\]
Thus sequential rationality requires
\[
V_R\ge -2\delta-2\delta^2+\delta^3V_P.
\]
Substituting the formulas for \(V_R\) and \(V_P\),
\[
\frac{4\delta-1}{1-\delta^2}
\ge
-2\delta-2\delta^2+\delta^3\frac{4-\delta}{1-\delta^2}.
\]
Multiplying by \(1-\delta^2>0\) and simplifying yields
\[
\delta^4+6\delta^3-2\delta^2-6\delta+1\le 0.
\]
Since
\[
\delta^4+6\delta^3-2\delta^2-6\delta+1
=
(\delta^2-1)(\delta^2+6\delta-1),
\]
and \(1-\delta^2>0\) for \(\delta\in[0,1)\), this is equivalent to
\[
\delta^2+6\delta-1\ge 0.
\]
The unique root in \([0,1)\) is
\[
\delta=\sqrt{10}-3.
\]
Therefore
\[
\overline{\delta}_{\mathrm{Promo}}=\sqrt{10}-3.
\]

\paragraph{Samaritan.}
The benchmark cooperative equilibrium is:
\[
(H,W)\ \text{every period};
\]
if the recipient ever shirks, switch forever to \((N,W)\); if, during punishment, the helper deviates by helping, switch forever to \((H,S)\).

The stage payoffs are
\[
(H,W)=(2,-1),\qquad (H,S)=(0,0),\qquad (N,W)=(1,-2),\qquad (N,S)=(-1,-3).
\]

First, the recipient must prefer \(W\) on the cooperative path. If she obeys forever, her value is
\[
\frac{-1}{1-\delta}.
\]
If she deviates to \(S\), she gets \(0\) today and then \((N,W)\) forever, so her deviation value is
\[
\delta\frac{-2}{1-\delta}.
\]
Thus
\[
\frac{-1}{1-\delta}\ge \delta\frac{-2}{1-\delta},
\]
which is equivalent to
\[
\delta\ge \frac{1}{2}.
\]

Second, along the punishment path \((N,W)\), the helper must prefer not to deviate back to \(H\). If she obeys punishment forever, her value is
\[
\frac{1}{1-\delta}.
\]
If she deviates once to \(H\), she gets \(2\) today and then triggers \((H,S)\) forever, which gives \(0\) thereafter. Thus
\[
\frac{1}{1-\delta}\ge 2,
\]
equivalently,
\[
\delta\ge \frac{1}{2}.
\]

Hence the binding threshold is
\[
\overline{\delta}_{\mathrm{Samaritan}}=\frac{1}{2}.
\]

\paragraph{Lemons.}
For the analytical hard-trigger surrogate of the Lemons benchmark, the cooperative equilibrium is:
\[
(HQ,B)\ \text{every period until a low-quality sale occurs,}
\]
after which the buyer switches forever to \(D\) and the seller then plays dominant \(LQ\).

The stage payoffs are
\[
(HQ,B)=(3,3),\qquad (LQ,B)=(4,-1),\qquad (HQ,D)=(-1,0),\qquad (LQ,D)=(0,0).
\]
The binding deviation is the seller’s temptation to replace \(HQ\) by \(LQ\) when the buyer is expected to buy. If the seller cooperates forever, her value is
\[
\frac{3}{1-\delta}.
\]
If she deviates once to \(LQ\), she gets \(4\) today and then the buyer boycotts forever, yielding \(0\) thereafter. Thus
\[
\frac{3}{1-\delta}\ge 4,
\]
which gives
\[
\delta\ge \frac{1}{4}.
\]
Therefore
\[
\overline{\delta}_{\mathrm{Lemons}}=\frac{1}{4}.
\]

Collecting the five thresholds proves the result.
\end{proof}

Table~\ref{tab:delta-thresholds} summarizes the thresholds. For the matched finite-horizon controls, a natural choice is
\[
H_g=\left\lceil \frac{1}{1-\overline{\delta}_g}\right\rceil,
\]
which matches the expected supergame length up to integer rounding, in the spirit of
\citet{bo2005cooperation}'s random-termination versus matched-finite comparison.

\begin{table}[t]
\centering
\caption{Minimum continuation probabilities for supporting the benchmark cooperative equilibria}
\label{tab:delta-thresholds}
\begin{tabular}{lcc}
\toprule
Game \(g\) & \(\overline{\delta}_g\) & \(H_g=\left\lceil \frac{1}{1-\overline{\delta}_g}\right\rceil\) \\
\midrule
BoS & \(0\) & \(1\) \\
PD & \(\frac{2}{5}\) & \(2\) \\
Promo & \(\sqrt{10}-3 \approx 0.1623\) & \(2\) \\
Samaritan & \(\frac{1}{2}\) & \(2\) \\
Lemons & \(\frac{1}{4}\) & \(2\) \\
\bottomrule
\end{tabular}
\end{table}

Two interpretive remarks are useful here. First, among the five benchmark games, only BoS has \(\overline{\delta}_g=0\), because its benchmark cooperative outcomes are already stage-game best responses. Second, the other four thresholds are genuine continuation-value thresholds: they are the minimum shadow-of-the-future levels needed to make the benchmark cooperative repeated-game path sequentially rational.

\section{Game-specific strategy menus}\label{appsec:game_menus}

For the analysis under the retained-menu identification assumption, we use sparse deterministic menus. Each menu contains one benchmark cooperative-equilibrium automaton together with seven deterministic constant or periodic heuristics that are not on-path observationally equivalent to it. Because the retained labels are deterministic, once a wrong label prescribes a different action from the benchmark at a reached public history, the realized action assigns that label zero likelihood from that date onward. Accordingly, we exclude labels such as \texttt{always\_cooperate}, \texttt{always\_help}, \texttt{always\_work}, \texttt{always\_buy}, \texttt{always\_hq}, and any alternative trigger label that shares the same cooperative path and punishment regime as the benchmark equilibrium label. The richer empirical rollout menu used for engineering experiments can remain broader; the menus below are the sparse appendix menus used for the identification argument. For the \texttt{alt\_*} and \texttt{*\_cycle} labels below, the phase convention is treated as part of the public automaton state.

\begin{proposition}[Verification for the sparse simulation menus]
\label{prop:simulation_menus_verify_equiv_conc}
For each game and player role, the sparse public-action menu specified in
this appendix satisfies Assumption~\ref{ass:det_menu_sep_app} for the benchmark equilibrium
automaton used in the corresponding simulation analysis. Consequently, by
Corollaries~\ref{cor:det_menu_sep_implies_equiv_class_conc} and
\ref{cor:det_menu_sep_implies_equiv_class_conc_private}, Assumptions~\ref{ass:equiv_class_conc}
and \ref{ass:equiv_class_conc_private} hold for those simulation menus.
\end{proposition}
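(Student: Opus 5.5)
Our plan is to verify the two items of Assumption~\ref{ass:det_menu_sep_app} menu by menu. Once that is done, the consequence follows immediately from Corollaries~\ref{cor:det_menu_sep_implies_equiv_class_conc} and \ref{cor:det_menu_sep_implies_equiv_class_conc_private}.

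Item 1 (menu grain of truth) holds by construction. Each sparse menu contains the benchmark equilibrium automaton, and in the simulation analysis this automaton is the realized opponent profile $f_{-i}$. We will place positive prior mass on it, for instance a uniform prior over the eight labels, or the strong-prior line. Every label is also a deterministic public automaton, with the phase of the \texttt{alt\_*} and \texttt{*\_cycle} labels treated as part of the public state.

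The substance is Item 2, on-path finite separation. The key observation is that $f$ is deterministic, so $\mu^f$ is a point mass on a single path $z^\star$: the benchmark cooperative path, for example $(J,J)$ forever in PD, $(P,R),(R,P),\dots$ in Promo, $(H,W)$ forever in Samaritan, $(HQ,B)$ forever in Lemons, and the fixed coordination or turn-taking path in BoS. The ``$\mu^f$-a.s.'' requirement therefore reduces to a deterministic check. For each wrong label $g_{-i}$ in each role's menu, we must exhibit a finite $t$ with $g_{-i}(h^t(z^\star))\neq f_{-i}(h^t(z^\star))$.

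We will do this by cases on label type.
\begin{itemize}
\item Constant labels that prescribe an action other than the benchmark on-path action disagree already at $t=1$.
\item Periodic or cycle labels have a period or phase that differs from the benchmark path, so they disagree within one period, at most a few rounds.
\item Trigger-type or reactive labels are admitted into the menu only if their cooperative path or punishment regime differs from the benchmark. Their cooperative path is compared against $z^\star$, where no deviation ever occurs, so any difference must show up on-path within finitely many rounds.
\end{itemize}

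The main obstacle is that last exclusion. A label that agrees with the benchmark on the entire path $z^\star$ but differs only off-path (for instance \texttt{always\_cooperate} versus the contrite PD rule) can never be refuted on $\mu^f$-almost every path. This is exactly why the menus omit such labels. To handle it, we will check explicitly, table by table, that every retained non-benchmark label differs from the benchmark at some history on $z^\star$, and record the first disagreement round for each one.

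A subtlety arises in BoS, where several benchmark paths exist. There we fix, per simulation, which benchmark automaton is $f_{-i}$, and verify separation relative to that automaton; the other coordination automata are then among the wrong labels and disagree at the first round where the paths diverge.
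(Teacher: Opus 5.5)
Your proposal is correct and follows essentially the same route as the paper. Determinism and menu grain of truth hold by construction; a game-by-game check shows that every wrong retained label disagrees with the benchmark automaton at some finite round on the deterministic (hence $\mu^f$-a.s.\ unique) benchmark path; and the two hard-refutation corollaries then give the conclusion. The only tightening needed is to let that explicit table check, not your blanket case claims, supply the first-disagreement rounds. Against alternating benchmarks (Promo, BoS turn-taking), a constant label such as \texttt{all\_R} can agree at round~1 and separate only at round~2. Likewise, a trigger label differing only in its punishment regime would never be refuted on-path; as you note yourself, separation must come from the cooperative path.
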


\begin{proof}
Each listed menu is deterministic by construction, so the deterministic-menu part of
Assumption~\ref{ass:det_menu_sep_app} holds immediately. Menu grain of truth also holds by
construction, since the relevant benchmark equilibrium automaton is explicitly included in the
retained menu for each game and role.

It remains to verify on-path finite separation. Fix a game, a player role, and the benchmark
profile $f$ for the corresponding simulation instance, and let $f_{-i}$ denote the benchmark
opponent automaton in the retained menu. Along the realized path under $f$, every wrong retained
label $g_{-i}$ differs from $f_{-i}$ after finitely many public histories for an explicit
structural reason:
\begin{itemize}[leftmargin=*]
    \item In BoS, the benchmark is one of the eight listed deterministic periodic words, and every
    other retained label is a different periodic word. Hence there is a smallest round at which the
    two words prescribe different actions.
    \item In PD, the benchmark is \texttt{grim\_eq}, which prescribes $J$ at every reached history
    on the realized cooperative path. Each wrong retained label prescribes $F$ by round $1$, $2$,
    $3$, or $4$: \texttt{all\_F} at round $1$, \texttt{alt\_JF} and \texttt{alt\_FJ} by round $2$,
    \texttt{JJF\_cycle} and \texttt{JFF\_cycle} by round $3$, and \texttt{JJJF\_cycle} and
    \texttt{JFFF\_cycle} by round $4$.
    \item In Promo, the benchmark is one of the two alternating cartel labels. The other cartel
    label starts in the opposite phase and therefore differs at round $1$. The remaining six labels
    are constant or period-$3$ rules; each either starts with the wrong action or repeats an action
    (or uses $Z$) within the first three rounds, whereas the benchmark alternates between $P$ and
    $R$ without using $Z$ on the realized path.
    \item In the Samaritan helper menu, \texttt{helper\_eq} prescribes $H$ at every reached history
    on the realized cooperative path, while every wrong label prescribes $N$ by round $1$, $2$,
    $3$, or $4$. In the recipient menu, \texttt{recipient\_eq} prescribes $W$ at every reached
    history on the realized cooperative path, while every wrong label prescribes $S$ by round $1$,
    $2$, $3$, or $4$.
    \item In the Lemons seller menu, \texttt{seller\_eq} prescribes $HQ$ at every reached history
    on the realized cooperative path, while every wrong label prescribes $LQ$ by round $1$, $2$,
    $3$, or $4$. In the buyer menu, \texttt{buyer\_eq} prescribes $B$ at every reached history on
    the realized cooperative path, while every wrong label prescribes $D$ by round $1$, $2$, $3$,
    or $4$.
\end{itemize}
Let $t_g$ be the first such disagreement round. Because the menus are deterministic, the realized
benchmark action at $t_g$ has probability one under $f_{-i}$ and probability zero under $g_{-i}$.
Therefore $g_{-i}(h^{t_g}(z))\neq f_{-i}(h^{t_g}(z))$ on the realized path, so $g_{-i}$ is
refuted in finite time. This proves Assumption~\ref{ass:det_menu_sep_app}(2).
\end{proof}

\paragraph{(1) BoS menu.}
Use the same 8-label menu for both players.
\begin{itemize}[leftmargin=*]
    \item \texttt{coord\_J}: play $J$ every round.
    \item \texttt{coord\_F}: play $F$ every round.
    \item \texttt{alt\_JF}: play $J,F,J,F,\ldots$.
    \item \texttt{alt\_FJ}: play $F,J,F,J,\ldots$.
    \item \texttt{JJF\_cycle}: repeat $J,J,F$.
    \item \texttt{JFF\_cycle}: repeat $J,F,F$.
    \item \texttt{JJFF\_cycle}: repeat $J,J,F,F$.
    \item \texttt{JJJF\_cycle}: repeat $J,J,J,F$.
\end{itemize}
All eight BoS labels are deterministic and pairwise distinct as action words, so any wrong retained label is contradicted in finite time.

\paragraph{(2) PD menu.}
Use the same 8-label menu for both players.
\begin{itemize}[leftmargin=*]
    \item \texttt{grim\_eq}: play $J$ until the opponent first plays $F$; thereafter play $F$ forever.
    \item \texttt{all\_F}: play $F$ every round.
    \item \texttt{alt\_JF}: play $J,F,J,F,\ldots$.
    \item \texttt{alt\_FJ}: play $F,J,F,J,\ldots$.
    \item \texttt{JJF\_cycle}: repeat $J,J,F$.
    \item \texttt{JFF\_cycle}: repeat $J,F,F$.
    \item \texttt{JJJF\_cycle}: repeat $J,J,J,F$.
    \item \texttt{JFFF\_cycle}: repeat $J,F,F,F$.
\end{itemize}
There is only one prefix-then-absorbing trigger label, \texttt{grim\_eq}. Every other retained label is a constant or periodic word, so it cannot coincide forever with a realized grim path.

\paragraph{(3) Promo menu (actions: $R$ = regular, $P$ = promotion, $Z$ = punishment/price war).}
Use the same 8-label menu for both players.
\begin{itemize}[leftmargin=*]
    \item \texttt{cartel\_phase0}: cooperative path $P,R,P,R,\ldots$; after any deviation from the prescribed phase path, play $Z,Z$ for two periods, then return to phase 0.
    \item \texttt{cartel\_phase1}: cooperative path $R,P,R,P,\ldots$; after any deviation from the prescribed phase path, play $Z,Z$ for two periods, then return to phase 1.
    \item \texttt{all\_R}: play $R$ every round.
    \item \texttt{all\_P}: play $P$ every round.
    \item \texttt{all\_Z}: play $Z$ every round.
    \item \texttt{RRP\_cycle}: repeat $R,R,P$.
    \item \texttt{PPR\_cycle}: repeat $P,P,R$.
    \item \texttt{RRZ\_cycle}: repeat $R,R,Z$.
\end{itemize}
The two cooperative cartel labels are distinct at round 1, and none of the six alternatives shares their alternating path or their two-period punishment block.

\paragraph{(4) Samaritan menus (Helper actions: $H$ = Help, $N$ = No-help; Recipient actions: $W$ = Work, $S$ = Shirk).}
The game is asymmetric, so we use separate menus.

\textit{Helper menu.}
\begin{itemize}[leftmargin=*]
    \item \texttt{helper\_eq}: play $H$ until the recipient first plays $S$; thereafter play $N$ forever.
    \item \texttt{all\_N}: play $N$ every round.
    \item \texttt{alt\_HN}: play $H,N,H,N,\ldots$.
    \item \texttt{alt\_NH}: play $N,H,N,H,\ldots$.
    \item \texttt{HHN\_cycle}: repeat $H,H,N$.
    \item \texttt{HNN\_cycle}: repeat $H,N,N$.
    \item \texttt{HHHN\_cycle}: repeat $H,H,H,N$.
    \item \texttt{HNNN\_cycle}: repeat $H,N,N,N$.
\end{itemize}

\textit{Recipient menu.}
\begin{itemize}[leftmargin=*]
    \item \texttt{recipient\_eq}: start with $W$; continue $W$ through the cooperative phase and through punishment. If punishment has already begun and the helper later plays $H$ again, switch to $S$ forever.
    \item \texttt{all\_S}: play $S$ every round.
    \item \texttt{alt\_WS}: play $W,S,W,S,\ldots$.
    \item \texttt{alt\_SW}: play $S,W,S,W,\ldots$.
    \item \texttt{WWS\_cycle}: repeat $W,W,S$.
    \item \texttt{WSS\_cycle}: repeat $W,S,S$.
    \item \texttt{WWWS\_cycle}: repeat $W,W,W,S$.
    \item \texttt{WSSS\_cycle}: repeat $W,S,S,S$.
\end{itemize}
For each role, there is only one trigger-style equilibrium label. The other seven labels are constants or cycles, so they cannot remain observationally equivalent to that equilibrium label on any realized path.

\paragraph{(5) Lemons menus (Seller actions: $HQ$ = High-quality, $LQ$ = Low-quality; Buyer actions: $B$ = Buy, $D$ = Don't buy).}
This game is also asymmetric, so we use separate menus.

\textit{Seller menu.}
\begin{itemize}[leftmargin=*]
    \item \texttt{seller\_eq}: play $HQ$ until the buyer first plays $D$; thereafter play $LQ$ forever.
    \item \texttt{all\_LQ}: play $LQ$ every round.
    \item \texttt{alt\_HQ\_LQ}: play $HQ,LQ,HQ,LQ,\ldots$.
    \item \texttt{alt\_LQ\_HQ}: play $LQ,HQ,LQ,HQ,\ldots$.
    \item \texttt{HHQ\_LQ\_cycle}: repeat $HQ,HQ,LQ$.
    \item \texttt{HQ\_LQ\_LQ\_cycle}: repeat $HQ,LQ,LQ$.
    \item \texttt{HHHQ\_LQ\_cycle}: repeat $HQ,HQ,HQ,LQ$.
    \item \texttt{HQ\_LQ\_LQ\_LQ\_cycle}: repeat $HQ,LQ,LQ,LQ$.
\end{itemize}

\textit{Buyer menu.}
\begin{itemize}[leftmargin=*]
    \item \texttt{buyer\_eq}: play $B$ until the first observed $LQ$; thereafter play $D$ forever.
    \item \texttt{all\_D}: play $D$ every round.
    \item \texttt{alt\_BD}: play $B,D,B,D,\ldots$.
    \item \texttt{alt\_DB}: play $D,B,D,B,\ldots$.
    \item \texttt{BBD\_cycle}: repeat $B,B,D$.
    \item \texttt{BDD\_cycle}: repeat $B,D,D$.
    \item \texttt{BBBD\_cycle}: repeat $B,B,B,D$.
    \item \texttt{BDDD\_cycle}: repeat $B,D,D,D$.
\end{itemize}
Again, only one retained label has the reputation/boycott prefix-then-absorbing form for each role. All other retained labels are constants or cycles, so they are finitely refuted against that equilibrium label.

\section{Promo game}

\subsection{Promo game \citep{lal1990price}: alternating promotions with finite punishment}\label{appssec:promo}

Lal (1990) studies repeated price competition in a market with two identical ``national'' brands that have loyal consumers and a third ``local'' brand with little/no loyalty. The local brand disciplines prices in the switching segment, creating a tension for the national brands between (i) extracting rents from loyals via a high ``regular'' price and (ii) defending the switchers via temporary price cuts. A key result is that, even when the corresponding one-shot stage game has no Nash equilibrium, an \emph{alternating promotions} pattern -- only one national brand is on promotion in a given period and the roles alternate over time -- can arise as a \emph{pure-strategy Nash equilibrium} of the infinite-horizon discounted game, supported by a credible number of punishment periods.

To obtain a compact repeated-game benchmark, we discretize \cite{lal1990price}’s richer price-choice problem into three representative regimes per firm:
\begin{itemize}[leftmargin=*]
    \item \textit{Regular} ($R$): charge the high ``regular'' price
    \item  \textit{Promotion} ($P$): charge the low promotional price
    \item \textit{Punishment/price war} ($Z$): charge a very low price used only in punishment phases.
\end{itemize}

The resulting 3$\times$3 payoff matrix in Appendix~\ref{sec:experiments} is a reduced-form encoding of the ordinal incentive structure: a unilateral promotion against a regular-price rival yields the highest current-period gain (the ``temptation'' payoff); simultaneous promotions are less profitable than \emph{alternating} promotions; and outcomes involving $Z$ are jointly bad, standing in for the ``intense competition/price war'' phase used to deter deviations.

The canonical nontrivial Nash equilibrium is an \emph{alternating} path: play $(P,R)$ in odd rounds and $(R,P)$ in even rounds (or vice versa). After any deviation from the prescribed phase, switch to a punishment phase (e.g., $(Z,Z)$ for a fixed number of rounds) for a few periods and then return to the alternating path (as defined as \cite{abreu1988theory}), or revert permanently to a low-payoff punishment regime (grim trigger). For sufficiently patient players, the discounted loss from the punishment phase outweighs the one-shot deviation gain, making the alternating-promotions path incentive compatible.

\end{document}